\def\eqref#1{\ref{#1}}
\def\Eqref#1{Equation~\ref{#1}}
\def\ceil#1{\lceil #1 \rceil}
\def\1{\bm{1}}
\DeclareMathAlphabet{\mathsfit}{\encodingdefault}{\sfdefault}{m}{sl}
\SetMathAlphabet{\mathsfit}{bold}{\encodingdefault}{\sfdefault}{bx}{n}
\newcommand{\E}{\mathbb{E}}
\DeclareMathOperator*{\argmin}{arg\,min}
\newtheorem{theorem}{Theorem}[section]
\newtheorem{proposition}{Proposition}
\newtheorem{lemma}{Lemma}
\newtheorem{corollary}{Corollary}
\newtheorem{definition}{Definition}
\newtheorem{assumption}{Assumption}
\newtheorem{remark}{Remark}
\newcommand{\VK}[1]{\textcolor{black}{#1}}
\newcommand{\hl}[1]{\textcolor{black}{#1}}
\title{A CMDP-within-online framework for Meta-Safe Reinforcement Learning}
\author{Vanshaj Khattar \\
Virginia Tech\\
Blacksburg, VA 24061\\
\texttt{vanshajk@vt.edu} \\
\And
Yuhao Ding \\
UC Berkeley \\
Berkeley, CA 94709 \\
\texttt{yuhao\textunderscore ding@berkeley.edu} \\
\And
Bilgehan Sel \\
Virginia Tech\\
Blacksburg, VA 24061\\
\texttt{bsel@vt.edu}\\
\AND
Javad Lavaei\\
UC Berkeley\\
Berkeley, CA 94709\\
\texttt{lavaei@berkeley.edu}\\
\And
Ming Jin \thanks{Corresponding author}\\
Virginia Tech\\
Blacksburg, VA 24061\\
\texttt{jinming@vt.edu}
}
\begin{document}

\maketitle

\begin{abstract}
Meta-reinforcement learning has widely been used as a learning-to-learn framework to solve unseen tasks with limited experience. However, the aspect of constraint violations has not been adequately addressed in the existing works, making their application restricted in real-world settings. In this paper, we study the problem of meta-safe reinforcement learning (Meta-SRL) through the CMDP-within-online framework to establish the \emph{first provable guarantees} in this important setting. We obtain task-averaged regret bounds for the reward maximization (optimality gap) and constraint violations using gradient-based meta-learning and show that the task-averaged optimality gap and constraint satisfaction improve with task-similarity in a static environment or task-relatedness in a dynamic environment. Several technical challenges arise when making this framework practical. To this end, we propose a meta-algorithm that performs inexact online learning on the upper bounds of within-task optimality gap and constraint violations estimated by off-policy stationary distribution corrections. Furthermore, we enable the learning rates to be adapted for every task and extend our approach to settings with a competing dynamically changing oracle. Finally, experiments are conducted to demonstrate the effectiveness of our approach. 
\end{abstract}

\section{Introduction}
\label{sec:Introduction}

The field of meta-reinforcement learning (meta-RL) has recently evolved  as one of the promising directions that enables reinforcement learning (RL) agents to learn quickly in dynamically changing environments  \citep{finn2017model,mitchell2021offline,zintgraf2021exploration}.
Many real-world applications, nevertheless, have safety constraints that should rarely be violated, which existing works do not fully address.  
Safe RL problems are often modeled as constrained Markov decision processes (CMDPs), where the agent aims to maximize the value function while satisfying given constraints on the trajectory \citep{altman1999constrained}. However, unlike meta-learning, CMDP algorithms are not designed to generalize efficiently over unseen tasks \citep{paternain2022safe,ding2021provably,ding2022provably} 
In this paper, we study how meta-learning can be principally designed to help safe RL algorithms adapt quickly while satisfying safety constraints. 

There are several unique challenges involved in meta-learning for the CMDP settings. First, multiple losses are incurred at each time step, i.e., reward and constraints, which are typically nonconvex and coupled through dynamics. Hence, adapting existing theories developed for stylized settings such as online convex optimization \citep{hazan2016introduction}
is not straightforward. Second, it is unrealistic to assume the computation of a globally optimal policy for CMDPs (unlike online learning \citep{hazan2016introduction}).
Thus, classical online learning algorithms that assume exact or unbiased estimator of the loss function do not apply \citep{khodak2019adaptive}.
Overall, there is an interplay among nonconvexity, the stochastic nature of the optimization problem, as well as algorithm and generalization considerations, posing significant complexity to leverage inter-task dependency \citep{denevi2019learning}.

To this end, we propose a provably low-regret online learning framework that extends the current meta-learning algorithms to safe RL settings. 
Our main contributions are as follows:
\begin{enumerate}[wide, labelindent=0pt]
    \item \textbf{Inexact CMDP-within-online framework:} We propose a novel CMDP-within-online framework where the within-task is CMDP, and the meta-learner aims to learn the meta-initialization and learning rate. In our framework, the meta-learner only requires the inexact optimal policies for each within-task CMDP and the approximate state visitation distributions estimated using collected offline trajectories to construct the upper bounds on the suboptimality gap and constraint violations. An upper bound on these estimation errors is established in Theorem \ref{thm:dualDICE}. 
    
    \item \textbf{Task-averaged regret in terms of empirical task-similarity:} We show that the task-averaged regrets for optimality gap (TAOG) and constraint violations (TACV) (Def. \ref{def:taog}) diminish with respect to both the number of steps in the within-task algorithm $M$ and the number of tasks $T$. Specifically, task-averaged regret of $\mathcal{O} \left(\frac{1}{\sqrt{M}}\sqrt{\frac{\mathcal{E}_T}{\sqrt{T}}+  \hat{D}^{*2} } \right)$ holds, where  $\mathcal{E}_T$ is the total inexactness in online learning  and $\hat{D}^*$ is the empirical task-similarity (Theorem \ref{thm:InexactTAOG}). 
    
    \item \textbf{Adapting to a dynamic environment:} We adapt the learning rates for each task to \hl{environments that entail dynamically changing meta-initialization policies.} An improved rate of $\mathcal{O}\left(\frac{1}{M^{3/4}\sqrt{T}}\left(\mathcal{E}_T+ \sqrt{\frac{\mathcal{E}_T}{T} +\hat{V}_\psi^2 } \right) \right)$ for TAOG and TACV are shown, where $\hat{V}_\psi$ is the empirical task-relatedness with respect to a sequence of changing comparator policies $\{\psi_t^*\}_{t=1}^T$ (Corollary \ref{cor:CorollaryAdpativeRate}).
\end{enumerate}

Incorporating all these components makes our Meta-safe RL (Meta-SRL) approach highly practical and theoretically appealing for potential adaption to different RL settings. \VK{Furthermore, we remark on some {key technical contributions} that support the above developments, which may be of independent interest:} \emph{1)} \VK{We study the \emph{optimization landscape} of CMDP (Theorem
\ref{thm:dualDICE}) that is algorithmic-agnostic, which differs from the existing work of \citep{mei2020global}[Lemmas 3 and 15] that is restricted to the setting of policy gradient. This is achieved by developing {new techniques} based on tame geometry and subgradient flow systems}; \emph{2)} \VK{we provide static and dynamic regret bounds for \emph{inexact online gradient descent} (see Appendix \ref{sec:inexact-app}), which we leverage to obtain our final theoretical results in Theorems \ref{thm:InexactTAOG}, \ref{thm:UtSim1}, and Corollary \ref{cor:CorollaryAdpativeRate}.} Due to the space restrictions, the related work can be found in Appendix \ref{sec:RelatedWork}.

\section{CMDP-within-online framework}
\label{sec:Preliminaries}
In this section, we introduce the CMDP-within-online framework for the Meta-SRL problems. In this framework, a within-task algorithm (such as CRPO \citep{xu2021crpo}) for some CMDP task $t\in[T]$ is encapsulated in an online learning algorithm (meta-learning algorithm), which decides upon a sequence of initialization policy $\phi_t$ and learning rate $\alpha_t >0$ 
for each within-task algorithm. The goal of the meta-learning algorithm is to minimize some notion of task-averaged performance regret to facilitate provably efficient adaptation to a new task.
\subsection{CMDP and the primal approach}
\textbf{Model.} For each task $t\in[T]$, a CMDP $\mathcal{M}_t$ is defined by the state space $\mathcal{S}$, the action space $\mathcal{A}$, discount factor $\gamma$, initial state distribution over the state-space $\rho_t$, the transition kernel  $P_t(s'|s,a): \mathcal{S} \times \mathcal{A} \rightarrow \mathcal{S}$, reward functions $c_{t,0}: \mathcal{S} \times \mathcal{A} \rightarrow [0,1]$ and cost functions $c_{t,i}:\mathcal{S} \times \mathcal{A} \rightarrow [0,1]$ for $i = 1, ..., p$.  The actions are chosen according to a stochastic policy $\pi_{t}:\mathcal{S} \rightarrow \Delta(\mathcal{A})$ where $\Delta(\mathcal{A})$ is the simplex over the action space. We use $\Delta(\mathcal{A})^{|\mathcal{S}|}$ to denote the simplex over all states $\mathcal{S}$. The initial policy for task $t$ is denoted as $\pi_{t,0}$. The discounted state visitation distribution of a policy $\pi$ is defined as
$\nu_{t,s_{0}}^{\pi}(s):=(1-\gamma) \sum_{m=0}^{\infty} \gamma^{m} {P_t}\left(s_{m}=s \mid \pi, s_{0}\right)$ and we write $\nu_t^*(s) := \mathbb{E}_{s_0\sim \rho_t} \left[\nu^{\pi^*}_{t,s_0}(s)\right]$ as the visitation distribution when the initial state follows $\rho_t$ at task $t$. \VK{We denote $\pi_t^*$ as an optimal policy for task $t$ and $\nu_t^*(s) := \mathbb{E}_{s_0\sim \rho_t} \left[\nu^{\pi_t^*}_{t,s_0}(s)\right]$ is the corresponding state visitation distribution induced by policy $\pi_t^*$ when the initial state $s_0$ is sampled from initial state distribution $\rho_t$ at task $t$.}

\textbf{Policy parametrization.}
We consider the softmax parametrization. For any $\theta \in \mathbb{R}^{|\mathcal{S}| \times|\mathcal{A}|}$, the corresponding softmax policy $\pi_{\theta}$ is defined as
$\pi_{\theta}(a \mid s):=\frac{\exp (\theta(s, a))}{\sum_{a^{\prime} \in \mathcal{A}} \exp \left(\theta\left(s, a^{\prime}\right)\right)},  \forall(s, a) \in \mathcal{S} \times \mathcal{A}.$ We neglect the dependence on $\theta$ to alleviate the notational burden.

\textbf{Value function.}  For task $t$ and a policy $\pi$, we define the state-value function as $V_{t,\pi}^{i}(s)=\mathbb{E}_t\left[\sum_{m=0}^{\infty} \gamma^{m} c_{t,i}\left(s_{m}, a_{m}, s_{m+1}\right) \mid s_{0}=s, \pi\right]$ and the action-value function as $Q_{t,\pi}^{i}(s, a)=$ $\mathbb{E}_t\left[\sum_{m=0}^{\infty} \gamma^{m} c_{t,i}\left(s_{m}, a_{m}, s_{m+1}\right) \mid s_{0}=s, a_{0}=a, \pi \right]$, where $m$ denotes the time steps. 
 Furthermore, the expected total reward/cost functions are $J_{t,i}(\pi)=\mathbb{E}_{\rho_t}\left[V_{t,\pi}^{i}(s)\right]=\mathbb{E}_{\rho_t \cdot \pi}\left[Q_{t,\pi}^{i}(s, a)\right] .$

\textbf{CMDP.} In each task $t$, the goal of the agent is to solve the following CMDP problem
\begin{equation}\label{eq:CRPOsafeRL}
    \underset{\pi}{\max} \hspace{0.1cm} J_{t,0}(\pi) \hspace{0.3cm} \text{s.t.} \hspace{0.2cm} J_{t,i}(\pi) \leq d_{t,i}, \hspace{0.3cm} \forall i = 1,...,p,
\end{equation}
where $d_{t,i}$ is a fixed limit on the expected total cost $J_{t,i}$ for task $t$ and constraint $i$ (among a total of $p$ constraints). We denote the optimal solution of \eqref{eq:CRPOsafeRL} for the task $t$ as $\pi_t^*$ which can be non-unique.

\textbf{Primal approach.} In this work, we focus on the primal approach, CRPO \citep{xu2021crpo}, as an exemplary algorithm with guarantees for a single-task CMDP. \VK{CRPO is a primal-based online CMDP algorithm, which performs policy optimization (natural gradient ascent on the reward) when constraints are not violated, or constraint minimization (natural gradient descent on the constraint function) for one of the violated constraints. } Specifically, with softmax parametrization and carefully chosen parameters, the {suboptimality gap} and {constraint violation} for task $t$ are bounded as follows (if the exact action-value function $\{Q_{t,\pi}^i\}_{i=0}^p$ are available for all $\pi$)\footnote{This regret is slightly different from \citep{xu2021crpo} as we assume an exact critic estimation for simplicity. 
For more details about the CRPO algorithm, choice of parameters, and the convergence analysis for a single task, we refer the reader to Appendices \ref{sec:crpo} and \ref{sec:prelim-app}.}:
\begin{equation}\label{eq:RegretRandC}
\begin{aligned}
 & R_0 = J_{t,0}(\pi_t^*) - \mathbb{E}[J_{t,0}(\hat{\pi}_t)]\leq \frac{2}{\alpha_t M}\mathbb{E}_{s \sim \nu_t^*}[D_{KL}(\pi_t^*|\pi_{t,0})]+\frac{4 \alpha_t c_{max}^2|\mathcal{S}| |\mathcal{A}|}{(1-\gamma)^3},\\
 & R_{i} = \mathbb{E}[J_{t,i}(\hat{\pi}_t)]- d_{t,i} \leq  \frac{2}{\alpha_t M}\mathbb{E}_{s \sim \nu_t^*}[D_{KL}(\pi_t^*|\pi_{t,0})]+\frac{4 \alpha_t c_{max}^2 |\mathcal{S}| |\mathcal{A}|}{(1-\gamma)^3},   \forall  i = 1,...,p.
\end{aligned}
\end{equation}
where $\hat{\pi}_t$ is the policy returned by running CRPO for $M$ steps with learning rate $\alpha_t$ in task $t$, $\pi^*_t$ is the optimal policy, $c_{max}$ is the upper bound on reward/cost function, and
$D_{KL}(\cdot|\cdot)$ is the KL divergence.

\subsection{Meta-SRL problem setup}\label{subsec:Adaptation}
We now consider the lifelong extension of CMDPs in which safe RL tasks arrive one at a time, and $t=1,2,\ldots,T$ denotes the index for a sequence of online learning problems. In each single task $t$, the agent must sequentially optimize the policy $\{\pi_{t,i}\}_{i=0}^{M}$ so that the corresponding sub-optimality and the constraint violation, 
 given in \eqref{eq:RegretRandC}, decays sub-linearly in $M$. Beyond the single task, the meta-learner should aim to optimize the upper bounds in \eqref{eq:RegretRandC} over the initial policy $\pi_{t,0}$ and the learning rate $\alpha_t$ so that the task-averaged sub-optimality and constraint violation are expected to improve as the meta-learner solves more tasks. Therefore, we will aim to minimize the task-averaged sub-optimality gap and the task-averaged constraint violation defined as follows:
\begin{definition}\label{def:taog}
The \textbf{task-averaged optimality gap (TAOG)} $\bar{R}_0$ and the \textbf{task-averaged constraint-violation (TACV)} $\bar{R}_i$ of a safe RL algorithm after $T$ tasks are
\begin{equation}\label{eq:TARandTACV}
\begin{aligned}
 \bar{R}_0  = \frac{1}{T}\sum_{t=1}^T\bigg[ J_{t,0}(\pi_t^*) - \mathbb{E}[J_{t,0}(\hat{\pi}_t)] \bigg],
      \  \bar{R}_{i} =  \frac{1}{T} \sum_{t=1}^T\bigg[ \mathbb{E}[J_{t,i}(\hat{\pi}_t)] - d_{t,i} \bigg], \ \forall i = 1,...,p,
       \end{aligned}
\end{equation}
where $\hat{\pi}_t$ is the policy returned by running some safe RL algorithm for $M$ time-steps at task $t$ where the expectation is taken with respect to the randomness of the algorithm and environment.
\end{definition}
\hl{We can observe from (\eqref{eq:RegretRandC}) that the task-averaged regrets can be upper bounded by terms based on the policy initializations $\{\pi_{t,0}\}_{t=1}^T$ and the learning rates $\{\alpha_t\}_{t=1}^T$. The crux of our idea is to design a meta-algorithm that can sequentially update the initial policy $\pi_{t,0}$ and the learning rate of the CRPO algorithm $\alpha_t$ by performing online learning on the upper bounds, i.e., we consider the right-hand sides of (\eqref{eq:RegretRandC}) as the individual loss function. This enables us to bound the dynamic regrets (TAOG and TACV), which are measured against a dynamic sequence of optimal policies $\{\pi_t^*\}_{t=1}^T$, \emph{via the static regret, which is measured against a fixed initial policy} $\phi$.} 


\subsection{Task-similarity}\label{subsec:taskSimilarity}
In Meta-SRL, we expect TAOG and TACV to improve with the similarity among the online CMDP tasks. We now discuss the notions of similarity in a static environment; an extension to a dynamic environment is introduced in Sec. \ref{sec:dynamic-regret}.
Given optimal polices $\{\pi_t^\ast\}_{t=1}^T$, where $\pi_t^\ast \in \Pi_t^\ast$ for every $t$, the \textbf{task-similarity} can be measured as $D^{*2}=\underset{\phi \in \Delta(\mathcal{A})^{|\mathcal{S}|} } {\min} \frac{1}{T} \sum_{t=1}^T \mathbb{E}_{s \sim \nu_t^*}[D_{KL}(\pi_t^*|\phi)]$. If the optimal policy is not unique, we take the worst case for $D^*$, i.e., a set of policies for which $D^{*2}$ is maximum.
This notion of task-similarity in the static environment is natural for studying gradient-based meta-learning, as it implies that there exists a meta initialization $\phi$ with respect to which optimal policies for individual tasks are all close together. In particular, when the tasks are all identical, i.e., $\{\pi_t^\ast\}_{t=1}^T$  are all equal, we have $D^{*2}=0$. In the practical scenario where only suboptimal policies are accessible, we denote the \textbf{empirical task-similarity} as $\hat{D}^{*2} \coloneqq \underset{\phi \in \Delta(\mathcal{A})^{|\mathcal{S}|} } {\min} \frac{1}{T} \sum_{t=1}^T \mathbb{E}_{s \sim \hat{\nu}_t}[D_{KL}(\hat{\pi}_t|\phi)]$, which depends on the suboptimal policies $\{\hat{\pi}_t\}_{t=1}^T$ returned by a within-task algorithm. 
As it is desirable that the meta-initialization policy $\pi_{t,0}$ has good exploration properties, we need the initial policy to have full support over $\mathcal{S} \times \mathcal{A}$. We introduce the following assumption:
\begin{assumption} \label{asmptn:newAsmptn1} The meta-initialization policy $\pi_{t,0}$ for any task $t$ lies inside a \textbf{shrinkage simplex set}, i.e., $\pi_{t,0}(\cdot | s) \in \Delta \mathcal{A}_{\varrho} \coloneqq  \left\{a_1e_1 + ... + a_{n_a}e_{n_a}| \sum_{i=0}^{n_a}a_i = 1, a_i \geq \varrho \hspace{0.3cm} \forall i = 1, ..., n_a \right\}$ for all $s \in \mathcal{S}$, where $e_1,...,e_{n_a} \in \mathbb{R}^{n_a}$ are one-hot vectors for each action (e.g., $e_1$ is the vector of all $0s$ except $1$ at the first location) and $\varrho > 0$. In particular, $\Delta \mathcal{A}_\varrho$ lies inside the regular simplex set $\Delta \mathcal{A}$ ($\varrho=0$).
\end{assumption}

\hl{Technically, Assumption \ref{asmptn:newAsmptn1} is a minimal requirement for the CRPO to provide any guarantees in a single task. This can be seen in (\eqref{eq:RegretRandC}): if $\pi_{t,0}$ does not have full support over the state/action space, then there may be a state $s$ and an action $a$ where $\pi_t^*(a|s) > 0$ but $\pi_{t,0}(a|s) = 0$, which would make the KL divergence term in (\eqref{eq:RegretRandC}) infinite.} Furthermore, Assumption \ref{asmptn:newAsmptn1} ensures the following holds for the meta-initialization policy $\pi_{t,0}$ for any state $s \in \mathcal{S}$ with positive constants $C_\pi$, $L_g$, $L_\pi$ and $\mu_\pi$: {(1)} $|D_{KL}(\pi_t^*(\cdot|s)|\pi_{t,0}(\cdot|s))|$, $|D_{KL}(\hat{\pi}_t(\cdot|s)|\pi_{t,0}(\cdot|s))|\leq C_{\pi}$; {(2)} $D_{KL}(\pi_t^*(\cdot|s)|\pi_{t,0}(\cdot|s))$ is $L_g$-Lipschitz and $L_\pi$-smooth in $\pi_{t,0}(\cdot|s)$; {(3)} $D_{KL}(\pi_t^*(\cdot|s)|\pi_{t,0}(\cdot|s))$ is $\mu_\pi$-strongly convex in $\pi_{t,0}(\cdot|s)$. We use these conditions in the proof of Lemma \ref{lemma: ideal setting}, Lemma \ref{cor:dynamicRegretOGD}, and Theorem \ref{thm:InexactTAOG}. 

In this work, we develop algorithms whose TAOG and TACV scale with the task-similarity, which implies that the method will do well if tasks are similar. To understand the CMDP-within-online framework and the impact of task-similarity on the upper bounds of TAOG and TACV for Meta-SRL, we first present a simplified result under the ideal setting where  $\{\nu_t^*\}_{t=1}^T$ and $\{\pi_t^*\}_{t=1}^T$ are available for each task $t$ and the task-similarity $D^{*2}$ is known.


\begin{lemma}\label{lemma: ideal setting}
Assume $\{\nu_t^\ast\}_{t=1}^T$ and $\{\pi_t^\ast\}_{t=1}^T$ are given  after each task and the task-similarity $D^{*2}$ is known.
For each task $t$, we run CRPO for $M$ iterations with $\alpha = \frac{(1-\gamma)^{\frac{3}{2}}}{\sqrt{2M |\mathcal{S}||\mathcal{A}| }} \sqrt{\frac{L_g^2(\log T + 1)}{\mu_\pi T}+ D^{*2} }$. In addition, the initialization $\{\pi_{t,0}\}_{t=1}^T$  are determined by playing \textit{Follow-the-Regularized-Leader} (FTRL) or \textit{online gradient descent} (OGD) on the functions $\mathbb{E}_{s \sim \nu^{*}_t}\left[D_{KL}\left(\pi^{*}_t | \cdot \right)\right], \text{ for } t=1,\ldots, T$.\footnote{When  online learning is played on $\mathbb{E}_{s \sim \nu^{*}_t}\left[D_{KL}\left(\pi^{*}_t | \pi \right)\right]$  to determine $\pi_{t+1,0}$, we treat $\theta$ in $\pi_{\theta}$ for all $s\in\mathcal{S}$ and $a\in\mathcal{A}$ as the decision variable. For simplicity, we will refer to $\pi$ as the decision variable.} 
Then, it holds that 
\begin{align*}
 & \Bar{R}_0 \leq  \mathcal{O}\left(\frac{1}{\sqrt{M}} \sqrt{\frac{\log T}{T} + D^{*2} } \right), \Bar{R}_i \leq  \mathcal{O}\left(\frac{1}{\sqrt{M}} \sqrt{\frac{\log T}{T} + D^{*2} } \right) \ \forall i=1,\ldots,p.
\end{align*}
\end{lemma}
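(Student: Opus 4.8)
The plan is to start from the single-task CRPO bound \eqref{eq:RegretRandC} and treat its right-hand side, viewed as a function of the initialization $\pi_{t,0}$, as the sequence of loss functions fed to the meta-learner. Concretely, for a fixed learning rate $\alpha$ the per-task optimality gap is at most $\frac{2}{\alpha M}\mathbb{E}_{s\sim\nu_t^*}[D_{KL}(\pi_t^*|\pi_{t,0})] + \frac{4\alpha c_{max}^2|\mathcal{S}||\mathcal{A}|}{(1-\gamma)^3}$, and likewise for each constraint. Averaging over $t=1,\dots,T$ gives
\begin{equation*}
\bar R_0 \;\le\; \frac{2}{\alpha M}\cdot\frac{1}{T}\sum_{t=1}^T \mathbb{E}_{s\sim\nu_t^*}[D_{KL}(\pi_t^*|\pi_{t,0})] \;+\; \frac{4\alpha c_{max}^2|\mathcal{S}||\mathcal{A}|}{(1-\gamma)^3},
\end{equation*}
and the same bound for $\bar R_i$. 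So it suffices to control $\frac{1}{T}\sum_t f_t(\pi_{t,0})$ where $f_t(\pi) := \mathbb{E}_{s\sim\nu_t^*}[D_{KL}(\pi_t^*|\pi)]$; this is exactly the average regret of the online algorithm (FTRL/OGD) against the best fixed initialization $\phi$, plus $\frac{1}{T}\sum_t f_t(\phi) = D^{*2}$ by definition of task-similarity.

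The key steps, in order: (i) invoke the standard regret guarantee for FTRL/OGD on the losses $f_t$. By Assumption \ref{asmptn:newAsmptn1} and the stated consequences, each $f_t$ is $L_g$-Lipschitz, $L_\pi$-smooth and $\mu_\pi$-strongly convex on the shrinkage simplex $\Delta\mathcal{A}_\varrho$; strong convexity gives the logarithmic regret bound $\sum_{t=1}^T f_t(\pi_{t,0}) - \min_\phi \sum_{t=1}^T f_t(\phi) \le \frac{L_g^2}{2\mu_\pi}(\log T + 1)$. Dividing by $T$ and adding $D^{*2}$ yields $\frac{1}{T}\sum_t f_t(\pi_{t,0}) \le \frac{L_g^2(\log T+1)}{2\mu_\pi T} + D^{*2}$. (ii) Substitute this into the averaged CRPO bound to get $\bar R_0 \le \frac{2}{\alpha M}\big(\frac{L_g^2(\log T+1)}{2\mu_\pi T} + D^{*2}\big) + \frac{4\alpha c_{max}^2|\mathcal{S}||\mathcal{A}|}{(1-\gamma)^3}$. (iii) Optimize over $\alpha$: the right-hand side is of the form $\frac{A}{\alpha M} + B\alpha$ with $A = 2\big(\frac{L_g^2(\log T+1)}{2\mu_\pi T}+D^{*2}\big)$ and $B = \frac{4 c_{max}^2|\mathcal{S}||\mathcal{A}|}{(1-\gamma)^3}$, minimized at $\alpha = \sqrt{A/(BM)}$, giving value $2\sqrt{AB/M}$. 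Plugging in $A$ and $B$ recovers precisely the stated $\alpha = \frac{(1-\gamma)^{3/2}}{\sqrt{2M|\mathcal{S}||\mathcal{A}|}}\sqrt{\frac{L_g^2(\log T+1)}{\mu_\pi T}+D^{*2}}$ and the bound $\bar R_0 \le \mathcal{O}\big(\frac{1}{\sqrt M}\sqrt{\frac{\log T}{T}+D^{*2}}\big)$. The argument for $\bar R_i$ is identical since its per-task upper bound in \eqref{eq:RegretRandC} has the same form.

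One subtlety to handle carefully: the online learner actually plays in the softmax parameter $\theta$ (as the footnote notes), so I need the regret guarantee to transfer to the parametrization actually used — either by arguing the relevant convexity/smoothness constants are as claimed on the constrained set $\Delta\mathcal{A}_\varrho$, or by running OGD directly on the simplex-valued policy and projecting onto $\Delta\mathcal{A}_\varrho$. A second minor point is that $\nu_t^*$ and $\pi_t^*$ are assumed revealed only after task $t$ completes, which is exactly the information pattern online learning requires, so no issue there. The main obstacle I anticipate is not any single step but making the convexity/Lipschitz/strong-convexity claims for $f_t$ fully rigorous on the shrinkage simplex — i.e. verifying that the constants $L_g, L_\pi, \mu_\pi$ promised by Assumption \ref{asmptn:newAsmptn1} genuinely hold uniformly over $\pi_{t,0}(\cdot|s)\in\Delta\mathcal{A}_\varrho$ and over the (possibly worst-case) choice of $\pi_t^*$ — after which the regret-to-bound reduction and the $\alpha$-optimization are routine.
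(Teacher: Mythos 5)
Your proposal is correct and follows essentially the same route as the paper's proof: average the per-task CRPO bound, split $\frac{1}{T}\sum_t f_t(\pi_{t,0})$ into the online learner's (logarithmic, by $\mu_\pi$-strong convexity) regret against the best fixed $\phi^*$ plus $D^{*2}$, then balance the two terms in $\alpha$. The subtleties you flag (softmax parametrization vs.\ simplex, and the constants from Assumption \ref{asmptn:newAsmptn1}) are exactly the points the paper also handles only via the footnote and the stated consequences of that assumption, so there is no gap relative to the paper's own level of rigor.
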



The above result reveals an interesting benefit brought by including more tasks (the regret decays at a rate of $\log(T)/T$)  with more similarity (i.e., lower $D^*$), which improves upon single-task guarantee and serves as the initial point of our study. 
However, there are several limitations. First, if the optimal policies $\pi_t^*$ and the induced state distributions $\nu_t^*$ are not revealed after each task, it is not likely that the plug-in estimator $\{\mathbb{E}_{s \sim \hat{\nu}_t}[D_{KL}(\hat{\pi}_t|\cdot)]\}_{t=1}^T$ with the learned policy $\hat{\pi}_t$ and estimated visitation distribution $\hat{\nu}_t$ is an unbiased estimator, ruling out existing analysis for FTRL or OGD in the bandit setting. Besides, 
while the knowledge of $D^*$ used to determine the learning rate can be relaxed \citep{khodak2019adaptive,balcan2019provable}, the resulting scheme is complex to implement---we expect that the learning rates can be chosen \emph{adaptively} for different tasks based on losses observed in the past. We aim to address these challenges with a series of developments in the next section.

\begin{algorithm}[t]
  \caption{Inexact CMDP-within-online framework (exemplified with CRPO \citep{xu2021crpo} as the within-task safe RL algorithm)}
  \begin{algorithmic}[1]
    \STATE Initialize actor policy $\pi_{1,0}$ and learning rate $\alpha_1$
    \FOR{task $t \in [T]$}
        \STATE Run CRPO with initializations for actor policy $\pi_{t,0}$ and learning rates $\alpha_{t}$ to obtain a policy $\hat{\pi}_t$
        \STATE Estimate the discounted state visitation distribution $\hat{\nu}_t$ of $\hat{\pi}_t$ based on trajectory data collected within-task $t$ with DualDICE \citep{nachum2019dualdice}
    \STATE Run one or multiple steps of OGD on
    \begin{enumerate}
        \item[(a)] $INIT$: $\hat{f}_{t}^{init}(\phi) = \mathbb{E}_{\hat{\nu}_t}[D_{KL}(\hat{\pi}_t|\phi)]$.
        \item[(b)] SIM: $\hat{f}_t^{sim}(\kappa) = \frac{c_1^t \mathbb{E}_{\hat{\nu}_t}[D_{KL}(\hat{\pi}_t|\pi_{t,0})]}{\kappa} + \kappa (c_2^tM + c_4^t \sqrt{M})+c_3^t\sqrt{M}$
    \end{enumerate}
    to obtain $\pi_{t+1,0}$ and $\alpha_{t+1}$. Here $c_1^t=2$, $c_2^t=\frac{4c_{max}^2|\mathcal{S}||\mathcal{A}|}{(1-\gamma)^3}$, $c_3^t=\frac{3+(1-\gamma)^2}{(1-\gamma)^2}$, and $c_4^t=\frac{3 c_{max}}{(1-\gamma)^2}$.
     \ENDFOR
  \end{algorithmic}
  \label{alg:MetaSRL}
\end{algorithm} 

 \section{Provable guarantees for practical CMDP-within-online framework}
\label{sec:Methodology}

\subsection{Inexact CMDP-within-online framework}
\label{subsec:Inexact}

One of the key steps to generalize the online-within-online methodology \citep{balcan2019provable}
to Meta-SRL is to relax the assumption of accessing the exact upper bounds of within-task performance by designing algorithms to estimate and update their inexact versions.

\textbf{Estimation of upper bounds.} Once a CMDP task $t$ is complete, the meta-learner only has access to a suboptimal policy $\hat{\pi}_t$ and the trajectory dataset $\mathcal{D}_t$ produced by some safe RL algorithm. Let $\tilde{\nu}_t$ denote the discounted state visitation distribution induced by policy $\hat{\pi}_t$. To obtain an estimate $\hat{\nu}_t$ from $\mathcal{D}_t$, recent methods often rely on estimating  discounted state visitation distribution corrections \citep{liu2018breaking,gelada2019off}.
However, the main issues are that $\mathcal{D}_t$ is collected by multiple behavior policies during the learning period, and depending on how far these behavior policies are from the target policy, the per-step importance ratios involved in these methods may have large variance, which may result in a detrimental effect on stochastic algorithms. In this work, we use a methods from the distribution correction estimation (DICE) family, namely DualDICE \citep{nachum2019dualdice}, which is agnostic to the number of behavior policies used and does not involve any per-step importance ratios, thus is less likely to be affected by their high variance. In particular, for each state-action pair $(s,a)$, the method aims to estimate the quantity $\omega_{\pi/\mathcal{D}_t}(s,a) = \frac{d^\pi(s,a)}{d^{\mathcal{D}_t}(s,a)}$, i.e., the likelihood that the target policy $\pi$ will experience the state-action pair normalized by the probability with which the state-action pair appears in the off-policy data $\mathcal{D}_t$. Thereby, we estimate $\mathbb{E}_{\nu_t^*}[D_{KL}(\pi_t^*|\pi)]$ with $\mathbb{E}_{\hat{\nu}_t}[D_{KL}(\hat{\pi}_t|\pi)]$ by plugging in $\hat{\pi}_t$ from the within-task CMDP and $\hat{\nu}_t$ from DualDICE in lieu of the optimal policy $\pi_t^*$ and the corresponding discounted state visitation distribution $\nu_t^*$.

\textbf{Bounding the estimation error.} We breakdown the error by sources of origin:
\begin{align}
    \mathbb{E}_{\nu_t^*}[D_{KL}(\pi_t^*|\pi)] &- \mathbb{E}_{ \hat{\nu}_t}[D_{KL}(\hat{\pi}_{t}|\pi)]=\underbrace{\mathbb{E}_{\nu_t^*}[D_{KL}(\pi_t^*|\pi)] - \mathbb{E}_{ \Tilde{\nu}_t}[D_{KL}(\pi_t^*|\pi)]}_{(A)}\label{eq:error_decompose}\\
    &+\underbrace{\mathbb{E}_{ \Tilde{\nu}_t}[D_{KL}(\pi_t^*|\pi)] - \mathbb{E}_{ \hat{\nu}_t}[D_{KL}(\pi_t^*|\pi)]}_{(B)}+\underbrace{\mathbb{E}_{\hat{\nu}_t}[D_{KL}(\pi_t^*|\pi)] - \mathbb{E}_{ \hat{\nu}_t}[D_{KL}(\hat{\pi}_{t}|\pi)]}_{(C)},\nonumber
\end{align}
where $(A)$ accounts for the mismatch between the discounted state visitation distributions of an optimal policy $\pi_t^*$ and a suboptimal one $\hat{\pi}_t$, $(B)$ originates from the estimation error of DualDICE, and $(C)$ is due to the difference between $\pi_t^*$ and $\hat{\pi}_t$ measured according to $\hat{\pi}_t$. 


To bound $(A)$, we need to control the distance between $\nu_t^*$ and $\Tilde{\nu}_t$, which can be bounded by the distance between the inducing policy parameters as long as they are Lipschitz continuous \cite[Lemma 3]{xu2020improving}. In addition, the bound on $(C)$ also depends on the distance between policies. Controlling the distance between a policy to an optimal policy based on the suboptimality gap requires the optimization to have some curvatures around the optima (e.g., H{\"o}lderian growth \citep{johnstone2020faster}). However, to the best of our knowledge, available results are algorithm-dependent PL inequalities for policy gradient \citep{mei2020global} or quadratic growth with entropy regularization \citep{ding2021beyond}. Given some mild assumptions on the objective/constraint functions and policy parametrization, we can show that a growth condition holds broadly for any CMDP problem. 
\begin{assumption}\label{asmptn:Definable}
The functions $J_{t,i}(\cdot)$ for $i =0,1,...,p$ and $t \in [T]$ and parametric policy $\pi_{\theta}$ are definable in some o-minimal structure \citep{van1996geometric}.
\end{assumption}
\hl{The definition of ``o-minimal structure'' is given in Appendix \ref{subsec:app_F1}. Assumption \ref{asmptn:Definable} is mild as practically all functions from real-world applications, including deep neural networks, are definable in some o-minimal structures.
For Assumption \ref{asmptn:Definable} to hold, a sufficient condition requires that the reward and utility functions belong to the same o-minimal structure.} \hl{ We use Assumption \ref{asmptn:Definable} to bound the terms $(A)$ and $(C)$ as definable sets admit the property of Whitney stratification, and any stratifiable function enjoys a nonsmooth Kurdyka-Lojasiewicz inequality, which implies some curvature around the local/global minima. Details on tame geometry and proof of Theorem \ref{thm:dualDICE} is given in Appendix \ref{sec:kl-bound}.}
\begin{theorem}[KL divergence estimation error bound] \label{thm:dualDICE} The following bound holds:
\begin{equation*}
\begin{aligned}
    \quad & |\mathbb{E}_{\nu_t^*}[D_{KL}(\pi_t^*|\pi)] - \mathbb{E}_{ \hat{\nu}_t}[D_{KL}(\hat{\pi}_t|\pi)]| \\ \quad &  \leq \mathcal{O}\bigg(h\left(\frac{1}{\sqrt{M}}\right)+\frac{1}{\sqrt{M}}+ \sqrt{\epsilon_{opt}}+\sqrt{\epsilon_{approx}(\mathcal{F},\mathcal{H})}\bigg) = \epsilon_t,
    \end{aligned}
\end{equation*}
where $h$ is a strictly increasing continuous function with the property that $h(0)=0$ as specified in Proposition \ref{prop:Bolte}, $\epsilon_{approx}(\mathcal{F},\mathcal{H})$ and $\epsilon_{opt}$ are the approximation error and optimization error of DualDICE, defined in \Eqref{eq:eps_FH} and \Eqref{eq:opt}, respectively.
\end{theorem}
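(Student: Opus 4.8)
The plan is to start from the three-term split \eqref{eq:error_decompose}, so that the left-hand side equals $(A)+(B)+(C)$, and to bound each summand separately. The key reduction I would use is that $(A)$ and $(C)$ are both governed by the discrepancy between the policy $\hat{\pi}_t$ returned by CRPO and an optimal policy of \eqref{eq:CRPOsafeRL} (measured on the simplex, e.g.\ in total variation), whereas $(B)$ is an intrinsic error of the DualDICE estimator, controlled by its optimization error, its approximation error over the chosen function classes, and the size of $\mathcal{D}_t$, which scales with $M$.

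\textbf{Terms $(A)$ and $(C)$.} For $(A)$, I would use that the discounted state-visitation map is Lipschitz in the inducing policy \cite[Lemma 3]{xu2020improving}, so that $\|\nu_t^* - \tilde{\nu}_t\|_1$ is bounded by the policy discrepancy between $\pi_t^*$ and $\hat{\pi}_t$; multiplying by the uniform bound $C_\pi$ on the KL term (Assumption \ref{asmptn:newAsmptn1}) bounds $(A)$ by that discrepancy up to a constant. For $(C)$, the $L_g$-Lipschitzness of $D_{KL}(\cdot\,|\,\pi)$ on the shrinkage simplex (Assumption \ref{asmptn:newAsmptn1}) bounds $|D_{KL}(\pi_t^*|\pi)-D_{KL}(\hat{\pi}_t|\pi)|$, hence $(C)$, by the same discrepancy. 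It then remains to bound $\mathrm{dist}(\hat{\pi}_t,\Pi_t^*)$ by the within-task optimality gap and constraint violation of \eqref{eq:RegretRandC}. This is where Assumption \ref{asmptn:Definable} enters: since the $J_{t,i}$ and the softmax policy are definable in a common o-minimal structure, a merit function for \eqref{eq:CRPOsafeRL} aggregating the optimality gap and the positive parts of the constraint violations is definable, hence admits a Whitney stratification, hence satisfies a nonsmooth Kurdyka--{\L}ojasiewicz inequality; running the subgradient-flow argument of Appendix \ref{sec:kl-bound} on this merit function (Proposition \ref{prop:Bolte}) bounds the flow length, and thereby $\mathrm{dist}(\hat{\pi}_t,\Pi_t^*)$, by $h$ evaluated at the merit value, with $h$ strictly increasing, continuous, $h(0)=0$. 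Substituting the CRPO rate \eqref{eq:RegretRandC} --- optimality gap and violation $\mathcal{O}(1/\sqrt{M})$ under the stated $\alpha_t$ --- then gives $(A),(C)=\mathcal{O}(h(1/\sqrt{M}))$.

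\textbf{Term $(B)$.} Here $\hat{\nu}_t$ is the DualDICE estimate of $\tilde{\nu}_t$, obtained by approximately solving DualDICE's change-of-variables saddle-point program over function classes $\mathcal{F}$ and $\mathcal{H}$ on $\mathcal{D}_t$. I would split $\|\hat{\nu}_t - \tilde{\nu}_t\|_1$ into (i) the optimization error $\epsilon_{opt}$ of terminating the solver, (ii) the approximation error $\epsilon_{approx}(\mathcal{F},\mathcal{H})$ of the restricted classes, and (iii) the statistical error of the empirical objective, which is $\mathcal{O}(1/\sqrt{M})$ since $|\mathcal{D}_t|=\Theta(M)$. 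Because the DualDICE objective upper-bounds a squared / Bregman-type discrepancy, passing to $\|\hat{\nu}_t-\tilde{\nu}_t\|_1$ costs a square root on (i) and (ii); multiplying by $C_\pi$ yields $(B)=\mathcal{O}(1/\sqrt{M}+\sqrt{\epsilon_{opt}}+\sqrt{\epsilon_{approx}(\mathcal{F},\mathcal{H})})$.

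\textbf{Combining, and the main obstacle.} Summing the bounds on $(A)$, $(B)$, $(C)$ and absorbing task-independent constants into $\mathcal{O}(\cdot)$ gives the asserted $\epsilon_t$. The hard part is the algorithm-agnostic growth inequality behind $(A)$ and $(C)$: converting bare definability of $J_{t,i}$ and $\pi_\theta$ into a quantitative bound on $\mathrm{dist}(\hat{\pi}_t,\Pi_t^*)$ in terms of the optimality gap and constraint violation, without appealing to the policy-gradient PL inequality of \citep{mei2020global} or the entropy-regularized quadratic growth of \citep{ding2021beyond}. Two subtleties I would watch there: $\pi_t^*$ is typically non-unique, so the {\L}ojasiewicz/growth statement must be phrased as distance to the solution set $\Pi_t^*$ rather than to a point; and the desingularizing function $h$ should be taken uniform over the $T$ tasks, which uses that all tasks live in one o-minimal structure with common constants.
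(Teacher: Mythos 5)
Your proof follows the same architecture as the paper's (Appendix \ref{sec:kl-bound}): the identical three-term split \eqref{eq:error_decompose}, the same Lipschitz reductions of $(A)$ and $(C)$ to the parameter distance $\|\theta_t^*-\hat\theta_t\|$ (Lemmas \ref{lem:BoundonA} and \ref{lem:BoundonC}), and the same treatment of $(B)$ via the $1$-strong convexity of the DualDICE objective, which is exactly where the square roots on $\epsilon_{opt}$ and $\epsilon_{approx}(\mathcal{F},\mathcal{H})$ come from (Lemma \ref{lem:BoundonB}). The one sub-step where you take a genuinely different route is the algorithm-agnostic bound on $\|\theta_t^*-\hat\theta_t\|$. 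You propose running the subgradient flow of a single definable merit function aggregating the optimality gap and the positive parts of the constraint violations, whose zero set is $\Pi_t^*$. The paper instead uses a two-step device (Figure \ref{fig:EnlargedFeasibleSet}): it enlarges the feasible set to $\mathcal{F}_{t,\tilde d}$ with $\tilde d = d+\mathcal{O}(1/\sqrt{M})$ so that $\hat\theta_t$ becomes feasible, runs the flow of $J_{t,0}+\mathbb{I}_{\mathcal{F}_{t,\tilde d}}$ from $\hat\theta_t$ and applies Proposition \ref{prop:Bolte} to get $\|\hat\theta_t-\tilde\theta_t^*\|\leq h(\mathcal{O}(1/\sqrt{M}))$ (Lemma \ref{lem:theta-tilde-bd}), and then separately shows the solution map $d\mapsto\theta_t(d)$ is definable and piecewise Lipschitz across finitely many Whitney strata, giving $\|\theta_t^*-\tilde\theta_t^*\|=\mathcal{O}(\|d-\tilde d\|)=\mathcal{O}(1/\sqrt{M})$ (Lemma \ref{lem:DefinableTheta}); this second step is what contributes the additive $1/\sqrt{M}$ alongside $h(1/\sqrt{M})$ in the final bound. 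Your merit-function variant would save that second step, but it must confront a point the paper's formulation partially sidesteps: the Kurdyka--\L{}ojasiewicz trajectory-length argument bounds $\|\hat\theta_t-\theta(\infty)\|$ by $h$ of the initial value only if the flow drives the function value to its global minimum $0$, and for a nonsmooth positive-part merit function you would need to rule out the flow stalling at a nonzero critical value before reaching $\Pi_t^*$. Your two closing caveats --- phrasing the growth statement as distance to the set $\Pi_t^*$ rather than to a point, and taking $h$ uniform over tasks --- are exactly the right ones; the paper handles the first by defining $\pi_t^*$ as the projection of $\hat\pi_t$ onto the optimal policy set and is essentially silent on the second.
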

\begin{remark}
\label{rem:cum_inexactness}
We define cumulative inexactness $\mathcal{E}_T \coloneqq \sum_{t=1}^T \epsilon_t$. This quantity decays with $M$ at a rate of $\mathcal{O} \left(h\left(\frac{1}{\sqrt{M}}\right)+ \frac{1}{\sqrt{M}}  \right)$ up to some approximation and optimization errors $\epsilon_{approx}$ and $\epsilon_{opt}$. Moreover, there is a trade-off between $\epsilon_{approx}$ and $\epsilon_{opt}$: if the parametrization functions $\mathcal{F}$ and $\mathcal{H}$ used to solve DualDICE optimization are chosen as neural networks, then $\epsilon_{approx}$ can be reduced at the cost of increasing $\epsilon_{opt}$. If we use stochastic gradient descent as an optimization algorithm in DualDICE with $K$ steps, then $\epsilon_{opt}$ decays at a rate of $\mathcal{O}(1/K)$. Note that $h$ is a definable function used in the Kurdyka–\L{}ojasiewicz (KL) inequality (see, e.g., \cite[Thm. 14]{bolte2007clarke}).
\end{remark}
With the above uniform bound on estimation error, our next step is to develop static regret bounds for the inexact online gradient descent, which are used to furnish the upper bounds on TAOG and TACV of the proposed inexact CMDP-within-online algorithm.


\begin{lemma}[Static regret bound for inexact OGD]
\label{cor:staticRegretOGD} 
Denote $f_t(\pi_{t,0}) \coloneqq \mathbb{E}_{\nu_t^*}[D_{KL}(\pi_t^*|\pi_{t,0})]$ for all $t \in [T]$. For any fixed comparator $\pi^*_{0} = \underset{\pi_{0} \in \Delta \mathcal{A}_{\varrho}^{|\mathcal{S}|}}{\argmin} \sum_{t=1}^T f_t(\pi_{0})$, if OGD is run on a sequence of loss functions $\{\hat{f}_t\}_{t \in [T]}$, where $\hat{f}_t(\pi_{t,0}) \coloneqq \mathbb{E}_{\hat{\nu}_t}[D_{KL}(\hat{\pi}_t|\pi_{t,0})]$ for all $t \in [T]$  with the step-size of $\mathcal{O}(1/\sqrt{T})$, then the following bound holds for static regret:
\begin{equation*}
    \sum_{t=1}^T f_t(\pi_{t,0}) - \sum_{t=1}^Tf_t(\pi_{0}^*)  \leq \mathcal{O} \left(\sqrt{T}+ \mathcal{E}_T \right),
\end{equation*}
where $\mathcal{E}_T \coloneqq \sum_{t=1}^T \epsilon_t$ is the cumulative inexactness, and $\epsilon_t$ is the upper bound from Theorem \ref{thm:dualDICE}.
\end{lemma}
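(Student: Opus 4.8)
The plan is to reduce the claimed bound to a standard static-regret guarantee for (exact) online gradient descent plus an additive error term coming from the inexactness quantified in Theorem~\ref{thm:dualDICE}. First I would recall the textbook fact (e.g.\ \citep{hazan2016introduction}) that, for a sequence of convex, $L_g$-Lipschitz loss functions over a convex compact set of diameter bounded by a constant (here $\Delta\mathcal{A}_\varrho^{|\mathcal{S}|}$, whose diameter and curvature constants are controlled by Assumption~\ref{asmptn:newAsmptn1} and the $C_\pi, L_g, L_\pi, \mu_\pi$ bounds listed after it), OGD with step size $\eta = \Theta(1/\sqrt{T})$ on the played losses $\{\hat f_t\}$ enjoys $\sum_{t=1}^T \hat f_t(\pi_{t,0}) - \sum_{t=1}^T \hat f_t(\pi_0^*) \leq \mathcal{O}(\sqrt{T})$ against any fixed comparator $\pi_0^*$ in the feasible set. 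The key point is that this bound holds for the \emph{played} losses $\hat f_t$, whereas the statement concerns the \emph{true} losses $f_t$.

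Next I would bridge from $\hat f_t$ to $f_t$ pointwise. By Theorem~\ref{thm:dualDICE}, for every $t$ and every $\pi$ in the feasible set, $|f_t(\pi) - \hat f_t(\pi)| \leq \epsilon_t$ (note the bound there is uniform in the decision variable $\pi$, since the error decomposition \eqref{eq:error_decompose} and the subsequent arguments do not exploit any particular $\pi$). Hence
\begin{align*}
\sum_{t=1}^T f_t(\pi_{t,0}) - \sum_{t=1}^T f_t(\pi_0^*)
&\leq \sum_{t=1}^T \big(\hat f_t(\pi_{t,0}) + \epsilon_t\big) - \sum_{t=1}^T \big(\hat f_t(\pi_0^*) - \epsilon_t\big)\\
&= \Big(\sum_{t=1}^T \hat f_t(\pi_{t,0}) - \sum_{t=1}^T \hat f_t(\pi_0^*)\Big) + 2\sum_{t=1}^T \epsilon_t
\leq \mathcal{O}(\sqrt{T}) + 2\mathcal{E}_T.
\end{align*}
Absorbing the constant into the $\mathcal{O}(\cdot)$ gives the claimed $\mathcal{O}(\sqrt{T} + \mathcal{E}_T)$. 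I would also need to confirm that the comparator used in the OGD regret bound can be taken to be the same $\pi_0^*$ that minimizes $\sum_t f_t$ over $\Delta\mathcal{A}_\varrho^{|\mathcal{S}|}$ — this is immediate since the OGD regret bound holds simultaneously for all fixed comparators in the feasible set.

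The main obstacle, and the step deserving the most care, is justifying that the inexact gradients actually fed to OGD do not degrade the $\sqrt{T}$ rate beyond the additive $\mathcal{E}_T$ term. There are two sub-issues: (i) OGD is run on the surrogate losses $\hat f_t$, so its \emph{gradients} are gradients of $\hat f_t$, not of $f_t$; the argument above sidesteps this cleanly by only using function-value closeness (and Lipschitzness/convexity of $\hat f_t$, which holds because $\hat f_t$ is again a KL-divergence-type function and inherits the same structural constants under Assumption~\ref{asmptn:newAsmptn1}), so no gradient-error propagation analysis is needed. (ii) One must make sure the constants hidden in $\mathcal{O}(\sqrt{T})$ — namely the feasible-set diameter and the Lipschitz constant $L_g$ — are indeed $T$-independent; this is exactly what the shrinkage-simplex Assumption~\ref{asmptn:newAsmptn1} and the boundedness conditions (1)–(3) after it are there to supply. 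A cleaner alternative, which I expect the authors to take in the appendix, is to invoke the general static-regret result for inexact OGD proved in Appendix~\ref{sec:inexact-app} as a black box, in which case the proof is simply: verify the hypotheses (convexity, Lipschitzness, bounded domain via Assumption~\ref{asmptn:newAsmptn1}; per-round error bound $\epsilon_t$ via Theorem~\ref{thm:dualDICE}), then instantiate the step size as $\mathcal{O}(1/\sqrt{T})$ and read off the conclusion.
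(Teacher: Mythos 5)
Your proof is correct, but it takes a genuinely different route from the paper's. The paper proves a general static-regret theorem for inexact OGD (Theorem~\ref{prop:InexactUpperBound-app}) in the language of $\epsilon$-subgradients: since $\|f_t-\hat f_t\|_\infty\leq\epsilon_t$, the exact gradient of $\hat f_t$ at $\pi_{t,0}$ is a $2\epsilon_t$-subgradient of the \emph{true} loss $f_t$ (Lemma~\ref{lem:app2epsilon}), so the standard OGD telescoping is carried out directly on the true losses, with $\epsilon_t$ entering once through the $\epsilon$-subgradient inequality and again through the bound $\|\hat{\nabla}_t\|^2\leq 2L_1^2+2cL_2\epsilon_t$ of Lemma~\ref{prop:Appsubdifferential2} --- which is where the $L_\pi$-smoothness of the KL divergence is genuinely used. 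You instead run \emph{exact} OGD on the surrogate losses $\hat f_t$ and transfer the resulting $\mathcal{O}(\sqrt{T})$ regret to the true losses by paying $2\epsilon_t$ per round at the function-value level. Your route is more elementary: it needs only convexity, Lipschitzness, a bounded domain, and the uniform closeness of $f_t$ and $\hat f_t$ (which, as you correctly observe, Theorem~\ref{thm:dualDICE} supplies uniformly over $\pi\in\Delta\mathcal{A}_\varrho^{|\mathcal{S}|}$), it dispenses with the smoothness hypothesis and the $\epsilon$-subgradient machinery entirely, and it yields a clean constant $2$ in front of $\mathcal{E}_T$ rather than $1+\mathcal{O}(1/\sqrt{T})$. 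What the paper's formulation buys is generality and reuse: the $\epsilon$-subgradient abstraction also covers inexact gradients that are not exact gradients of any nearby surrogate, and the same gradient-level error control is what the dynamic-regret analysis (Theorem~\ref{prop:DynamicRegret}) relies on, where a pure function-value transfer would not suffice because the contraction argument there operates on the iterates rather than on the losses. Both arguments need Assumption~\ref{asmptn:newAsmptn1} to furnish $T$-independent Lipschitz and diameter constants, as you note.
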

\VK{The static regret analyzed above is defined with respect to the optimal \emph{initial policy} $\pi_{0}^*$ in hindsight, not the \emph{final learned policy}. 
A static regret with respect to an initial policy provides freedom for the safe RL algorithm to adapt the initial policy based on observations within the task.} Once the static regret for the inexact OGD is established, we can obtain the upper bounds on TAOG and TACV for the proposed inexact CMDP-within-online algorithm in terms of the empirical task-similarity $\hat{D}^*$.

\begin{theorem}
\label{thm:InexactTAOG}
For each task $t$, we run CRPO for $M$ iterations with $\alpha = \mathcal{O} \left(\frac{1}{\sqrt{M}} \sqrt{\frac{1}{\sqrt{T}} + \frac{\mathcal{E}_T}{T} + \hat{D}^{*2}} \right) $ and we obtain $\{\hat{\nu}_t\}_{t=1}^T$ and $\{\hat{\pi}_t\}_{t=1}^T$. 
In addition, the initialization $\{\pi_{t,0}\}_{t=1}^T$ for each task $t$ are determined by playing \textit{inexact OGD} (Algorithm \ref{alg:InexactOGD}) on $\mathbb{E}_{ \hat{\nu}_t}\left[D_{KL}\left(\hat{\pi}_t | \cdot \right)\right], \text{ for } t=1,\ldots, T$. Then, the following holds for TAOG ($i=0$) and TACV ($i=1,...,p$):
\begin{align*}
 & \Bar{R}_i \leq  \mathcal{O}\left(\frac{1}{\sqrt{M}} \left(\sqrt{\frac{1}{\sqrt{T}}+ \frac{\mathcal{E}_T}{T} + \hat{D}^{*2}} \right) \right) \ \forall i=0,1,\ldots,p.
\end{align*}
\end{theorem}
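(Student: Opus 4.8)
The plan is to chain three ingredients --- the within-task CRPO bound~\eqref{eq:RegretRandC}, the static regret guarantee for inexact OGD (Lemma~\ref{cor:staticRegretOGD}), and the uniform KL-estimation-error bound (Theorem~\ref{thm:dualDICE}) --- and then to optimize over the fixed learning rate $\alpha$. Throughout I abbreviate $f_t(\pi) := \mathbb{E}_{s\sim\nu_t^*}[D_{KL}(\pi_t^*|\pi)]$ and $\hat f_t(\pi) := \mathbb{E}_{\hat\nu_t}[D_{KL}(\hat\pi_t|\pi)]$. First I would average~\eqref{eq:RegretRandC} over $t=1,\dots,T$: since $\bar R_i = \frac1T\sum_t R_i$ by Definition~\ref{def:taog}, this yields for every $i\in\{0,1,\dots,p\}$
\[
\bar R_i \;\le\; \frac{2}{\alpha M}\cdot\frac1T\sum_{t=1}^T f_t(\pi_{t,0}) \;+\; \frac{4\alpha c_{max}^2|\mathcal{S}||\mathcal{A}|}{(1-\gamma)^3},
\]
so it suffices to control the average comparator loss $\frac1T\sum_t f_t(\pi_{t,0})$ incurred by the meta-learner's initializations and then balance $\alpha$.

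Next I would bound that average. The initializations $\{\pi_{t,0}\}$ are produced by running inexact OGD (Algorithm~\ref{alg:InexactOGD}) on $\{\hat f_t\}$ with step-size $\mathcal{O}(1/\sqrt T)$, so Lemma~\ref{cor:staticRegretOGD} gives $\sum_t f_t(\pi_{t,0}) \le \sum_t f_t(\pi_0^*) + \mathcal{O}(\sqrt T + \mathcal{E}_T)$, where $\pi_0^* = \argmin_{\pi_0\in\Delta\mathcal{A}_\varrho^{|\mathcal{S}|}}\sum_t f_t(\pi_0)$. I would then pass from the \emph{true} comparator value to the empirical task-similarity: since $\pi_0^*$ minimizes the true cumulative loss, $\sum_t f_t(\pi_0^*) \le \sum_t f_t(\hat\phi^*)$ for the empirical minimizer $\hat\phi^* := \argmin_\phi \frac1T\sum_t \hat f_t(\phi)$; applying Theorem~\ref{thm:dualDICE} pointwise at $\pi=\hat\phi^*$ gives $f_t(\hat\phi^*)\le \hat f_t(\hat\phi^*)+\epsilon_t$, and summing yields $\sum_t f_t(\pi_0^*)\le T\hat D^{*2}+\mathcal{E}_T$ by the definition of $\hat D^{*2}$. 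Combining these,
\[
\frac1T\sum_{t=1}^T f_t(\pi_{t,0}) \;\le\; \mathcal{O}\!\left(\tfrac{1}{\sqrt T}+\tfrac{\mathcal{E}_T}{T}+\hat D^{*2}\right).
\]

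Finally, writing $G := \tfrac{1}{\sqrt T}+\tfrac{\mathcal{E}_T}{T}+\hat D^{*2}$, the averaged bound becomes $\bar R_i \le \tfrac{2C_1 G}{\alpha M}+C_2\alpha$ with $C_2 = \tfrac{4c_{max}^2|\mathcal{S}||\mathcal{A}|}{(1-\gamma)^3}$ and an absolute constant $C_1$; minimizing the right-hand side over $\alpha$ picks $\alpha = \Theta(\tfrac{1}{\sqrt M}\sqrt G)$ --- precisely the prescribed rate --- and produces $\bar R_i \le \mathcal{O}(\tfrac{1}{\sqrt M}\sqrt G) = \mathcal{O}\big(\tfrac{1}{\sqrt M}\sqrt{\tfrac{1}{\sqrt T}+\tfrac{\mathcal{E}_T}{T}+\hat D^{*2}}\big)$, uniformly for $i=0$ (TAOG) and $i=1,\dots,p$ (TACV), which is the claim.

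The hard part will be the mismatch of loss functions: Lemma~\ref{cor:staticRegretOGD} measures regret against the minimizer of the \emph{true} losses $f_t$ (built from $\pi_t^*,\nu_t^*$), while the quantity we want in the final bound, $\hat D^{*2}$, is defined through the \emph{empirical} losses $\hat f_t$ (built from $\hat\pi_t,\hat\nu_t$). Bridging the two requires the estimation-error bound of Theorem~\ref{thm:dualDICE} to hold \emph{uniformly over all admissible initialization policies} $\pi$ (so it may be invoked at $\hat\phi^*$), which is exactly where the tame-geometry / Kurdyka--{\L}ojasiewicz machinery behind that theorem, together with the boundedness, Lipschitz, smoothness, and strong-convexity conditions of Assumption~\ref{asmptn:newAsmptn1}, is needed. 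Two further points to handle carefully are: (i) expectation bookkeeping --- \eqref{eq:RegretRandC} and Definition~\ref{def:taog} both carry expectations over algorithmic and environmental randomness, so the OGD regret and the estimation error must be applied consistently in expectation (or on a common high-probability event); and (ii) the prescribed $\alpha$ depends on $\mathcal{E}_T$ and $\hat D^{*2}$, i.e. on the whole task sequence, which is acceptable here because $\alpha$ is a single fixed constant but is precisely what motivates the per-task adaptive learning rates developed in Corollary~\ref{cor:CorollaryAdpativeRate}.
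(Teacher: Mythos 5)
Your proposal is correct and follows essentially the same route as the paper's proof: average the within-task CRPO bound, split off the inexact-OGD static regret against a fixed comparator, convert the comparator's true loss to the empirical task-similarity $\hat{D}^{*2}$ via the KL-estimation error of Theorem \ref{thm:dualDICE}, and balance $\alpha$. The only cosmetic difference is that you pass through the true-loss minimizer $\pi_0^*$ and then bound $\sum_t f_t(\pi_0^*)\le\sum_t f_t(\hat\phi^*)$, whereas the paper takes the empirical minimizer $\hat\phi^*$ as the comparator directly in the regret split; both are valid and give the same bound.
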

The benefit of task-similarity is preserved when we perform directly on the plug-in estimator  $\mathbb{E}_{ \hat{\nu}_t}\left[D_{KL}\left(\hat{\pi}_t | \cdot \right)\right]$, though we incur an additional cost on the inexactness $\mathcal{E}_T$ and the dependence on $T$ is worse compared to Lemma \ref{lemma: ideal setting}. As $\mathcal{E}_T/T$ diminishes when the learned policy becomes optimal across tasks, e.g., by increasing within-task steps $M$ or if meta-initialization is chosen such that a few steps suffice to reach optimal, we expect the inexactness to have  a limited effect on the performance.  

\subsection{Dynamic regret and task-relatedness}
\label{sec:dynamic-regret}
\hl{In many settings, we have a changing environment, so it is natural to study dynamic regret and compare with a sequence of potentially time-varying \emph{initial policies} $\{\psi_t^*\}_{t=1}^T$.} To measure task-similarity in this case, we define \textbf{task-relatedness} which can be measured by $V^2_{\psi}=\frac{1}{T}\sum_{t=1}^T \mathbb{E}_{s \sim \nu_t^*}[D_{KL}(\pi_t^*|\psi_t^*)]$. This notion of task-relatedness gives the measure of how far optimal policies are in each task from some time-varying comparator. We denote \textbf{empirical task-relatedness} as $\hat{V}_\psi^2 \coloneqq \frac{1}{T}\sum_{t=1}^T \mathbb{E}_{s \sim \hat{\nu}_t}[D_{KL}(\hat{\pi}_t|\psi_t^*)]$, which depends on the suboptimal policy returned by the within-task algorithm. 
To measure the performance of Meta-SRL in dynamic settings, we analyze the dynamic regret bound, i.e., $U_T\coloneqq\sum_{t=1}^Tf_t(\phi_t)-\sum_{t=1}^T f_t(\psi^*_t)$, where $\psi^*_t\in\arg\min_{x\in\mathcal{X}}f_t(x)$ is a sequence of minimizers for each loss, and $f_t(\cdot) = \mathbb{E}_{s \sim \nu_t^*}[D_{KL}(\pi_t^*|\cdot)]$. By exploiting the strong convexity of the loss function (KL divergence in our case), previous studies have shown that the dynamic regret can be upper bounded by the path-length of the comparator sequence, defined as $\mathcal{P}_T\coloneqq\sum_{t=2}^T\|\psi_t^*-\psi_{t-1}^*\|$, which captures the cumulative difference between successive comparators \citep{zhao2020dynamic}. The bound can be further improved  for strongly convex functions as the minimum of the path-length and the squared path-length, $\mathcal{S}_T\coloneqq\sum_{t=2}^T\|\psi_t^*-\psi_{t-1}^*\|^2$, which can be much smaller than the path-length \citep{zhang2017improved}. We extend these results to the settings of inexact online gradient descent by allowing the learner to query the inexact gradient of the function. 


\begin{lemma}[Dynamic regret bound for inexact OGD]\label{cor:dynamicRegretOGD}
Denote $f_t(\phi_{t}) \coloneqq \mathbb{E}_{\nu_t^*}[D_{KL}(\pi_t^*|\phi_{t})]$ for all $t \in [T]$. For any dynamically varying comparator $\psi^*_{t}$ , if single-step inexact OGD is run with the step-size $\beta \leq \frac{1}{2\mu_\pi}$ on a sequence of loss functions $\{\hat{f}_t\}_{t \in [T]}$, where $\hat{f}_t(\phi_t) = \mathbb{E}_{\hat{\nu}_t}[D_{KL}(\hat{\pi}_t|\phi_{t})]$ , then the following bound holds for dynamic regret:
\begin{equation*}
    \sum_{t=1}^T f_t(\phi_{t}) - \sum_{t=1}^Tf_t(\psi_{t}^*) \leq \mathcal{O}\left(\min(\mathcal{S}_T+\mathcal{E}_T,\mathcal{P}_T+\tilde{\mathcal{E}}_T) \right),
\end{equation*}
where $\mathcal{P}_T \coloneqq \sum_{t=2}^T\|\psi_{t}^* - \psi_{t-1}^*\|$ is the path-length of the comparator sequence, $\mathcal{S}_T \coloneqq \sum_{t=2}^T\|\psi_{t}^* - \psi^*_{t-1}\|^2$ is the squared path-length, $\mathcal{E}_T \coloneqq \sum_{t=1}^T \epsilon_t$ is the cumulative inexactness, $\tilde{\mathcal{E}}_T \coloneqq \sum_{t=1}^T \sqrt{\epsilon_t}$ is the cumulative square root of inexactness, and $\epsilon_t$ is the upper bound from Theorem \ref{thm:dualDICE}.
\end{lemma}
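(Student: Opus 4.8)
The plan is to cleanly separate the ``inexactness'' from the ordinary comparator drift. The key observation is that the iterates $\{\phi_t\}$ are produced by running OGD \emph{exactly} on the surrogate losses $\hat f_t(\cdot)=\mathbb{E}_{\hat\nu_t}[D_{KL}(\hat\pi_t|\cdot)]$ (these are fully known functions once $\hat\pi_t,\hat\nu_t$ are fixed), so the only inexactness is the gap between $\hat f_t$ and the true losses $f_t(\cdot)=\mathbb{E}_{\nu_t^*}[D_{KL}(\pi_t^*|\cdot)]$. Theorem~\ref{thm:dualDICE} bounds this gap \emph{uniformly} in the policy argument on the shrinkage simplex $\Delta\mathcal{A}_\varrho^{|\mathcal{S}|}$ (the $\epsilon_t$ there does not depend on $\pi$, and the boundedness condition (1) following Assumption~\ref{asmptn:newAsmptn1} keeps every KL term finite on this set), i.e. $|f_t(\phi)-\hat f_t(\phi)|\le\epsilon_t$ for all $\phi\in\Delta\mathcal{A}_\varrho^{|\mathcal{S}|}$. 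Writing $\hat\psi_t^*\in\arg\min_{\phi\in\Delta\mathcal{A}_\varrho^{|\mathcal{S}|}}\hat f_t(\phi)$ and using $\hat f_t(\psi_t^*)\ge\hat f_t(\hat\psi_t^*)$, this gives
\[
 f_t(\phi_t)-f_t(\psi_t^*)\ \le\ \big(\hat f_t(\phi_t)-\hat f_t(\hat\psi_t^*)\big)+2\epsilon_t ,
\]
so $\sum_t\big[f_t(\phi_t)-f_t(\psi_t^*)\big]\le\big(\text{dynamic regret of exact OGD on }\{\hat f_t\}\text{ w.r.t. }\{\hat\psi_t^*\}\big)+2\mathcal{E}_T$. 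It then remains to (i) bound the exact dynamic regret and (ii) replace the surrogate minimizers $\{\hat\psi_t^*\}$ by the true minimizers $\{\psi_t^*\}$.

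\textbf{Exact dynamic regret on the surrogates.} By Assumption~\ref{asmptn:newAsmptn1}, each $\hat f_t$ is $\mu_\pi$-strongly convex, $L_\pi$-smooth and $L_g$-Lipschitz on $\Delta\mathcal{A}_\varrho^{|\mathcal{S}|}$, so for the projected update $\phi_{t+1}=\Pi_{\Delta\mathcal{A}_\varrho^{|\mathcal{S}|}}(\phi_t-\beta\nabla\hat f_t(\phi_t))$ with $\beta\le\frac{1}{2\mu_\pi}$ one gets the one-step contraction $\|\phi_{t+1}-\hat\psi_t^*\|\le c\,\|\phi_t-\hat\psi_t^*\|$ with $c\in(0,1)$, since $\hat\psi_t^*$ is a fixed point of the projected step (this is the core estimate of the inexact-OGD analysis in Appendix~\ref{sec:inexact-app}). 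Combined with $\|\phi_{t+1}-\hat\psi_{t+1}^*\|\le c\|\phi_t-\hat\psi_t^*\|+\|\hat\psi_{t+1}^*-\hat\psi_t^*\|$ and summing a geometric series, $\sum_t\|\phi_t-\hat\psi_t^*\|\le\frac{1}{1-c}(\|\phi_1-\hat\psi_1^*\|+\hat{\mathcal{P}}_T)$; squaring the same recursion and using $(a+b)^2\le(1+\delta)a^2+(1+\delta^{-1})b^2$ with $\delta$ chosen so $(1+\delta)c^2<1$ gives $\sum_t\|\phi_t-\hat\psi_t^*\|^2=\mathcal{O}(1+\hat{\mathcal{S}}_T)$, where $\hat{\mathcal{P}}_T,\hat{\mathcal{S}}_T$ are the (squared) path-lengths of $\{\hat\psi_t^*\}$. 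Feeding the first into $\hat f_t(\phi_t)-\hat f_t(\hat\psi_t^*)\le L_g\|\phi_t-\hat\psi_t^*\|$ (Lipschitzness) and the second into the standard strongly-convex bound $\hat f_t(\phi_t)-\hat f_t(\hat\psi_t^*)=\mathcal{O}(\|\phi_t-\hat\psi_t^*\|^2)$ yields $\sum_t[\hat f_t(\phi_t)-\hat f_t(\hat\psi_t^*)]=\mathcal{O}(\min(1+\hat{\mathcal{P}}_T,\,1+\hat{\mathcal{S}}_T))$.

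\textbf{From surrogate minimizers to true minimizers, and combining.} Since $f_t$ is $\mu_\pi$-strongly convex with constrained minimizer $\psi_t^*$, $f_t(\hat\psi_t^*)\ge f_t(\psi_t^*)+\frac{\mu_\pi}{2}\|\hat\psi_t^*-\psi_t^*\|^2$; also $f_t(\hat\psi_t^*)\le\hat f_t(\hat\psi_t^*)+\epsilon_t\le\hat f_t(\psi_t^*)+\epsilon_t\le f_t(\psi_t^*)+2\epsilon_t$, hence $\|\hat\psi_t^*-\psi_t^*\|\le 2\sqrt{\epsilon_t/\mu_\pi}$. The triangle inequality then gives $\hat{\mathcal{P}}_T\le\mathcal{P}_T+\mathcal{O}\!\big(\sum_t\sqrt{\epsilon_t}\big)=\mathcal{P}_T+\mathcal{O}(\tilde{\mathcal{E}}_T)$, and $(a+b+c)^2\le3(a^2+b^2+c^2)$ gives $\hat{\mathcal{S}}_T\le 3\mathcal{S}_T+\mathcal{O}\!\big(\sum_t\epsilon_t\big)=\mathcal{O}(\mathcal{S}_T+\mathcal{E}_T)$. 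Plugging these back and adding the $2\mathcal{E}_T$ from the reduction: the path-length branch becomes $\mathcal{O}(\mathcal{P}_T+\tilde{\mathcal{E}}_T+\mathcal{E}_T)=\mathcal{O}(\mathcal{P}_T+\tilde{\mathcal{E}}_T)$ (using $\epsilon_t\le1$, so $\mathcal{E}_T\le\tilde{\mathcal{E}}_T$), and the squared branch becomes $\mathcal{O}(\mathcal{S}_T+\mathcal{E}_T)$; taking the smaller of these two valid bounds yields $\mathcal{O}(\min(\mathcal{S}_T+\mathcal{E}_T,\,\mathcal{P}_T+\tilde{\mathcal{E}}_T))$.

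\textbf{Main obstacle.} The delicate step is the constant-step-size contraction in the second paragraph. With a fixed $\beta$ this genuinely relies on smoothness of $\hat f_t$ (without it a $\Theta(T\beta L_g^2)$ noise floor appears and the dynamic regret is no longer controlled by the path-lengths alone), and one must verify that the \emph{constrained} minimizers $\hat\psi_t^*$ are fixed points of the projected step and that $\hat f_t$ genuinely inherits the modulus $\mu_\pi$ on $\Delta\mathcal{A}_\varrho^{|\mathcal{S}|}$ (this is where full support of $\hat\nu_t$, together with the shrinkage-simplex constraint, is used) --- this is why the step-size condition is stated as $\beta\le\frac{1}{2\mu_\pi}$. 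The other point to keep in mind is that the closeness bound must be uniform in the policy argument, so that it can be invoked simultaneously at the iterates, at the two distinct comparators $\psi_t^*$ and $\hat\psi_t^*$, and in the minimizer-displacement estimate --- which is exactly what Theorem~\ref{thm:dualDICE} provides.
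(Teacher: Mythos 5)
Your proposal is correct in substance but takes a genuinely different route from the paper's. The paper never introduces surrogate minimizers: it views $\nabla\hat f_t(\phi_t)$ as an $\epsilon_t$-subgradient of the \emph{true} loss $f_t$ at $\phi_t$ (Lemma~\ref{lem:app2epsilon}), proves an inexact one-step contraction toward the true constrained minimizer with an additive $\mathcal{O}(\alpha\epsilon_t)$ residual (Lemma~\ref{lem:DynamicLemma1}), and runs the recursion directly against $\{\psi_t^*\}$, in fact with $K=\lceil\ln 2/\ln(1+\mu_\pi\alpha)\rceil$ gradient steps per round (Theorem~\ref{prop:DynamicRegret}, Lemma~\ref{cor:appdynamicRegretOGD}), so the inexactness enters inside the contraction and generates the $\mathcal{E}_T$ and $\tilde{\mathcal{E}}_T$ terms there. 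You instead run the standard \emph{exact} dynamic-regret analysis for OGD on the surrogates $\hat f_t$ against their own minimizers $\hat\psi_t^*$, and pay for the inexactness twice: $2\mathcal{E}_T$ in function values via the uniform bound of Theorem~\ref{thm:dualDICE}, and $\|\hat\psi_t^*-\psi_t^*\|\le 2\sqrt{\epsilon_t/\mu_\pi}$ via quadratic growth at the constrained minimizer (this step, like the paper's, requires $\psi_t^*$ to be the per-round minimizer of $f_t$, consistent with the dynamic-regret setup), which converts $\hat{\mathcal{P}}_T$ and $\hat{\mathcal{S}}_T$ into $\mathcal{P}_T+\mathcal{O}(\tilde{\mathcal{E}}_T)$ and $\mathcal{O}(\mathcal{S}_T+\mathcal{E}_T)$. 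Your route is more elementary (no $\epsilon$-subgradient machinery, no multiple updates per round, so it actually matches the ``single-step'' phrasing of the statement more closely than the paper's own proof), at the cost of constants that scale with the condition number $L_\pi/\mu_\pi$ through $1/(1-c)$ rather than the paper's clean factor-$\tfrac12$ contraction; the final $\mathcal{O}(\min(\mathcal{S}_T+\mathcal{E}_T,\mathcal{P}_T+\tilde{\mathcal{E}}_T))$ is the same, and your absorption $\mathcal{E}_T=\mathcal{O}(\tilde{\mathcal{E}}_T)$ is legitimate because $\epsilon_t$ is uniformly bounded (the KL terms are bounded by $C_\pi$).

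Two caveats, both of which you share with the statement itself rather than introduce. First, in the squared-path branch, $\hat f_t(\phi_t)-\hat f_t(\hat\psi_t^*)=\mathcal{O}(\|\phi_t-\hat\psi_t^*\|^2)$ follows from \emph{smoothness} (strong convexity gives the reverse inequality) and, at a constrained minimizer where the gradient need not vanish, only after Young's inequality with an extra $\frac{1}{2\beta}\sum_t\|\nabla\hat f_t(\hat\psi_t^*)\|^2$ term; the paper's appendix versions (Theorem~\ref{prop:DynamicRegret}, Lemma~\ref{cor:appdynamicRegretOGD}) carry this term explicitly and only the main-text $\mathcal{O}(\cdot)$ drops it, so you should either carry it or state why it is negligible. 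Second, as you partially note, the contraction genuinely requires the step size to be small relative to the smoothness constant: the paper's Theorem~\ref{prop:DynamicRegret} assumes $\alpha\le\frac{1}{2L_2}$, so the condition $\beta\le\frac{1}{2\mu_\pi}$ in the statement should be read as a step size of order $\frac{1}{L_\pi}$ (an apparent typo), since with only $\beta\le\frac{1}{2\mu_\pi}$ and $L_\pi$ much larger than $\mu_\pi$ the projected gradient map need not contract.
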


\subsection{Dynamic regret with adaptive learning rates} \label{subsec:adapt_learning_rates}

\label{subsec:DifferentLearningRates}
It can be observed from the last section that to set the learning rate $\alpha_t$ for the within-task algorithm CRPO, knowledge of optimal/suboptimal policies from all $T$ tasks is used. This makes the algorithm less applicable in online settings where tasks are encountered sequentially. Moreover, when the task-environment changes dynamically, a fixed policy initialization $\psi$ may not be the best candidate comparator, where it is natural to study dynamic regret by competing with a potentially time-varying sequence $\{\psi_t^*\}_{t=1}^T$. Also, the tasks may share some common aspects of the optimization landscape, 
so adapting learning rates based on prior experience may further improve performance. This is the direction we pursue. Recall the regret for suboptimality and constraint violation of the CRPO:
\begin{equation}\label{eq:fullRegretCRPO}
\begin{aligned}
    U_{t}(\pi_{t,0},\alpha_t)&\coloneqq  \frac{c_1^t}{\alpha_{t}}\mathbb{E}_{s\sim \nu_t^*}[D_{KL}(\pi_t^*|\pi_{t,0})] + \alpha_t(c_2^tM + c_4^t \sqrt{M}) + c_3^t\sqrt{M} ,
    \end{aligned}
\end{equation}
where the constants $\{c_i^t\}_{i=1,\ldots,4}$ are given in Algorithm \ref{alg:MetaSRL}. We assume that  $\alpha_t \in \Lambda\coloneqq\{\alpha_t\mid\alpha_t\geq\zeta\}$ for some $\zeta>0$, where $\Lambda$ is a convex set. 
Overall, the goal of the meta-learner is to make a sequence of decisions, collected by $x_t = \{\pi_{t,0}\in \Delta \mathcal{A}_{\varrho}^{|\mathcal{S}|},\alpha_{t}\in\Lambda\}$, such that TAOG and TACV are minimized.

To design the adaptive algorithm, we consider the following two parallel sequences of loss functions over initial policy $\phi$, $f_{t}^{init}(\phi) = \mathbb{E}_{{\nu_t^*}}[D_{KL}({\pi}_t^*|\phi)]$, and learning rate $\kappa$,
\begin{equation*}
\begin{aligned}
 f_t^{sim}(\kappa) &= \frac{c_1^t\mathbb{E}_{{\nu_t^*}}[D_{KL}({\pi}^*_t|\pi_{t,0})]}{\kappa}+\underbrace{\kappa(c_2^tM + c_4^t \sqrt{M}) + c_3^t\sqrt{M}}_{ f_{t}^{rate}(\kappa)}.
\end{aligned}
\end{equation*}
Note that $f_t^{sim}(\alpha_{t})=U_t(\pi_{t,0}, \alpha_{t})$ matches the upper bound in \eqref{eq:fullRegretCRPO}. 
We also denote the inexact versions $\hat{f}_{t}^{init}(\phi)$ and $\hat{f}_t^{sim}(\kappa)$ by replacing $\mathbb{E}_{{\nu_t^*}}[D_{KL}({\pi}_t^*|\phi)]$ with $\mathbb{E}_{{\hat{\nu}_t}}[D_{KL}({\hat{\pi}}_t|\phi)]$ in the above. Inspired by \citep{khodak2019adaptive},
instead of running one online algorithm on $U_{t}(\pi_{t},\alpha_t)$, we will run two online algorithms separately for the function sequences $\hat{f}_t^{init}$ and $\hat{f}_t^{sim}$ by taking actions on the initial policy and learning rates, respectively, such that the overall regret can be bounded by an expression that depends on the regrets for each sequence. 
Let INIT and SIM be two algorithms, such that the actions $\pi_{t+1,0} \coloneqq \mathrm{INIT}(t)$  are taken over $\hat{f}_{t}^{init}$ and the actions $\alpha_{t+1} \coloneqq \mathrm{SIM}(t)$  are taken over $\hat{f}_t^{sim}$; these actions will then be used as policy initialization and learning rates for the next CMDP. We assume the following regret upper bounds for each algorithm:\footnote{While we run the online learning algorithm on the inexact versions of the loss $\{\hat{f}_t\}_{t=1}^T$, the dynamic/static regret is the standard one measured using the exact losses: $U_T=\sum_{t=1}^Tf_t(\phi_t)-\sum_{t=1}^T f_t(\psi^*_t)$.}
\begin{enumerate}
    \item $U_{T}^{init}(\{\psi_t^*\}_{t=1}^T)$: upper bound on the dynamic regret for $\mathrm{INIT}$ over functions $\{\hat{f}_t^{init}\}_{t=1}^T$ with respect to a time-varying sequence $\{\psi_t^*\}_{t=1}^T$;
    \item $U_{T}^{sim}(\kappa)$: upper bound on the  static regret for $\mathrm{SIM}$ over functions $\{\hat{f}_t^{sim}\}_{t=1}^T$ with respect to a comparator $\kappa > 0$.
\end{enumerate}

\begin{theorem}\label{thm:UtSim1}
Let each within-task CMDP $t$ run $M$ steps of CRPO, initialized by policy $\pi_{t,0}\coloneqq \mathrm{INIT}(t)$ and learning rates $\alpha_{t} \coloneqq \mathrm{SIM}(t)$. Let $\kappa^* \coloneqq \argmin L(\kappa)$, where
\begin{equation} \label{eq:appLkappa}
    L(\kappa) =U_T^{sim}(\kappa)+ \frac{U_T^{init}(\{\psi_t^*\}_{t=1}^T)}{\kappa} + \frac{\mathcal{E}_T}{\kappa} + \sum_{t=1}^T\bigg[ \frac{\hat{f}_t^{init}(\psi_t^*)}{\kappa} + f_t^{rate}(\kappa)\bigg],
\end{equation}
and $\{\psi_t^*\}_{t=1}^T$ is any comparator sequence. Then, 
the following bounds on TAOG and TACV hold:
\begin{equation}
\bar{R}_{i} \leq \frac{L(\kappa^*)}{T},\qquad\qquad \forall\; i=0,...,p.
\end{equation}
\end{theorem}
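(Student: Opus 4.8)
The plan is to bound the task-averaged quantities by $\frac1T\sum_{t=1}^T U_t(\pi_{t,0},\alpha_t)$ and then split this sum into the separate contributions of the two online subroutines $\mathrm{INIT}$ and $\mathrm{SIM}$, in the spirit of the nested online-learning argument of \citep{khodak2019adaptive}. First I would invoke the single-task CRPO guarantee \eqref{eq:RegretRandC}: its right-hand side is identical for the optimality gap and for every constraint violation, and coincides with $U_t(\pi_{t,0},\alpha_t)=f_t^{sim}(\alpha_t)$ as defined in \eqref{eq:fullRegretCRPO}, so averaging over $t\in[T]$ yields $\bar R_i\le\frac1T\sum_{t=1}^T f_t^{sim}(\alpha_t)$ simultaneously for all $i=0,\dots,p$. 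It then suffices to show $\sum_{t=1}^T f_t^{sim}(\alpha_t)\le L(\kappa^\ast)$.

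Next I would peel off the $\mathrm{SIM}$ regret. Fixing any $\kappa\ge\zeta$, and noting that $\alpha_t=\mathrm{SIM}(t)$ is produced by online learning on $\{\hat f_t^{sim}\}$ while each $f_t^{sim}$ is convex in its scalar argument on $\{\kappa\ge\zeta\}$, the assumed static-regret bound for $\mathrm{SIM}$ (measured against the exact losses) gives $\sum_t f_t^{sim}(\alpha_t)\le\sum_t f_t^{sim}(\kappa)+U_T^{sim}(\kappa)$; this step only needs the regret bound to hold pathwise for the realized loss sequence, since $f_t^{sim}$ itself depends on $\pi_{t,0}=\mathrm{INIT}(t)$. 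I would then unfold $f_t^{sim}(\kappa)=\frac{c_1^t}{\kappa}f_t^{init}(\pi_{t,0})+f_t^{rate}(\kappa)$, with $f_t^{init}(\pi_{t,0})=\mathbb{E}_{\nu_t^\ast}[D_{KL}(\pi_t^\ast|\pi_{t,0})]$; the structural point making the split clean is that the multiplier $c_1^t/\kappa$ does not depend on $\pi_{t,0}$, so the two subroutines are coupled only through the factor $1/\kappa$.

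The remaining task is to control $\sum_t f_t^{init}(\pi_{t,0})$ using the $\mathrm{INIT}$ dynamic-regret bound. Since $\pi_{t,0}=\mathrm{INIT}(t)$ comes from inexact OGD on $\{\hat f_t^{init}\}$, I would use $\sum_t f_t^{init}(\pi_{t,0})-\sum_t f_t^{init}(\psi_t^\ast)\le U_T^{init}(\{\psi_t^\ast\}_{t=1}^T)$ and then pass to the computable comparator loss via Theorem \ref{thm:dualDICE} applied at $\pi=\psi_t^\ast$, namely $f_t^{init}(\psi_t^\ast)\le\hat f_t^{init}(\psi_t^\ast)+\epsilon_t$; summing gives $\sum_t f_t^{init}(\pi_{t,0})\le\sum_t\hat f_t^{init}(\psi_t^\ast)+\mathcal{E}_T+U_T^{init}(\{\psi_t^\ast\}_{t=1}^T)$ (this is also why the bound is stated with $\hat f_t^{init}(\psi_t^\ast)$, whose average is the empirical task-relatedness $\hat V_\psi^2$). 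Substituting back into the previous display and dividing the $1/\kappa$-weighted terms through (absorbing the constants $c_1^t$) I would obtain
\[
\sum_{t=1}^T U_t(\pi_{t,0},\alpha_t)\ \le\ U_T^{sim}(\kappa)+\frac{U_T^{init}(\{\psi_t^\ast\}_{t=1}^T)}{\kappa}+\frac{\mathcal{E}_T}{\kappa}+\sum_{t=1}^T\Big[\frac{\hat f_t^{init}(\psi_t^\ast)}{\kappa}+f_t^{rate}(\kappa)\Big]=L(\kappa).
\]
As this holds for every $\kappa\ge\zeta$ and every comparator sequence $\{\psi_t^\ast\}_{t=1}^T$, it holds at $\kappa=\kappa^\ast=\argmin L(\kappa)$; dividing by $T$ yields $\bar R_i\le L(\kappa^\ast)/T$ for all $i=0,\dots,p$.

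The hard part will be the bookkeeping of inexactness: one must verify that the two distinct error sources — $\mathrm{SIM}$ optimizing $\hat f^{sim}$ rather than $f^{sim}$ (absorbed into $U_T^{sim}$, which is measured against the exact losses), and the comparator losses being evaluated with the estimates $(\hat\pi_t,\hat\nu_t)$ instead of $(\pi_t^\ast,\nu_t^\ast)$ (controlled by the uniform per-task bound $\epsilon_t$ of Theorem \ref{thm:dualDICE}) — together contribute only the single $\mathcal{E}_T/\kappa$ term appearing in $L(\kappa)$. A secondary subtlety is that $f_t^{sim}$ is a moving target that depends on the $\mathrm{INIT}$ iterate, so the decoupling of the two regrets must rely on the linearity of $f_t^{sim}$ in $f_t^{init}(\pi_{t,0})$ together with the lower bound $\kappa\ge\zeta$, which is what keeps the $1/\kappa$-weighted terms (and hence the choice of $\kappa^\ast$) well defined.
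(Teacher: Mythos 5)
Your proposal is correct and follows essentially the same route as the paper's proof: bound $\bar R_i$ by $\frac1T\sum_t f_t^{sim}(\alpha_t)$, peel off the SIM static regret against a fixed $\kappa$, then use the INIT dynamic regret against $\{\psi_t^*\}$ together with the uniform DualDICE bound of Theorem \ref{thm:dualDICE} to pass from $f_t^{init}(\psi_t^*)$ to $\hat f_t^{init}(\psi_t^*)+\epsilon_t$, and finally minimize over $\kappa$. Your bookkeeping of the two inexactness sources and of the moving-target nature of $f_t^{sim}$ is, if anything, slightly more explicit than the paper's own argument.
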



\VK{Note that the terms $U_T^{init}$ and $U_T^{sim}$ are simply placeholders for upper bounds on the respective regrets for some inexact online algorithms. In particular, INIT and SIM can be any inexact online algorithms in Algorithm \ref{alg:MetaSRL}, and the results of Theorem \ref{thm:UtSim1} can be instantiated by plugging in the respective $U_T^{init}$ and $U_T^{sim}$.
}
The following corollary presents the TAOG, and TACV regret bounds when $\mathrm{INIT}$ and $\mathrm{SIM}$ are inexact OGD over the loss functions $\{\hat{f}_t^{init}\}_{t=1}^T$ and $\{\hat{f}_t^{sim}\}_{t=1}^T$ respectively. 
\begin{corollary} \label{cor:CorollaryAdpativeRate}
If $\mathrm{INIT}(t)$ and $\mathrm{SIM}(t)$ are inexact OGD, and are run over the sequences $\{\hat{f}_t^{init}\}_{t=1}^T$ and $\{\hat{f}_t^{sim}\}_{t=1}^T$, then, the following bounds on TAOG and TACV hold for all $i = 0,\ldots,p$:
\begin{equation}\label{eq:upper-bound-adapt}
\begin{aligned}
\bar{R}_{i} \leq & \mathcal{O}\left(\frac{1}{\sqrt{M}} \left( \frac{1}{\sqrt{MT}}  +  \frac{\mathcal{E}_T}{T\sqrt{M}} + \frac{1}{M^{1/4}\sqrt{T}} \sqrt{\frac{\min(\mathcal{S}_T + \mathcal{E}_T, \mathcal{P}_T + \tilde{\mathcal{E}}_T)+\mathcal{E}_T}{T} + \hat{V}_{\psi}^2 } \right) \right).
\end{aligned}
\end{equation} 
\end{corollary}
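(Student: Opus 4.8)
The plan is to read Corollary~\ref{cor:CorollaryAdpativeRate} as a direct instantiation of Theorem~\ref{thm:UtSim1}: that theorem already reduces the bound on $\bar{R}_i$ to controlling $L(\kappa^*)/T$, where $L$ is the one-dimensional function of $\kappa$ in \eqref{eq:appLkappa} assembled from the meta-regrets $U_T^{init}$, $U_T^{sim}$ of the two online algorithms $\mathrm{INIT}$ and $\mathrm{SIM}$. So the whole argument is ``substitute the inexact-OGD regret bounds into \eqref{eq:appLkappa}, then minimise the resulting univariate expression over $\kappa$, then simplify.''

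First I would instantiate the two meta-regret terms. For $\mathrm{INIT}$: the loss sequence $\hat{f}_t^{init}(\phi)=\mathbb{E}_{\hat{\nu}_t}[D_{KL}(\hat{\pi}_t|\phi)]$ is $\mu_\pi$-strongly convex and $L_g$-Lipschitz on the shrinkage simplex $\Delta\mathcal{A}_\varrho^{|\mathcal{S}|}$ by Assumption~\ref{asmptn:newAsmptn1}, and the round-$t$ inexactness between $\hat{f}_t^{init}$ and $f_t^{init}$ is exactly the $\epsilon_t$ of Theorem~\ref{thm:dualDICE}; hence Lemma~\ref{cor:dynamicRegretOGD} (with step size $\beta\le 1/(2\mu_\pi)$) applies verbatim and gives $U_T^{init}(\{\psi_t^*\}_{t=1}^T)=\mathcal{O}\bigl(\min(\mathcal{S}_T+\mathcal{E}_T,\ \mathcal{P}_T+\tilde{\mathcal{E}}_T)\bigr)$. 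For $\mathrm{SIM}$: each $\hat{f}_t^{sim}$ is convex on $\Lambda=\{\kappa\ge\zeta\}$ (through its $1/\kappa$ term), with gradient and value inexactness again controlled by $\epsilon_t/\zeta$, so the static bound of Lemma~\ref{cor:staticRegretOGD} with step size $\mathcal{O}(1/\sqrt{T})$ yields $U_T^{sim}(\kappa)=\mathcal{O}(\sqrt{T}+\mathcal{E}_T)$ up to the Lipschitz and diameter constants of $\hat{f}_t^{sim}$ on $\Lambda$, which is where the within-task horizon $M$ enters, via the slope $c_2^tM+c_4^t\sqrt{M}$. The remaining two pieces of \eqref{eq:appLkappa} are explicit: $\sum_{t=1}^T\hat{f}_t^{init}(\psi_t^*)=\sum_{t=1}^T\mathbb{E}_{\hat{\nu}_t}[D_{KL}(\hat{\pi}_t|\psi_t^*)]=T\hat{V}_\psi^2$ by the definition of empirical task-relatedness, and $\sum_{t=1}^T f_t^{rate}(\kappa)=\kappa\sum_{t=1}^T(c_2^tM+c_4^t\sqrt{M})+\sqrt{M}\sum_{t=1}^T c_3^t$ is affine in $\kappa$ with an $\Theta(TM)$ slope and an $\Theta(T\sqrt{M})$ intercept (constants from Algorithm~\ref{alg:MetaSRL}).

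Substituting these into \eqref{eq:appLkappa} puts $L$ in the form $(\text{terms independent of }\kappa)+A/\kappa+B\kappa$, where $A=\Theta\bigl(\min(\mathcal{S}_T+\mathcal{E}_T,\mathcal{P}_T+\tilde{\mathcal{E}}_T)+\mathcal{E}_T+T\hat{V}_\psi^2\bigr)$ collects all three $1/\kappa$ contributions ($U_T^{init}$, the stray $\mathcal{E}_T/\kappa$, and $\sum_t\hat{f}_t^{init}(\psi_t^*)$) and $B=\Theta(TM)$ comes from the slope of $\sum_t f_t^{rate}$. I would then minimise by AM--GM: $\kappa^*=\sqrt{A/B}$ gives $A/\kappa^*+B\kappa^*=2\sqrt{AB}$, and I would check that this $\kappa^*$ is admissible, i.e.\ $\kappa^*\ge\zeta$ (it is of order $M^{-1/2}$, hence in $\Lambda$ for moderately large $M$). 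Plugging back, $L(\kappa^*)=(\text{the }\kappa\text{-free terms})+\Theta\bigl(\sqrt{TM}\,\sqrt{\min(\mathcal{S}_T+\mathcal{E}_T,\mathcal{P}_T+\tilde{\mathcal{E}}_T)+\mathcal{E}_T+T\hat{V}_\psi^2}\bigr)$; dividing by $T$, pulling a factor $\sqrt{T}$ inside the radical to form $\sqrt{(\min(\cdots)+\mathcal{E}_T)/T+\hat{V}_\psi^2}$, combining with the $M$-normalisation that turns the $M$-step CRPO regret $U_t$ into a per-policy optimality gap, and absorbing the lower-order $\kappa$-free terms, produces the claimed rate \eqref{eq:upper-bound-adapt}; the exponents $M^{3/4}$ and $M^{1/4}$ emerge from combining the $\sqrt{M}$ coming out of the $\mathrm{SIM}$ side with this normalisation and with the $M^{-1/2}$ scaling of the optimally-tuned within-task learning rate.

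The main obstacle is precisely this last bookkeeping: carrying the powers of $M$, $T$, and the two inexactness aggregates $\mathcal{E}_T=\sum_t\epsilon_t$ and $\tilde{\mathcal{E}}_T=\sum_t\sqrt{\epsilon_t}$ consistently through the nested online-to-online reduction, so that $B$ scales like $TM$ while $A$ stays free of $M$, and so that the final bound is genuinely decreasing in $M$. A secondary technical point is that $\hat{f}_t^{sim}$, unlike $\hat{f}_t^{init}$, is only plain convex on $\Lambda$ (its curvature degrades with $\kappa$), so one must use the static OGD guarantee of Lemma~\ref{cor:staticRegretOGD} for $\mathrm{SIM}$ and verify that the inexact-gradient errors there remain summable against the bounded domain $\Lambda$; the $\min(\cdot,\cdot)$ structure and the appearance of $\tilde{\mathcal{E}}_T$ in the final bound are then inherited verbatim from the dynamic-regret bound of Lemma~\ref{cor:dynamicRegretOGD}.
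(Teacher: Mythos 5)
Your route is the same as the paper's: Corollary \ref{cor:CorollaryAdpativeRate} is proved there exactly by instantiating Theorem \ref{thm:UtSim1} with the dynamic-regret bound of Lemma \ref{cor:dynamicRegretOGD} for $\mathrm{INIT}$ and the static inexact-OGD bound (the analogue of Lemma \ref{cor:staticRegretOGD} for $\hat f_t^{sim}$) for $\mathrm{SIM}$, using $\sum_{t=1}^T\hat f_t^{init}(\psi_t^*)=T\hat V_\psi^2$, and then minimizing the resulting expression $A/\kappa+B\kappa$ in \eqref{eq:appLkappa}; your AM--GM step is just the paper's first-order condition $dL/d\kappa=0$ in disguise, and your observation that $\hat f_t^{sim}$ is only plainly convex in $\kappa$ (so only the static guarantee is available for $\mathrm{SIM}$) matches the paper.

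The gap is in the final power counting, which you explicitly defer and which is precisely where the corollary's content lies. With your accounting, $A=\Theta\bigl(\min(\mathcal{S}_T+\mathcal{E}_T,\mathcal{P}_T+\tilde{\mathcal{E}}_T)+\mathcal{E}_T+T\hat V_\psi^2\bigr)$ and $B=\Theta(TM)$ (slope of $\sum_t f_t^{rate}$ summed over tasks), so after dividing by $T$ and by the per-task normalization in $M$ you obtain a third term of order $M^{-1/2}\sqrt{\bigl(\min(\cdot,\cdot)+\mathcal{E}_T\bigr)/T+\hat V_\psi^2}$, i.e.\ \emph{without} the additional $M^{-1/4}T^{-1/2}$ factor claimed in \eqref{eq:upper-bound-adapt}; as sketched, your argument establishes a weaker bound than the statement and, in particular, does not recover the advertised $\mathcal{O}\bigl(M^{-3/4}T^{-1/2}\bigr)$ scaling that distinguishes this corollary from Theorem \ref{thm:InexactTAOG}. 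The paper gets its exponents by optimizing with the \emph{per-task} slope, taking $\kappa^*=\sqrt{\bigl(U_T^{init}+\mathcal{E}_T+T\hat V_\psi^2\bigr)/(c_2^tM+c_4^t\sqrt{M})}$ (no factor $T$ in the denominator, reflecting a different normalization of $f_t^{rate}$ versus the per-task regret $U_{t,0}$ of Lemma \ref{lem:2_threeEventsHold}) and then substituting back with the simplification $(M+\sqrt{M})^{-1/2}\approx M^{-1/4}$. So to close your proof you must either adopt and justify that normalization consistently (tracking whether $f_t^{sim}$ is the per-task optimality gap or its $M$-scaled version, and where the $\sqrt{T}$ enters) and show the substitution really yields \eqref{eq:upper-bound-adapt}, or acknowledge that your bookkeeping yields a different rate; the hand-waved claim that ``the exponents $M^{3/4}$ and $M^{1/4}$ emerge'' is not a proof of the stated inequality.
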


\begin{remark}
The bounds are improved in terms of $M$ and $T$ due to the adaptive learning rate.  Specifically, the bounds diminish at a rate $\mathcal{O}\left(\frac{1}{M^{3/4}\sqrt{T}}\left(\mathcal{E}_T+ \sqrt{\frac{\mathcal{E}_T}{T} +\hat{V}_\psi^2 } \right) \right)$ as compared to the previous rate $\mathcal{O}\left(\frac{1}{\sqrt{M}}\left(\sqrt{\frac{\mathcal{E}_T}{\sqrt{T}} + \hat{D}^{*2}  } \right) \right)$. Note that $\hat{V}_\psi$ is same as $\hat{D}^*$ in the case of a fixed comparator $\psi^*$. Moreover, a practical aspect of our algorithm is that it does not require the knowledge of quantities like $\mathcal{S}_T, \mathcal{P}_T$ and $\mathcal{E}_T$ to decide the value of learning rate $\alpha_t$. 
\end{remark}


\begin{figure}[t]
\centering
  \includegraphics[width=\columnwidth]{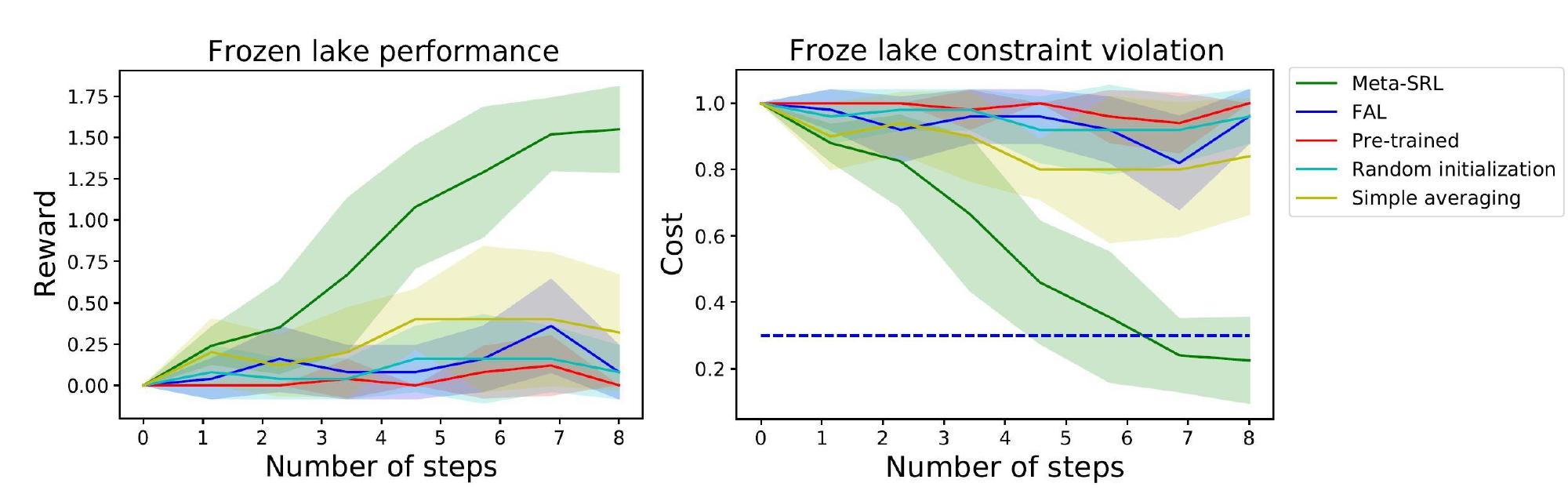}
\caption{Frozen lake results for reward maximization and constraint violations when the task-relatedness is low. The Blue dashed line represents the averaged thresholds for the constraint violations. We do $10$ runs on each baseline to get the performance plots with variance.}
\label{fig:FrozenLake}
\end{figure}

\begin{figure}[t]
\centering
  \includegraphics[width=\columnwidth]{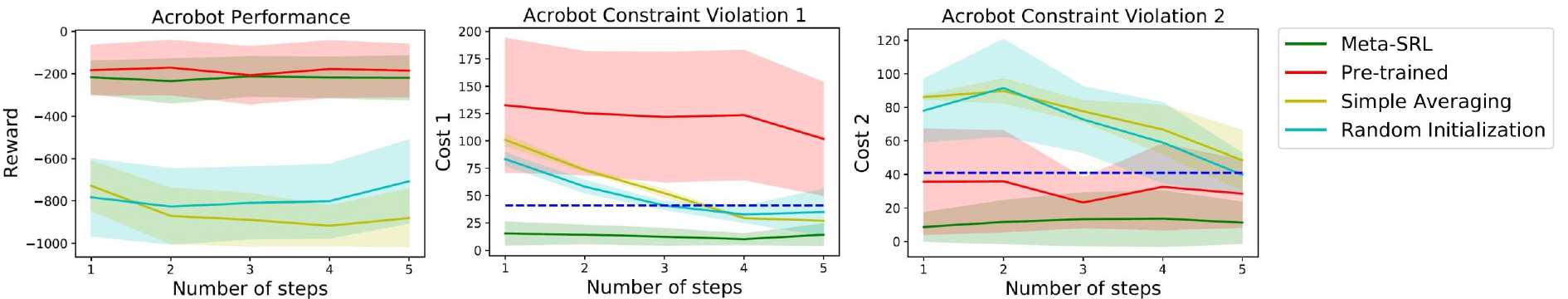}
\caption{Acrobot results for reward maximization and constraint violations when the task-relatedness is low. Blue dashed line represents the averaged thresholds for the constraint violations.}
\label{fig:Acrobot}
\end{figure}

\vspace{-0.1cm}
\section{Experiments}\label{sec:experiments}

In this section, we show the effectiveness of the proposed Meta-SRL framework and compare it with the following baselines: simple averaging (i.e., initialize with the average of learned policies from past CMDPs), pre-trained (i.e., initialize test task with the suboptimal policy from another CMDP), Follow the Average Leader (FAL), and random initialization strategies. Note that simple averaging takes the average of previous suboptimal policies obtained from random initializations on all CMDP tasks, while FAL does this in an online manner while tasks arrive sequentially. Different CMDPs are generated using a probability distribution over the parameters of CMDPs (e.g., rewards, transition dynamics), similar to the latent CMDP model \citep{chen2021understanding}. \hl{We consider the Frozen lake, acrobot, half-Cheetah, and humanoid environments from the OpenAI gym \citep{brockman2016openai} and MuJoco \cite{todorov2012mujoco} under constrained settings. } For more details on experimental setups, distribution shift, and extra experiments on Mujoco, please refer to Appendix \ref{sec:ExpDetails}.



We can observe from Figure \ref{fig:FrozenLake} that Meta-SRL achieves higher rewards and lower constraint violations more quickly than baseline initializations. The baseline FAL which simply takes the average of previous suboptimal policies, performs poorly. This illustrates the benefit of incorporating stationary distribution correction estimation and adaptive learning rates. Indeed, for Frozen lake, different locations of the hole can result in different stationary distributions---it is more sensible to put higher weights on policies that frequently visit a particular state since it implies that the corresponding strategies can have a substantial impact on the case of low task-similarity conditions. We also observe similar trends for the Acrobot in Figure \ref{fig:Acrobot}, where Meta-SRL achieves higher rewards quickly and zero constraint violations as compared to other baseline initializations under low task-relatedness settings. The pre-trained baseline was able to achieve higher rewards but did not achieve constraint satisfaction for both constraints. Under high task-similarity settings, we expected all the methods (except vanilla CRPO) to perform well; however, we noticed that simple averaging does poorly even in this setting, possibly due to adverse interference among different tasks. 

\section{Conclusion and future directions}
\vspace{-0.2cm}
We introduced a novel framework, Meta-SRL, for meta-learning over CMDPs. The proposed framework does not assume access to globally optimal policies from the training tasks, and instead performs online learning over inexact within-task bounds estimated by stationary distribution correction. Moreover, strategies for learning rate adaptation are designed to further exploit task-relatedness. One limitation of the proposed method is that it only considers CRPO as the within-task algorithm; nevertheless, our framework can be potentially adapted to more single-task algorithms by making the dependence of guarantees on initial policy/step sizes explicit.
Some potential future directions could be to design Meta-SRL with zero constraint violation \citep{liu2021fast}, 
non-stationary environments \citep{ding2022provably}, and multi-agent settings \citep{de2021constrained}. 

\section{Broader Impact Statements}\label{sec:BroaderImpact}
 \VK{There is an increasing need to address fairness as a constraint in learning settings. Existing works that aim to achieve zero-shot generalization without any task-specific adaptation have limited capability to adapt to shifting environments. While online meta-learning is a principled technique to learn good priors over model parameters for fast adaptation in a sequential setting, existing methods often do not address constraints and thus have limited applications in fairness-aware learning.}

\VK{
The proposed CMDP-within-online framework can potentially be adapted to reinforcement learning tasks with fairness constraints in a \textbf{non-stationary environment}. In practice, this can provide a strategy that learns priors over policy parameters not only to master the current fairness-aware task but also to become proficient with quick adaptation at learning newly arrived tasks. Our theoretical analysis can be leveraged to provide a sublinear bound on the “task-averaged fairness violation” regret. Similar ideas have been explored by \citep{zhao2021fairness} in the supervised learning setting, while we are not aware of any work on the reinforcement learning counterpart. Thus, it can be an extension for future work to explore the extent to which our method can address this important problem.}\VK{ Nevertheless, fairness constraints present a unique challenge for meta-safe RL settings, as fairness constraints should rarely be violated in a real-world setting due to the implicated discrimination or bias. Additional efforts, such as incorporating pessimism \cite{bai2021achieving} or developing offline methods, may be entailed to reduce fairness violations during initial deployment.}

\section{Acknowledgments}

The authors acknowledge the generous support by NSF, the Commonwealth Cyber Initiative (CCI), 4-VA collaborative research grant, C3.ai Digital Transformation Institute, and the U.S. Department of Energy. We would also like to thank the anonymous reviewers, which helped us to improve the quality of the manuscript and Tengyu Xu for providing the code for CRPO.


\bibliography{iclr2023_conference}
\bibliographystyle{iclr2023_conference}

\newpage 
\appendix
\section*{Appendix}
In this section, we start with a summary of related work in Sec. \ref{sec:RelatedWork} followed by a brief recapitulation of the CRPO algorithm in Sec. \ref{sec:crpo}. Note that CRPO will be our focus as the exemplary within-task safe RL algorithm. We also introduce notations therein that will be used in later analysis. In Sec. \ref{sec:inexact_cmdp-app}, we give the pseudo-code of our inexact CMDP-within-online algorithm, with further discussions on key aspects. Sec. \ref{sec:prelim-app} provides the proof for Sec. \ref{sec:Preliminaries} of the main paper, which focuses on an elementary yet illustrative example of the CMDP-within-online approach. We start by providing a simplified proof to help the reader understand the main approach of CRPO, and then demonstrate the potential improvement by exploiting inter-task-relatedness (Lemma~\ref{lemma: appIdeal setting}). Sec. \ref{sec:inexact-app} contains the key developments in extending online learning approaches, specifically online gradient descent, to the case of inexact loss functions. We start with some preliminaries on $\epsilon$-subgradient (Sec. \ref{sec:basic-subgradient}). Then, we conduct the analysis for static regret (Thm. \ref{prop:InexactUpperBound-app}) and dynamic regret (Thm. \ref{prop:DynamicRegret}). In Sec. \ref{sec:kl-bound}, we provide a detailed analysis of the KL divergence estimation error bound, which contributes to one of our main contributions in understanding the key aspects of the proposed inexact CMDP-within-online framework. Our development leverages the seminal results developed for tame geometry, which we briefly review in Sec. \ref{sec:prelim-tame}. We also briefly set up the notations and recall basic properties of subgradient flow systems \ref{sec:subflow}. Through a series of bounds, the final result is obtained in Thm. \ref{thm:dualDICEAppendix}. We then provide proofs for Sec. \ref{subsec:adapt_learning_rates}. In Sec. \ref{subsec:appAdaptiveRate}, we first extend the analysis of CRPO to the case of adaptive learning rates. Then, we provide the proof for Thm. \ref{thm:UtSim1} and Corollary \ref{cor:CorollaryAdpativeRate} in Sec. \ref{subsec:appProofsAdaptive}. Experimental details are provided  in Sec. \ref{sec:ExpDetails}. Frequently used notations and constants are listed in Sec.\ref{sec:notations}.

\section{Related work}
\label{sec:RelatedWork}

\textbf{Meta-reinforcement learning:} Current state-of-the-art meta-RL includes learning the initial conditions \citep{finn2017model}, hyperparameters \citep{jaderberg2019human}, step directions \citep{li2017meta} and stepsizes \citep{young2018metatrace}, and training recurrent neural networks to embed previous task experience \citep{duan2016rl} (see also \citep{chen2021understanding} for sim-to-real transfer, and \cite{suilen2022robust} for the extension to robust MDPs), with recent developments on improving meta-optimization \citep{rothfuss2018promp,liu2019taming,song2019maml} (see \citep{hospedales2020meta} for a review). Recently, \citep{fallah2021convergence, ji2022theoretical} provided theoretical studies on the convergence of model-agnostic meta-RL. However, these works all focus on the unconstrained meta-RL and their local optimality convergence, while our work is the first to obtain provable guarantees for optimality and constraint satisfaction for CMDPs.

\textbf{Online meta-learning/learning-to-learn (LTL).}
Most initialization-based meta-learning studies focus on the setting with decomposable within-task loss functions that are often convex \citep{finn2019online,denevi2019learning,balcan2019provable}; nonconvex within-task settings are studied usually for multi-task representation learning  \citep{balcan2015efficient,maurer2016benefit,du2020few,tripuraneni2020theory}. Theoretically, our work is inspired by the Average Regret-Upper-Bound Analysis (ARUBA) strategy \citep{khodak2019adaptive} for obtaining a meta-procedure, which has been recently extended to learning nonconvex piecewise-Lipschitz functions \citep{balcan2021learning}; the main technical advance in our work is in providing the guarantees for CMDPs, which is challenging due to the interplay between the nonconvexity and stochasticity of the optimization and the complexity of the within-task safe RL algorithms that involve policy update, critic learning, and the proper choice of stepsizes for reward/constraints.

\textbf{Inexact online learning.} Online learning with access to inexact loss/gradient information has been studied for stochastic zero-biased noise \citep{cesa2011online,yang2016tracking,bedi2018tracking,dixit2019online}, deterministic error/nonzero-biased stochastic noise \citep{bedi2018tracking,dixit2019online}, and adversarial perturbation \citep{resler2019adversarial}. Our analysis for static regret uses the formalism of $\epsilon-$subgradient \cite[Chap. XI]{jean2010convex}; for dynamic regret, we extend the work \citep{zhang2017improved} to 
the inexact setting allowing multiple updates per round and provide improved rates than prior results \citep{bedi2018tracking,dixit2019online}.

\textbf{safe RL and CMDP.} Direct policy search methods have had substantial empirical successes in solving CMDPs \citep{borkar2005actor,uchibe2007constrained,bhatnagar2012online,achiam2017constrained,chow2017risk} (see, e.g., \citep{garcia2015comprehensive} for a survey of safe RL). Recently, major progress in understanding the theoretical nonasymptotic global convergence behavior of policy-based methods for CMDPs has also been achieved 
\citep{chow2018lyapunov,paternain2022safe,efroni2020exploration,ding2021provably,ding2022provably, ying2021dual, yu2019convergent,xu2021crpo,chen2021primal,liu2021fast, liu2021learning}. 
However, most of these works only study a single CMDP task and don't seek to make the algorithm perform well on new, potentially related CMDP tasks. In addition, while our work uses the constraint-rectified policy optimization (CRPO) algorithm proposed in \citep{xu2021crpo} as a building block, our framework can be potentially adapted to most of the existing RL literature by making the dependence of guarantees on initial policy/step sizes explicit, e.g., safe exploration \citep{efroni2020exploration}, regularization \citep{geist2019theory}, off-policy evaluation \citep{duan2020minimax,tennenholtz2020off}, and offline RL under constraints \citep{le2019batch,wu2021offline,lee2021optidice,thomas2021multi}.

\section{CRPO Algorithm and notations}
\label{sec:crpo}
We provide some preliminaries and notations for the CRPO algorithm for the sake of completeness. CRPO \citep{xu2021crpo} is a primal-based CMDP algorithm, which performs policy optimization (natural gradient ascent on the reward) when constraints are not violated, or constraint minimization (natural gradient descent on the constraint function) for the corresponding violated constraint. There are three crucial components in the overall strategy to solve the CMDP problem \eqref{eq:CRPOsafeRL}:
\begin{enumerate}
    \item \textbf{Policy evaluation:} In each step $m$ of task $t$, for a certain policy $\pi_{t,m}$, the action-value functions $Q_{t,\pi_{t,m}}^i$ are estimated for the reward ($i=0$) and constraints ($i = 1,...,p$). TD-learning is employed for critic evaluation \citep{bhandari2018finite}.
    \item \textbf{Estimation of constraint violation:} Once the Q-estimates $\Bar{Q}_{t,\pi_{t,m}}^i(s,a)$ are obtained, then a weighted average is taken to estimate expected constraint violation $\Bar{J}_{t,i}(\pi_{t,m})$ under a given policy $\pi_{t,m}$.
    
    \item \textbf{Policy optimization:} After the constraint estimation, it is checked if the expected constraint violation $\Bar{J}_{t,\pi_{t,m}}^i$ exceeds the given safety threshold, i.e., if $\Bar{J}_{t,i}(\pi_{t,m}) \leq d_{t,i} + \eta_t$ for all $i=1,\ldots,p$. If none of the constraints are violated, then one step of natural policy gradient ascent is performed to maximize the objective. If one or more constraints are violated, then one step of natural policy gradient descent is conducted to minimize one of the unsatisfied constraints.
\end{enumerate}

The set of time steps the policy optimization for reward maximization takes place is denoted by $\mathcal{N}_{t,0}$, and the set of time steps constraint minimization takes place is denoted by $\mathcal{N}_{t,i}$. Thus $|\mathcal{N}_{t,0}|+\sum_{i=1}^p|\mathcal{N}_{t,i}| = M$ for any task $t$.


\section{ Inexact CMDP-within-online Algorithm}
\label{sec:inexact_cmdp-app}


Algorithm \ref{alg:MetaSRL} presents the inexact-CMDP-within-online algorithm for Meta-SRL. The first step in the algorithm is to initialize with some random actor policy $\phi_1$, and the learning rate $\alpha_1$ for the first task. Then, for each task $t$, a within-task algorithm (i.e., CRPO) is run  for $M$ steps to obtain a policy $\hat{\pi}_t$. The discounted state visitation distribution $\hat{\nu}_t$ induced by $\hat{\pi}_t$ is then estimated using the trajectory data collected within task $t$. Afterward, an inexact OGD method is run on the new loss functions to update the meta-initialization policy $\phi_{t+1}$, and the learning rate $\alpha_{t+1}$. The online learning loop is iterated for all tasks $t\in[T]$. 

\section{Proof in Section \ref{sec:Preliminaries}}
\label{sec:prelim-app}
We first present a simplified proof for the results in Equation \eqref{eq:RegretRandC}. \VK{This result also shows in (\eqref{eq: condition for two events hold}) how the safety threshold $\eta_t$ can be chosen to achieve sublinear convergence rate in $M$.}
\begin{lemma} \label{lemma: three events hold}
For CRPO \citep{xu2021crpo} with the softmax parametrization and the exact critic estimation (i.e., no critic evaluation error), if we have 
\begin{align} \label{eq: condition for two events hold}
\eta_t  \geq &  \frac{2}{\alpha M} \left(\mathbb{E}_{s \sim \nu^{*}_t}\left[D\left(\pi^{*}_t | \pi_{{t,0}}\right)\right] +\frac{2 M \alpha^{2} c_{max}^2 |\mathcal{S}||\mathcal{A}|}{(1-\gamma)^{3}} \right),
\end{align}
then the following holds
\begin{enumerate} 
    \item  $\mathcal{N}_{t,0} \neq \emptyset$, i.e., $\hat{\pi}_t$ is well-defined,
    \item $J_{t,0}\left(\pi_t^{*}\right)-J_{t,0}\left(\hat{\pi}_t\right) \leq \eta_t.$
    \item $J_{t,i}\left(\pi_t^{*}\right)-J_{t,i}\left(\hat{\pi}_t\right) \leq \eta_t$, for $i=1,\ldots, p$.
\end{enumerate}
\end{lemma}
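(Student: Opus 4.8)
\textbf{Proof proposal for Lemma \ref{lemma: three events hold}.}

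The plan is to analyze the CRPO iterates $\{\pi_{t,m}\}_{m=0}^{M-1}$ directly via a potential/drift argument on the KL divergence $D_{KL}(\pi_t^*|\pi_{t,m})$, exploiting the fact that each CRPO step is a natural-gradient step (equivalently, a mirror-descent step with the entropic mirror map) on either the reward or one of the constraints. First I would recall the standard one-step mirror-descent inequality for natural policy gradient under softmax: for a step taken at iteration $m$ on objective index $j_m \in \{0,1,\dots,p\}$,
\begin{equation*}
\mathbb{E}_{s\sim\nu_t^*}[D_{KL}(\pi_t^*|\pi_{t,m+1})] - \mathbb{E}_{s\sim\nu_t^*}[D_{KL}(\pi_t^*|\pi_{t,m})] \le -\alpha\big(J_{t,j_m}(\pi_t^*) - J_{t,j_m}(\pi_{t,m})\big)\cdot(\text{sign factor}) + \frac{2\alpha^2 c_{max}^2|\mathcal{S}||\mathcal{A}|}{(1-\gamma)^3},
\end{equation*}
where for $j_m=0$ (reward step) the linear term carries a minus sign and uses the performance difference lemma, while for $j_m=i\ge 1$ (constraint step) the step decreases $J_{t,i}$, and on those iterations we know $\bar J_{t,i}(\pi_{t,m}) > d_{t,i}+\eta_t$ so that $J_{t,i}(\pi_t^*) - J_{t,i}(\pi_{t,m}) \le d_{t,i} - J_{t,i}(\pi_{t,m}) < -\eta_t$, i.e.\ the drift contribution from a constraint step is at most $-\alpha\eta_t + \frac{2\alpha^2 c_{max}^2|\mathcal{S}||\mathcal{A}|}{(1-\gamma)^3}$. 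This is exactly the structure used in the CRPO analysis of \citep{xu2021crpo}, specialized to exact critics.

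Next I would telescope over $m=0,\dots,M-1$. Splitting the $M$ steps into the reward-step set $\mathcal{N}_{t,0}$ and the constraint-step sets $\mathcal{N}_{t,i}$, nonnegativity of the final KL gives
\begin{equation*}
0 \le \mathbb{E}_{s\sim\nu_t^*}[D_{KL}(\pi_t^*|\pi_{t,0})] - \alpha\sum_{m\in\mathcal{N}_{t,0}}\big(J_{t,0}(\pi_t^*)-J_{t,0}(\pi_{t,m})\big) - \alpha\,\eta_t\sum_{i=1}^p|\mathcal{N}_{t,i}| + \frac{2M\alpha^2 c_{max}^2|\mathcal{S}||\mathcal{A}|}{(1-\gamma)^3}.
\end{equation*}
For claim (1), suppose for contradiction $\mathcal{N}_{t,0}=\emptyset$; then $\sum_i|\mathcal{N}_{t,i}|=M$ and the displayed inequality forces $\alpha M\eta_t \le \mathbb{E}_{s\sim\nu_t^*}[D_{KL}(\pi_t^*|\pi_{t,0})] + \frac{2M\alpha^2 c_{max}^2|\mathcal{S}||\mathcal{A}|}{(1-\gamma)^3}$, which contradicts the hypothesis \eqref{eq: condition for two events hold} (the hypothesis says $\eta_t$ is at least that right-hand side divided by $\alpha M$, with strictness handled by choosing the inequality direction carefully, or by a slightly strict version of \eqref{eq: condition for two events hold}). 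Hence $\mathcal{N}_{t,0}\neq\emptyset$ and $\hat\pi_t$, defined as the best (or uniformly random) iterate among $\mathcal{N}_{t,0}$, is well-defined. For claim (2), since $\hat\pi_t$ is chosen so that $J_{t,0}(\pi_t^*)-J_{t,0}(\hat\pi_t) \le \frac{1}{|\mathcal{N}_{t,0}|}\sum_{m\in\mathcal{N}_{t,0}}(J_{t,0}(\pi_t^*)-J_{t,0}(\pi_{t,m}))$, rearranging the telescoped inequality and dropping the nonnegative $\alpha\eta_t\sum_i|\mathcal{N}_{t,i}|$ term yields
\begin{equation*}
J_{t,0}(\pi_t^*)-J_{t,0}(\hat\pi_t) \le \frac{1}{\alpha|\mathcal{N}_{t,0}|}\Big(\mathbb{E}_{s\sim\nu_t^*}[D_{KL}(\pi_t^*|\pi_{t,0})] + \frac{2M\alpha^2 c_{max}^2|\mathcal{S}||\mathcal{A}|}{(1-\gamma)^3}\Big) \le \frac{1}{\alpha M}\Big(\cdots\Big)\cdot\frac{M}{|\mathcal{N}_{t,0}|},
\end{equation*}
and since $|\mathcal{N}_{t,0}|\ge 1$ is not enough on its own, I would instead use the cleaner route: bound using $|\mathcal{N}_{t,0}|$ directly but note that when $\mathcal{N}_{t,0}$ is small the constraint-step term must absorb most of the budget; a standard case split (as in \citep{xu2021crpo}) shows the bound $\le\frac{2}{\alpha M}(\mathbb{E}_{s\sim\nu_t^*}[D_{KL}(\pi_t^*|\pi_{t,0})] + \frac{2M\alpha^2 c_{max}^2|\mathcal{S}||\mathcal{A}|}{(1-\gamma)^3}) = \eta_t$ by hypothesis. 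Claim (3) then follows because $\hat\pi_t$ is produced at an iteration where all constraints were (approximately) satisfied: $\bar J_{t,i}(\hat\pi_t)\le d_{t,i}+\eta_t$, hence $J_{t,i}(\pi_t^*)-J_{t,i}(\hat\pi_t) \le d_{t,i} - J_{t,i}(\hat\pi_t) \le \eta_t$ using $J_{t,i}(\pi_t^*)\le d_{t,i}$ (feasibility of $\pi_t^*$) and exactness of the critic so $\bar J_{t,i}=J_{t,i}$.

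The main obstacle I anticipate is the bookkeeping in claim (2): turning the telescoped inequality, which naturally gives a bound with $|\mathcal{N}_{t,0}|$ in the denominator, into the clean $\frac{2}{\alpha M}(\cdots)$ bound matching $\eta_t$. This requires the careful case analysis from \citep{xu2021crpo} — when $|\mathcal{N}_{t,0}| \ge M/2$ the $1/|\mathcal{N}_{t,0}|\le 2/M$ estimate suffices, while when $|\mathcal{N}_{t,0}| < M/2$ the contradiction argument of claim (1) must be refined to rule out this regime under hypothesis \eqref{eq: condition for two events hold}, or one argues that the average suboptimality over $\mathcal{N}_{t,0}$ is automatically controlled because the constraint-violation budget $\alpha\eta_t\sum_i|\mathcal{N}_{t,i}|$ being large forces the reward term to be correspondingly bounded. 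Secondary care is needed to ensure the chosen $\hat\pi_t$ (argmax over $\mathcal{N}_{t,0}$, or uniform sampling from it with expectation over the algorithm's randomness) is the one for which claims (2)–(3) hold simultaneously, which is why the lemma statement implicitly uses the same $\hat\pi_t$ throughout and why the expectation appears in \eqref{eq:RegretRandC}.
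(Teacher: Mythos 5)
Your proposal follows essentially the same route as the paper: both start from the telescoped per-step bound of \citep{xu2021crpo} (their Lemma 7), namely $\alpha\sum_{m\in\mathcal{N}_{t,0}}\big(J_{t,0}(\pi_t^*)-J_{t,0}(\pi_{t,m})\big)+\alpha\eta_t\sum_{i=1}^p|\mathcal{N}_{t,i}|\le \mathbb{E}_{s\sim\nu_t^*}[D(\pi_t^*|\pi_{t,0})]+\tfrac{2M\alpha^2|\mathcal{S}||\mathcal{A}|}{(1-\gamma)^3}$, prove item 1 by the identical contradiction, and obtain item 3 from membership of $\hat\pi_t$ in $\mathcal{N}_{t,0}$ together with exactness of the critic. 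The one place you overcomplicate things is item 2: the ``main obstacle'' you anticipate is not there. No case split on $|\mathcal{N}_{t,0}|\ge M/2$ is needed (that device only enters the adaptive-learning-rate version, Lemma \ref{lem:appCRPOlem9}); the argument you mention only in passing at the very end is the paper's actual proof. If $\sum_{m\in\mathcal{N}_{t,0}}\big(J_{t,0}(\pi_t^*)-J_{t,0}(\pi_{t,m})\big)>\eta_t|\mathcal{N}_{t,0}|$, the left-hand side of the telescoped bound exceeds $\alpha\eta_t\big(|\mathcal{N}_{t,0}|+\sum_{i=1}^p|\mathcal{N}_{t,i}|\big)=\alpha\eta_t M$, contradicting \eqref{eq: condition for two events hold}; since $\hat\pi_t$ is sampled from $\mathcal{N}_{t,0}$, the (expected) suboptimality is then at most $\eta_t$, with no $1/|\mathcal{N}_{t,0}|$ factor ever appearing. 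One minor slip: in your chain for item 3 the last step $d_{t,i}-J_{t,i}(\hat\pi_t)\le\eta_t$ does not follow from $J_{t,i}(\hat\pi_t)\le d_{t,i}+\eta_t$; the paper simply asserts item 3 from $\hat\pi_t\in\mathcal{N}_{t,0}$, which directly yields the violation bound $J_{t,i}(\hat\pi_t)-d_{t,i}\le\eta_t$ rather than the inequality you wrote.
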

\begin{proof}
The following inequality holds due to Lemma 7 in \citep{xu2021crpo}:
\begin{align} \label{eq: summation bound of gap in Lemma two events hold}
&\alpha \sum_{m \in \mathcal{N}_{t,0}} \left(J_{t,0}\left(\pi_t^{*}\right)-J_{t,0}\left(\pi_{{t,m}}\right)\right)+\alpha \eta_t \sum_{i=1}^p \left|\mathcal{N}_{t,i}\right| 
\leq  \mathbb{E}_{s \sim \nu^{*}_t}\left[D\left(\pi^{*}_t | \pi_{{t,0}}\right)\right] +\frac{2 M \alpha^{2} |\mathcal{S}||\mathcal{A}|}{(1-\gamma)^{3}}.
\end{align}
We first verify item 1. If $\mathcal{N}_{t,0}=\emptyset$, then $\sum_{i=1}^p \left|\mathcal{N}_{t,i}\right|=M$, and \eqref{eq: summation bound of gap in Lemma two events hold} implies that
\begin{align*}
 \alpha \eta_t M   \leq &  \mathbb{E}_{s \sim \nu^{*}_t}\left[D\left(\pi^{*}_t | \pi_{{t,0}}\right)\right] +\frac{2 M \alpha^{2} |\mathcal{S}||\mathcal{A}|}{(1-\gamma)^{3}}
\end{align*}
which contradicts \eqref{eq: condition for two events hold}. Thus, we must have $\mathcal{N}_{t,0} \neq \emptyset$.

We then proceed to verify item 2. 
If $\sum_{m \in \mathcal{N}_{t,0}}\left(J_{t,0}\left(\pi_t^{*}\right)-J_{t,0}\left(\pi_{t,m}\right)\right) > \eta_t |\mathcal{N}_{t,0}|$ , then \eqref{eq: summation bound of gap in Lemma two events hold} implies that
$$
\alpha \eta_t M \leq \mathbb{E}_{s \sim \nu^{*}_t}\left[D\left(\pi^{*}_t | \pi_{{t,0}}\right)\right] +\frac{2 M \alpha^{2} |\mathcal{S}||\mathcal{A}|}{(1-\gamma)^{3}},
$$
which contradicts \eqref{eq: condition for two events hold}. Hence, item 2 holds. 

Finally, the item 3 holds obviously since $\hat{\pi}_t$ is sampled from $\mathcal{N}_{t,0}$. This completes the proof.
\end{proof}

We now prove Lemma \ref{lemma: ideal setting} in Section \ref{sec:Preliminaries}.

\begin{lemma}\label{lemma: appIdeal setting}
Assume $\{\nu_t^\ast\}_{t=1}^T$ and $\{\pi_t^\ast\}_{t=1}^T$ are given.
For each task $t$, we run CRPO for $M$ iterations with $\alpha = \frac{(1-\gamma)^{\frac{3}{2}}}{\sqrt{2M |\mathcal{S}||\mathcal{A}| }} \sqrt{\left(\frac{L_g^2(\log T + 1)}{\mu_\pi T}+ D^{*2}  \right)}$. 
In addition, the initialization $\{\pi_{t,0}\}_{t=1}^T$  are determined by playing FTRL or OGD on the functions $\mathbb{E}_{s \sim \nu^{*}_t}\left[D_{KL}\left(\pi^{*}_t | \cdot \right)\right], \text{ for } t=1,\ldots, T$. Then, it holds that 
\begin{align*}
 \Bar{R}_i \leq \frac{\sqrt{8|\mathcal{S}||\mathcal{A}|}}{\sqrt{M(1-\gamma)^{3}}}\sqrt{\left(\frac{L_g^2(\log T + 1)}{\mu_\pi T}+ D^{*2}  \right)}, \ \forall i=1,\ldots,p.
\end{align*}
\end{lemma}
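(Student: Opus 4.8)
\textbf{Proof plan for Lemma~\ref{lemma: appIdeal setting}.} The plan is to combine the single-task CRPO guarantee \eqref{eq:RegretRandC} with a regret bound for the meta-level online learner, and then optimize over the shared learning rate $\alpha$. First I would start from \eqref{eq:RegretRandC}: for every task $t$, running CRPO for $M$ steps with learning rate $\alpha$ and initialization $\pi_{t,0}$ gives
\[
R_i^{(t)} \le \frac{2}{\alpha M}\,\mathbb{E}_{s\sim\nu_t^*}[D_{KL}(\pi_t^*|\pi_{t,0})] + \frac{4\alpha c_{\max}^2 |\mathcal{S}||\mathcal{A}|}{(1-\gamma)^3},\qquad i=0,1,\dots,p.
\]
Averaging over $t=1,\dots,T$, the TAOG and TACV are bounded by $\frac{2}{\alpha M}\cdot\frac{1}{T}\sum_{t=1}^T f_t(\pi_{t,0}) + \frac{4\alpha c_{\max}^2|\mathcal{S}||\mathcal{A}|}{(1-\gamma)^3}$, where $f_t(\phi)\coloneqq\mathbb{E}_{s\sim\nu_t^*}[D_{KL}(\pi_t^*|\phi)]$. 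So the whole task reduces to controlling $\frac{1}{T}\sum_t f_t(\pi_{t,0})$.

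Next I would invoke the standard regret guarantee for FTRL/OGD on the sequence $\{f_t\}_{t=1}^T$. By the properties listed after Assumption~\ref{asmptn:newAsmptn1}, each $f_t$ (as a function of the initialization, over the shrinkage simplex) is $L_g$-Lipschitz, $L_\pi$-smooth, and $\mu_\pi$-strongly convex. For strongly convex losses, FTRL (equivalently OGD with step sizes $\propto 1/(\mu_\pi t)$) achieves logarithmic regret against the best fixed comparator $\phi$:
\[
\sum_{t=1}^T f_t(\pi_{t,0}) - \min_{\phi\in\Delta\mathcal{A}_\varrho^{|\mathcal{S}|}}\sum_{t=1}^T f_t(\phi) \;\le\; \frac{L_g^2(\log T + 1)}{2\mu_\pi}.
\]
Dividing by $T$ and recalling the definition of task-similarity $D^{*2}=\min_\phi \frac{1}{T}\sum_t f_t(\phi)$, this yields $\frac{1}{T}\sum_t f_t(\pi_{t,0}) \le \frac{L_g^2(\log T+1)}{2\mu_\pi T} + D^{*2}$.

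Then I would substitute this into the averaged bound, getting
\[
\bar R_i \;\le\; \frac{2}{\alpha M}\left(\frac{L_g^2(\log T+1)}{2\mu_\pi T} + D^{*2}\right) + \frac{4\alpha c_{\max}^2|\mathcal{S}||\mathcal{A}|}{(1-\gamma)^3},
\]
and optimize the right-hand side over $\alpha>0$. The expression is of the form $A/\alpha + B\alpha$, minimized at $\alpha^*=\sqrt{A/B}$ with optimal value $2\sqrt{AB}$; plugging in $A=\frac{1}{M}\big(\frac{L_g^2(\log T+1)}{\mu_\pi T}+D^{*2}\big)$ (absorbing the factor $2$ appropriately — I'd track constants carefully here, noting $c_{\max}$ may be normalized to $1$ as in the $|\mathcal{S}||\mathcal{A}|/(1-\gamma)^3$ term of \eqref{eq: summation bound of gap in Lemma two events hold}) and $B=\frac{4c_{\max}^2|\mathcal{S}||\mathcal{A}|}{(1-\gamma)^3}$ gives exactly the prescribed $\alpha = \frac{(1-\gamma)^{3/2}}{\sqrt{2M|\mathcal{S}||\mathcal{A}|}}\sqrt{\frac{L_g^2(\log T+1)}{\mu_\pi T}+D^{*2}}$ and the claimed bound $\bar R_i \le \frac{\sqrt{8|\mathcal{S}||\mathcal{A}|}}{\sqrt{M(1-\gamma)^3}}\sqrt{\frac{L_g^2(\log T+1)}{\mu_\pi T}+D^{*2}}$.

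\textbf{Main obstacle.} The conceptually substantive step is justifying that OGD/FTRL on $\{f_t\}$ genuinely attains logarithmic regret in this setting: one must verify that the $f_t$'s are strongly convex in the right parametrization (the decision variable is $\theta$ through $\pi_\theta$, but the lemma and the footnote simplify by treating $\pi$ itself as the decision variable over the convex shrinkage simplex $\Delta\mathcal{A}_\varrho^{|\mathcal{S}|}$, where strong convexity of $D_{KL}(\pi_t^*|\cdot)$ is the classical fact), and that the comparator set is exactly the set over which $D^{*2}$ is defined. The remaining bookkeeping — tracking the constants $c_{\max}$, $|\mathcal{S}|$, $|\mathcal{A}|$, and the factors of $2$ through the $A/\alpha + B\alpha$ optimization — is routine but must be done carefully to land on the stated closed form; I would handle it last.
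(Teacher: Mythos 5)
Your proposal is correct and follows essentially the same route as the paper's proof: plug the per-task CRPO bound into the task average, split $\frac{1}{T}\sum_t f_t(\pi_{t,0})$ into the logarithmic FTRL/OGD regret for strongly convex losses plus the best-fixed-comparator term $D^{*2}$, and balance the resulting $A/\alpha + B\alpha$ expression over $\alpha$. The remaining differences are pure bookkeeping — the paper likewise drops $c_{\max}$ (normalizing it to $1$), exactly as you anticipated.
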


\begin{proof}
By the within-task guarantee for CMDP, we know that $\Bar{R}_0$ and $\{\Bar{R}_i\}_{i=1}^p$ are well-defined. In addition, it holds that 
\begin{align*}
  \Bar{R}_0 \leq &\frac{1}{T}\sum_{t=1}^T \left(\frac{2 \mathbb{E}_{s \sim \nu^{*}_t}\left[D_{KL}\left(\pi^{*}_t | \pi_{{t,0}}\right)\right]}{\alpha M} +\frac{ 4\alpha |\mathcal{S}||\mathcal{A}|}{(1-\gamma)^{3}} \right)\\
  =&  \frac{2}{T}\sum_{t=1}^T \left(  \frac{\mathbb{E}_{s \sim \nu^{*}_t}\left[D_{\mathrm{KL}}\left(\pi^{*}_t | \phi_t \right)\right]- \mathbb{E}_{s \sim \nu^{*}_t}\left[D_{KL}\left(\pi^{*}_t | \phi^\ast \right)\right]}{\alpha M } \right)\\
+&  \frac{2}{T}\sum_{t=1}^T \left( \frac{\mathbb{E}_{s \sim \nu^{*}_t}\left[D_{KL}\left(\pi^{*}_t | \phi^\ast \right)\right]}{\alpha M}+\frac{2  \alpha |\mathcal{S}||\mathcal{A}|}{(1-\gamma)^{3}} \right).
\end{align*}
where $\phi_t=\pi_{t,0}$.
The first inequality follows from the choice of $\eta_t$ from Lemma \ref{lemma: three events hold}. The key step is the last step, which splits the total loss into the loss of the meta-update algorithm and the the loss if we had always initialized at $\phi^\ast$.

Since each $\mathbb{E}_{s \sim \nu^{*}_t}\left[D_{KL}\left(\pi^{*}_t | \cdot \right)\right]$ is $\mu_\pi$-strongly convex due to Assumption \ref{asmptn:newAsmptn1}, and each $\phi_{t}$ is determined by  playing OGD, we have that:
\begin{align*}
    & \frac{2}{T}\sum_{t=1}^T \left(  \frac{\mathbb{E}_{s \sim \nu^{*}_t}\left[D_{KL}\left(\pi^{*}_t | \phi_t \right)\right]- \mathbb{E}_{s \sim \nu^{*}_t}\left[D_{KL}\left(\pi^{*}_t | \phi^\ast \right)\right]}{\alpha M} \right)\leq \frac{2L_g^2 (\log T + 1)}{\mu_\pi \alpha MT},
\end{align*}
where $L_g$ is the upper bound on $\nabla_\phi \mathbb{E}_{s \sim \nu^{*}_t}\left[D_{KL}\left(\pi^{*}_t | \phi \right)\right]$, $\mu_\pi$ is the strong convexity parameter for the KL divergence of the softmax policy. 

Since $\phi^\ast=\argmin_{\phi} \sum_{t=1}^T \mathbb{E}_{s \sim \nu^{*}_t}\left[D_{KL}\left(\pi^{*}_t | \phi \right)\right]$, by the definition of $D^\ast$, we have $\mathbb{E}_{s \sim \nu^{*}_t}\left[D_{KL}\left(\pi^{*}_t | \phi^* \right)\right] \leq D^{\ast 2}$. Thus, by substituting the definition of $\phi^\ast$, it holds that
\begin{align*}
\frac{2}{T}\sum_{t=1}^T \left( \frac{\mathbb{E}_{s \sim \nu^{*}_t}\left[D_{KL}\left(\pi^{*}_t | \phi^\ast \right)\right]}{\alpha M}+\frac{2 \alpha |\mathcal{S}||\mathcal{A}|}{(1-\gamma)^{3}} \right)& = \frac{2D^{\ast 2}}{\alpha M} +  \frac{4  \alpha |\mathcal{S}||\mathcal{A}|}{(1-\gamma)^{3}}.
\end{align*}

Setting the value of $\alpha = \frac{\sqrt{\left(\frac{L_g^2(\log T + 1)}{\mu_\pi T}+ D^{*2}  \right)(1-\gamma)^{3}}  }{\sqrt{2M|\mathcal{S}||\mathcal{A}|}}$, we can obtain the TAOG $\bar{R}_0$ as:
\begin{align*}
    \bar{R}_0 \leq \frac{\sqrt{8\left(\frac{L_g^2(\log T + 1)}{\mu_\pi T}+ D^{*2}  \right)|\mathcal{S}||\mathcal{A}|}  }{\sqrt{M (1-\gamma)^{3}}}
\end{align*}
The bound for $\Bar{R}_i$ can be derived similarly.
\end{proof}

\VK{The lemma above shows how the parameters like learning rate $\alpha$ and safety threshold $\eta_t$ can be chosen to achieve decreasing TAOG and TACV in the number of updates per task $M$ and the number of tasks $T$.}

\section{Inexact online gradient descent}
\label{sec:inexact-app}
\subsection{Basics for \texorpdfstring{$\epsilon$}{[1]}-subgradient}
\label{sec:basic-subgradient}
We start with some basics for $\epsilon$-subdifferential used in the subsequent analysis. This material is based on \cite[Chap. XI]{jean2010convex}. Throughout this section, we consider a convex, closed, and proper function $f:\mathbb{R}^d \rightarrow \mathbb{R} \cup \{+\infty\}$ with domain $\mathrm{Dom}(f)$. We always consider a positive $\epsilon>0$.

\begin{definition}[$\epsilon$-subgradient \citep{jean2010convex}] Given $\hat{x} \in \mathrm{Dom}(f)$, the vector $u \in \mathbb{R}^d$ is called $\epsilon$-subgradient of $f$ at $\hat{x}$ when the following property holds for any $x \in \mathbb{R}^d$:
\begin{align*}
    f(x) \geq f(\hat{x}) + \langle u,x - \hat{x} \rangle - \epsilon.  
\end{align*}
The set of all $\epsilon$-subgradients of $f$ at $\hat{x}$ is the $\epsilon$-subdifferential of f at $\hat{x}$, denoted by $\partial_{\epsilon}f(\hat{x})$.
\end{definition}

In view of the exact subdifferential $\partial f(x)$, $\partial_{\epsilon}f(\hat{x})$ can be called an approximate subdifferential, which is a set-valued function with a convex graph. For practical use, $\partial_{\epsilon}f(\hat{x})$ can be used to characterize the $\epsilon$-solution to a convex minimization problem.

\begin{lemma}(\cite[Thm. 1.1.5]{jean2010convex}) The following two properties are equivalent.
\begin{align*}
    0 \in \partial_{\epsilon}f(\hat{x})  \iff f(\hat{x}) \leq f(x) + \epsilon,\qquad \text{for all }x \in \mathbb{R}^d.
\end{align*}
\end{lemma}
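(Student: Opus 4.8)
The plan is to prove both directions of the equivalence directly from the definition of the $\epsilon$-subdifferential, specializing the defining inequality to the zero vector $u = 0$.

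First I would prove the forward implication ($\Rightarrow$). Suppose $0 \in \partial_{\epsilon}f(\hat{x})$. By the definition of $\epsilon$-subgradient applied with $u = 0$, for every $x \in \mathbb{R}^d$ we have $f(x) \geq f(\hat{x}) + \langle 0, x - \hat{x}\rangle - \epsilon = f(\hat{x}) - \epsilon$. Rearranging gives $f(\hat{x}) \leq f(x) + \epsilon$ for all $x$, which is exactly the desired conclusion.

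For the reverse implication ($\Leftarrow$), suppose $f(\hat{x}) \leq f(x) + \epsilon$ for all $x \in \mathbb{R}^d$. Then for every $x$ we may write $f(x) \geq f(\hat{x}) - \epsilon = f(\hat{x}) + \langle 0, x - \hat{x}\rangle - \epsilon$, which is precisely the statement that $0$ is an $\epsilon$-subgradient of $f$ at $\hat{x}$, i.e., $0 \in \partial_{\epsilon}f(\hat{x})$. Combining the two implications yields the claimed equivalence.

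There is essentially no obstacle here: the result is an immediate unwinding of the definition, since the inner-product term vanishes identically when the subgradient vector is $0$. The only thing worth a remark is that the inequality must hold for \emph{all} $x \in \mathbb{R}^d$ in both the hypothesis and the conclusion, so no genuine quantifier manipulation is needed — the universally quantified statements match up verbatim once the $\langle 0, x-\hat{x}\rangle$ term is dropped.
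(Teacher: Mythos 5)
Your proof is correct: the paper states this lemma without proof (citing the reference), and your argument is the standard one — the equivalence follows immediately by specializing the defining inequality of the $\epsilon$-subdifferential to $u=0$, whereupon the inner-product term vanishes. Both directions are handled properly and nothing is missing.
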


One useful result that stems directly from the definition is to link the $\epsilon$-subdifferential of two uniformly close functions (e.g., an expectation of a function and its empirical version).

\begin{lemma}\label{prop:Appsubdifferential}\label{lem:app2epsilon}
Consider two convex functions $f$ and $g$, with the property that $\|f-g\|_{\infty} \leq \epsilon$, where $\|f-g\|_{\infty} = \max_x|f(x) - g(x)|$. Then, for any $x\in\mathbb{R}^d$ and $u \in \partial f(x)$ in the subdifferential of $f$ at $x$, it is also in the $2\epsilon$-subdifferential of $g$ at $x$, i.e., $u \in \partial_{2\epsilon} g(x)$.
\end{lemma}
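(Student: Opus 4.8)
The plan is to prove Lemma~\ref{lem:app2epsilon} directly from the definitions of subgradient and $\epsilon$-subgradient, using the uniform closeness $\|f-g\|_\infty \le \epsilon$ to absorb the two-sided error into the slack term. First I would fix $x \in \mathbb{R}^d$ and $u \in \partial f(x)$. By definition of the (exact) subdifferential of the convex function $f$, for every $y \in \mathbb{R}^d$ we have
\begin{equation*}
f(y) \ge f(x) + \langle u, y - x \rangle.
\end{equation*}

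Next I would substitute the bounds coming from $\|f-g\|_\infty \le \epsilon$ on both sides: since $g(y) \ge f(y) - \epsilon$ and $f(x) \ge g(x) - \epsilon$, chaining these with the subgradient inequality gives
\begin{equation*}
g(y) \ge f(y) - \epsilon \ge f(x) + \langle u, y - x \rangle - \epsilon \ge g(x) + \langle u, y - x \rangle - 2\epsilon.
\end{equation*}
Since $y$ was arbitrary, this is exactly the statement that $u \in \partial_{2\epsilon} g(x)$, which completes the proof. I would also note in passing that $x \in \mathrm{Dom}(g)$ follows from $x \in \mathrm{Dom}(f)$ together with finiteness of $\|f-g\|_\infty$, so the definition of $\epsilon$-subdifferential applies at $x$.

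There is essentially no obstacle here — the result is a one-line manipulation whose only content is that a uniform $\epsilon$-perturbation of a convex function costs at most $2\epsilon$ in the subgradient slack (the factor $2$ because the error can go the "wrong way" at both the base point $x$ and the test point $y$). The only things worth being careful about are: (i) not requiring $u$ itself to be perturbed, i.e., the same vector $u$ works, only the constant changes; and (ii) making sure the inequalities are used in the correct directions (a lower bound on $g(y)$ and an upper bound shift on $f(x)$). No additional hypotheses beyond convexity of both functions and the $L^\infty$ bound are needed, and indeed the lemma is stated with exactly those.
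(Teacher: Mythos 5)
Your proof is correct and follows exactly the same chain of inequalities as the paper's proof: lower-bound $g(y)$ by $f(y)-\epsilon$, apply the subgradient inequality for $f$, and then pass from $f(x)$ to $g(x)-\epsilon$, accumulating the total slack of $2\epsilon$. The remark about the domains is a harmless addition; nothing further is needed.
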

\begin{proof}
The proof follows directly by convexity and the uniform condition:
\begin{equation*}
\begin{aligned}
 g(y)  &  \geq f(y) - \epsilon \\  & \geq f(x) + \langle s,y-x \rangle - \epsilon \\  & \geq g(x) + \langle  s, y-x\rangle - 2\epsilon,
\end{aligned}
\end{equation*}
where the second inequality is by the convexity of $f$, and the first and last inequalities are due to the supremum norm condition.
\end{proof}
 
Our next result is concerned with bounding the distance (measured in $\ell_2$ norm) between the true gradient and the $\epsilon$-subgradient of the function, assuming the function is differentiable and smooth.
\begin{lemma}\label{prop:Appsubdifferential2}
 Suppose a function $f$ is convex, differentiable, and $L$-smooth over $\mathrm{Dom}(f)$, and $u \in \partial_{\epsilon}f(x)$ is an $\epsilon$-subgradient of $f$ at $x\in \mathrm{Dom}(f)$. Then,
 \begin{equation*}
 \|u - \nabla f(x)\|_2^2 \leq \frac{2 \epsilon}{2 C_1 - C_1^2 L},
 \end{equation*}
 for any $C_1 \in \{ c \in(0, \frac{2}{L}): x + c(u - \nabla f(x)) \in \mathrm{Dom}(f)\}$. In particular, if $x + \frac{1}{L}(u - \nabla f(x)) \in \mathrm{Dom}(f)$, then $\|u - \nabla f(x)\|_2^2 \leq 2 \epsilon L$.
 \end{lemma}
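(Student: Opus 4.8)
The plan is to prove the claimed inequality by finding the best possible lower bound on $f(y)$ when we set $y = x + C_1(u - \nabla f(x))$ and using the $\epsilon$-subgradient and smoothness properties to bracket $f(y)$ from above and below. Write $v \coloneqq u - \nabla f(x)$ and $y \coloneqq x + C_1 v$, which by hypothesis lies in $\mathrm{Dom}(f)$ for the chosen $C_1 \in (0, 2/L)$.

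First I would invoke the $\epsilon$-subgradient inequality at $y$: since $u \in \partial_\epsilon f(x)$ we have
\begin{equation*}
f(y) \geq f(x) + \langle u, y - x\rangle - \epsilon = f(x) + C_1 \langle u, v\rangle - \epsilon.
\end{equation*}
Next I would apply the standard descent-type bound from $L$-smoothness, which for a convex $L$-smooth function gives $f(y) \leq f(x) + \langle \nabla f(x), y - x\rangle + \tfrac{L}{2}\|y - x\|_2^2 = f(x) + C_1\langle \nabla f(x), v\rangle + \tfrac{L C_1^2}{2}\|v\|_2^2$. Combining the two displays and cancelling $f(x)$ yields
\begin{equation*}
C_1 \langle u, v\rangle - \epsilon \leq C_1 \langle \nabla f(x), v\rangle + \tfrac{L C_1^2}{2}\|v\|_2^2,
\end{equation*}
i.e.\ $C_1 \langle u - \nabla f(x), v\rangle \leq \epsilon + \tfrac{L C_1^2}{2}\|v\|_2^2$. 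But $\langle u - \nabla f(x), v\rangle = \|v\|_2^2$, so $C_1\|v\|_2^2 - \tfrac{L C_1^2}{2}\|v\|_2^2 \leq \epsilon$, which rearranges to $(C_1 - \tfrac{L C_1^2}{2})\|v\|_2^2 \leq \epsilon$, hence $\|v\|_2^2 \leq \tfrac{\epsilon}{C_1 - L C_1^2/2} = \tfrac{2\epsilon}{2C_1 - C_1^2 L}$. This is exactly the claimed bound; note $2C_1 - C_1^2 L > 0$ precisely because $C_1 \in (0, 2/L)$. Finally, for the special case I would plug $C_1 = 1/L$ (which is admissible since $1/L < 2/L$, and the domain hypothesis is assumed) into $2C_1 - C_1^2 L = 2/L - 1/L = 1/L$, giving $\|v\|_2^2 \leq 2\epsilon L$.

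I do not anticipate a serious obstacle here; the proof is a two-inequality sandwich. The only point requiring a little care is making sure the smoothness inequality used is the quadratic upper bound $f(y) \le f(x) + \langle \nabla f(x), y-x\rangle + \tfrac L2\|y-x\|^2$ (valid for $L$-smooth functions, convex or not) rather than some weaker consequence, and confirming that the choice $y = x + C_1 v$ is legitimate, which is guaranteed by the hypothesis on $C_1$. One should also remark that $u - \nabla f(x)$ need not be a descent direction in any useful sense — the trick is simply that evaluating both bounds along this specific direction makes the inner products collapse to $\|v\|_2^2$.
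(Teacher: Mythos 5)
Your proof is correct and follows essentially the same route as the paper's: both combine the $\epsilon$-subgradient lower bound on $f(y)$ with the quadratic smoothness upper bound, evaluated at $y = x + C_1(u - \nabla f(x))$ so that the inner products collapse to $\|u - \nabla f(x)\|_2^2$. Your write-up is in fact slightly cleaner, since the paper's displayed intermediate inequality drops the factor $L$ in the $\frac{L}{2}\|y-x\|^2$ term (a typo that does not affect the conclusion).
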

\begin{proof}
Since $u$ is an $\epsilon$-gradient, $f(y) \geq f(x) + \langle u, y-x \rangle - \epsilon$ for all $ y \in \mathrm{Dom}(f)$. Thus,
\begin{align*}
    0 &\leq f(x) - f(y)+ \langle \nabla f(x), y-x \rangle + \frac{L}{2} \|y - x\|^2\\
    &\leq \langle \nabla f(x)-u, y-x \rangle + \frac{1}{2} \|y - x\|^2+\epsilon
\end{align*}
Choose $y = x+ c(u - \nabla f(x))$ for $c\in(0,\frac{2}{L})$ such that $x + c(u - \nabla f(x)) \in \mathrm{Dom}(f)$, we have that
\begin{equation*}
\|u - \nabla f(x)\|_2^2 \leq \frac{2 \epsilon}{2 c - c^2 L}.
\end{equation*}
\end{proof}
The smoothness condition in the above seems necessary, as we can construct counterexamples that drive the distance of an $\epsilon$-subgradient and its exact counterpart arbitrarily large without the smoothness condition. In fact, it is known that the set-valued mapping $(x,\epsilon)\rightarrow\partial_\epsilon f(x)$ is inner semi-continuous for a Lipschitz-continuous $f$, which is implied by the fact that the distance (using the Hausdorff distance for sets) between any two subdifferential $\partial_\epsilon f(x)$ and $\partial_{\epsilon'}f(x')$ for all $x,x'\in\mathbb{R}^d$ and $\epsilon,\epsilon'$ is positive, and shown to be bounded by $\mathcal{O}\left(\frac{1}{\min\{\epsilon,\epsilon'\}}(\|x-x'\|+|\epsilon-\epsilon'|\right)$ \cite[Thm. 4.1.3]{jean2010convex}. While the exact gradient can be interpreted as $\epsilon$-subgradient driving $\epsilon\rightarrow 0^+$, the existing bound provided by \cite[Thm. 4.1.3]{jean2010convex} is vacuous in this case; on the other hand, the bound provided in Lemma \ref{prop:Appsubdifferential2} remains meaningful.

\subsection{Static regret for the inexact OGD algorithm}
\label{sec:static-regret-app}


\begin{algorithm}[t]
\label{alg}
  \caption{Inexact OGD Algorithm}
  \KwInput{Learning rate $\alpha$, $x_1=0$}
  \begin{algorithmic}[1]
    
    \FOR{$t = 1,..,T$}
        \STATE Incur loss $\ell_t(x_t)$ and compute $\epsilon$-gradient $\hat{\nabla}_t\ell_t(x_t)$
        \STATE $x_{t+1} = P_X(x_t - \alpha \hat{\nabla}_t \ell_t(x_t))$
    \ENDFOR
  \end{algorithmic}
  \label{alg:InexactOGD}
\end{algorithm}

In the following, we consider the online learning setup, where a sequence of loss functions $\{\ell_t\}_{t \in [T]}$ are revealed sequentially, and the performance of the OGD algorithm (see Algorithm \ref{alg:InexactOGD}) is measured against a static decision in hindsight:
\begin{equation}
    \text{(static regret)}\quad \sum_{t=1}^T\ell_t(x_t)-\min_{x\in X}\sum_{t=1}^T\ell_t(x) \quad 
\end{equation}
where $\{x_t\in X\}_{t\in[T]}$ is a sequence of actions played by the online algorithm. For simplicity, we assume that $X$ belongs to the domains of $\ell_t$ for all $t\in [T]$. Furthermore, we define the following cumulative inexact error bounds:
\begin{equation}
    \mathcal{E}_T\coloneqq\sum_{t=1}^T {\epsilon}_t,
\end{equation}
where $\epsilon_t$ corresponds to the inexactness of the $\epsilon_t$-subgradient in each round of OGD.

\begin{theorem}[Static regret bound for the inexact OGD]\label{prop:InexactUpperBound-app}
Assume that $\{\ell_t\}_{t \in [T]}$ are convex and $L_2$-smooth, with bounded gradient, i.e., $\|\nabla \ell_t(x)\|_2\leq L_1$ for all $t\in[T]$ and all $x\in X$. Then, for any comparator $x\in X$, with the stepsize $\alpha\coloneqq \frac{\|x\|}{L_1\sqrt{2T}}$, we have that 
\begin{equation*}
\sum_{t=1}^T\ell_t(x_t)-\sum_{t=1}^T\ell_t(x) \leq L_1\|x\|\sqrt{2T} + \left( 1+\frac{\sqrt{2}cL_1L_2\|x\|}{\sqrt{T}}\right)\sum_{t=1}^T \epsilon_t,
\end{equation*}
where $\epsilon_t$ is the amount of inexactness at each step $t$. 
\end{theorem}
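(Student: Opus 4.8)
The plan is to run the textbook online-gradient-descent potential argument with potential $\|x_t-x\|_2^2$, while tracking the queried inexact subgradient in the two places it enters: the linear term, where the $\epsilon$-subgradient inequality costs an additive $\epsilon_t$, and the quadratic term $\alpha^2\|g_t\|_2^2$, where inexactness must be controlled through smoothness via Lemma~\ref{prop:Appsubdifferential2}. Throughout write $g_t:=\hat{\nabla}_t\ell_t(x_t)\in\partial_{\epsilon_t}\ell_t(x_t)$ for the vector played at round $t$.

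First I would use nonexpansiveness of the Euclidean projection onto the convex set $X$, together with $x\in X$ and the update $x_{t+1}=P_X(x_t-\alpha g_t)$, to get
\[
\|x_{t+1}-x\|_2^2\le \|x_t-x\|_2^2-2\alpha\langle g_t,x_t-x\rangle+\alpha^2\|g_t\|_2^2 .
\]
Since $g_t$ is an $\epsilon_t$-subgradient of $\ell_t$ at $x_t$, the defining inequality gives $\langle g_t,x_t-x\rangle\ge \ell_t(x_t)-\ell_t(x)-\epsilon_t$; substituting and rearranging,
\[
\ell_t(x_t)-\ell_t(x)\le \frac{\|x_t-x\|_2^2-\|x_{t+1}-x\|_2^2}{2\alpha}+\epsilon_t+\frac{\alpha}{2}\|g_t\|_2^2 .
\]
Then I would bound $\|g_t\|_2^2$ by writing $g_t=\nabla\ell_t(x_t)+e_t$ and invoking Lemma~\ref{prop:Appsubdifferential2} with the smoothness constant $L_2$, which yields $\|e_t\|_2^2\le c\,L_2\,\epsilon_t$ for a constant $c$ (equal to $2$ when $x_t+L_2^{-1}e_t$ stays in $\mathrm{Dom}(\ell_t)$, and otherwise the constant from the admissible step in that lemma); then $\|a+b\|_2^2\le 2\|a\|_2^2+2\|b\|_2^2$ and $\|\nabla\ell_t(x_t)\|_2\le L_1$ give $\|g_t\|_2^2\le 2L_1^2+2cL_2\epsilon_t$.

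Summing over $t=1,\dots,T$, telescoping the first term using $x_1=0$ (so $\|x_1-x\|_2^2=\|x\|^2$) and discarding the nonpositive final term, gives
\[
\sum_{t=1}^T\big(\ell_t(x_t)-\ell_t(x)\big)\le \frac{\|x\|^2}{2\alpha}+\alpha L_1^2 T+\big(1+\alpha c L_2\big)\sum_{t=1}^T\epsilon_t .
\]
Plugging in $\alpha=\|x\|/(L_1\sqrt{2T})$ balances the first two terms to $L_1\|x\|\sqrt{2T}$ and turns the coefficient of $\sum_{t}\epsilon_t$ into $1+\mathcal{O}\big(L_2\|x\|/(L_1\sqrt{T})\big)$, which is the claimed bound once the numerical constant is absorbed into $c$.

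The main obstacle is the step bounding $\|g_t\|_2^2$, and it is exactly what Lemma~\ref{prop:Appsubdifferential2} was proved for: without the $L_2$-smoothness hypothesis an $\epsilon$-subgradient of a merely Lipschitz convex function can be arbitrarily far from the true gradient, so the quadratic term $\alpha^2\|g_t\|_2^2$ could not be controlled and the inexactness would not appear linearly in $\sum_t\epsilon_t$. The only other delicate point is keeping the auxiliary point of Lemma~\ref{prop:Appsubdifferential2} feasible, which is why a generic constant $c$ appears rather than the clean value $2$; the remaining work — telescoping the potential and tuning $\alpha$ — is routine.
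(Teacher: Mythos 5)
Your proposal is correct and follows essentially the same route as the paper's proof: the standard projected-OGD potential argument with the $\epsilon_t$-subgradient inequality supplying the additive $\epsilon_t$ in the linear term, and Lemma~\ref{prop:Appsubdifferential2} controlling $\|\hat{\nabla}_t\|_2^2\le 2L_1^2+2cL_2\epsilon_t$ in the quadratic term, ending with the same bound $\tfrac{\|x\|^2}{2\alpha}+\alpha L_1^2T+(1+\alpha cL_2)\sum_t\epsilon_t$ before tuning $\alpha$. The small mismatch you note between $1+\alpha cL_2$ and the stated coefficient $1+\sqrt{2}cL_1L_2\|x\|/\sqrt{T}$ is present in the paper's own final substitution as well, so it is not a defect of your argument.
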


\begin{proof}

By convexity and the fact that $\hat{\nabla}_t$ is an $\epsilon_t$-subgradient of $\ell_t$ at $x_t$,  we have that
\begin{equation*}
\ell_t(x_t) - \ell_t(x) \leq  \langle\hat{\nabla}_t, x_t-x \rangle + \epsilon_t, \quad\forall x\in X
\end{equation*}

Hence, summing over $t = 1, ..., T$, we get 
\begin{equation*}
\frac{1}{T}\sum_{t=1}^T \ell_t(x_t)  - \ell_t(x)  \leq \frac{1}{T} \sum_{t=1}^T \langle \hat{\nabla}_t, x_t-x\rangle + \epsilon_t.
\end{equation*}
To bound the RHS, observe that
\begin{align*}
\|x_{t+1} - x\|^2 & \leq \|x_t - \alpha \hat{\nabla}_t - x\|^2 \\ \quad & = \|x_t - x\|^2 - 2\alpha \langle x_t-x, \hat{\nabla}_t \rangle + \alpha^2 \|\hat{\nabla}_t\|^2,
\end{align*}
where the first inequality is due to the OGD update rule and the nonexpansiveness of the projection operator. Thus, rearranging the terms and exploiting the telescopic sum over $t \in [T]$, we have that 
\begin{equation*}
\begin{aligned}
\sum_{t=1}^T \langle x_t - x, \hat{\nabla}_t \rangle  & \leq  \frac{1}{2 \alpha}(\|x_1 - x\|^2  - \|x_{T+1} - x\|^2 ) + \frac{\alpha}{2}\sum_{t=1}^T \|\hat{\nabla}_t\|^2 \\ \quad & \leq \frac{1}{2\alpha} \|x_1 - x\|^2 + \frac{\alpha}{2} \sum_{t=1}^T \|\hat{\nabla}_t\|^2.
\end{aligned}
\end{equation*}
Furthermore, since $\ell_t$ is $L_2$-smooth with bounded gradient, and $\hat{\nabla}_t$ is an $\epsilon_t$-gradient for any $t \in [T]$, by Lemma \ref{prop:Appsubdifferential2}, the following holds:
\begin{equation*}
\begin{aligned}
\|\hat{\nabla}_t\|^2 & \leq 2 \|\nabla_t\|^2 + 2 \|\nabla_t - \hat{\nabla}_t\|^2 \\ \quad & \leq 2 L_1^2 + 2cL_2\epsilon_t,
\end{aligned}
\end{equation*}
where the constant $c$ is specified by Lemma \ref{prop:Appsubdifferential2}.
Hence, by combining the above relations, we get 

\begin{equation*}
\begin{aligned}
\frac{1}{T}\sum_{t=1}^T \ell_t(x_t) - \ell_t(x) & \leq \frac{1}{2 \alpha T}\|x_1 - x\|^2 + \frac{1}{T}\sum_{t=1}^T \Bigg(\frac{\alpha}{2} \|\hat{\nabla}_t\|^2 +\epsilon_t \Bigg) \\ 
& \leq \frac{1}{2 \alpha T}\|x_1 - x\|^2 + \alpha L_1^2+ \Bigg(\frac{\alpha c L_2+1}{T} \Bigg) \sum_{t=1}^T \epsilon_t.
\end{aligned}
\end{equation*}

Let $\alpha = \frac{\|x\|}{L_1\sqrt{2T}}$, then we get the RHS as 
\begin{equation*}
L_1\|x\|\sqrt{\frac{2}{T}} + \frac{1+\frac{\sqrt{2}cL_1L_2\|x\|}{\sqrt{T}}}{T}\sum_{t=1}^T \epsilon_t.
\end{equation*}
\end{proof}

\begin{remark}
We can relax the dependence of setting the stepsize on $T$ by using a standard doubling trick (first proposed in \citep{auer2002finite}, see also, e.g., \citep{balcan2019provable,khodak2019adaptive}). 
\end{remark}

After establishing the static regret for the inexact OGD, we can use this result to obtain the proof of Lemma \ref{cor:staticRegretOGD}, which gives the static regret if we run inexact OGD over the loss sequences $\mathbb{E}_{\nu_t^*}[D_{KL}(\pi_t^*|\pi_{t,0})]$ for all $t \in [T]$. Overall, the following regret bound will eventually help us establish the proof of Theorem \ref{thm:InexactTAOG}, where we will utilize the static regret upper bound for inexact OGD over the loss sequences $\mathbb{E}_{\nu_t^*}[D_{KL}(\pi_t^*|\pi_{t,0})]$ for all $t \in [T]$. Also, we denote the norm of the policy with respect to the state distribution $\nu$ as $\|\pi\|_\nu = \sum_{s \in \mathcal{S}} \nu(s) \pi(s)$. Now we proceed to present the proof for Lemma \ref{cor:staticRegretOGD}. 
\begin{lemma}
\label{cor:appstaticRegretOGD}
Denote $\ell_t(\pi_{t,0}) \coloneqq \mathbb{E}_{\nu_t^*}[D_{KL}(\pi_t^*|\pi_{t,0})]$ for all $t \in [T]$. For any fixed comparator $\pi^*_{0} = \underset{\pi_{0} \in \Delta \mathcal{A}_{\varrho}^{|\mathcal{S}|}}{\argmin} \sum_{t=1}^T \ell_t(\pi_{0})$, if OGD is run on a sequence of loss functions $\{\hat{\ell}_t\}_{t \in [T]}$, where $\hat{\ell}_t \coloneqq \mathbb{E}_{\hat{\nu}_t}[D_{KL}(\hat{\pi}_t|\pi_{t,0})]$ with the step-size $\alpha \coloneqq \frac{\|\pi_{0}^*\|}{L_g \sqrt{2T}}$, then the following bound holds for static regret:
\begin{equation*}
    \sum_{t=1}^T \ell_t(\pi_{t,0}) - \sum_{t=1}^T\ell_t(\pi_{0}^*)  \leq \sqrt{2T}L_g\|\pi_{0}^* \| + \left(1 + \frac{4\sqrt{2} L_gL_\pi \|\pi_{0}^*\|}{(2C_1 - C_1^2 L_\pi) \sqrt{T}} \right) \mathcal{E}_T,
\end{equation*}
for any $C_1 \in \{c \in \left(0, \frac{2}{L_\pi}\right): \pi_{0}^* + c(\hat{\nabla}_t - \nabla_t )  \in \Delta \mathcal{A}_{\varrho}^{|\mathcal{S}|}\}$ where $\hat{\nabla}_t$ and $\nabla_t$ are an $\epsilon_t$-subgradient and exact subgradient of $\mathbb{E}_{\nu_t^*}[D_{KL}(\pi_t^*|\pi_{t,0})]$ at $\pi_{t,0}$, respectively,   $\mathcal{E}_T \coloneqq \sum_{t=1}^T \epsilon_t$ is the cumulative inexactness.
\end{lemma}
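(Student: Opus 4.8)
The plan is to reduce the claim to the abstract static-regret bound for inexact OGD, Theorem \ref{prop:InexactUpperBound-app}, by showing that running \emph{exact} OGD on the observed surrogate losses $\{\hat\ell_t\}$ is the same as running \emph{inexact} OGD on the true losses $\{\ell_t\}$ with a controlled per-round error. First I would record the regularity of the true loss: by Assumption \ref{asmptn:newAsmptn1}, for each fixed state $s$ the map $\pi_{t,0}(\cdot\mid s)\mapsto D_{KL}(\pi_t^*(\cdot\mid s)\,|\,\pi_{t,0}(\cdot\mid s))$ is convex, $L_g$-Lipschitz, and $L_\pi$-smooth on the shrinkage simplex $\Delta\mathcal A_\varrho$; taking the convex combination $\mathbb{E}_{s\sim\nu_t^*}[\cdot]$ (a finite average with weights $\nu_t^*(s)$) preserves convexity and the same constants, so $\ell_t$ and likewise $\hat\ell_t$ are convex, $L_g$-Lipschitz and $L_\pi$-smooth on $\Delta\mathcal A_\varrho^{|\mathcal S|}$. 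This is exactly where the shrinkage in Assumption \ref{asmptn:newAsmptn1} is needed, since the KL gradient is bounded only away from the simplex boundary.

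Next, Theorem \ref{thm:dualDICE} gives a bound $\epsilon_t$ on $|\mathbb{E}_{\nu_t^*}[D_{KL}(\pi_t^*|\pi)]-\mathbb{E}_{\hat\nu_t}[D_{KL}(\hat\pi_t|\pi)]|$ whose right-hand side does not depend on $\pi$, hence $\|\ell_t-\hat\ell_t\|_\infty\le\epsilon_t$. By Lemma \ref{prop:Appsubdifferential}, any exact subgradient of $\hat\ell_t$ at $\pi_{t,0}$ --- in particular the gradient $\hat\nabla_t$ that Algorithm \ref{alg:InexactOGD} actually computes --- is a $2\epsilon_t$-subgradient of the true loss $\ell_t$ at $\pi_{t,0}$. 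Therefore the sequence $\{\pi_{t,0}\}$ produced by OGD on $\{\hat\ell_t\}$ coincides with the trajectory of inexact OGD on $\{\ell_t\}$ with per-round inexactness $2\epsilon_t$, and $\pi_0^*$ is by definition the optimal fixed comparator for $\sum_t\ell_t$ over $\Delta\mathcal A_\varrho^{|\mathcal S|}$.

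Finally I would invoke Theorem \ref{prop:InexactUpperBound-app} with this $\ell_t$, comparator $x=\pi_0^*$, gradient bound $L_1=L_g$, smoothness $L_2=L_\pi$, stepsize $\alpha=\|\pi_0^*\|/(L_g\sqrt{2T})$, and the constant $C_1$ supplied by Lemma \ref{prop:Appsubdifferential2} for the feasibility window $\pi_0^*+C_1(\hat\nabla_t-\nabla_t)\in\Delta\mathcal A_\varrho^{|\mathcal S|}$. Feeding the per-round inexactness $2\epsilon_t$ into the estimate $\|\hat\nabla_t\|^2\le 2L_g^2+\tfrac{4\epsilon_t}{2C_1-C_1^2L_\pi}$ from the proof of that theorem and summing over $t$ yields the stated bound $\sqrt{2T}L_g\|\pi_0^*\|+\bigl(1+\tfrac{4\sqrt2\,L_gL_\pi\|\pi_0^*\|}{(2C_1-C_1^2L_\pi)\sqrt T}\bigr)\mathcal E_T$; discarding constants gives the $\mathcal O(\sqrt T+\mathcal E_T)$ form quoted in Lemma \ref{cor:staticRegretOGD}, which is what later feeds into Theorem \ref{thm:InexactTAOG}.

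The main obstacle is bookkeeping rather than conceptual: one must (i) confirm that the bound in Theorem \ref{thm:dualDICE} is genuinely uniform in the decision variable $\pi$ --- it is, because each error term $(A)$--$(C)$ in \eqref{eq:error_decompose} is bounded via $|D_{KL}|\le C_\pi$ together with policy and visitation-distribution distances, none of which involve $\pi_{t,0}$ --- and (ii) verify that the OGD iterates, together with the perturbation $C_1(\hat\nabla_t-\nabla_t)$, stay inside the shrinkage simplex so that the Lipschitz/smoothness constants and Lemma \ref{prop:Appsubdifferential2} apply along the whole run; this is precisely what pins down the admissible range of $C_1$. The factor of $2$ between the ``$\epsilon_t$-subgradient'' in the statement and the ``$2\epsilon_t$'' produced by Lemma \ref{prop:Appsubdifferential} is harmless and is absorbed into the constants.
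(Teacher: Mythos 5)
Your proposal is correct and takes essentially the same route as the paper, whose proof is a one-line instantiation of Theorem~\ref{prop:InexactUpperBound-app} with $L_1=L_g$, $L_2=L_\pi$, and the constant supplied by Lemma~\ref{prop:Appsubdifferential2}. You in fact supply more of the supporting detail than the paper does (the uniformity of the Theorem~\ref{thm:dualDICE} bound in the decision variable $\pi$, the passage from $\|\ell_t-\hat\ell_t\|_\infty\le\epsilon_t$ to a $2\epsilon_t$-subgradient via Lemma~\ref{prop:Appsubdifferential}, and the Lipschitz/smoothness of the losses on the shrinkage simplex), and your bookkeeping of the factor of $2$ is consistent with the paper's stated constants.
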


\begin{proof}
The proof follows directly after substituting $c = \frac{4}{2C_1 - C_1^2 L_\pi}$ and other appropriate constants in Theorem \ref{prop:InexactUpperBound-app}.
\end{proof}

Note that the inexactness bound $\epsilon_t$ can be obtained from Theorem \ref{thm:dualDICE}. Establishing the above Corollary, we can finally provide the proof for Theorem \ref{thm:InexactTAOG}.

\textbf{Proof of Theorem \ref{thm:InexactTAOG}}

\begin{theorem}
\label{thm:appTAOGinexactStatic}
Let $\hat{D}^{*2}=\underset{\phi \in \Delta \mathcal{A}_{\varrho}^{|\mathcal{S}|}} {\min} \frac{1}{T} \sum_{t=1}^T \mathbb{E}_{s \sim \hat{\nu}_t}[D_{KL}(\hat{\pi}_t|\phi)]$ be the empirical task-similarity, and let $c_1 = \sqrt{2}L_g\|\phi^*\|$, and $c_2 = \left(2 + \frac{4\sqrt{2} L_gL_\pi \|\phi^*\|}{(2C_1 - C_1^2 L_\pi) \sqrt{T}} \right)$, where $\phi^*$ is the fixed optimal meta-initialization for all the tasks given by $\phi^* = \underset{\phi \in \Delta \mathcal{A}_{\varrho}^{|\mathcal{S}|}} {\argmin} \frac{1}{T} \sum_{t=1}^T \mathbb{E}_{s \sim \hat{\nu}_t}[D_{KL}(\hat{\pi}_t|\phi)]$. For each task $t$, we run CRPO for $M$ iterations with $\alpha = \sqrt{\frac{|\mathcal{S}|\mathcal{A}|}{2M(1-\gamma)^3}} \sqrt{\left(\frac{c_1}{\sqrt{T}} + \frac{c_2 \mathcal{E}_T}{T}+ \hat{D}^{*2} \right)}$, and we obtain $\{\hat{\nu}_t\}_{t=1}^T$ and $\{\hat{\pi}_t\}_{t=1}^T$. 
In addition, the initialization $\{\pi_{t,0}\}_{t=1}^T$  are determined by playing OGD on the functions $\mathbb{E}_{s \sim \hat{\nu}_t}\left[D_{KL}\left(\hat{\pi}_t | \cdot \right)\right], \text{ for } t=1,\ldots, T$. Then, it holds that 
\begin{align*}
 & \Bar{R}_i \leq \frac{\sqrt{8 |\mathcal{S}||\mathcal{A}|} }{ \sqrt{M}(1-\gamma)^{3/2}} \left(\sqrt{ \frac{\sqrt{2}L_g\|\phi^* \|}{\sqrt{T}}  + \left(2 + \frac{4\sqrt{2} L_gL_\pi \|\phi^*\|}{(2C_1 - C_1^2 L_\pi) \sqrt{T}} \right) \frac{\mathcal{E}_T}{T} + \hat{D}^{*2} }\right)  \hspace{0.2cm} \forall i=0,\ldots,p.
\end{align*}
\end{theorem}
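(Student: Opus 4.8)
The plan is to push the per-task CRPO guarantee \eqref{eq:RegretRandC} through the inexact online-learning machinery and then tune the within-task stepsize $\alpha$. First, for each task $t$ I would fix the safety threshold $\eta_t$ to the smallest value permitted by Lemma~\ref{lemma: three events hold}, so that (softmax parametrization, $c_{max}=1$) the optimality gap and every constraint violation at task $t$ are at most $\tfrac{2}{\alpha M}\mathbb{E}_{s\sim\nu_t^*}[D_{KL}(\pi_t^*|\pi_{t,0})]+\tfrac{4\alpha|\mathcal{S}||\mathcal{A}|}{(1-\gamma)^3}$. Averaging over $t=1,\dots,T$ and writing $\ell_t(\cdot):=\mathbb{E}_{s\sim\nu_t^*}[D_{KL}(\pi_t^*|\cdot)]$ gives $\bar{R}_i\le\tfrac{2}{\alpha M}\big(\tfrac1T\sum_{t=1}^T\ell_t(\pi_{t,0})\big)+\tfrac{4\alpha|\mathcal{S}||\mathcal{A}|}{(1-\gamma)^3}$ for all $i=0,\dots,p$. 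Hence the statement reduces to showing $\tfrac1T\sum_t\ell_t(\pi_{t,0})\le \hat D^{*2}+\tfrac{c_1}{\sqrt T}+\tfrac{c_2\mathcal{E}_T}{T}=:A'$; given that, the one-dimensional optimization $\min_\alpha\big(\tfrac{2A'}{\alpha M}+\tfrac{4\alpha|\mathcal{S}||\mathcal{A}|}{(1-\gamma)^3}\big)$, attained at a balance point $\alpha\asymp\sqrt{A'(1-\gamma)^3/(M|\mathcal{S}||\mathcal{A}|)}$, yields $\bar{R}_i=\mathcal{O}\big(\tfrac{\sqrt{|\mathcal{S}||\mathcal{A}|}}{(1-\gamma)^{3/2}\sqrt M}\sqrt{A'}\big)$, which is the claimed bound up to an absolute constant.

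The core is the bound on $\tfrac1T\sum_t\ell_t(\pi_{t,0})$, which I would assemble in three steps. (i) The initializations $\pi_{t,0}$ are produced by inexact OGD on the observable losses $\hat\ell_t(\cdot):=\mathbb{E}_{s\sim\hat\nu_t}[D_{KL}(\hat\pi_t|\cdot)]$; by Theorem~\ref{thm:dualDICE}, $\|\ell_t-\hat\ell_t\|_\infty\le\epsilon_t$ uniformly, so (Lemma~\ref{lem:app2epsilon}) an exact gradient of $\hat\ell_t$ is an $\mathcal{O}(\epsilon_t)$-subgradient of $\ell_t$, and the static-regret bound for inexact OGD (Lemma~\ref{cor:staticRegretOGD}, in the quantitative form of Lemma~\ref{cor:appstaticRegretOGD}) gives $\sum_t\ell_t(\pi_{t,0})-\sum_t\ell_t(\pi_0^*)\le\sqrt{2T}L_g\|\pi_0^*\|+\big(1+\tfrac{4\sqrt2L_gL_\pi\|\pi_0^*\|}{(2C_1-C_1^2L_\pi)\sqrt T}\big)\mathcal{E}_T$, where $\pi_0^*=\argmin_{\pi_0\in\Delta\mathcal{A}_\varrho^{|\mathcal{S}|}}\sum_t\ell_t(\pi_0)$ and $L_g,L_\pi$ are the Lipschitz/smoothness constants of the softmax KL from Assumption~\ref{asmptn:newAsmptn1} (strong convexity is deliberately unused, which is why the rate is $1/\sqrt T$ and not the $\log T/T$ of Lemma~\ref{lemma: ideal setting}). (ii) Since $\pi_0^*$ minimizes the aggregate of the \emph{true} losses over the same shrinkage simplex on which $\phi^*=\argmin_\phi\tfrac1T\sum_t\hat\ell_t(\phi)$ lives, $\sum_t\ell_t(\pi_0^*)\le\sum_t\ell_t(\phi^*)$. (iii) Applying Theorem~\ref{thm:dualDICE} once more at the single point $\phi^*$, $\sum_t\ell_t(\phi^*)\le\sum_t\hat\ell_t(\phi^*)+\mathcal{E}_T=T\hat D^{*2}+\mathcal{E}_T$ by the definition of $\hat D^*$. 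Chaining (i)--(iii) and dividing by $T$ gives $\tfrac1T\sum_t\ell_t(\pi_{t,0})\le\hat D^{*2}+\tfrac{\sqrt2L_g\|\phi^*\|}{\sqrt T}+\big(2+\tfrac{4\sqrt2L_gL_\pi\|\phi^*\|}{(2C_1-C_1^2L_\pi)\sqrt T}\big)\tfrac{\mathcal{E}_T}{T}$, i.e. the desired $A'$ with $c_1=\sqrt2L_g\|\phi^*\|$ and $c_2=2+\tfrac{4\sqrt2L_gL_\pi\|\phi^*\|}{(2C_1-C_1^2L_\pi)\sqrt T}$; the leading $2$ in $c_2$ is exactly the sum of the two independent appearances of $\mathcal{E}_T$, one from running OGD on inexact losses and one from the estimation gap evaluated at $\phi^*$. (The harmless mismatch between $\|\pi_0^*\|$ and $\|\phi^*\|$, both $\mathcal{O}(\sqrt{|\mathcal{S}|})$ on the shrinkage simplex, I would absorb into constants, or equivalently set the OGD stepsize using $\|\phi^*\|$.)

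The step I expect to be the main obstacle is making the comparator bookkeeping in (i)--(iii) airtight: Lemma~\ref{cor:appstaticRegretOGD} is stated against the minimizer of the \emph{true} losses, whereas the target quantity $\hat D^*$ is defined through the minimizer of the \emph{observable} losses, so the passage between them must use both the optimality swap (ii) and a version of the estimation error that is uniform in the decision variable — which is precisely what Theorem~\ref{thm:dualDICE} provides, since its bound $\epsilon_t$ does not depend on the argument $\pi$ and hence applies equally along the OGD trajectory $\{\pi_{t,0}\}$ and at the fixed point $\phi^*$. Once these are pinned down, everything else — averaging \eqref{eq:RegretRandC} over tasks, substituting $A'$, and the scalar minimization over $\alpha$ that produces the $\sqrt{8|\mathcal{S}||\mathcal{A}|}/((1-\gamma)^{3/2}\sqrt M)$ prefactor up to an absolute constant — is routine.
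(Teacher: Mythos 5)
Your proposal is correct and follows essentially the same route as the paper's proof: average the per-task CRPO bound, split off the inexact-OGD static regret, convert $\sum_t \ell_t(\phi^*)$ to $T\hat D^{*2}+\mathcal{E}_T$ via the uniform estimation bound of Theorem~\ref{thm:dualDICE}, and tune $\alpha$ to balance the two terms. The only cosmetic difference is your detour through the true-loss minimizer $\pi_0^*$ followed by the optimality swap to $\phi^*$, whereas the paper applies the regret bound directly against the comparator $\phi^*$ (valid since Theorem~\ref{prop:InexactUpperBound-app} holds for any fixed comparator); both yield the same $c_2=2+\cdots$ accounting, and your "up to an absolute constant" caveat on the final prefactor is warranted.
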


\begin{proof}
We know that $\Bar{R}_0$ and $\{\Bar{R}_i\}_{i=1}^p$ are well-defined. In addition, it holds that 
\begin{align*}
  \Bar{R}_0 \leq &\frac{2}{T}\sum_{t=1}^T \left(\frac{ \mathbb{E}_{s \sim \nu^*_t}\left[D_{KL}\left(\pi^*_t| \phi_t\right)\right]}{\alpha M} +\frac{ 2\alpha |\mathcal{S}||\mathcal{A}|}{(1-\gamma)^{3}} \right)\\
  =&  \frac{2}{T}\sum_{t=1}^T \left(  \frac{\mathbb{E}_{s \sim \nu^*_t}\left[D_{\mathrm{KL}}\left(\pi^*_t | \phi_t \right)\right]- \mathbb{E}_{s \sim \nu^*_t}\left[D_{KL}\left(\pi^*_t | \phi^\ast \right)\right]}{\alpha M } \right)\\
&+  \frac{2}{T}\sum_{t=1}^T \left( \frac{\mathbb{E}_{s \sim \nu^*_t}\left[D_{KL}\left(\pi^*_t | \phi^\ast \right)\right]}{\alpha M}+\frac{ 2\alpha |\mathcal{S}||\mathcal{A}|}{(1-\gamma)^{3}} \right) \\ 
\leq & \frac{2}{T}\sum_{t=1}^T \left(  \frac{\mathbb{E}_{s \sim \nu^*_t}\left[D_{\mathrm{KL}}\left(\pi^*_t | \phi_t \right)\right]- \mathbb{E}_{s \sim \nu^*_t}\left[D_{KL}\left(\pi^*_t | \phi^\ast \right)\right]}{\alpha M } \right)\\
&+  \frac{2}{T}\sum_{t=1}^T \left( \frac{\mathbb{E}_{s \sim \hat{\nu}_t}\left[D_{KL}\left(\hat{\pi}_t | \phi^\ast \right)\right]\pm \epsilon_t}{\alpha M}+\frac{ 2\alpha |\mathcal{S}||\mathcal{A}|}{(1-\gamma)^{3}} \right).
\end{align*}
where $\phi_t=\pi_{t,0}$.
Second equality follows from the fact that the total loss can be split into the loss of the meta-update algorithm and the the loss if we had always initialized at $\phi^\ast$. Last inequality follows from the KL-divergence estimation error bound in Theorem \ref{thm:dualDICE}.

Since each $\mathbb{E}_{s \sim \hat{\nu}_t}\left[D_{KL}\left(\hat{\pi}_t | \cdot \right)\right]$ is $\mu_\pi$-strongly convex due to Assumption 1, and since each $\phi_{t}$ is determined by  playing FTL or inexact OGD, the following term can be upper bounded using Lemma \ref{cor:appstaticRegretOGD} as follows:
\begin{equation*}
\begin{aligned}
    \frac{2}{T}\sum_{t=1}^T \left(  \frac{\mathbb{E}_{s \sim \nu^*_t}\left[D_{KL}\left(\pi^*_t | \phi_t \right)\right]- \mathbb{E}_{s \sim \nu^*_t}\left[D_{KL}\left(\pi^*_t | \phi^\ast \right)\right]}{\alpha M} \right)  \leq \\ \frac{2}{\alpha M} \left(\frac{\sqrt{2}L_g\|\phi^* \|}{\sqrt{T}}  + \left(1 + \frac{4\sqrt{2} L_gL_\pi \|\phi^*\|}{(2C_1 - C_1^2 L_\pi) \sqrt{T}} \right) \frac{\mathcal{E}_T}{T} \right),
\end{aligned}
\end{equation*}
where the constants are from the Corollary \ref{cor:appstaticRegretOGD}. Now, we will upper bound the second term. Since $\phi^\ast=\argmin_{\phi} \frac{1}{T}\sum_{t=1}^T \mathbb{E}_{s \sim \hat{\nu}_t}\left[D_{KL}\left(\hat{\pi}_t | \phi \right)\right]$, by the definition of $\hat{D}^*$, we have $\mathbb{E}_{s \sim \hat{\nu}_t}\left[D_{KL}\left(\hat{\pi}_t | \phi \right)\right] \leq \hat{D}^{\ast 2}$. Thus, by substituting the definition of $\phi^\ast$, it holds that
\begin{equation*}
\begin{aligned}
\frac{2}{T}\sum_{t=1}^T \left( \frac{\mathbb{E}_{s \sim \hat{\nu}_t}\left[D_{KL}\left(\hat{\pi}_t | \phi^\ast \right)\right] \pm \epsilon_t}{\alpha M}+\frac{2\alpha |\mathcal{S}||\mathcal{A}|}{(1-\gamma)^{3}} \right)& \leq \frac{2\hat{D}^{\ast 2}}{\alpha M} + \frac{2\mathcal{E}_T}{T\alpha M}+  \frac{ 4\alpha |\mathcal{S}||\mathcal{A}|}{(1-\gamma)^{3}}.
\end{aligned}
\end{equation*}
Setting the value of $\alpha = \frac{(1-\gamma)^{3/2}\sqrt{\frac{c_1}{\sqrt{T}}+ \frac{c_2\mathcal{E}_T}{T} + \hat{D}^{*2}  }   }{\sqrt{2M|\mathcal{S}||\mathcal{A}|}}$, where $c_1 = \sqrt{2}L_g\|\phi^*\|$, $c_2 = \left(2 + \frac{4\sqrt{2} L_gL_\pi \|\phi^*\|}{(2C_1 - C_1^2 L_\pi) \sqrt{T}} \right)$, we can obtain the TAOG $\bar{R}_0$ as follows:
\begin{align*}
    \bar{R}_0 \leq \frac{\sqrt{8\left(\frac{c_1}{\sqrt{T}}+ \frac{c_2\mathcal{E}_T}{T} + \hat{D}^{*2} \right)|\mathcal{S}||\mathcal{A}|}  }{\sqrt{M(1-\gamma)^{3}}}
\end{align*}
The bound for $\Bar{R}_i$ can be derived similarly.

\end{proof}

\subsection{Dynamic regret for the inexact OGD algorithm}
\label{sec:dynamic-regret-app}
In the following, we consider a stronger notion of regret that measures the performance of the OGD algorithm (see Algorithm \ref{alg:InexactOGD}) against a dynamically changing sequence in hindsight (see, e.g.,~\citep{zinkevich2003online,jadbabaie2015online,zhang2017improved}):
\begin{equation}
    \text{(dynamic regret)}\quad \sum_{t=1}^T\ell_t(x_t)-\sum_{t=1}^T\ell_t(x^*_t) \quad 
\end{equation}
where $x^*_t\in\arg\min_{x\in X}\ell_t(x)$ is the optimal decision for the loss $\ell_t$. It is well known that in the worst case, it is impossible to achieve a sub-linear dynamic regret bound, due to the arbitrary fluctuation in the functions \citep{zinkevich2003online,besbes2015non,yang2016tracking}. Thus, it is common to upper bound the dynamic regret in terms of a certain regularity of the comparator sequence. One possible regularity condition is the path length of the comparator sequence \citep{zinkevich2003online,jadbabaie2015online}:
\begin{equation}
    \mathcal{P}_T\coloneqq \sum_{t=2}^T\|x^*_t-x^*_{t-1}\|,
\end{equation}
which captures the cumulative Euclidean norm of the difference between successive comparators (note that we will use $\|\cdot\|$ for the Euclidean norm, unless otherwise specified). The path-length measure is also the regularity condition used in existing inexact OGD literature \citep{bedi2018tracking,dixit2019online}. However, as remarked in \citep{zhang2017improved}, a potentially tighter bound can be achieved by examining the squared path-length measure:
\begin{equation}
    \mathcal{S}_T\coloneqq \sum_{t=2}^T\|x^*_t-x^*_{t-1}\|^2,
\end{equation}
which can be much smaller than $\mathcal{P}_T$ when the local variations are small. For example, when $\|x^*_t-x^*_{t-1}\|=\Theta(1/\sqrt{T})$ for all $t \in[T]$, we have $\mathcal{P}_T=\Theta(\sqrt{T})$ but $\mathcal{S}_T=\Theta(1)$. In this section, we provide analysis with respect to both measures for strongly convex and smooth functions. Furthermore, we propose to apply inexact OGD multiple times in each round, and demonstrate that the dynamic regret is reduced from $\mathcal{O}(\mathcal{P}_T+\mathcal{E}_T)$ to $\mathcal{O}(\min\{\mathcal{P}_T+\mathcal{E}_T,\mathcal{S}_T+\tilde{\mathcal{E}}_T\})$, where 
\begin{equation*}
    \tilde{\mathcal{E}}_T\coloneqq\sum_{t=1}^T\sqrt{\epsilon_t}
\end{equation*}
is the cumulative square root inexactness bounds. Note that our results improve over existing bounds for inexact online learning \citep{bedi2018tracking,dixit2019online} and can be regarded as a generalization of \citep{zhang2017improved} to the inexact settings. We start with a result that will be used in later analysis.

\begin{lemma}\label{lem:DynamicLemma1}
Assume that $f:X \rightarrow \mathbb{R}$ is $\lambda$-strongly convex and $L$-smooth, and let $x^* = \underset{x \in X}{\argmin} f(x)$ be the unique optimal solution. Let $v = P_X(x - \alpha \hat{\nabla} f(x))$, where $\hat{\nabla}f(u) \in \partial_{\epsilon}f(u)$ and $\alpha \leq \frac{1}{2L}$, we have that 
\begin{align*}
    \|v - x^*\|^2 \leq \frac{1}{\lambda \alpha +1}\|x^* - x\|^2+ \frac{c\alpha+2L\alpha}{\lambda L \alpha + L}\epsilon,
\end{align*}
where the constant $c$ is specified in Lemma \ref{prop:Appsubdifferential2}.
\end{lemma}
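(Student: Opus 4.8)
The plan is to run the standard projected-gradient contraction argument, but route the inexactness through the $\epsilon$-subgradient tools of Section~\ref{sec:basic-subgradient}. Write $g \coloneqq \hat{\nabla}f(x) \in \partial_\epsilon f(x)$ and $\delta \coloneqq g - \nabla f(x)$, so that Lemma~\ref{prop:Appsubdifferential2} gives $\|\delta\|^2 \le c\,\epsilon$ with the constant $c$ of that lemma. The first step is the obtuse-angle inequality for the projection: since $x^* \in X$, applying the variational characterization of $v = P_X(x-\alpha g)$ at the feasible point $x^*$ and expanding squares yields
\begin{equation*}
\|v-x^*\|^2 \le \|x-x^*\|^2 - \|v-x\|^2 - 2\alpha\langle g,\, v-x^*\rangle.
\end{equation*}

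Next I would split $\langle g, v-x^*\rangle = \langle g, v-x\rangle + \langle g, x-x^*\rangle$ and bound the two pieces by different devices. For $\langle g, x-x^*\rangle$, the $\epsilon$-subgradient inequality for $g$ at $x$, evaluated at $x^*$, gives the linear-in-$\epsilon$ bound $\langle g, x-x^*\rangle \ge f(x) - f(x^*) - \epsilon$. For the remaining $-\|v-x\|^2 - 2\alpha\langle g, v-x\rangle$, I would substitute $g = \nabla f(x) + \delta$, use $L$-smoothness on the exact part (so $\langle \nabla f(x), v-x\rangle \ge f(v) - f(x) - \tfrac{L}{2}\|v-x\|^2$), Cauchy--Schwarz plus $\|\delta\|^2 \le c\epsilon$ on the $\delta$ part, the stepsize bound $\alpha L \le \tfrac12$ to absorb $\alpha L\|v-x\|^2$ into $-\|v-x\|^2$, and a Young's inequality to absorb the cross term $2\alpha\|\delta\|\|v-x\|$ into the leftover $-\tfrac12\|v-x\|^2$; this collapses to $-2\alpha(f(v)-f(x)) + 2\alpha^2 c\,\epsilon$. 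Adding the two estimates telescopes the function values, giving
\begin{equation*}
\|v-x^*\|^2 \le \|x-x^*\|^2 - 2\alpha\left(f(v) - f(x^*)\right) + 2\alpha(\alpha c + 1)\,\epsilon.
\end{equation*}

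Finally I would invoke $\lambda$-strong convexity together with the first-order optimality of $x^*$ over $X$ (which makes $\langle \nabla f(x^*), v - x^*\rangle \ge 0$), yielding $f(v) - f(x^*) \ge \tfrac{\lambda}{2}\|v-x^*\|^2$; substituting and collecting the $\|v-x^*\|^2$ terms gives $(1+\lambda\alpha)\|v-x^*\|^2 \le \|x-x^*\|^2 + 2\alpha(\alpha c+1)\epsilon$, and dividing by $1+\lambda\alpha$ produces the stated leading coefficient $\tfrac{1}{\lambda\alpha+1}$, while using $\alpha \le \tfrac{1}{2L}$ once more bounds the $\epsilon$-coefficient $\tfrac{2\alpha(\alpha c+1)}{1+\lambda\alpha}$ by $\tfrac{c\alpha+2L\alpha}{\lambda L\alpha + L}$. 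The only delicate point is the middle step: the inexactness must be pushed through two different channels --- the quadratic bound $\|\delta\|^2 \le c\epsilon$ of Lemma~\ref{prop:Appsubdifferential2} for the descent-type term and the linear $\epsilon$-subgradient inequality for the optimality-gap term --- combined in exactly the right way (with smoothness reserved for the exact gradient only) so that the final error stays linear in $\epsilon$ with the claimed constant; everything else is routine algebra.
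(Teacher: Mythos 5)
Your proof is correct and follows essentially the same route as the paper's: both rely on the variational/prox characterization of the projected update, $L$-smoothness applied only to the exact gradient, the linear $\epsilon$-subgradient inequality for the optimality-gap term, the quadratic bound $\|\hat{\nabla}f(x)-\nabla f(x)\|^2\le c\,\epsilon$ from Lemma~\ref{prop:Appsubdifferential2} for the error term, and $\lambda$-strong convexity at $x^*$ to absorb $f(v)-f(x^*)$ into $\|v-x^*\|^2$. Your bookkeeping via the obtuse-angle inequality even produces the slightly sharper coefficient $\frac{2\alpha(\alpha c+1)}{1+\lambda\alpha}$, which is dominated by the stated $\frac{c\alpha+2L\alpha}{\lambda L\alpha+L}$ exactly because $\alpha\le\frac{1}{2L}$, so the claimed bound follows.
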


\begin{proof}
By the update rule, we have that
\begin{equation}
    \label{eq:updaterule}
    v = \underset{x' \in X}{\argmin} f(x) + \langle \hat{\nabla}f(x),x'-x \rangle + \frac{1}{2\alpha}\|x'-x\|^2.
\end{equation}

By strong convexity of the objective above,
\begin{equation}\label{eq:strongConvexity}
     \langle \hat{\nabla}f(x), v-x \rangle +  \frac{1}{2\alpha}\|v-x\|^2 \leq \langle \hat{\nabla} f(x),x^* - x \rangle  + \frac{1}{2\alpha}\|x^* - x\|^2  - \frac{1}{2\alpha}\|v - x^*\|^2.
\end{equation}

Since, $f(x)$ is $\lambda$-strongly convex and $L$-smooth, we have that
\begin{equation}\label{eq:3}
\begin{aligned}
    f(x^*) - \frac{\lambda}{2}\|x^* - x\|^2 \geq f(x) + \langle \nabla f(x), x^* - x \rangle,
    \end{aligned}
\end{equation}
and
\begin{equation}\label{eq:4}
    f(x^*) \leq f(x) + \langle \nabla f(x), x^*-x \rangle + \frac{L}{2}\|x^*-x\|^2. 
\end{equation}
Also, since $\hat{\nabla}f(x)$ is an $\epsilon$-subgradient, we can write
\begin{equation}\label{eq:SubgradientProperty}
    f(x^*) \geq f(x) + \langle \hat{\nabla}f(x), x^* - x \rangle - \epsilon.
\end{equation}
Combining \eqref{eq:3}, \eqref{eq:4} and \eqref{eq:SubgradientProperty}, we have that
\begin{equation*}
    f(x^*) + \frac{L-\lambda}{2}\|x^* - x\|^2 \geq f(x) + \langle \hat{\nabla} f(x), x^* - x\rangle - \epsilon.
\end{equation*}
Combining the above relations, we have that
\begin{equation*}
\begin{aligned}
    f(v)  &\leq f(x) + \langle \nabla f(x), v-x \rangle + \frac{L}{2} \|v-x\|^2 \\ & = f(x) + \langle \hat{\nabla}f(x),v-x\rangle + \frac{L}{2} \|v-x\|^2 + \langle \nabla f(x) - \hat{\nabla}f(x),v-x \rangle \\ 
    & \overset{(i)}{\leq} f(x) + \langle \hat{\nabla}f(x), x^* - x \rangle + \left(\frac{L}{2} - \frac{1}{2\alpha} \right)\|v-x\|^2  \\
    &\qquad\qquad\qquad\qquad+ \frac{1}{2\alpha}\|x^* -x\|^2 - 
    \frac{1}{2\alpha}\|v - x^*\|^2 + \langle \nabla f(x) - \hat{\nabla}f(x),v-x \rangle \\
    & \overset{(ii)}{\leq} f(x^*) + \left(\frac{L}{2} - \frac{1}{2\alpha} \right)\|v-x\|^2  + \frac{1}{2\alpha}\|x^* - x\|^2  \\ 
    &\qquad\qquad\qquad\qquad- 
    \frac{1}{2\alpha}\|v - x^*\|^2 + \langle \nabla f(x) - \hat{\nabla}f(x),v-x \rangle + \epsilon\\
    & \overset{(iii)}{\leq} f(v) - \left( \frac{\lambda}{2} + \frac{1}{2\alpha}\right)\|v - x^*\|^2  + \left(\frac{L}{2} - \frac{1}{2\alpha} \right)\|v-x\|^2 \\ 
    &\qquad\qquad\qquad\qquad+ \frac{1}{2\alpha}\|x^* - x\|^2+\langle \nabla f(x) - \hat{\nabla}f(x),v-x \rangle + \epsilon\\
    & \overset{(iv)}{\leq} f(v) - \left( \frac{\lambda}{2} + \frac{1}{2\alpha}\right)\|v - x^*\|^2  + \left(\frac{L}{2} - \frac{1}{2\alpha} \right)\|v-x\|^2\\ 
    &\qquad\qquad\qquad\qquad + \frac{1}{2\alpha}\|x^* - x\|^2+ \|\nabla f(x) - \hat{\nabla}f(x)\|\|v-x\| + \epsilon \\
    & \overset{(v)}{\leq} f(v) - \left( \frac{\lambda}{2} + \frac{1}{2\alpha}\right)\|v - x^*\|^2  \\
    &\qquad\qquad\qquad\qquad + \left(\frac{L}{2} - \frac{1}{2\alpha} + \frac{\kappa}{2} \right)\|v-x\|^2 + \frac{1}{2\alpha}\|x^* -x\|^2 + \left( \frac{c}{2\kappa}+1\right)\epsilon,
    \end{aligned}
\end{equation*}
where the first inequality is due to $L$-smoothness, $(i)$ follows from \eqref{eq:strongConvexity}, $(ii)$ is due to convexity, $(iii)$ is due to  strong convexity, $(iv)$ follows from Cauchy-Schwarz inequality, and $(v)$ is due to the inequality $ab\leq \frac{1}{2\kappa}a^2+\frac{\kappa}{2}b^2$ for $a,b\geq 0$ and $\kappa>0$ and the constant $c$ comes from Lemma \ref{prop:Appsubdifferential2}.
Choosing $\kappa= L$, $\alpha \leq \frac{1}{2L}$, and rearranging the above, we have then proved the claim.
\end{proof}


\begin{algorithm}[t]
  \caption{Inexact Online Multiple Gradient Descent Algorithm}
  \KwInput{Learning rate $\alpha$, $x_1=0$}
  \begin{algorithmic}[1]
    
    \FOR{$t = 1,..,T$}
        \STATE Incur loss $\ell_t(x_t)$ 
        \STATE $z_{t}^1 = x_t$
        \FOR{$k = 1,...,K$}
        
        \STATE $z_{t}^{k+1} = P_X(x_{t} - \alpha \hat{\nabla}\ell_t(z_t^k))$ 
        
        
        \ENDFOR
        \STATE $x_{t+1} = z_t^{K+1}$
    \ENDFOR
    
  \end{algorithmic}
  \label{alg:InexactMOGD}
\end{algorithm}

\begin{theorem}[Dynamic regret for inexact OGD with multiple updates]\label{prop:DynamicRegret}
Assume that $\ell_t: X \rightarrow \mathbb{R}$ is $\lambda$-strongly convex, $L_1$-Lipschitz, and $L_2$-smooth  for all $t \in [T]$. By setting $\alpha \leq \frac{1}{2L_2}$, $K\coloneqq\ceil{\frac{\ln 2}{\ln (1+\lambda \alpha)}}$, then, for any $\beta>0$, we have that 

\begin{equation*}
\begin{aligned}
\sum_{t=1}^T\ell_t(x_t) - \ell_t(x_t^*) \leq  \min\bigg(C_1 \|x_1 - x_1^*\|^2 + C_2&\mathcal{E}_T + C_3 S_T + \frac{1}{2\beta}\sum_{t=1}^T\|\nabla \ell_t(x_t^*)\|^2, \\ \quad & C_4 \|x_1 - x_{1}^*\|+C_5\sum_{t=1}^T \sqrt{\epsilon_t} + C_4 P_T\bigg),
\end{aligned}
\end{equation*}
where $C_1 = 2(L_2+\beta)$, $C_2 =(L_2+\beta)\frac{3c\alpha+6\alpha L_2}{2\lambda \alpha L_2}$, $C_3 = 3(L_2+\beta)$, $C_4 = \frac{2L_1}{2-\sqrt{2}}$ and $C_5 = \frac{2L_1}{2-\sqrt{2}}\sqrt{\frac{c\alpha+2L_2\alpha}{2\alpha \lambda L_2}}$.
\end{theorem}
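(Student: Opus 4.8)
The plan is to produce two separate upper bounds on $\sum_{t=1}^T[\ell_t(x_t)-\ell_t(x_t^*)]$ and then take their minimum, which is exactly the structure of the claimed bound. Both estimates are driven by $a_t:=\|x_t-x_t^*\|$: by $L_1$-Lipschitzness, $\ell_t(x_t)-\ell_t(x_t^*)\le L_1 a_t$, which will yield the $\mathcal{P}_T$-branch; by $L_2$-smoothness and Young's inequality with parameter $\beta$, $\ell_t(x_t)-\ell_t(x_t^*)\le \langle\nabla\ell_t(x_t^*),x_t-x_t^*\rangle+\tfrac{L_2}{2}a_t^2\le \tfrac{1}{2\beta}\|\nabla\ell_t(x_t^*)\|^2+\tfrac{L_2+\beta}{2}a_t^2$, which will yield the $\mathcal{S}_T$-branch. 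So it suffices to bound $\sum_t a_t$ and $\sum_t a_t^2$ in terms of $a_1$, $\mathcal{P}_T$ or $\mathcal{S}_T$, and the inexactness quantities $\mathcal{E}_T$, $\tilde{\mathcal{E}}_T=\sum_t\sqrt{\epsilon_t}$.

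The engine is a per-round contraction obtained by iterating Lemma~\ref{lem:DynamicLemma1}. Within round $t$ the loss $\ell_t$ is fixed, $\lambda$-strongly convex, $L_2$-smooth, with unique minimizer $x_t^*$, and each inner iteration of Algorithm~\ref{alg:InexactMOGD} is an inexact projected-gradient-type step using an $\epsilon_t$-subgradient of the \emph{same} $\ell_t$ with step-size $\alpha\le\tfrac{1}{2L_2}$, so Lemma~\ref{lem:DynamicLemma1} applies at every inner step and gives $\|z_t^{k+1}-x_t^*\|^2\le\tfrac{1}{1+\lambda\alpha}\|z_t^k-x_t^*\|^2+\tfrac{(c+2L_2)\alpha}{L_2(1+\lambda\alpha)}\epsilon_t$, with $c$ the constant of Lemma~\ref{prop:Appsubdifferential2}. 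Unrolling over $k=1,\dots,K$, summing the geometric series of ratio $\tfrac{1}{1+\lambda\alpha}<1$, and using $z_t^1=x_t$, $x_{t+1}=z_t^{K+1}$, together with the choice $K=\ceil{\ln 2/\ln(1+\lambda\alpha)}$ (which forces $(1+\lambda\alpha)^{-K}\le\tfrac12$), gives the key recursion
\[
\|x_{t+1}-x_t^*\|^2\le\tfrac12\,a_t^2+B\,\epsilon_t,\qquad B=\Theta\!\left(\tfrac{c\alpha+2L_2\alpha}{\alpha\lambda L_2}\right),
\]
and hence, taking square roots with $\sqrt{u+v}\le\sqrt u+\sqrt v$, $\|x_{t+1}-x_t^*\|\le\tfrac{1}{\sqrt2}a_t+\sqrt B\,\sqrt{\epsilon_t}$. (The constant $B$ is what produces $C_2$ and $C_5$; tracking $1-(1+\lambda\alpha)^{-K}$ and $\lambda\alpha\le\tfrac12$, which holds since $\alpha\le\tfrac1{2L_2}$ and $\lambda\le L_2$, gives the precise coefficients stated.)

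Next I propagate across rounds. With $d_t:=\|x_{t+1}^*-x_t^*\|$, so $\sum_t d_t=\mathcal{P}_T$ and $\sum_t d_t^2=\mathcal{S}_T$, the triangle inequality gives $a_{t+1}\le\|x_{t+1}-x_t^*\|+d_t\le\tfrac{1}{\sqrt2}a_t+\sqrt B\sqrt{\epsilon_t}+d_t$. Summing over $t$ and absorbing the self-referential term (using $\tfrac{1}{1-1/\sqrt2}=2+\sqrt2=\tfrac{2}{2-\sqrt2}$) yields $\sum_t a_t\le(2+\sqrt2)\big(a_1+\sqrt B\,\tilde{\mathcal{E}}_T+\mathcal{P}_T\big)$; multiplying by $L_1$ gives the $\mathcal{P}_T$-branch with $C_4=\tfrac{2L_1}{2-\sqrt2}$ and $C_5=\tfrac{2L_1}{2-\sqrt2}\sqrt B$. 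For the $\mathcal{S}_T$-branch I square the recursion $a_{t+1}\le\tfrac{1}{\sqrt2}a_t+\sqrt B\sqrt{\epsilon_t}+d_t$, expand, and split each cross product ($a_td_t$, $a_t\sqrt{\epsilon_t}$, $\sqrt{\epsilon_t}\,d_t$) by Young's inequality with weights chosen so that the coefficient of $a_t^2$ stays strictly below $1$; summing then closes to $\sum_t a_t^2\le 4a_1^2+\mathcal{O}(\mathcal{E}_T)+\mathcal{O}(\mathcal{S}_T)$, and feeding this into the smoothness estimate above yields $\sum_t[\ell_t(x_t)-\ell_t(x_t^*)]\le 2(L_2+\beta)a_1^2+C_2\mathcal{E}_T+3(L_2+\beta)\mathcal{S}_T+\tfrac{1}{2\beta}\sum_t\|\nabla\ell_t(x_t^*)\|^2$. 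Taking the minimum of the two bounds completes the argument. This extends \citep{zhang2017improved} from exact to $\epsilon$-subgradients and from a single to $K$ updates per round.

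The main obstacle is the cross-term bookkeeping in this last step: unlike the exact case, squaring the across-round recursion produces three distinct products, and one must pick the Young's-inequality weights so that (i) the $a_t^2$-coefficient after collecting all contributions is still $<1$ so the telescoping closes with a clean leading constant, and (ii) the residual constants collapse to exactly $C_1,C_2,C_3$. A secondary subtlety is the bound on $1-(1+\lambda\alpha)^{-K}$ that pins down $B$ (hence $C_2,C_5$), which needs the interplay $\alpha\le\tfrac1{2L_2}$, $\lambda\le L_2$, and the definition of $K$; and one must check that each inner-loop query is a bona fide $\epsilon_t$-subgradient of the fixed $\ell_t$, so that Lemma~\ref{lem:DynamicLemma1} is applicable uniformly across the $K$ inner iterations.
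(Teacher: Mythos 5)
Your proposal is correct and follows essentially the same route as the paper's proof: the same two-branch structure (Lipschitzness plus the linear recursion in $\|x_t-x_t^*\|$ for the $\mathcal{P}_T$-bound; smoothness with Young's parameter $\beta$ plus a bound on $\sum_t\|x_t-x_t^*\|^2$ for the $\mathcal{S}_T$-bound), the same iterated application of the per-step contraction lemma with $K=\ceil{\ln 2/\ln(1+\lambda\alpha)}$, and the same $\frac{2}{2-\sqrt{2}}$ absorption. The only cosmetic difference is in the $\mathcal{S}_T$-branch, where the paper keeps everything in squared form and applies $\|x-y\|^2\le(1+\iota)\|x-z\|^2+(1+\tfrac1\iota)\|z-y\|^2$ with $\iota=\tfrac12$ (one split, no square roots), whereas you take roots and re-square a three-term recursion — this generates extra cross terms and will change the absolute constants, but not the form $C_1\|x_1-x_1^*\|^2+\mathcal{O}(\mathcal{E}_T)+\mathcal{O}(\mathcal{S}_T)$ of the bound.
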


\begin{proof}
The proof has two parts, where we use different techniques to bound the dynamic regret by $\mathcal{S}_T$ and $\mathcal{E}_T$, as well as $\mathcal{P}_T$ and $\tilde{\mathcal{E}}_T$. Then the final bound is obtained by taking the minimum between the two bounds.

\textbf{Bounding the dynamic regret by $\mathcal{S}_T$ and ${\mathcal{E}}_T$.}  Since $\ell_t$ is $L_2$-smooth, we have that
\begin{align}
\ell_t(x_t) - \ell_t(x_t^*) & \leq \langle \nabla \ell_t(x_t^*), x_t - x_t^* \rangle + \frac{L_2}{2} \|x_t - x_t^*\|^2 \\ 
& \leq \|\nabla \ell_t(x_t^*)\|\|x_t - x_t^*\| + \frac{L_2}{2} \|x_t - x_t^*\|^2 \\ 
& \leq \frac{1}{2\beta} \|\nabla \ell_t(x_t^*)\|^2 + \frac{L_2+\beta}{2}\|x_t - x_t^*\|^2,\label{eq:dynamic_bd0}
\end{align}
where the second inequality is due to Cauchy–Schwartz and the third inequality is due to  $ab\leq \frac{1}{2\beta}a^2+\frac{\beta}{2}b^2$ for $a,b\geq 0$ and $\beta>0$. 

Now, using $\|x-y\|^2\leq (1 + \iota)\|x-z\|^2+\left(1 + \frac{1}{\iota}\right)\|z-y\|^2$, we can bound
\begin{equation}\label{eq:dynamic_bd1}
\sum_{t=1}^T \|x_t - x_t^*\|^2 \leq \|x_1 - x_1^*\|^2 + \sum_{t=2}^T(1 + \iota)\|x_t - x_{t-1}^*\|^2 + \left(1 + \frac{1}{\iota}\right)\|x_t^* - x_{t-1}^*\|^2.
\end{equation}
Recall the updating rule $z_{t-1}^{j+1} = P_X(z_{t-1}^j - \alpha \hat{\nabla}f_{t-1}(z_{t-1}^j))$, $j = 1, ..., K$; then, we can write that
\begin{align}
    \|x_t - x_{t-1}^*\|^2&=\|z_{t-1}^{K+1}- x_{t-1}^*\|^2\label{eq:dynamic_bd2}\\
    &\leq \left(\frac{1}{\lambda \alpha +1}\right)^{K}\|x_{t-1}-x_{t-1}^*\|^2+ \frac{1-\left(\frac{1}{\lambda \alpha +1}\right)^{K}}{1-\frac{1}{\lambda \alpha +1}}\frac{c\alpha+2L_2\alpha}{\lambda L_2 \alpha + L_2}\epsilon_{t-1},\nonumber
\end{align}
where we recursively apply the result from Lemma \ref{lem:DynamicLemma1}. Thus, by plugging in \eqref{eq:dynamic_bd2} into \eqref{eq:dynamic_bd1}, and using the definitions of $\mathcal{S}_T$ and $\mathcal{S}_T$, we have that
\begin{align}
    \sum_{t=1}^T\|x_t - x_t^*\|^2 &\leq \|x_1 - x_1^*\|^2 + (1+\iota) \left(\frac{1}{\lambda \alpha +1} \right)^K \sum_{t=1}^T\|x_t - x_t^*\|^2 \label{eq:dynamic_bd3}\\
    &\qquad\qquad\qquad\qquad+ (1+\iota)\frac{1-\left(\frac{1}{\lambda \alpha +1}\right)^{K}}{1-\frac{1}{\lambda \alpha +1}}\frac{c\alpha+2L_2\alpha}{\lambda L_2 \alpha + L_2}\mathcal{E}_T + \left( 1 +\frac{1}{\iota}\right)\mathcal{S}_T.\nonumber
\end{align}
Rearranging the terms, the above relation implies that
\begin{align*}
\sum_{t=1}^T\|x_t - x_t^*\|^2 &\leq \frac{(1+\lambda \alpha)^K}{(1+\lambda \alpha)^K - (1+\iota)} \|x_1 -x_1^* \|^2+\left( 1 +\frac{1}{\iota}\right)\frac{(1+\lambda \alpha)^K}{(1+\lambda \alpha)^K - (1+\iota)} \mathcal{S}_T\\
&\qquad\qquad\qquad\qquad+(1+\iota)\frac{(1+\lambda \alpha)^K-1}{(1+\lambda \alpha)^K - (1+\iota)}\frac{c\alpha+2L_2\alpha}{\lambda L_2 \alpha }\mathcal{E}_T
\end{align*}
Let $\iota = \frac{1}{2}$ and choose $K = \ceil{\frac{\log 2}{\log (1+\lambda \alpha)}}$, we have
\begin{equation*}
\sum_{t=1}^T\|x_t - x_t^*\|^2 \leq 4 \|x_1 - x_1^*\|^2 + \frac{3c\alpha+ 6 L_2 \alpha}{\lambda \alpha L_2} \mathcal{E}_T + 6 \mathcal{S}_T.
\end{equation*}
Combine the above with \eqref{eq:dynamic_bd0}, and summing over $t\in[T]$, we have that
\begin{align*}
    &\sum_{t=1}^T \ell_t(x_t) - \ell_t(x_t^*) \\
&\leq \frac{1}{2\beta}\sum_{t=1}^T\|\nabla \ell_t(x_t^*)\|^2 + 3(L_2+\beta)\mathcal{S}_T + (L_2+\beta)\frac{3c\alpha+6L\alpha}{2\lambda \alpha L}\mathcal{E}_T + 2 (L_2+\beta)\|x_1 - x_1^*\|^2,
\end{align*}
which holds true for any positive $\beta>0$.

\textbf{Bounding the dynamic regret by $\mathcal{P}_T$ and $\tilde{\mathcal{E}}_T$.} By \eqref{eq:dynamic_bd3} and the choice of $K= \ceil{\frac{\log 2}{\log (1+\lambda \alpha)}}$, we have that:
\begin{equation*}
    \|x_t - x_{t-1}^*\|^2 \leq \frac{1}{2}\|x_{t-1} - x_{t-1}^*\|^2 + \frac{c\alpha+2L_2\alpha}{2\alpha \lambda L_2}\epsilon_{t-1}.
\end{equation*}
Thus, 
\begin{align}
    \|x_t - x_{t-1}^*\| & \leq \sqrt{\frac{1}{2}\|x_{t-1} - x_{t-1}^*\|^2+ \frac{c\alpha+2L_2\alpha}{2\alpha \lambda L_2}\epsilon_{t-1}}\nonumber \\ 
    & \leq \frac{1}{\sqrt{2}}\|x_{t-1} - x_{t-1}^*\| + \sqrt{\frac{c\alpha+2L_2\alpha}{2\alpha \lambda L_2}} \sqrt{\epsilon_{t-1}},\label{eq:dynamic_bd4}
\end{align}
where the last inequlity follows from $\sqrt{a+b}\leq\sqrt{a}+\sqrt{b}$.
Due to the bounded gradient assumption, we have that 
\begin{equation}
\label{eq:dynamic_bd5}
    \sum_{t=1}^T \ell_t(x_t) - \ell_t(x_t^*) \leq L_1 \sum_{t=1}^T\|x_t - x_t^*\|
\end{equation}

To bound $\sum_{t=1}^T\|x_t - x_t^*\|$, notice that
\begin{align*}
\sum_{t=1}^T\|x_t - x_t^*\|  & \leq \|x_1 - x_1^*\| + \sum_{t=2}^T\|x_t - x_{t-1}^*\|+ \|x_{t-1}^* - x_t^*\|\\  & \leq \|x_1 - x_1^*\| + \frac{1}{\sqrt{2}}\sum_{t=1}^T\|x_t - x_t^*\| + \sqrt{\frac{c\alpha+2L_2\alpha}{2\alpha \lambda L_2}}  \tilde{\mathcal{E}}_T+\mathcal{P}_T,
\end{align*}
which implies that 
\begin{equation*}
\sum_{t=1}^T \|x_t - x_t^*\| \leq \frac{2}{2-\sqrt{2}}\|x_1 - x_1^*\| + \frac{2}{2-\sqrt{2}}\sqrt{\frac{c\alpha+2L_2\alpha}{2\alpha \lambda L_2}} \tilde{\mathcal{E}}_T + \frac{2}{2-\sqrt{2}} \mathcal{P}_T.
\end{equation*}
Plugging the above in \eqref{eq:dynamic_bd5} proves the claim.
\end{proof}

In the above result, the number of OGD updates per round is on the order of $\mathcal{O}(L_2/\alpha)$, where $L_2/\alpha$ is the condition number of each loss function. Below, we also provide a dynamic regret bound for standard OGD (single update per round); as a result, we only provide the bound in terms of $\mathcal{P}_T$ (similar to \citep{jadbabaie2015online,mokhtari2016online}.

After establishing the dynamic regret for the inexact OGD, we can use this result to obtain the proof of Lemma \ref{cor:dynamicRegretOGD} in the main paper, which provides the dynamic regret of inexact OGD over the loss sequences $\mathbb{E}_{\nu_t^*}[D_{KL}(\pi_t^*|\phi_{t})]$ for all $t \in [T]$. 
Here, we present the full statement with constants for the Lemma \ref{cor:dynamicRegretOGD}.

\begin{lemma}[Dynamic regret bound for inexact OGD]
\label{cor:appdynamicRegretOGD}
Denote $\ell_t(\phi_{t}) \coloneqq \mathbb{E}_{\nu_t^*}[D_{KL}(\pi_t^*|\phi_{t})]$ for all $t \in [T]$. For any dynamically varying comparator $\psi^*_{t} = \underset{\psi_{t} \in \Delta \mathcal{A}_{\varrho}^{|\mathcal{S}|}}{\argmin} \sum_{t=1}^T \mathbb{E}_{\nu_t^*}[D_{KL}(\pi_t^*|\phi_{t})]$  if OGD is run on a sequence of loss functions $\hat{\ell}_t(\phi_{t})$, where $\hat{\ell}_t(\phi_t) = \mathbb{E}_{\nu_t^*}[D_{KL}(\hat{\pi}_t|\phi_{t})]$ for all $t \in [T]$ with the step-size $\alpha \leq \frac{1}{2\mu_\pi}$, number of iterations $K \coloneqq \ceil{\frac{\ln 2}{\ln (1 + \mu_\pi \alpha)}} $ then the following bound holds for dynamic regret for any $\beta > 0$:
\begin{equation*}
\begin{aligned}
    \sum_{t=1}^T \ell_t(\phi_{t}) - \sum_{t=1}^T\ell_t(\psi_{t}^*) \leq \min (C_1\|\phi_1 - \psi_1^*\|^2 + C_2 & \mathcal{E}_T + C_3 \mathcal{S}_T + \frac{1}{2 \beta}\sum_{t=1}^T \|\nabla \ell_t(\psi_t^*)\|^2, \\
    & C_4\|\phi_1 - \psi_1^*\| + C_5 \tilde{\mathcal{E}}_T + C_4 \mathcal{P}_T ),
\end{aligned}
\end{equation*}
where $C_1 = 2(L_\pi + \beta)$, $C_2 = (L_\pi +\beta)\frac{3C_6\alpha+6\alpha L_\pi}{2 \mu_\pi \alpha L_\pi}$, $C_3 = 3(L_\pi + \beta)$, $C_4 = \frac{2 L_g}{2-\sqrt{2}}$, and $C_5 = \frac{2L_g}{2-\sqrt{2}}\sqrt{\frac{C_6\alpha + 2L_\pi \alpha}{2\alpha \mu_\pi L_\pi}}$, for any $C_6 \in \{c \in \left(0, \frac{2}{L_\pi}\right): \psi_{t}^* + c(\hat{\nabla}_t - \nabla_t )  \in \Delta \mathcal{A}_{\varrho}^{|\mathcal{S}|}\}$ where $\hat{\nabla}_t$ and $\nabla_t$ are an $\epsilon_t$-subgradient and exact subgradient of $\mathbb{E}_{\nu_t^*}[D_{KL}(\pi_t^*|\psi_{t})]$ at $\psi_{t}$, respectively,   $\mathcal{E}_T \coloneqq \sum_{t=1}^T \epsilon_t$ is the cumulative inexactness.
\end{lemma}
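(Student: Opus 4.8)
The plan is to derive Lemma~\ref{cor:appdynamicRegretOGD} (the constant-explicit form of Lemma~\ref{cor:dynamicRegretOGD}) by instantiating the abstract result Theorem~\ref{prop:DynamicRegret} (dynamic regret for inexact OGD with multiple updates) on the loss sequence $\ell_t(\cdot)=\mathbb{E}_{s\sim\nu_t^*}[D_{KL}(\pi_t^*(\cdot\mid s)\mid\cdot)]$ over the decision set $X=\Delta\mathcal{A}_\varrho^{|\mathcal{S}|}$, where Algorithm~\ref{alg:InexactMOGD} is executed on the \emph{estimated} losses $\hat\ell_t(\cdot)=\mathbb{E}_{s\sim\hat\nu_t}[D_{KL}(\hat\pi_t(\cdot\mid s)\mid\cdot)]$ formed from the within-task output $\hat\pi_t$ and the DualDICE estimate $\hat\nu_t$. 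Two bridges have to be set up before Theorem~\ref{prop:DynamicRegret} can be applied: an \emph{inexactness bridge} turning ``run OGD on $\hat\ell_t$, measure regret against $\ell_t$'' into ``run inexact OGD on $\ell_t$ with controlled per-round error'', and a \emph{structural bridge} verifying that $\ell_t$ carries the strong-convexity, Lipschitz, and smoothness constants required by Theorem~\ref{prop:DynamicRegret}.

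\textbf{Inexactness bridge.} First I would note that Theorem~\ref{thm:dualDICE} gives the estimation-error bound uniformly over the admissible targets, so $\|\ell_t-\hat\ell_t\|_\infty=\sup_{\phi\in\Delta\mathcal{A}_\varrho^{|\mathcal{S}|}}|\ell_t(\phi)-\hat\ell_t(\phi)|\le\epsilon_t$. Lemma~\ref{lem:app2epsilon} then promotes any exact (sub)gradient of $\hat\ell_t$ at $\phi\in X$ to a $2\epsilon_t$-subgradient of $\ell_t$ at $\phi$; hence running Algorithm~\ref{alg:InexactMOGD} on $\{\hat\ell_t\}_{t\in[T]}$ is exactly inexact OGD on $\{\ell_t\}_{t\in[T]}$ with per-round inexactness $2\epsilon_t$, and $\sum_t\ell_t(\phi_t)-\sum_t\ell_t(\psi_t^*)$ is precisely the dynamic regret controlled by Theorem~\ref{prop:DynamicRegret}. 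Next, Lemma~\ref{prop:Appsubdifferential2}, applied with the auxiliary parameter $C_6\in(0,2/L_\pi)$ chosen so that $\psi_t^*+C_6(\hat\nabla_t-\nabla_t)\in X$ (possible since $\varrho>0$ leaves slack in the shrinkage simplex), converts the $\epsilon_t$-subgradient property into the quantitative estimate $\|\hat\nabla_t-\nabla_t\|^2\le\mathcal{O}(\epsilon_t)$ on which Theorem~\ref{prop:DynamicRegret} (through Lemma~\ref{lem:DynamicLemma1}) relies; this $C_6$ is the constant those results abbreviate as $c$.

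\textbf{Structural bridge and conclusion.} By Assumption~\ref{asmptn:newAsmptn1}, for each fixed $s$ the function $\phi\mapsto D_{KL}(\pi_t^*(\cdot\mid s)\mid\phi)$ is $\mu_\pi$-strongly convex, $L_g$-Lipschitz and $L_\pi$-smooth on $\Delta\mathcal{A}_\varrho$, and since $\nu_t^*$ is a probability measure on $\mathcal{S}$ the expectation preserves all three constants, so $\ell_t$ is $\mu_\pi$-strongly convex, $L_g$-Lipschitz and $L_\pi$-smooth on $X$ (with $\theta$, hence $\pi$, as the decision variable, as in the footnote after Lemma~\ref{lemma: ideal setting}); moreover $X$ is convex and closed, so the projection in Algorithm~\ref{alg:InexactMOGD} is nonexpansive and all inner iterates $z_t^k$ remain in $X$. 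I would then invoke Theorem~\ref{prop:DynamicRegret} with $\lambda=\mu_\pi$, $L_1=L_g$, $L_2=L_\pi$, step size $\alpha\le 1/(2L_\pi)$, inner-loop length $K=\lceil\ln 2/\ln(1+\mu_\pi\alpha)\rceil$, inexactness $\{2\epsilon_t\}$, and comparator sequence $\{\psi_t^*\}$ (with path length $\mathcal{P}_T$ and squared path length $\mathcal{S}_T$). The factor $2$ on each $\epsilon_t$ rescales $\mathcal{E}_T\mapsto 2\mathcal{E}_T$ and $\tilde{\mathcal{E}}_T\mapsto\sqrt 2\,\tilde{\mathcal{E}}_T$, absorbed into $C_2$ and $C_5$; reading off the constants of Theorem~\ref{prop:DynamicRegret} under these substitutions produces exactly $C_1=2(L_\pi+\beta)$, $C_2=(L_\pi+\beta)\frac{3C_6\alpha+6\alpha L_\pi}{2\mu_\pi\alpha L_\pi}$, $C_3=3(L_\pi+\beta)$, $C_4=\frac{2L_g}{2-\sqrt 2}$, $C_5=\frac{2L_g}{2-\sqrt 2}\sqrt{\frac{C_6\alpha+2L_\pi\alpha}{2\alpha\mu_\pi L_\pi}}$, and taking the minimum of the two bounds yields the statement.

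\textbf{Main obstacle.} The part needing care is not the algebra but two supporting facts: that the DualDICE-based error bound of Theorem~\ref{thm:dualDICE} is \emph{uniform} in the target policy over the entire shrinkage simplex (this is what legitimizes $\|\ell_t-\hat\ell_t\|_\infty\le\epsilon_t$ and hence Lemma~\ref{lem:app2epsilon}), and that an admissible $C_6$ exists, i.e.\ that the iterates and the auxiliary point $\psi_t^*+C_6(\hat\nabla_t-\nabla_t)$ can be kept inside $X$ so that the smoothness-based gradient-distance bound of Lemma~\ref{prop:Appsubdifferential2} applies. Both hold because Assumption~\ref{asmptn:newAsmptn1} with $\varrho>0$ confines all iterates to the relative interior of the simplex; once these are in hand, the remainder is pure bookkeeping through Theorem~\ref{prop:DynamicRegret}.
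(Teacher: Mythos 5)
Your proposal is correct and takes essentially the same route as the paper, whose entire proof is to instantiate Theorem~\ref{prop:DynamicRegret} with $\lambda=\mu_\pi$, $L_1=L_g$, $L_2=L_\pi$, $c=C_6$ and read off the constants; you additionally make explicit the inexactness bridge (Theorem~\ref{thm:dualDICE} plus Lemmas~\ref{lem:app2epsilon} and~\ref{prop:Appsubdifferential2}) and the structural constants from Assumption~\ref{asmptn:newAsmptn1} that the paper leaves implicit. The only (harmless) wrinkle is that the factor of $2$ arising from Lemma~\ref{lem:app2epsilon} is not actually reflected in the stated $C_2$ and $C_5$ — a constant-level slip the paper's own one-line proof glosses over as well.
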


\begin{proof}
The proof directly follows after plugging in the constants from Theorem \ref{prop:DynamicRegret}.
\end{proof}

\section{KL divergence estimation error bound}
\label{sec:kl-bound}
We recall the following notations. For each task $t$, the initial state distribution is denoted by $\rho_t$, the state distribution for the optimal policy $\pi_{t}^*$ is given by $\nu_t^*$, the state distribution for the policy $\hat{\pi}_t$ is denoted by $\Tilde{\nu}_t$, and the state distribution estimated using the trajectory sample dataset $\mathcal{D}_t$ is denoted as $\hat{\nu}_t$. 

In the main paper, we breakdown the KL divergence estimation error by the sources of origin:
\begin{align}
    \mathbb{E}_{\nu_t^*}[D_{KL}(\pi_t^*|\pi)] &- \mathbb{E}_{ \hat{\nu}_t}[D_{KL}(\hat{\pi}_{t}|\pi)]=\underbrace{\mathbb{E}_{\nu_t^*}[D_{KL}(\pi_t^*|\pi)] - \mathbb{E}_{ \Tilde{\nu}_t}[D_{KL}(\pi_t^*|\pi)]}_{(A)}\\
    &+\underbrace{\mathbb{E}_{ \Tilde{\nu}_t}[D_{KL}(\pi_t^*|\pi)] - \mathbb{E}_{ \hat{\nu}_t}[D_{KL}(\pi_t^*|\pi)]}_{(B)}+\underbrace{\mathbb{E}_{\hat{\nu}_t}[D_{KL}(\pi_t^*|\pi)] - \mathbb{E}_{ \hat{\nu}_t}[D_{KL}(\hat{\pi}_{t}|\pi)]}_{(C)},\nonumber
    \label{eq:KLerrorDecompose}
\end{align}
where $(A)$ accounts for the mismatch between the discounted state visitation distributions of an optimal policy $\pi_t^*$ and a suboptimal one $\hat{\pi}_t$, $(B)$ originates from the estimation error of DualDICE, and $(C)$ is due to the difference between $\pi_t^*$ and $\hat{\pi}_t$ measured according to $\hat{\pi}_t$. By the triangle inequality, we can bound the total error by controlling each term separately. This decomposition is general in the sense that it provides a guideline to bound each term with potentially different strategies. In particular, the term $(B)$ can be bounded differently if we replace DualDICE with another stationary distribution estimation algorithm. To bound the terms $(A)$ and $(C)$, we have developed new techniques based on tame geometry and subgradient flow systems. To streamline the presentation, we consider the tabular setting with softmax parametrization.

To bound $(A)$, we need to control the distance between $\nu_t^*$ and $\Tilde{\nu}_t$, which can be bounded by the distance between the inducing policy parameters as long as they are Lipschitz continuous \cite[Lemma 3]{xu2020improving}. In addition, the bound on $(C)$ also depends on the distance between policies. In general, controlling the distance between a policy to an optimal policy based on the suboptimality gap requires the optimization to have some curvatures around the optima (e.g., quadratic growth \citep{drusvyatskiy2018error} or H{\"o}lderian growth \citep{johnstone2020faster}). However, to the best of the knowledge of the authors, the only available results are algorithm-dependent PL inequalities for policy gradient \citep{mei2020global} or quadratic growth with entropy regularization \citep{ding2021beyond}. 

\VK{
\textbf{Discussion on Assumption \ref{asmptn:newAsmptn1}:} As discussed in the main text, Assumption \ref{asmptn:newAsmptn1} implies boundedness and Lipschitzness of the KL divergence. We make use of this in bounding the terms $(A)$ and $(C)$ in (\eqref{eq:error_decompose}) and eventually obtain Theorems \ref{thm:dualDICE} and \ref{thm:UtSim1}.}
\VK{
We expect that Assumption \ref{asmptn:newAsmptn1} is also needed in unconstrained meta-learning by adapting our method, i.e., the MDP-within-online framework. Technically, Assumption \ref{asmptn:newAsmptn1} is a minimum requirement even for single-task CRPO to provide provable guarantees. This can be seen in the convergence guarantee of the original CRPO method (Lemma \ref{lemma: three events hold} and Lemma \ref{lem:2_threeEventsHold} in our paper, or Theorem 3 in \citep{xu2021crpo} last line of their proof before the term $D_{KL}(\pi_t^*|\pi_{t,0})$ is submerged in the big-O notation). For example, as shown in our (\eqref{eq:RegretRandC}),}

$$R_0 = J_{t,0}(\pi_t^*) - \mathbb{E}[J_{t,0}(\hat{\pi}_t)]\leq \frac{2}{\alpha_t M}\mathbb{E}_{s \sim \nu_t^*}[D_{KL}(\pi_t^*|\pi_{t,0})]+\frac{4 \alpha_t c_{max}^2|\mathcal{S}| |\mathcal{A}|}{(1-\gamma)^3}.$$
\VK{
To ensure that the bound is nontrivial, we need to bound the term $D_{KL}(\pi_t^*|\pi_{t,0}).$ However, if $\pi_{t,0}$ does not have full support over the state/action space, then there may be a state $s$ where $\pi_t^*(s) > 0$ but $\pi_{t,0}(s) = 0$, which would make the KL divergence infinite.}

\subsection{Preliminaries on tame geometry}\label{subsec:app_F1}

\label{sec:prelim-tame}
For the sake of completeness, let us recall some fundamental concepts/results in tame geometry, which allows us to study the global geometry of the solution maps of a wide range of optimization problems, which will be used in bounding the estimation error for the KL divergence. More information can be found in \citep{davis2020stochastic,van1996geometric}. Recall that a class of functions on a bounded set is called $C^p$ smooth when it possesses the uniformly bounded partial derivatives up to order $p$.

\begin{definition}[Whitney Stratification]
\label{def:WhitneyStratification}
 A Whitney $C^k$ stratification of a set $I$ is a partition of $I$ into finitely many nonempty $C^k$ manifolds, called strata, satisfying the following compatibility conditions:

\begin{enumerate}
    \item For any two strata $I_a$ and $I_b$, the implication $I_a \cap I_b \neq \emptyset$ implies that $I_a \subset \mathrm{cl} I_b$ holds, where $\mathrm{cl} I_b$ denotes the closure of the set $I_b$. 
    
    \item For any sequence of points $x_k$ in a stratum $I_a$, converging to a point $x^{\star}$ in a stratum $I_b$, if the corresponding normal vectors $v_k \in N_{I_a}(x_k)$ converge to a vector $v$, then the inclusion $v \in N_{I_b}(x^{\star})$ holds. Here $N_{I_a}(x_k)$ denotes the normal cone to $I_a$ at $x_k$.
\end{enumerate}

\end{definition}

Roughly speaking, stratification is a locally finite partition of a given set into differentiable manifolds, which fit together in a regular manner (property $1$ in Def. \ref{def:WhitneyStratification}). Whitney stratification as defined above is a special type of stratification for which the strata are such that their tangent spaces (as viewed from normal cones) also fit regularly (property $2$).

There are several paths to verifying Whitney stratifiability. For instance, one can show that the function under study belongs to one of the well-known function classes, such as semialgebraic functions \citep{davis2020stochastic}, whose members are known to be Whitney stratifiable. However, to study the solution function of a general convex optimization problem, we need a far-reaching axiomatic extension of semialgebraic sets to classes of functions definable on ``o-minimal structures,'' which are very general classes and share several attractive analytic features as semialgebraic sets, including Whitney stratifiability \citep{davis2020stochastic,van1996geometric}.

\begin{definition}[o-minimal structure]
\label{def:0-minimal structure}
 \citep{van1996geometric} An o-minimal structure is defined as a sequence of Boolean algebras $O_v$ of subsets of $\mathbb{R}^{v}$, such that for each $n_v \in \mathbb{N}$, the following properties hold:

\begin{enumerate}
    \item If some set $X$ belongs to $O_v$, then $X \times \mathbb{R}$ belong to $O_{v+1}$.
    
    \item Let $P_{proj}: \mathbb{R}^{v} \times \mathbb{R} \rightarrow \mathbb{R}^{v}$ denote the coordinate projection operator onto $\mathbb{R}^{v}$, then for any $X$ in $O_{v+1}$, the set $P_{proj}(X)$ belongs to $O_v$.
    
    \item $O_v$ contains all sets of the form $\{x \in \mathbb{R}^{v}: \hspace{0.1cm} y(x) = 0 \}$, where $y(x)$ is a polynomial in $\mathbb{R}^{v}$.
    
    \item The elements of $O_1$ are exactly the finite unions of intervals (possibly infinite) and points.
\end{enumerate}
Then all the sets that belong to $O_v$ are called definable in the o-minimal structure.

\end{definition}

Definable sets have broader applicability than semialgebraic sets (in the sense that the latter is a special kind of definable sets) but enjoys the same, remarkable stability property: the composition of definable mappings (including sum, inf-convolution, and several other classical operations of analysis involving  a finite number of definable objects) in some o-minimal structure remains in the same structure. We will crucially exploit these properties in the following sections.



\subsection{Basic properties of subgradient flow systems}
\label{sec:subflow}
We also recall some basic definitions and properties of the subgradient flow system (see, e.g., \cite[Thm. 13]{bolte2010characterizations}). Let $f: \mathbb{R}^d \rightarrow \mathbb{R} \cup \{+\infty\}$ be a proper lower semicontinuous function. 

\begin{definition}[Subgradient flow system]\label{def:subgflow}
For every $x \in \mathrm{dom}(f)$, there exists a unique absolutely continuous curve (called trajectory or subgradient curve) $\theta(\tau):[0,+\infty) \rightarrow \mathbb{R}^d$ that satisfies
\begin{equation}
    \begin{cases}
     \Dot{\theta}(\tau) \in - \partial f(\theta(\tau)) & \text{a.e. on}\hspace{0.3cm} (0,+\infty)\\
    \theta(0) = \theta_0 \in \mathrm{dom}(f).
\end{cases}
\end{equation}
\end{definition}

Moreover, the trajectory also satisfies the following properties \cite[Thm. 13]{bolte2010characterizations}:
\begin{enumerate}
    \item $\theta(\tau) \in \mathrm{dom}(\partial f)$ for all $\tau \in (0,+\infty)$.
    \item For all $\tau>0$, the right derivative $\Dot{\theta}(\tau^+)$ is well defined and equal to 
    \begin{align*}
        \Dot{\theta}(\tau^+) = - \partial^0 f(\theta(\tau)),
    \end{align*}
    where $\partial^0 f(\theta)$ is the minimum norm subgradient in $\partial f(\theta)$. In particular, we have that $\Dot{\theta}(\tau) = - \partial^0 f(\theta(\tau))$, for almost all $\tau$. 
\end{enumerate}

\subsection{Bounding the distance \texorpdfstring{$\|\hat{\theta}_t-\theta^*_t\|$}{[1]}} \label{subsec:app_F3}

Recall Assumption \ref{asmptn:Definable}, which requires that the objective/constraint functions and policy parametrization are definable in some o-minimal structure \citep{van1996geometric}. This is  a mild assumption as practically all functions from real-world applications, including deep neural networks, are definable in some o-minimal structure \citep{davis2020stochastic}; also, the composition of mappings, along with the sum, inf-convolution, and several other classical operations of analysis involving a finite number of definable objects in some o-minimal structure remains in the same structure \citep{van1996geometric}. The far-reaching consequence of definability, exploited in this study, is that definable sets and functions admit, for each $k \geq 1$, a $C^k$–Whitney stratification with finitely many strata (see, for instance, \cite[Result 4.8]{van1996geometric}). This remarkable property, combined with the result that any stratifiable functions enjoys a nonsmooth Kurdyka–\L{}ojasiewicz inequality \citep{bolte2007clarke}, provides the foundation to bound the distance $\|\pi^*_t-\hat{\pi}_t\|$ by the suboptimality gap. Note that without further specifications, $\pi^*_t$ is understood as one of the optimal policies that are closest to the policy $\hat{\pi}_t$ (i.e., the projection of $\hat{\pi}_t$ onto the optimal policy set). 

We start with the following elementary result. Here and throughout the section, we use $\mathcal{F}_{t,\Tilde{d}} = \{\pi_{t,\theta}: J_{t,i}(\pi_{t,\theta}) \leq \Tilde{d}_{t,i}\}$ to denote the feasible set with upper bounds $\Tilde{d}$. Note that $\mathcal{F}_{t,{d}}$ is the original feasible set. We also let $\mathbb{I}_{\mathcal{F}_{t,\Tilde{d}}}(\cdot)$ be the indicator function for the set $\mathcal{F}_{t,\Tilde{d}}$.
\begin{lemma}\label{prop:Definable}
The function (with variable $\theta$) $J_{t,0}(\pi_{t,\theta})+ \mathbb{I}_{\mathcal{F}_{t,\Tilde{d}}}(\pi_{t,\theta})$, where $\Tilde{d}_{t}$ is any vector such that $\mathcal{F}_{t,\Tilde{d}}$ is non-empty, is definable.
\end{lemma}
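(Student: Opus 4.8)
The plan is to assemble the claimed function from the definable primitives furnished by Assumption~\ref{asmptn:Definable}, using only the standard closure properties of o-minimal structures: stability under composition of definable maps, under finite Boolean operations, and under the formation of indicator functions and finite sums (the last two being most conveniently verified at the level of epigraphs).

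First I would record that $\theta \mapsto \pi_{t,\theta}$ and each $J_{t,i}$ ($i=0,\dots,p$) are definable in a common o-minimal structure by Assumption~\ref{asmptn:Definable}. Since a composition of definable maps is definable, $\theta \mapsto J_{t,i}(\pi_{t,\theta})$ is definable for each $i$. Hence each sublevel set $\{\theta : J_{t,i}(\pi_{t,\theta}) \le \Tilde{d}_{t,i}\}$ is definable --- it is the preimage of the definable set $(-\infty,\Tilde{d}_{t,i}]$ under a definable function --- and therefore $\Theta_{t,\Tilde{d}} := \{\theta \in \mathbb{R}^{|\mathcal{S}|\times|\mathcal{A}|} : J_{t,i}(\pi_{t,\theta}) \le \Tilde{d}_{t,i}\ \text{for } i=1,\dots,p\}$, being a finite intersection of definable sets, is definable; by hypothesis $\mathcal{F}_{t,\Tilde{d}}\neq\emptyset$, so $\Theta_{t,\Tilde{d}}\neq\emptyset$.

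Next I would note that, viewed as a function of $\theta$, $\mathbb{I}_{\mathcal{F}_{t,\Tilde{d}}}(\pi_{t,\theta})$ is exactly the indicator $\mathbb{I}_{\Theta_{t,\Tilde{d}}}$, whose epigraph is the definable set $\Theta_{t,\Tilde{d}} \times [0,+\infty)$; hence it is a definable, proper, extended-real-valued function. Finally, $\theta \mapsto J_{t,0}(\pi_{t,\theta})$ is a finite-valued definable function, and the epigraph of $J_{t,0}(\pi_{t,\cdot}) + \mathbb{I}_{\Theta_{t,\Tilde{d}}}$ equals $\{(\theta,y) : \theta \in \Theta_{t,\Tilde{d}},\ y \ge J_{t,0}(\pi_{t,\theta})\}$, which is definable as an intersection of definable sets; therefore $J_{t,0}(\pi_{t,\theta}) + \mathbb{I}_{\mathcal{F}_{t,\Tilde{d}}}(\pi_{t,\theta})$ is definable, and it is proper since $\Theta_{t,\Tilde{d}}$ is nonempty.

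There is no genuine obstacle here: the statement is a bookkeeping lemma that isolates the definability of the feasible-set-restricted objective for later use --- namely to invoke Whitney stratifiability and the associated nonsmooth Kurdyka--\L{}ojasiewicz inequality when bounding $\|\hat{\pi}_t - \pi_t^*\|$ by the suboptimality gap. The only minor subtleties are the handling of the extended-real-valued indicator, which we circumvent by arguing with epigraphs, and the use of the nonemptiness hypothesis on $\mathcal{F}_{t,\Tilde{d}}$, which is needed only to guarantee that the resulting function is proper rather than identically $+\infty$.
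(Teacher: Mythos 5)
Your proof is correct and follows essentially the same route as the paper's: both establish definability of the composed objective and constraint functions via closure under composition, deduce definability of the feasible set and its indicator, and conclude by closure under addition. Your epigraph-based handling of the extended-real-valued indicator is a slightly more careful rendering of the same argument, not a different approach.
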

\begin{proof}
Since $J_{t,i}(\cdot)$ is definable for $i = 1,...,p$, by the rule of composition, which is due to the definable counterpart of the Tarski-Seidenberg theorem, $J_{t,i}(\pi_{t,\theta}) - d_{t,i}$ is definable for $i = 1,...,p$. Thus, $\mathcal{F}_{t,\Tilde{d}}$ is definable on the same o-minimal structure by definition. Furthermore, $\mathbb{I}_{\mathcal{F}_{t,\Tilde{d}}}(\cdot)$ is definable as the indicator of $\mathcal{F}_{t,\Tilde{d}}$. The definability of $J_{t,0}(\pi_{t,\theta})$ follows similarly. Since definability is preserved under addition, the function $J_{t,0}(\pi_{t,\theta})+ \mathbb{I}_{\mathcal{F}_{t,\Tilde{d}}}(\pi_{t,\theta})$ is definable.
\end{proof}

For the convenience of the reader, we restate the result for non-smooth Kurdyka–\L{}ojasiewicz (KL) inequality from \cite[Thm. 14]{bolte2007clarke}.

\begin{proposition}[Non-smooth Kurdyka–\L{}ojasiewicz inequality] \label{prop:Bolte}
Let $f$ be a lower semicontinuous definable function. There exists $\rho>0$, a strictly increasing continuous definable function $h:[0,\rho]\rightarrow (0,\infty)$ which is $C^1$ smooth on $(0,\rho)$, with $h(0) = 0$, and a continuous definable function $\mathcal{X}:\mathbb{R}_+ \rightarrow (0,\rho)$ such that
\begin{equation*}
    \|\partial^0 f(x)\| \geq \frac{1}{h'(|f(x)|)},
\end{equation*}
whenever $0 < |f(x)| \leq \mathcal{X}(\|x\|)$.
\end{proposition}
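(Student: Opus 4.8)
The plan is to follow the route of \cite{bolte2007clarke} (itself an extension of Kurdyka's gradient inequality for $C^1$ definable functions to the nonsmooth lower semicontinuous case), so I will only lay out the architecture. After a harmless translation it suffices to prove the inequality near a fixed critical value, say $f(\bar x)=0$, with a desingularizing function $h$ attached to a local neighborhood; the continuous definable threshold $\mathcal{X}(\|x\|)$ is then recovered at the end by the o-minimal definable choice theorem applied to the family of admissible neighborhood radii as a function of $\|x\|$. All constructions below stay inside a single fixed o-minimal structure, which is the only thing making them possible.

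First I would record that $x\mapsto\|\partial^0 f(x)\|=\mathrm{dist}(0,\partial f(x))$ is itself a definable function: the Clarke subdifferential of a definable lower semicontinuous function has a definable graph, since it is built from $f$ by a finite sequence of operations (passing to the graph, forming Whitney strata, taking limits of gradients along strata, convex hulls, closures) that preserve definability. Next, for $r>0$ I would introduce the \emph{valley function}
\[
\psi(r)\;:=\;\inf\big\{\,\|\partial^0 f(x)\|\;:\;|f(x)|=r,\ \|x-\bar x\|\le\varepsilon,\ x\ \text{in the component accumulating at}\ \bar x\,\big\},
\]
which is definable in the single variable $r$ by stability of definability under infimum and projection. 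The o-minimal monotonicity theorem then yields $\rho>0$ on which $\psi$ is $C^1$ and monotone.

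The substance is to show (i) $\psi(r)>0$ on $(0,\rho)$ and (ii) $1/\psi$ is integrable near $0$. For (i) I would fix a $C^k$ Whitney stratification of $\mathrm{graph}(f)$ compatible with the level sets; on any stratum $S$ not contained in a level set, the projection formula for stratifiable lower semicontinuous functions shows $\|\partial^0 f(x)\|$ dominates the norm of the Riemannian gradient of the restriction $f|_S$, and $f|_S$ — being a $C^k$ definable function on a manifold — satisfies the classical \L{}ojasiewicz gradient inequality, so its gradient cannot tend to $0$ along a nonconstant definable curve approaching the critical level; an o-minimal curve-selection argument then forbids a sequence $x_k$ with $|f(x_k)|\to0$ and $\|\partial^0 f(x_k)\|\to0$ off the critical set, giving $\psi>0$ on $(0,\rho)$. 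Claim (ii) is the main obstacle: it is Kurdyka's observation that $\int_0^\rho \frac{dr}{\psi(r)}<\infty$, which one proves by identifying this integral with the length of the \emph{talweg} (the definable curve of near-minimal subgradient norm threading the level sets) and bounding that length via the cell decomposition of the talweg together with a finiteness bound on how often a subgradient-flow trajectory can re-enter it. Everything else is bookkeeping; this finiteness is where the real work lies.

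Finally I would set $h(r):=\int_0^r \frac{ds}{\psi(s)}$ for $r\in[0,\rho]$ (and symmetrically along the branch of negative values of $f$). By (i)–(ii), $h$ is well defined, $h(0)=0$, continuous and strictly increasing, definable, and $C^1$ on $(0,\rho)$ with $h'(r)=1/\psi(r)$. Hence for any $x$ with $0<|f(x)|\le\mathcal{X}(\|x\|)\le\rho$ one gets $\|\partial^0 f(x)\|\ge\psi(|f(x)|)=1/h'(|f(x)|)$, which is the asserted inequality; the continuous definable function $\mathcal{X}:\mathbb{R}_+\to(0,\rho)$ is extracted by definable choice from the family of admissible radii $\varepsilon$ indexed by $\|x\|$. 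To summarize, I expect Steps 1–2 and the final assembly to be routine o-minimal manipulation, the positivity $\psi>0$ to require the projection formula plus stratumwise \L{}ojasiewicz, and the genuine difficulty to be the integrability of $1/\psi$ (finiteness of the talweg), which is the heart of Kurdyka's theorem.
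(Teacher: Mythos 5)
The paper does not prove this statement at all: Proposition \ref{prop:Bolte} is quoted verbatim from Bolte, Daniilidis, Lewis and Shiota (\emph{Clarke subgradients of stratifiable functions}, Thm.~14), and the authors explicitly present it as a restatement ``for the convenience of the reader.'' Your proposal is therefore being compared against a citation, not a proof. As a reconstruction of the cited proof's architecture it is faithful: the definability of $x\mapsto\|\partial^0 f(x)\|$, the valley function $\psi$, the monotonicity theorem, the projection formula onto Whitney strata, and the definition $h(r)=\int_0^r ds/\psi(s)$ are exactly the ingredients of the Kurdyka--Bolte et al.\ argument, and you correctly locate the genuine difficulty in the integrability of $1/\psi$ (finiteness of the talweg length). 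But you should be clear that what you have written is an annotated table of contents, not a proof: the two load-bearing steps --- the projection formula $\mathrm{Proj}_{T_xS}\,\partial f(x)=\{\nabla(f|_S)(x)\}$ for stratifiable lower semicontinuous functions, and the talweg finiteness --- are each a substantial theorem, and you defer both to the literature. Since the paper itself defers the entire statement to the literature, this is not a defect relative to the paper, but it does mean your proposal establishes nothing beyond what the citation already provides.

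Two smaller remarks on the route you sketch. First, for positivity of $\psi(r)$ on $(0,\rho)$ you do not need the curve-selection-plus-stratumwise-\L{}ojasiewicz argument: by definability $f$ has finitely many critical values near $0$, so after shrinking $\rho$ the compact set $\{|f|=r\}\cap \bar B(\bar x,\varepsilon)$ contains no critical points, and lower semicontinuity of $\|\partial^0 f(\cdot)\|$ gives $\psi(r)>0$ directly. Second, Bolte et al.\ do not rerun Kurdyka's talweg argument for the nonsmooth valley function as you propose; they reduce stratum by stratum via the projection formula, invoke Kurdyka's (smooth) theorem for each restriction $f|_S$, and take a common desingularizing function over the finitely many strata. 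Your version, which applies the talweg construction directly to $\|\partial^0 f\|$, is morally equivalent but would force you to re-prove Kurdyka's theorem in the nonsmooth setting rather than quote it, which is strictly more work for no gain.
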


Let ${\theta}$ and $\theta_t^*$ denote the parameters of a policy ${\pi_\theta}$ and $\pi_t^*$, respectively.  Directly bounding the distance between ${\theta}$ and $\theta_t^*$ is difficult because $\pi$ may be infeasible (this is even true for $\hat\pi_t$, since it is only guaranteed to approximately satisfy the constraints), i.e., $\theta \notin \mathcal{F}_{t,d}$. Thus, the typical approach of following the subgradient flow of $J_{t,0}(\pi_\theta)+\mathbb{I}_{\mathcal{F}_{t,{d}}}(\pi_\theta)$ to reach $\theta_t^*$ is not applicable. The idea is to enlarge the feasible set $\mathcal{F}_{t,d}$ by increasing the violation threshold $\Tilde{d}_{t,i} \geq d_{t,i} + \delta$, for any $\delta >0$, such that with high probability, $\theta \in \mathcal{F}_{t,\Tilde{d}}$. Then by following the subgradient flow for $J_{t,0}(\pi_\theta)+ \mathbb{I}_{\Tilde{\mathcal{F}}_t}(\pi_\theta)$, we can arrive at a critical point $\Tilde{\theta}_t^*$ (corresponding to the policy  $\Tilde{\pi}_t^*$), which is most likely different from $\theta_t^*$. It then remains to bound the distance between $\theta_t^*$ and $\Tilde{\theta}_t^*$, which is possible due to the preservation of definability through $\inf$ projection. This is the roadmap we will follow. A graphical illustration of the approach is shown in Fig. \ref{fig:EnlargedFeasibleSet}.

\begin{figure}[t] 
  \centering
  \includegraphics[width=.4\columnwidth]{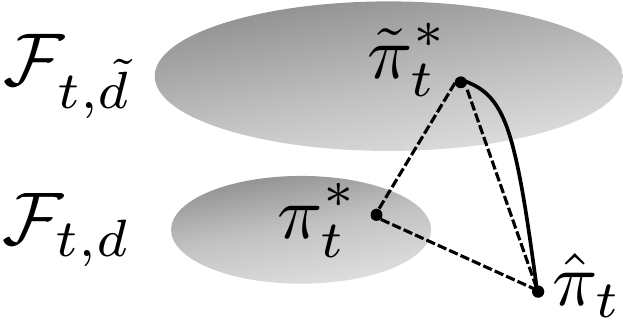}
\caption{To bound the distance between $\pi_t^*$ and $\hat{\pi}_t$, we first bound the distance between  $\Tilde{\pi}_t^*$ and the optimal policy with respect to a larger feasible set $\mathcal{F}_{t,\tilde{d}}$ by an argument based on subgradient flow curve. Note that $\hat{\pi}_t\in\mathcal{F}_{t,\tilde{d}}$ may be infeasible with respect to the original set of constraints but feasible with respect to the relaxed constraints. We then bound the distance between the optimal policies $\pi_t^*$ and $\Tilde{\pi}_t^*$, which correspond to the original feasible set $\mathcal{F}_{t,{d}}$ and the enlarged set $\mathcal{F}_{t,\tilde{d}}$. By the triangle inequality, we can then derive the desired bound on the distance between $\pi_t^*$ and $\hat{\pi}_t$. Note that for better visualization, we vertically separate the sets $\mathcal{F}_{t,{d}}$ and $\mathcal{F}_{t,\tilde{d}}$, which also aims to indicate that in general the optimal solution $\Tilde{\pi}_t^*$ has a higher objective than $\pi_t^*$  due to the relaxed constraints.  }
\label{fig:EnlargedFeasibleSet}
\end{figure}

\textbf{Bounding the term $\|\theta_t^* - \Tilde{\theta}_t^*\|$.} In this part, we will bound the term $\|\theta_t^* - \Tilde{\theta}_t^*\|$, which will be used to bound $\|\pi_t^* - \Tilde{\pi}_t^*\|$. Firstly, we will prove that the parameter $\theta_t$, which represents the solution map of an optimization with definable objective and constraints, is definable. 

\begin{proposition} \label{prop:thetaDdefinable}
Let $\theta_t(d) \in {\arg\min} \left\{ J_{t,0}(\pi_{t,\theta}), \text{ s.t. }  J_{t,i}(\pi_{t,\theta}) \leq d_{t,i},  \forall i = 1,...,p\right\}$ be the solution map of the constraint parameters $d$. Then, the function $\theta_t(d)$ is continuous and definable.  Furthermore, there exists a finite partition of the space such that the restriction of $\theta_t(d)$ to each partition is $C^p$ smooth.
\end{proposition}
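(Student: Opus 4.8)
The plan is to get definability of the solution map from the stability of o-minimal structures under projections and Boolean operations, then extract a definable single-valued branch, read off the piecewise-$C^p$ structure from a cell-decomposition theorem, and finally handle continuity separately via a constraint qualification.

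First I would reduce everything to definability of one set. By Assumption \ref{asmptn:Definable} and closure of definable functions under composition (the definable Tarski--Seidenberg theorem, already used in Lemma \ref{prop:Definable}), each map $(\theta,d)\mapsto J_{t,i}(\pi_{t,\theta})-d_{t,i}$ is definable in the same o-minimal structure, so the feasible graph $\mathcal{G}\coloneqq\{(\theta,d):J_{t,i}(\pi_{t,\theta})\le d_{t,i},\ i=1,\dots,p\}$ is definable as a finite intersection of sublevel sets. The graph of the $\argmin$ correspondence,
\[
\Gamma\coloneqq\big\{(d,\theta):(\theta,d)\in\mathcal{G}\ \text{and}\ \forall\theta'\big((\theta',d)\in\mathcal{G}\Rightarrow J_{t,0}(\pi_{t,\theta})\le J_{t,0}(\pi_{t,\theta'})\big)\big\},
\]
is then definable too, since the $\forall\theta'$ clause is the complement of a coordinate projection of a definable set and o-minimal structures are stable under projections and complements \citep{van1996geometric}. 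Next I would apply the definable choice theorem \citep{van1996geometric} over the (definable) region $\mathcal{D}\coloneqq\{d:\mathcal{F}_{t,d}\neq\emptyset\}$ --- on which the correspondence is nonempty-valued, as $\mathcal{F}_{t,d}$ is compact and $J_{t,0}$ continuous --- to obtain a definable selection $d\mapsto\theta_t(d)$ with graph contained in $\Gamma$; alternatively, the minimum-norm (or lexicographically least) minimizer gives an explicit definable branch.

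With $\theta_t(\cdot)$ definable, the last assertion follows from the $C^p$ cell decomposition theorem for o-minimal structures (see \cite[Result 4.8]{van1996geometric}): $\mathcal{D}$ admits a finite partition into definable cells on which $\theta_t(\cdot)$ is $C^p$. In particular $\theta_t(\cdot)$ is locally Lipschitz on each cell, so the image under $\theta_t$ of any line segment lying inside a cell has finite arclength --- exactly the property the downstream bound on $\|\theta_t^*-\Tilde{\theta}_t^*\|$ exploits. For continuity I would invoke a Slater-type constraint qualification: under it the feasible-set correspondence $d\rightrightarrows\mathcal{F}_{t,d}$ is continuous, so Berge's maximum theorem makes the value function $v_t(d)\coloneqq\min\{J_{t,0}(\pi_{t,\theta}):(\theta,d)\in\mathcal{G}\}$ continuous and the solution correspondence upper semicontinuous with compact values; combined with uniqueness of the minimizer --- e.g.\ from strict convexity of $J_{t,0}$ along the feasible set, or generically for definable data off a lower-dimensional subset --- this yields continuity of $\theta_t(\cdot)$.

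The step I expect to be the main obstacle is this last one: genuine \emph{global} continuity of the solution map, as opposed to the continuity on a dense open definable set that every definable function automatically enjoys. Definability and cell decomposition are essentially bookkeeping once Assumption \ref{asmptn:Definable} is in place, but continuity needs a constraint qualification to control how the feasible set varies with $d$, together with an argument (uniqueness, or a canonical branch) to avoid jumps where the solution set changes dimension across cell boundaries. Since only definability and the piecewise-$C^p$ structure are actually indispensable for the subsequent subgradient-flow / Kurdyka--\L{}ojasiewicz argument, a safe fallback --- should clean global continuity prove elusive --- is to state the proposition with ``continuous on a dense open definable subset,'' which leaves the rest of the development intact.
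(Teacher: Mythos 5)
Your proposal is correct and follows the same overall architecture as the paper's proof: definability from the stability of o-minimal structures, the finite $C^p$ partition from cell decomposition / Whitney stratification, and continuity from Berge's maximum theorem. The details differ in two places, and in both you are more careful than the paper. First, the paper reaches definability by forming the optimal value function $\phi_t(d)=\min_\theta J_{t,0}(\pi_{t,\theta})+\mathbb{I}_{\mathcal{F}_{t,d}}(\pi_{t,\theta})$ (definability preserved under $\inf$-projection) and then describing the argmin as the level set $\{\theta: J_{t,0}(\pi_{t,\theta})+\mathbb{I}_{\mathcal{F}_{t,d}}(\pi_{t,\theta})=\phi_t(d)\}$ via Tarski--Seidenberg; your route through the first-order description of the graph $\Gamma$ plus the definable choice theorem is equivalent but has the advantage of explicitly producing a single-valued definable branch, whereas the paper silently treats the (possibly set-valued) argmin as a function even though it acknowledges elsewhere that the optimizer can be non-unique. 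Second, your concern about global continuity is well-founded: the paper dispatches it with one sentence appealing to Berge's theorem, which by itself gives only upper hemicontinuity of the solution correspondence and presupposes continuity of the feasible-set correspondence $d\rightrightarrows\mathcal{F}_{t,d}$ --- exactly the constraint-qualification and uniqueness/selection issues you flag. Your observation that the downstream argument (Lemma~\ref{lem:DefinableTheta}) only uses the finite Whitney stratification and per-cell Lipschitzness, so that continuity on a dense open definable set would suffice, is accurate and identifies a real soft spot in the paper's own proof rather than in yours.
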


\begin{proof}
First, it can be seen that the solution map $\theta_t(d) \in \argmin J_{t,0}(\pi_{t,{\theta}})+\mathbb{I}_{\mathcal{F}_{t,{d}}}(\pi_{t,\theta})$. Let $\phi_t(d) = \min J_{t,0}(\pi_{t,{\theta}}) + \mathbb{I}_{\mathcal{F}_{t,{d}}}(\pi_{t,\theta})$ be the optimal value function. Since $J_{t,0}(\pi_{t,\theta}) + \mathbb{I}_{\mathcal{F}_{t,{d}}}(\pi_{t,\theta})$ is definable by Proposition \ref{prop:Definable}, and definability is preserved under $\inf$ projection, $\phi_t(d)$ is definable. Since $\theta_t(d) = \{\theta:J_{t,0}(\pi_{t,\theta})+\mathbb{I}_{\mathcal{F}_{t,{d}}}(\pi_{t,\theta}) = \phi_t(d)\}$, by the Tarski-Seidenberg Theorem, $\theta_t(d)$ is definable. The continuity property follows directly from Berge's Maximum theorem.

Following the discussion of Whitney stratifications in \citep{bolte2007clarke},
since the graph of a definable function is Whitney stratifiable, we can construct a partition by projecting the stratification into the function domain, which will be a Whitney $C^p$-stratification by the constant rank theorem. Furthermore, the restriction of the definable function to each stratum is $C^p$-smooth. Alternatively, we can directly use the fact that for any definable function, there exists a $C^p$-decomposition which has a finite number of cells, and the restriction to each cell is $C^p$-smooth \citep{van1996geometric}. This completes the proof.
\end{proof}

Now that we have proved that the function $\theta_t(d)$ is definable, we can obtain the bound $\|\theta_t^* - \Tilde{\theta}_t^*\|$. Intuitively, our proof exploits the fact that continuous and definable functions exhibit controlled behaviors along any path, even if it crosses over a finite number of Whitney strata.
\begin{lemma}\label{lem:DefinableTheta}
For any $\tilde{d}$ such that $\mathcal{F}_{t,\tilde d}$ is non-empty, the following holds:
\begin{align*}
    \|\theta_t^* - \Tilde{\theta}_t^*\| = \|\theta(d) - \theta(\Tilde{d})\| = \mathcal{O}(\|d-\tilde{d}\|).
\end{align*}
\end{lemma}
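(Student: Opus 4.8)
## Proof Plan for Lemma~\ref{lem:DefinableTheta}

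The plan is to leverage the definability of the solution map $\theta_t(\cdot)$, established in Proposition~\ref{prop:thetaDdefinable}, together with the quantitative consequences of Whitney stratification. By Proposition~\ref{prop:thetaDdefinable}, $\theta_t(d)$ is continuous and definable, and there is a finite $C^p$-stratification of the parameter space $\{d : \mathcal{F}_{t,d}\neq\emptyset\}$ such that the restriction of $\theta_t$ to each stratum is $C^p$-smooth. First I would fix the two constraint vectors $d$ (the original) and $\tilde d$ (the relaxed one), and consider the straight-line segment $\gamma(s) = (1-s)d + s\tilde d$ for $s\in[0,1]$, which lies entirely in the feasible-threshold region since this region is convex (it is an intersection of superlevel-type conditions that are monotone in $d$, and in any case enlarging $d$ preserves non-emptiness). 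Composing $\theta_t$ with $\gamma$ gives a continuous definable curve $s\mapsto \theta_t(\gamma(s))$.

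The key step is a \emph{curve selection / finite crossing} argument: a continuous definable curve on a compact interval crosses only finitely many strata of the $C^p$-decomposition, so $[0,1]$ partitions into finitely many subintervals $0=s_0<s_1<\dots<s_N=1$ on each of which $\theta_t\circ\gamma$ is $C^1$ (indeed $C^p$). On each such subinterval the fundamental theorem of calculus gives
\begin{equation*}
\|\theta_t(\gamma(s_{k+1})) - \theta_t(\gamma(s_k))\| \le \Big(\sup_{s\in(s_k,s_{k+1})}\|D\theta_t(\gamma(s))\|\Big)\,\|d-\tilde d\|\,(s_{k+1}-s_k),
\end{equation*}
and by definable cell decomposition the derivative $D\theta_t$ is bounded on each stratum (after possibly restricting to a bounded region, using that the softmax-parametrized policies and the shrinkage simplex keep everything in a compact set, cf. Assumption~\ref{asmptn:newAsmptn1}). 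Summing over $k$ and using $\sum_k (s_{k+1}-s_k)=1$ yields $\|\theta_t(d)-\theta_t(\tilde d)\| \le L\,\|d-\tilde d\|$ for a constant $L$ depending only on task $t$'s data, which is exactly the claimed $\mathcal{O}(\|d-\tilde d\|)$. An alternative route, if one wants to avoid boundedness-of-derivative bookkeeping, is to invoke directly that a continuous definable function is locally Lipschitz on a dense open definable subset and, being definable, admits a global modulus of continuity that is itself definable; a definable modulus $\omega$ with $\omega(0)=0$ that is moreover \emph{linearly bounded} near $0$ follows because any definable function $\omega:[0,\infty)\to[0,\infty)$ is ultimately comparable to a power function, and the locally-Lipschitz property forces that power to be at least $1$.

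The main obstacle I anticipate is making the ``finitely many strata crossed'' statement rigorous and extracting a \emph{uniform} Lipschitz constant rather than merely continuity: a priori $\theta_t$ could fail to be differentiable exactly on the lower-dimensional strata where the active constraint set changes, and near such transitions the derivative on the adjacent full-dimensional stratum need not stay bounded without an argument. The fix is to work with the $C^p$-cell decomposition guaranteed by definability and to restrict attention to the compact region of parameter space that is actually relevant (the relaxation $\delta$ is fixed and the policies live in a compact set by Assumption~\ref{asmptn:newAsmptn1}); on a compact definable set a definable function that is $C^1$ on each cell of a finite decomposition has its derivative bounded on each cell, so the piecewise estimate above closes. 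One should also note the mild point that $\pi_t^*$ (hence $\theta_t^*$) is taken to be the optimal policy closest to $\hat\pi_t$, which is consistent with choosing, among the possibly non-unique minimizers defining $\theta_t(d)$, a definable selection — such a selection exists by the definable choice theorem, and the bound is stated for that selection.
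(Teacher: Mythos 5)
Your proposal follows essentially the same route as the paper's proof: restrict $\theta_t(\cdot)$ to the line segment joining $d$ and $\tilde d$, use the finite Whitney/$C^p$-cell decomposition from Proposition~\ref{prop:thetaDdefinable} to split $[0,1]$ into finitely many subintervals on which the restriction is smooth (hence Lipschitz on a bounded set), and sum the per-cell estimates via the triangle inequality. Your additional care about uniformity of the Lipschitz constant near stratum boundaries and about a definable selection from the possibly non-unique argmin is a welcome refinement, but the core argument coincides with the paper's.
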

\begin{proof}
Since every smooth function over a bounded set is Lipschitz, let us denote $L_d$ as the maximum of the Lipschitz constants for all the cells of the Whitney stratification of $\theta_t(d)$. Let $d(\lambda) = \lambda d + (1-\lambda)\Tilde{d}$, where $0 \leq \lambda \leq 1$, be the curve that connects between $d$ and $\Tilde{d}$. Also, let $0 = \lambda_1 \leq ... \leq \lambda_n = 1$ be the partition such that $\theta_t(d(\lambda))$ belongs to one cell for all $\lambda_i < \lambda < \lambda_{i+1}$ for $i = 1, ..., n-1$. We know that $n < \infty$ since $d(\theta_t)$ is Whitney stratifiable. Thus,
\begin{equation*}
\begin{aligned}
    \|\theta_t(d) - \theta_t(\Tilde{d})\| & \leq \sum_{i=1}^{n-1}\|\theta_t(d(\lambda_i)) - \theta_t(d(\lambda_{i+1}))\| \\ 
    & \leq L_d \sum_{i=1}^{n-1}\|d(\lambda_i) - d(\lambda_{i+1})\| \\ 
    & \leq L_d \|d - \Tilde{d}\|\sum_{i=1}^{n-1}|\lambda_{i+1} - \lambda_i| \\ 
    & = L_d\|d - \Tilde{d}\|
    \end{aligned}
\end{equation*}
where the first inequality is due to triangle inequality, the second inequality is due to Lipschitz continuity, the third inequality is due to the definition of $d(\lambda)$, the first equality is due to the non-decreasing sequence of $\lambda_i$.
\end{proof}

\textbf{Bounding the term $\|\Tilde{\theta}_t^* - \hat{\theta}_t \|$.} Recall that $\hat{\theta}_t$ is the parameter for $\hat{\pi}_t$ (output of within-task CRPO), and $\Tilde{\theta}_t^*$ is the parameter for an optimal solution with an enlarged feasible set $\mathcal{F}_{t,\tilde d}$. In this subsection, we will obtain the upper bound for the term $\|\Tilde{\theta}_t^* - \hat{\theta}_t \|$.  Let $f(\theta,\tilde{d}) = J_{t,0}(\pi_{\theta}) + \mathbb{I}_{{\mathcal{F}}_{t,\tilde{d}}}(\pi_\theta)$, and choose $\tilde{d}_{i}=\mathcal{O}(1/\sqrt{M})$ for $i=1,...,p$, which coincides with the upper bound on constraint violation for within-task CRPO such that $\hat\theta_t\in {\mathcal{F}}_{t,\tilde{d}}$ with high probability. In the next result, we will condition on this high-probability event.  

\begin{lemma}\label{lem:theta-tilde-bd}
With the choice of $\tilde{d}=d+\delta$, where $\delta=\mathcal{O}(1/\sqrt{M})$ coincides with the upper bound on constraint violation for within-task CRPO such that $\hat\theta_t\in {\mathcal{F}}_{t,\tilde{d}}$, the following holds:
\begin{equation*}
    \|\Tilde{\theta}_t^* - \hat{\theta}_t \| \leq \mathcal{O}\left(h\left({1}/{\sqrt{M}}\right)\right),
\end{equation*}
where $h$ is a strictly increasing continuous function with the property that $h(0)=0$ as specified in Lemma \ref{prop:Bolte}.
\end{lemma}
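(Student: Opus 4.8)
The plan is to follow a subgradient-flow argument applied to the definable function $f(\theta,\tilde d) = J_{t,0}(\pi_\theta) + \mathbb{I}_{\mathcal{F}_{t,\tilde d}}(\pi_\theta)$ on the fixed enlarged feasible set $\mathcal{F}_{t,\tilde d}$ with $\tilde d = d + \delta$ and $\delta = \mathcal{O}(1/\sqrt M)$, conditioning throughout on the high-probability event that the within-task CRPO output satisfies $\hat\theta_t \in \mathcal{F}_{t,\tilde d}$. First I would record that $f(\cdot,\tilde d)$ is definable (Proposition \ref{prop:Definable}) and lower semicontinuous, so by Definition \ref{def:subgflow} there is a unique absolutely continuous subgradient curve $\theta(\tau)$ with $\theta(0) = \hat\theta_t$ and $\dot\theta(\tau) = -\partial^0 f(\theta(\tau))$ for a.e. $\tau$; this curve converges to a critical point $\tilde\theta_t^*$ of $f(\cdot,\tilde d)$ as $\tau \to \infty$ (finite length, by the KL inequality — this is the classical Łojasiewicz/Kurdyka consequence). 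I would then identify $\tilde\theta_t^*$ with (a parameter of) the optimal policy $\tilde\pi_t^*$ for the relaxed CMDP, which requires that along the flow the objective decreases to the minimum; this uses that $f(\cdot,\tilde d)$ has no spurious critical points obstructing descent, which one can either assume mildly or argue via the structure of the indicator plus the landscape result being algorithm-agnostic.

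The core estimate is the bound on the curve length. Along the subgradient curve, $\frac{d}{d\tau} f(\theta(\tau)) = -\|\partial^0 f(\theta(\tau))\|^2$ for a.e. $\tau$, and by Proposition \ref{prop:Bolte} (non-smooth KL inequality) we have $\|\partial^0 f(\theta(\tau))\| \geq 1/h'(|f(\theta(\tau)) - f(\tilde\theta_t^*)|)$ in a neighborhood of the critical value. The standard computation then gives
\begin{equation*}
\|\hat\theta_t - \tilde\theta_t^*\| \;\leq\; \int_0^\infty \|\dot\theta(\tau)\|\, d\tau \;\leq\; h\big(|f(\hat\theta_t) - f(\tilde\theta_t^*)|\big) \;=\; h\big(J_{t,0}(\tilde\pi_t^*) - J_{t,0}(\hat\pi_t)\big),
\end{equation*}
where the last equality uses that $\hat\theta_t \in \mathcal{F}_{t,\tilde d}$ so the indicator term vanishes at both endpoints. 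It remains to bound the suboptimality gap $J_{t,0}(\tilde\pi_t^*) - J_{t,0}(\hat\pi_t)$ of $\hat\pi_t$ with respect to the \emph{relaxed} problem: this is controlled by $\mathcal{O}(1/\sqrt M)$ because $\hat\pi_t$ is within $\mathcal{O}(1/\sqrt M)$ of optimality for the original CMDP (Lemma \ref{lemma: three events hold} / \eqref{eq:RegretRandC}), and relaxing the constraints from $d$ to $d + \delta$ with $\delta = \mathcal{O}(1/\sqrt M)$ changes the optimal value by at most $\mathcal{O}(\|\delta\|)$ — this last step can be made rigorous by the sensitivity bound $|J_{t,0}(\tilde\pi_t^*) - J_{t,0}(\pi_t^*)| \leq \mathcal{O}(\|d - \tilde d\|)$, which follows from Lemma \ref{lem:DefinableTheta} together with Lipschitz continuity of $J_{t,0}\circ\pi_\theta$. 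Substituting, $\|\hat\theta_t - \tilde\theta_t^*\| \leq h(\mathcal{O}(1/\sqrt M)) = \mathcal{O}(h(1/\sqrt M))$ since $h$ is increasing and continuous with $h(0)=0$.

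The main obstacle I anticipate is justifying that the subgradient curve of $f(\cdot,\tilde d)$ started at $\hat\theta_t$ actually lands at a \emph{global} minimizer $\tilde\theta_t^*$ of the relaxed problem, rather than merely at some critical point — the KL inequality gives convergence and the length bound, but identifying the limit with $\tilde\pi_t^*$ needs care, particularly because of the indicator term $\mathbb{I}_{\mathcal{F}_{t,\tilde d}}$ which makes $f$ nonsmooth on the boundary of the relaxed feasible set. One clean way around this is to instead bound $\|\hat\theta_t - \tilde\theta_t^*\|$ directly via the KL inequality applied at $\hat\theta_t$ to the shifted function $f(\cdot,\tilde d) - f(\tilde\theta_t^*,\tilde d)$ along \emph{any} path to $\tilde\theta_t^*$ staying in $\mathcal{F}_{t,\tilde d}$ (e.g.\ using that definable functions have the length-of-talweg property), so that only the function-value gap $\mathcal{O}(1/\sqrt M)$ enters; a second subtlety is ensuring $h$ here is the \emph{same} $h$ as in Theorem \ref{thm:dualDICE}, which it is, since both invoke Proposition \ref{prop:Bolte} for the (common) o-minimal structure of Assumption \ref{asmptn:Definable}. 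The high-probability conditioning and the precise value of $\delta$ coinciding with CRPO's constraint-violation bound are routine bookkeeping given \eqref{eq:RegretRandC}.
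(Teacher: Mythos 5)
Your proposal follows essentially the same route as the paper's proof: the subgradient-flow length bound via the nonsmooth Kurdyka--\L{}ojasiewicz inequality of Proposition \ref{prop:Bolte} gives $\|\hat\theta_t-\tilde\theta_t^*\|\leq h\bigl(|f(\hat\theta_t)-f(\tilde\theta_t^*)|\bigr)$, and the value gap is then split exactly as you do into the within-task CRPO suboptimality $\mathcal{O}(1/\sqrt{M})$ plus the sensitivity of the optimal value to the relaxation $d\to\tilde d$, controlled by Lemma \ref{lem:DefinableTheta} and Lipschitzness of the value function. The obstacle you flag---identifying the limit of the flow with the global minimizer $\tilde\theta_t^*$ and the applicability of the KL inequality at $\hat\theta_t$---is precisely the point the paper handles by the simplifying assumption $f(\hat\theta_t)\leq\mathcal{X}(\|\hat\theta_t\|)$ (with a remark that it can be relaxed via curves of maximal slope), so your proposed workaround is a reasonable alternative but not a different proof.
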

\begin{proof}
Without loss of generality, consider $f(\theta) = J_{t,0}(\pi_{\theta}) + \mathbb{I}_{{\mathcal{F}}_{t,\tilde{d}}}(\pi_\theta)+c$, where  $c \coloneqq - \inf J_{t,0}(\pi_{\theta}) + \mathbb{I}_{\Tilde{\mathcal{F}}_t}(\pi_\theta)$, so that the minimal value of $f$ is translated to $0$. For simplicity, also assume that $f(\hat{\theta}_t) \leq \mathcal{X}(\|\hat{\theta}_t\|)$. Note that this assumption can be relaxed by using the concept of ``curves of maximal slope'' at the cost of slightly more complicated analysis and bounds \citep{ioffe2009invitation}.

Now, consider a subgradient flow $\Dot{\theta}(\tau) \in -\partial f (\theta(\tau))$ (see Definition \ref{def:subgflow}), initialized at $\theta(0)=\hat{\theta}_t$ then, for any $0 \leq s'<s$, we have that
\begin{equation*}
\begin{aligned}
    h \big(f(\theta(s'))\big) - h \big(f(\theta(s)) \big)  & = \int_s^{s'} \frac{d}{d \tau}h\big(f(\theta(\tau)) \big)d\tau \\ 
    &  = \int_{s'}^s h'\big(f(\theta(\tau))\big)\|\Dot{\theta}(\tau)\|^2 d\tau  \\ 
    & \geq \int_{s'}^s \|\Dot{\theta}(\tau)\|d\tau  \\ 
    & \geq \Bigg\|\int_{s'}^s \Dot{\theta}(\tau)d\tau\Bigg\|\\
    &=\|\theta(s)-\theta(s')\|
\end{aligned}
\end{equation*}
where the second equality is due to the property of the subgradient flow (see Sec. \ref{sec:subflow}), the first inequality is due to $\|\partial^0(h f)\big(\theta(\tau)\big)\|\geq 1$ from Proposition \ref{prop:Bolte}, and the second inequality is due to the triangle inequality. Thus, by taking $s' = 0$ and $s \rightarrow \infty$,
we have shown that
\begin{align*}
    h(f(\hat{\theta}_t)) \geq \|\hat{\theta}_t - \Tilde\theta_t^*\|.
\end{align*}
Therefore,
\begin{equation*}
\begin{aligned}
    \|\hat{\theta}_t - \Tilde{\theta}_t^*\| & \leq  h(f(\hat{\theta}_t) - f(\Tilde\theta_t^*)) \\ \quad & \overset{(i)}{\leq} h(J_{t,0}(\hat{\pi}_t) - J_{t,0}(\Tilde{\pi}_t^*)),
    \end{aligned}
\end{equation*}
where the first inequality is due to the optimality of $\Tilde{\pi}_t^*$, and $(i)$ follows since both $\hat{\pi}_t$ and $\Tilde{\pi}_t^*$ are feasible for $\mathcal{F}_{t, \Tilde{d}}$. Note that the suboptimality bound can be split as 
\begin{equation*}
    h(J_{t,0}(\hat{\pi}_t) - J_{t,0}(\Tilde{\pi}_t^*)) = h\bigg(J_{t,0}(\hat{\pi}_t) - J_{t,0}(\pi_t^*) + J_{t,0}(\pi_t^*) -   J_{t,0}(\Tilde{\pi}_t^*) \bigg).
\end{equation*}

By CRPO, we can bound the value difference $J_{t,0}(\hat{\pi}_t) - J_{t,0}(\pi_t^*) \leq \mathcal{O}({1}/{\sqrt{M}} )$. Moreover, Since the value function is Lipschitz \cite[Lemma 4]{xu2020improving}, the value difference $J_{t,0}(\pi_t^*) -   J_{t,0}(\Tilde{\pi}_t^*)$ can be bounded by the distance $\|\theta_t^* - \Tilde{\theta}_t^*\|$, which is bounded again by $ \mathcal{O}({1}/{\sqrt{M}} )$ according to Lemma \ref{lem:DefinableTheta}. Hence, recognizing that $h$ is strictly increasing, we have proved the claim.
\end{proof}

\textbf{Bounding the term  $\|{\theta}_t^* - \hat{\theta}_t \|$.} Finally, we are able to bound the term of our original interests.
\begin{lemma}\label{lem:bound-definable-final}
Under assumption \ref{asmptn:Definable}, the following holds:
\begin{equation*}
    \|{\theta}_t^* - \hat{\theta}_t \| \leq \mathcal{O}\left(h\left(\frac{1}{\sqrt{M}}\right)+\frac{1}{\sqrt{M}}\right),
\end{equation*}
where $h$ is a strictly increasing continuous function with the property that $h(0)=0$ as specified in Lemma \ref{prop:Bolte}.
\end{lemma}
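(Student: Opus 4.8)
The plan is to obtain the bound by a single triangle inequality that stitches together the two preceding lemmas. Write
\[
\|\theta_t^* - \hat{\theta}_t\| \;\leq\; \|\theta_t^* - \Tilde{\theta}_t^*\| \;+\; \|\Tilde{\theta}_t^* - \hat{\theta}_t\|,
\]
where $\Tilde{\theta}_t^*$ is the parameter of an optimal policy for the relaxed feasible set $\mathcal{F}_{t,\tilde d}$ with $\tilde d = d + \delta$ and $\delta = \mathcal{O}(1/\sqrt{M})$ chosen to coincide with the within-task constraint-violation rate of CRPO in \eqref{eq:RegretRandC}. Since $\mathcal{F}_{t,d}\subseteq \mathcal{F}_{t,\tilde d}$, the relaxed set is automatically non-empty, so both $\theta_t^*$ and $\Tilde{\theta}_t^*$ are well defined, and $\|d-\tilde d\| = \|\delta\| = \mathcal{O}(1/\sqrt{M})$.

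The execution then proceeds in three short steps. First I would condition on the high-probability event that $\hat{\theta}_t \in \mathcal{F}_{t,\tilde d}$: by \eqref{eq:RegretRandC} with $\alpha_t = \Theta(1/\sqrt{M})$, the expected constraint violation $\mathbb{E}[J_{t,i}(\hat{\pi}_t)] - d_{t,i}$ is $\mathcal{O}(1/\sqrt{M})$ for every $i$, so a Markov-inequality argument (after inflating the constant hidden in $\delta$) places $\hat{\theta}_t$ in $\mathcal{F}_{t,\tilde d}$ with probability arbitrarily close to one. On that event, Lemma \ref{lem:theta-tilde-bd} applies and gives $\|\Tilde{\theta}_t^* - \hat{\theta}_t\| \leq \mathcal{O}(h(1/\sqrt{M}))$, and Lemma \ref{lem:DefinableTheta} (which is available because the solution map $\theta_t(\cdot)$ is definable under Assumption \ref{asmptn:Definable}) gives $\|\theta_t^* - \Tilde{\theta}_t^*\| = \mathcal{O}(\|d-\tilde d\|) = \mathcal{O}(1/\sqrt{M})$. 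Summing the two contributions yields $\|\theta_t^* - \hat{\theta}_t\| \leq \mathcal{O}\big(h(1/\sqrt{M}) + 1/\sqrt{M}\big)$. On the complementary low-probability event, the deviation is controlled by the diameter of the shrinkage simplex domain (Assumption \ref{asmptn:newAsmptn1}), which only affects constants.

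The routine ingredients here are just the triangle inequality and the substitution of the two earlier estimates; the one place that genuinely needs care is the interface between the two lemmas. Lemma \ref{lem:theta-tilde-bd} needs $\hat{\theta}_t$ to be feasible for the \emph{relaxed} constraint set $\mathcal{F}_{t,\tilde d}$ so that the subgradient-flow argument can be initialized at $\hat{\theta}_t$, whereas Lemma \ref{lem:DefinableTheta} needs $\mathcal{F}_{t,\tilde d}$ to remain within $\mathcal{O}(1/\sqrt{M})$ of the original $\mathcal{F}_{t,d}$ so that the definable (hence piecewise $C^p$ and locally Lipschitz) solution map $\theta_t(d)$ cannot drift far. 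Both requirements are reconciled by the single calibration $\delta = \mathcal{O}(1/\sqrt{M})$ matched to the CRPO violation bound, which is the main obstacle to get right; everything else follows from the earlier development, and no structural hypothesis beyond definability (Assumption \ref{asmptn:Definable}) and the shrinkage-simplex condition (Assumption \ref{asmptn:newAsmptn1}) is needed.
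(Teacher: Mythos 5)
Your proof is correct and follows essentially the same route as the paper: the paper's proof of this lemma is a one-line application of the triangle inequality combining Lemma \ref{lem:DefinableTheta} and Lemma \ref{lem:theta-tilde-bd}, exactly as you describe. Your added discussion of conditioning on the event $\hat{\theta}_t \in \mathcal{F}_{t,\tilde d}$ matches the conditioning the paper already performs before stating Lemma \ref{lem:theta-tilde-bd}, so nothing substantive differs.
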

\begin{proof}
The claim follows directly from Lemmas \ref{lem:DefinableTheta} and \ref{lem:theta-tilde-bd} and the triangle inequality.
\end{proof}
\begin{remark}
Note that our strategy to bound $\|{\theta}_t^* - \hat{\theta}_t \|$ is algorithmic-agnostic as it only relies on the optimization landscape. The only place we rely on the algorithm is to bound the suboptimality gap, which is then converted to a bound on $\|\Tilde{\theta}_t^* - \hat{\theta}_t \|$ in Lemma \ref{lem:theta-tilde-bd}. Also, the enlargement of the feasible set should be viewed as  a proof technique and has no implications for the algorithm design. Indeed, the motivation for the enlargement is to properly design a subgradient flow system. Thus, the result of Lemma \ref{lem:bound-definable-final} is not conditioned on how the enlargement is performed. Also, note that definability is used differently in Lemmas \ref{lem:DefinableTheta} and \ref{lem:theta-tilde-bd}. In the former case, we exploit the Whitney stratification property to provide an upper bound, while in the latter case, we exploit the KL property to obtain a lower bound, hence they serve different purposes.
\end{remark}

\subsection{Bounding term \textit{(A)}: \texorpdfstring{$|\mathbb{E}_{\nu_t^*}[D_{KL}(\pi_t^*|\pi)] - \mathbb{E}_{ \Tilde{\nu}_t}[D_{KL}(\pi_t^*|\pi)]|$}{[1]}}

The result from Lemma \ref{lem:bound-definable-final} can be used directly to provide bounds for \emph{(A)} and \emph{(C)}. We start with the term \emph{(A)}.
\begin{lemma}\label{lem:BoundonA}
The following bound holds:
\begin{equation}
    {|\mathbb{E}_{s\sim \nu_t^*}[D_{KL}(\pi_t^*|\pi)] - \mathbb{E}_{s\sim \Tilde{\nu}_t}[D_{KL}(\pi_t^*|\pi)]|}=\mathcal{O}\left(h\left(\frac{1}{\sqrt{M}}\right)+\frac{1}{\sqrt{M}}\right)
\end{equation}
where $h$ is a strictly increasing continuous function with the property that $h(0)=0$ as specified in Lemma \ref{prop:Bolte}.
\end{lemma}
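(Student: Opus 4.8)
The plan is to control (A) with the elementary observation that the difference of expectations of a \emph{fixed} bounded function under two distributions is at most the sup-norm of the function times the total-variation distance between the distributions, and then to bound that TV distance by the distance between the inducing policies through the Lipschitz continuity of the discounted state-visitation map, which in turn is controlled by Lemma \ref{lem:bound-definable-final}.

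First, I would write $g(s) \coloneqq D_{KL}(\pi_t^*(\cdot|s)\|\pi(\cdot|s))$, so that
\[
\text{(A)} = \bigl|\mathbb{E}_{s\sim\nu_t^*}[g(s)] - \mathbb{E}_{s\sim\tilde\nu_t}[g(s)]\bigr| \le \|g\|_\infty\,\|\nu_t^* - \tilde\nu_t\|_1 .
\]
By condition (1) of Assumption \ref{asmptn:newAsmptn1}, applied with the decision variable $\pi$ in place of $\pi_{t,0}$ (both lie in the shrinkage simplex $\Delta\mathcal{A}_\varrho^{|\mathcal{S}|}$, so the same argument applies), we have $\|g\|_\infty \le C_\pi$.

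Second, I would bound $\|\nu_t^* - \tilde\nu_t\|_1$. The discounted state-visitation distribution is Lipschitz continuous in the policy: by \cite[Lemma 3]{xu2020improving} (or, equivalently, by first bounding $\max_s\|\pi_t^*(\cdot|s)-\hat\pi_t(\cdot|s)\|_1$ via the Lipschitz constant of the softmax map and then applying a simulation-lemma-type estimate $\|\nu_t^*-\tilde\nu_t\|_1 \le \tfrac{\gamma}{1-\gamma}\max_s\|\pi_t^*(\cdot|s)-\hat\pi_t(\cdot|s)\|_1$), there is a constant $L_\nu$ depending only on $\gamma$ and the simplex dimension with $\|\nu_t^*-\tilde\nu_t\|_1 \le L_\nu\,\|\theta_t^* - \hat\theta_t\|$. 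Third, I would invoke Lemma \ref{lem:bound-definable-final}: under Assumption \ref{asmptn:Definable}, $\|\theta_t^* - \hat\theta_t\| = \mathcal{O}\!\left(h(1/\sqrt{M}) + 1/\sqrt{M}\right)$, where $\theta_t^*$ denotes the parameter of the optimal policy closest to $\hat\pi_t$, consistently with the convention used throughout Section \ref{sec:kl-bound}. Chaining the three bounds,
\[
\text{(A)} \le C_\pi\, L_\nu\,\|\theta_t^* - \hat\theta_t\| = \mathcal{O}\!\left(h\!\left(\tfrac{1}{\sqrt{M}}\right) + \tfrac{1}{\sqrt{M}}\right),
\]
which is the claim.

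The genuinely hard work has already been done in Lemma \ref{lem:bound-definable-final}, so the remaining steps are routine; the two places to be careful are (i) justifying the sup-norm bound on $g$, i.e. that Assumption \ref{asmptn:newAsmptn1} may be applied with $\pi$ rather than $\pi_{t,0}$, which holds because the meta-iterates are constrained to the shrinkage simplex; and (ii) the precise constant in the Lipschitz continuity of the state-visitation map, which scales with $1/(1-\gamma)$ and depends on whether one measures in $\ell_1$ or TV — since only the rate in $M$ enters the statement, this constant is absorbed into the $\mathcal{O}(\cdot)$.
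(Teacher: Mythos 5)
Your proposal is correct and follows essentially the same route as the paper's proof: bound the difference of expectations by $C_\pi\|\nu_t^*-\tilde\nu_t\|_1$ using Assumption \ref{asmptn:newAsmptn1}, control $\|\nu_t^*-\tilde\nu_t\|_1$ by the parameter distance via the Lipschitz continuity of the visitation map from \cite[Lem.~3]{xu2020improving}, and conclude with Lemma \ref{lem:bound-definable-final}. Your additional remarks on the applicability of the sup-norm bound to $\pi$ and on the $1/(1-\gamma)$ scaling of the Lipschitz constant are sensible but do not change the argument.
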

\begin{proof}
\begin{align*}
    & |\mathbb{E}_{s\sim \nu_t^*}[D_{KL}(\pi_t^*|\pi)] - \mathbb{E}_{s\sim \Tilde{\nu}_t}[D_{KL}(\pi_t^*|\pi)]| \\&=
     \bigg|\sum_{s \in \mathcal{S}_t}\big(\nu_t^*(s) - \Tilde{\nu}_t(s)\big)D_{KL}(\pi_t^*(s)|\pi(s))  \bigg| \\ &\leq   C_{\pi}\|\nu_t^* - \Tilde{\nu}_t\|_1\\
     &\leq 2C_{\pi}C_\nu\|\theta_t^*-\hat{\theta}_t\|_2
\end{align*}
where the first equality is by definition, the first inequality is due to Assumption \ref{asmptn:newAsmptn1}, and the second inequality is due to \cite[Lem. 3]{xu2020improving}, which also specifies the constant $C_\nu$, and \cite[Prop. 4.2]{levin2017markov}. The result then follows by recalling the result from Lemma \ref{lem:bound-definable-final}.
\end{proof}

\subsection{Bounding term (C): \texorpdfstring{$|\mathbb{E}_{\hat{\nu}_t}[D_{KL}(\pi_t^*|\pi)] - \mathbb{E}_{ \hat{\nu}_t}[D_{KL}(\hat{\pi}_{t}|\pi)]|$}{[1]}}
Similarly, we can prove the upper bound for the error term $(C)$.
\begin{lemma}\label{lem:BoundonC}
The following bound holds:
\begin{equation}
    |\mathbb{E}_{s\sim \hat{\nu}_t}[D_{KL}(\pi_t^*|\pi)] - \mathbb{E}_{s\sim \hat{\nu}_t}[D_{KL}(\hat{\pi}_t|\pi)]|=\mathcal{O}\left(h\left(\frac{1}{\sqrt{M}}\right)+\frac{1}{\sqrt{M}}\right)
\end{equation}
\end{lemma}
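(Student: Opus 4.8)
The plan is to mirror the proof of Lemma~\ref{lem:BoundonA}, except that the perturbation now lives in the first argument of the KL divergence rather than in the state distribution. First I would rewrite the quantity of interest as a single weighted sum over states,
\[
\big|\mathbb{E}_{s\sim\hat{\nu}_t}[D_{KL}(\pi_t^*|\pi)] - \mathbb{E}_{s\sim\hat{\nu}_t}[D_{KL}(\hat{\pi}_t|\pi)]\big|
= \Big|\sum_{s\in\mathcal{S}_t}\hat{\nu}_t(s)\big(D_{KL}(\pi_t^*(\cdot|s)|\pi(\cdot|s)) - D_{KL}(\hat{\pi}_t(\cdot|s)|\pi(\cdot|s))\big)\Big|,
\]
and bound it by $\sum_{s}\hat{\nu}_t(s)\,\big|D_{KL}(\pi_t^*(\cdot|s)|\pi(\cdot|s)) - D_{KL}(\hat{\pi}_t(\cdot|s)|\pi(\cdot|s))\big|$, using that $\hat{\nu}_t$ is a probability distribution.

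The second step is to control, for each fixed state $s$, the difference of the two KL values that share the second argument $\pi(\cdot|s)$. Because $\pi(\cdot|s)$ lies in the shrinkage simplex $\Delta\mathcal{A}_\varrho$ by Assumption~\ref{asmptn:newAsmptn1} and both $\pi_t^*$ and $\hat{\pi}_t$ are softmax policies, the map $q\mapsto D_{KL}(q\,|\,\pi(\cdot|s))$ is Lipschitz on the relevant domain with a constant $L_g'$ that is uniform in $s$; the argument is the one behind Assumption~\ref{asmptn:newAsmptn1}(2), now applied to the first slot, and relies on all probabilities involved being bounded away from $0$. Hence each summand is at most $L_g'\|\pi_t^*(\cdot|s)-\hat{\pi}_t(\cdot|s)\|$, so after summing against $\hat{\nu}_t$ we obtain the bound $L_g'\max_{s\in\mathcal{S}_t}\|\pi_t^*(\cdot|s)-\hat{\pi}_t(\cdot|s)\|$.

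It then remains to pass from the policy gap to the parameter gap. By Lipschitz continuity of the softmax parametrization (bounded Jacobian; see, e.g., \cite[Lemma 3]{xu2020improving}), $\max_{s}\|\pi_t^*(\cdot|s)-\hat{\pi}_t(\cdot|s)\|\le L_s\|\theta_t^*-\hat{\theta}_t\|$ for some constant $L_s$, and Lemma~\ref{lem:bound-definable-final} gives $\|\theta_t^*-\hat{\theta}_t\|=\mathcal{O}(h(1/\sqrt{M})+1/\sqrt{M})$. Chaining the three bounds yields the claimed estimate.

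I expect the only delicate point to be the uniform Lipschitz bound for $q\mapsto D_{KL}(q\,|\,\pi(\cdot|s))$: the gradient of the KL divergence in its first argument contains $\log q(\cdot)$ terms that blow up near the boundary of the probability simplex, so one must use that $\pi_t^*$ and $\hat{\pi}_t$ stay bounded away from the boundary (a consequence of being softmax policies with bounded parameters over the relevant region), or else replace Lipschitzness by the modulus of continuity of the negative entropy and absorb the resulting term into $h(1/\sqrt{M})$. All the remaining steps are routine adaptations of the argument used for Lemma~\ref{lem:BoundonA}.
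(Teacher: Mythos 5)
Your proof is correct and follows essentially the same route as the paper's: decompose the difference state-by-state under $\hat{\nu}_t$, apply the triangle inequality, use Lipschitzness of the KL divergence in its first argument, convert the policy gap into the parameter gap $\|\theta_t^*-\hat{\theta}_t\|$ via the softmax parametrization (the paper invokes \cite[Lem.~24]{mei2020global} where you cite \cite[Lemma~3]{xu2020improving}, but the effect is identical), and conclude with Lemma~\ref{lem:bound-definable-final}. The delicate point you flag --- that Lipschitzness of $q\mapsto D_{KL}(q|\pi)$ requires the policies to stay bounded away from the simplex boundary because of the $\log q$ terms --- is genuine; the paper handles it only implicitly by citing Assumption~\ref{asmptn:newAsmptn1}, so your explicit treatment is, if anything, more careful.
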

\begin{proof}

\begin{align*}
    & |\mathbb{E}_{s\sim \hat{\nu}_t}[D_{KL}(\pi_t^*|\pi)] - \mathbb{E}_{s\sim \hat{\nu}_t}[D_{KL}(\hat{\pi}_t|\pi)]| \\ 
    &= \bigg| \sum_{s \in \mathcal{S}}\hat{\nu}_t(s) \bigg( D_{KL}(\pi_t^*|\pi) - D_{KL}(\hat{\pi}_t|{\pi}) \bigg)\bigg|\\ 
    &\leq  \sum_{s \in \mathcal{S}}\hat{\nu}_t(s) \bigg|D_{KL}(\pi_t^*|\pi) - D_{KL}(\hat{\pi}_t|{\pi}) \bigg| \\ 
    &\leq L_g \sum_{s \in \mathcal{S}}\hat{\nu}_t(s) \|\pi_t^*(s) - \hat{\pi}_t(s)\|_2 \\ 
    &\leq   L_g\sum_{s \in \mathcal{S}}\hat{\nu}_t(s)  \| \theta_t^* - \hat{\theta}_{t} - c'1\|_{\infty} \\\ 
    &\leq  L_g  \| \theta_t^* - \hat{\theta}_{t}\|_2
\end{align*}
where the first inequality is due to the non-negativity of $\hat{\nu}_t(s)$ and triangle inequality, the second inequality is due to Assumption \ref{asmptn:newAsmptn1}, the third inequality holds for any constant $c'$ and is due to \cite[Lem. 24]{mei2020global}, and the last inequality is due to $\sum_{s \in \mathcal{S}}\hat{\nu}_t(s)=1$ and by choosing $c'=0$. The result then follows by recalling the result from Lemma \ref{lem:bound-definable-final}.
\end{proof}

\subsection{Bounding term (B): \texorpdfstring{$\left|\mathbb{E}_{ \Tilde{\nu}_t}[D_{KL}(\pi_t^*|\pi)] - \mathbb{E}_{ \hat{\nu}_t}[D_{KL}(\pi_t^*|\pi)] \right|$}{[1]}}
Now, we will upper bound the error term $(B)$. The proof follows DualDICE \citep{nachum2019dualdice}. We introduce the following notations.  Let $\hat{\mathbb{E}}_{d^{\mathcal{D}_t}}$ denote an average of empirical samples where $\{s_i, a_i, r_i, s_i'\}_{i=1}^{N} \sim d^{\mathcal{D}_t}$, and $\rho_t$ be the initial state distribution for the CMDP task $t$. The number of data points $N=\mathcal{O}(M^{1+1/\sigma})$, where $\sigma$ is any positive number $\sigma\in(0,1)$. Note that the additional factor of $\mathcal{O}(M^{1/\sigma})$ results from the critic evaluation per policy update (see \cite[Thm. 1]{xu2021crpo}). We will roughly bound $N=\mathcal{O}(M^{2})$ in the following to simplify the presentation. The stationary distribution correction factor is denoted as $w_{\hat{\pi}_t/\mathcal{D}_t}(s,a) = \frac{\Tilde{\nu}_t(s,a)}{d^{\mathcal{D}_t}(s,a)}$. 

We make the following regularity assumption on the distribution $d^{\mathcal{D}_t}$ with respect to the target policy $\hat{\pi}_t$ \cite[Asm. 1]{nachum2019dualdice}.

\begin{assumption}[Reference distribution property]
For any $(s,a)$, $\tilde{\nu}_t(s,a)>0$ implies that $d^{\mathcal{D}_t}(s,a)>0$. Furthermore, the correction terms are bounded by some finite constant $C_{\omega}$: $\|\omega_{\Tilde{\nu}_t/\mathcal{D}_t}\|_\infty\leq C_{\omega}$.
\end{assumption}

For convenience, we  recapitulate the key points from DualDICE, where we also omit the task dependence $t$ (i.e., we use $d^\mathcal{D}$, $\pi$, and $\rho$ in lieu of $d^{\mathcal{D}_t}$, $\hat{\pi}_t$, and $\rho_t$, respectively). The objective function is given by 
\begin{align}
    J(z, \zeta) = & \E_{(s,a,s'), a'\sim\pi(s')}\left[(z(s,a) - \gamma z(s',a'))\zeta(s,a) - \zeta(s,a)^2 / 2\right] \\
     & - (1 - \gamma)~\E_{s_0\sim\beta,a_0\sim\pi(s_0)} \left[ z(s_0,a_0) \right].
\end{align}

The objective in the form prior to introduction of $\zeta$ is denoted as $J(z)$:
\begin{align}
J(z) = \frac{1}{2}\E_{(s,a)}\left[(z - \mathcal{T}^{\pi}z)(s,a)^2\right]
  - (1 - \gamma)~\E_{s_0\sim\beta,a_0\sim\pi(s_0)} \left[ z(s_0,a_0) \right].
  \label{eq:Jz-dualdice}
\end{align}

Let $\hat{J}(z, \zeta)$ denotes the empirical surrogate of $J(z, \zeta)$ with optimal solution as $(\hat z^*, \hat\zeta^*)$. We denote $z^*_\mathcal{F} = \argmin_{z\in\mathcal{F}} J(z)$ and $z^* = \argmin_{z: S\times A\rightarrow \mathbb{R}} J(z)$. We denote $L(z) = \max_{\zeta\in \mathcal{H}} J(z, \zeta)$ and $\hat L(z)=\max_{\zeta\in \mathcal{H}} \hat J(z, \zeta)$ as the primal objectives, and $\ell(\zeta) = \min_{z\in\mathcal{F}} J(z, \zeta)$, $\hat\ell(\zeta) = \min_{z\in\mathcal{F}} \hat J(z, \zeta)$ as the dual objectives. We apply some optimization algorithm $OPT$ for optimizing $\hat J(z, \zeta)$ with samples $\{s_i, a_i, r_i, s'_i\}_{i=1}^N$, $\{s^i_0\}_{i=1}^N\sim \beta$, and target actions $a'_i\sim\pi(s'_i),a^i_0\sim\pi(s^i_0)$ for $i=1,\dots,N$.
The output of $OPT$ is denoted by $(\hat{z}, \hat{\zeta})$. We also make the following definitions to capture the error of approximation with $\mathcal{F}$ for $z$ and $\mathcal{H}$ for $\zeta$ in optimizing $\hat J(z, \zeta)$:
\begin{align}
    \epsilon_{approx}(\mathcal{F})&\coloneqq\sup_{z\in S\times A\rightarrow\ \mathbb{R}}\inf_{z_{\mathcal{F}}\in\mathcal{F}}\left(\|z_{\mathcal{F}}-z\|_{\mathcal{D},1}+\| z_{\mathcal{F}}-z\|_{\rho\pi,1} \right)\\
    \epsilon_{approx}(\mathcal{H})&\coloneqq\sup_{\zeta\in S\times A\rightarrow\ \mathbb{R}}\inf_{\zeta_{\mathcal{H}}\in\mathcal{H}}\left(\|\zeta_{\mathcal{H}}-\zeta\|_{\mathcal{D},1}+\| \zeta_{\mathcal{H}}-\zeta\|_{\rho\pi,1} \right)\\
    \epsilon_{approx}(\mathcal{F},\mathcal{H})&\coloneqq\epsilon_{approx}(\mathcal{F})+\epsilon_{approx}(\mathcal{H})\label{eq:eps_FH}
\end{align}
We also define
\begin{equation}
    \epsilon_{opt}\coloneqq{\|\hat\zeta - \hat{\zeta}^*\|^2_{\mathcal{D}_t} + \big \|\big( \hat{z}^* - \hat{\mathcal{B}}^{\pi}\hat{z}^*\ \big) - \big(\hat{z} - \hat{\mathcal{B}}^\pi \hat{z} \big)\big\|^2_{\mathcal{D}_t} }\label{eq:opt}
\end{equation}
as the optimization error of OPT from DualDICE.

\begin{lemma}\label{lem:BoundonB}
By estimating $\hat{\nu}_t$ with DualDICE, the following bound holds:
\begin{equation*}
\begin{aligned}
\left|\mathbb{E}_{ \Tilde{\nu}_t}[D_{KL}(\pi_t^*|\pi)] - \mathbb{E}_{ \hat{\nu}_t}[D_{KL}(\pi_t^*|\pi)] \right|=\mathcal{O}\left(\sqrt{\frac{1}{M}+\epsilon_{opt}+\epsilon_{approx}(\mathcal{F},\mathcal{H})}\right),
\end{aligned}
\end{equation*}
\end{lemma}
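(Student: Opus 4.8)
### Proof proposal for Lemma~\ref{lem:BoundonB}

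\textbf{Overall approach.} The plan is to decompose the target quantity using the boundedness and Lipschitz properties of the KL divergence (Assumption~\ref{asmptn:newAsmptn1}), reducing the problem to controlling $\|\tilde{\nu}_t-\hat{\nu}_t\|_1$, and then to invoke the error analysis of DualChance---sorry, DualDICE---to bound this distributional distance. Concretely, writing $D_{KL}(\pi_t^*(\cdot|s)|\pi(\cdot|s)) \le C_\pi$ for every $s$ by Assumption~\ref{asmptn:newAsmptn1}(1), we have
\begin{equation*}
\left|\mathbb{E}_{\tilde{\nu}_t}[D_{KL}(\pi_t^*|\pi)]-\mathbb{E}_{\hat{\nu}_t}[D_{KL}(\pi_t^*|\pi)]\right| = \left|\sum_{s}(\tilde{\nu}_t(s)-\hat{\nu}_t(s))D_{KL}(\pi_t^*(s)|\pi(s))\right| \le C_\pi\|\tilde{\nu}_t-\hat{\nu}_t\|_1 .
\end{equation*}
So it suffices to show $\|\tilde{\nu}_t-\hat{\nu}_t\|_1 = \mathcal{O}(\sqrt{1/M+\epsilon_{opt}+\epsilon_{approx}(\mathcal{F},\mathcal{H})})$.

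\textbf{Key steps.} First I would recall that DualDICE returns an estimate $\hat{\nu}_t$ of the visitation distribution $\tilde{\nu}_t$ of $\hat{\pi}_t$ through the learned correction $\hat{w}(s,a)\approx \omega_{\tilde{\nu}_t/\mathcal{D}_t}(s,a)=\tilde{\nu}_t(s,a)/d^{\mathcal{D}_t}(s,a)$, so that $\hat{\nu}_t(s,a)=\hat{w}(s,a)\,d^{\mathcal{D}_t}(s,a)$ (after normalization). Hence $\|\tilde{\nu}_t-\hat{\nu}_t\|_1 \le \mathbb{E}_{d^{\mathcal{D}_t}}|\omega_{\tilde{\nu}_t/\mathcal{D}_t}-\hat{w}|$, up to lower-order normalization terms that are controlled by the reference-distribution property (Assumption on $d^{\mathcal{D}_t}$, $\|\omega_{\tilde{\nu}_t/\mathcal{D}_t}\|_\infty\le C_\omega$). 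Next, I would import the DualDICE analysis (following \citep{nachum2019dualdice}): the correction-estimation error in $\|\cdot\|_{\mathcal{D}_t,1}$ (or its square in $\|\cdot\|_{\mathcal{D}_t,2}$) is bounded by a sum of (i) a statistical/concentration term from using $N$ samples, which is $\mathcal{O}(1/\sqrt{N})$; (ii) the optimization error $\epsilon_{opt}$ defined in \eqref{eq:opt}; and (iii) the function-approximation error $\epsilon_{approx}(\mathcal{F},\mathcal{H})$ defined in \eqref{eq:eps_FH}. The key inequality is of the form $\|\hat{w}-\omega_{\tilde{\nu}_t/\mathcal{D}_t}\|_{\mathcal{D}_t}^2 = \mathcal{O}(1/N+\epsilon_{opt}+\epsilon_{approx}(\mathcal{F},\mathcal{H}))$, obtained by relating the primal--dual gap of $\hat{J}(z,\zeta)$ to the Bellman-residual error and exploiting strong convexity of the quadratic regularizer in $\zeta$. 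Then, since $N=\mathcal{O}(M^{1+1/\sigma})$ and we coarsely bound $N=\mathcal{O}(M)$ (or $\mathcal{O}(M^2)$, even better), the $1/N$ term becomes $\mathcal{O}(1/M)$. Finally, passing from the squared $\ell_2$-type bound to the $\ell_1$ bound via Cauchy--Schwarz (paying a square root and using that the state-action space is finite in the tabular setting, or using the $C_\omega$ bound) yields $\|\tilde{\nu}_t-\hat{\nu}_t\|_1=\mathcal{O}(\sqrt{1/M+\epsilon_{opt}+\epsilon_{approx}(\mathcal{F},\mathcal{H})})$, and combining with the first display completes the proof.

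\textbf{Main obstacle.} The technical heart is step (ii)--(iii): carefully tracking the DualDICE error decomposition so that the primal--dual optimization error $\epsilon_{opt}$ and the approximation error $\epsilon_{approx}(\mathcal{F},\mathcal{H})$ appear additively and with the stated form, rather than in some coupled or amplified way. In particular, one must verify that the change of variables in DualDICE (from the Bellman-residual formulation $J(z)$ in \eqref{eq:Jz-dualdice} back to the correction $w$) is stable---i.e., that small errors in $\hat{z}-\hat{\mathcal{B}}^\pi\hat{z}$ translate to small errors in the recovered correction---which is where the reference-distribution regularity assumption and the boundedness constant $C_\omega$ are essential. The concentration term requires a uniform deviation bound over the function classes $\mathcal{F},\mathcal{H}$, but since we are ultimately content with an $\mathcal{O}(1/\sqrt{N})=\mathcal{O}(1/\sqrt{M})$ rate and $N$ is polynomially large in $M$, any standard Rademacher/covering-number argument suffices and is not the bottleneck. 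Everything else---the reduction via Assumption~\ref{asmptn:newAsmptn1} and the Cauchy--Schwarz passage to $\ell_1$---is routine.
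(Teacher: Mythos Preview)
Your proposal is correct and follows essentially the same route as the paper. Both arguments reduce the target quantity to a DualDICE correction-estimation error via the uniform bound $D_{KL}(\pi_t^*|\pi)\le C_\pi$, and then decompose that error into an optimization piece $\epsilon_{opt}$, an approximation piece $\epsilon_{approx}(\mathcal{F},\mathcal{H})$, and a statistical piece controlled by $N=\mathcal{O}(M^2)$, using the $1$-strong convexity of $J(z)$ in the Bellman residual $(z-\mathcal{B}^{\hat\pi_t}z)$ to pass from a value gap $J(\hat z^*)-J(z^*)$ to a squared distance.

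The only cosmetic difference is in packaging: the paper works directly with the squared quantity $\big(\mathbb{E}_{\tilde\nu_t}[\cdot]-\mathbb{E}_{\hat\nu_t}[\cdot]\big)^2$ and inserts the intermediate point $\hat z^*$ to split it into $\epsilon_1$ (distance from $\hat z$ to $\hat z^*$, giving $\epsilon_{opt}$) and $\epsilon_2$ (distance from $\hat z^*$ to $z^*$ plus empirical-to-population gap, giving $\epsilon_{approx}+\epsilon_{stat}$), pulling $C_\pi^2$ out of each term. You instead first factor out $C_\pi$ to land on $\|\tilde\nu_t-\hat\nu_t\|_1\le \|\omega-\hat w\|_{\mathcal{D}_t,1}$ and then bound the correction error. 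These are the same estimates with the order of operations swapped; neither buys anything the other does not. One small correction: the relevant strong convexity is that of $J(z)$ in the variable $(z-\mathcal{T}^\pi z)$ (see \eqref{eq:Jz-dualdice}), not of the quadratic in $\zeta$, though the two are Fenchel-dual and the distinction is immaterial to the bound.
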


\begin{proof}



We begin with the following decomposition:
    \begin{align*}
        &\left(\mathbb{E}_{ \Tilde{\nu}_t}[D_{KL}(\pi_t^*|\pi)] - \mathbb{E}_{ \hat{\nu}_t}[D_{KL}(\pi_t^*|\pi)] \right)^2 \leq \\
        & \qquad \underbrace{2\bigg(\hat{\mathbb{E}}_{d^{\mathcal{D}_t}} \sum_a\big((\hat{z} - \hat{\mathcal{B}}^{\hat{\pi}_t}\hat{z})(s,a) - (\hat{z}^* - \hat{\mathcal{B}}^{\hat{\pi}_t}\hat{z}^*)(s,a)\big)D_{KL}(\pi_{t}^*|{\pi}) \bigg)^2}_{\epsilon_1}  \\  
        & \qquad+\underbrace{2\bigg(\hat{\mathbb{E}}_{d^{\mathcal{D}_t}} \sum_a (\hat{z}^* - \hat{\mathcal{B}}^{\hat{\pi}_t}\hat{z}^*)(s,a)D_{KL}(\pi_{t}^*|{\pi}) - \mathbb{E}_{d^{\mathcal{D}_t}}\sum_a\omega_{\hat{\pi}_t/\mathcal{D}_t}(s,a)D_{KL}(\pi_{t}^*|{\pi})) \bigg)^2}_{\epsilon_2} .
    \end{align*}
We will bound each term above separately.
\begin{equation*}
\begin{aligned}
    \epsilon_1 & \leq 2C_{\pi}^2  \bigg(\hat{\mathbb{E}}_{d^{\mathcal{D}_t}}\sum_a\big(\hat{z} - \hat{\mathcal{B}}^{\hat{\pi}_t}\hat{z} \big)(s,a) - \big(\hat{z}^* - \hat{\mathcal{B}}^{\hat{\pi}_t}{\hat{z}^*} \big)(s,a) \bigg)^2 \\ 
    & \leq 2C_\pi^2 \Bigg(\underbrace{\|\hat\zeta - \hat{\zeta}^*\|^2_{\mathcal{D}_t} + \big \|\big( \hat{z}^* - \hat{\mathcal{B}}^{\pi}\hat{z}^*\ \big) - \big(\hat{z} - \hat{\mathcal{B}}^\pi \hat{z} \big)\big\|^2_{\mathcal{D}_t} }_{\epsilon_{opt}}\Bigg),
    \end{aligned}
\end{equation*} 
where the error $\epsilon_{opt}$ is induced by the optimization OPT. The error term $\epsilon_2$ can be decomposed as
\begin{equation}
\begin{aligned}
    \epsilon_2 \leq & 2\underbrace{C_\pi^2 \bigg(\hat{\mathbb{E}}_{d^{\mathcal{D}_t}}\sum_a\big(\hat{z}^* - \hat{\mathcal{B}}^{\hat{\pi}_t}{\hat{z}^*} \big)(s,a) - \mathbb{E}_{d^{\mathcal{D}_t}}\sum_a \big(\hat{z}^* - {\mathcal{B}}^{\hat{\pi}_t}\hat{z}^* \big)(s,a) \bigg)^2}_{\epsilon_{stat}}  \\ 
    & \qquad+ 2C_\pi^2\bigg({\mathbb{E}}_{d^{\mathcal{D}_t}}\sum_a\big(\big(\hat{z}^* - {\mathcal{B}}^{\hat{\pi}_t}{\hat{z}^*} \big)(s,a) -  \omega_{\hat{\pi}_t/\mathcal{D}_t}(s,a)\big) \bigg)^2\\
    = & 2{\epsilon_{stat}}  + 2C_\pi^2\bigg({\mathbb{E}}_{d^{\mathcal{D}_t}}\sum_a\big(\big(\hat{z}^* - {\mathcal{B}}^{\hat{\pi}_t}{\hat{z}^*} \big)(s,a) -  \big({z}^* - {\mathcal{B}}^{\hat{\pi}_t}{{z}^*} \big)(s,a)\big) \bigg)^2,
    \end{aligned}
\end{equation}
where the equality is due to the result that ${z}^* - {\mathcal{B}}^{\hat{\pi}_t}{{z}^*}(s,a)=\omega_{\hat{\pi}_t/\mathcal{D}_t}(s,a)$ (see \cite[Eq. 17]{nachum2019dualdice}) and $\epsilon_{stat}$ is the error due to the finite number error. By \cite[Lem. 7]{nachum2019dualdice}, $\epsilon_{stat}=\mathcal{O}\left(\frac{\log M+\log \frac{1}{\delta}}{M^2}\right)$ with probability at least $1-\delta$, where we use the bound on the number of data as $\mathcal{O}(M^2)$. To bound the second term, use the fact that $J(z)$ as defined in \eqref{eq:Jz-dualdice} is $1$-strongly convex. Hence,
\begin{equation*}
    \begin{aligned}
    &\bigg({\mathbb{E}}_{d^{\mathcal{D}_t}}\sum_a\big(\big(\hat{z}^* - {\mathcal{B}}^{\hat{\pi}_t}{\hat{z}^*} \big)(s,a) -  \big(z^* - {\mathcal{B}}^{\hat{\pi}_t}z^* \big)(s,a)\big) \bigg)^2 \\
    &\leq \|\big(\hat{z}^* - {\mathcal{B}}^{\hat{\pi}_t}{\hat{z}^*} \big) -  \big(z^* - {\mathcal{B}}^{\hat{\pi}_t}z^* \big)\|_{\mathcal{D}_t}^2\\
    & \leq 2\big(J(\hat{z}^*) - J(z^*)\big)
    \end{aligned}
\end{equation*}

where $(i)$ follows from \citep[Section D.1]{nachum2019dualdice} $\epsilon_{approx}(\mathcal{F})$ is the error due to the approximation with $\mathcal{F}$ for $z$, $\epsilon_{approx}(\mathcal{H})$ is the error due to the approximation with $\mathcal{H}$ for $\zeta$, and $\epsilon_{est}$ is the estimation error, and $L$ is the Lipschitz constant for $f$.

To bound the error between $J(\hat{z}^*)$ and $J(z^*)$, we use the  decomposition  suggested in \citep{nachum2019dualdice}:
\begin{equation}
    J(\hat{z}^*)-J(z^*)=\underbrace{J(\hat{z}^*)-L(\hat{z}^*)}_{(i)}+\underbrace{L(\hat{z}^*)-L({z}^*_{\mathcal{F}})}_{(ii)}+\underbrace{L({z}^*_{\mathcal{F}})-J({z}^*_{\mathcal{F}})}_{(iii)}+\underbrace{J({z}^*_{\mathcal{F}})-J(z^*)}_{(iv)},
\end{equation}
where $(i)\leq\frac{2C_\omega}{1-\gamma}\|\zeta^*_\mathcal{H}-\zeta^*\|_{\mathcal{D}_t,1}\leq\frac{2C_\omega}{1-\gamma}\epsilon_{approx}(\mathcal{H})$, $(ii)=\mathcal{O}\left(\frac{\sqrt{\log M+\log\frac{1}{\delta}}}{M}\right)$ by \cite[Lem. 6]{nachum2019dualdice} (by also plugging in $N=\mathcal{O}(M^2)$), $(iii)\leq 0$ by definition, and $(iv)=\mathcal{O}(\epsilon_{approx}(\mathcal{F}))$. Note that we refer the reader to \cite[Sec. D.1]{nachum2019dualdice} for the above bounds. Therefore, we can bound  $J(\hat{z}^*)-J(z^*)$ on the order of $\mathcal{O}\left(\epsilon_{approx}(\mathcal{H})+\epsilon_{approx}(\mathcal{F})+\frac{\sqrt{\log M+\log\frac{1}{\delta}}}{M}\right)$.

Combining the above relations, while noting that $\frac{1}{\sqrt{M}}$ decreases slower than $\frac{1}{{M}}$ in terms of $M$ and is thus kept as the upper bound, we have shown the result. 
\end{proof}

\subsection{Putting it together: bounding the KL divergence estimation error}
\begin{theorem}[KL divergence estimation error bound] \label{thm:dualDICEAppendix} The following bound holds:
\begin{equation*}
\begin{aligned}
    \quad & |\mathbb{E}_{\nu_t^*}[D_{KL}(\pi_t^*|\pi)] - \mathbb{E}_{ \hat{\nu}_t}[D_{KL}(\hat{\pi}_t|\pi)]| \\ \quad &  = \mathcal{O}\bigg(h\left(\frac{1}{\sqrt{M}}\right)+\frac{1}{\sqrt{M}}+ \sqrt{\epsilon_{opt}}+\sqrt{\epsilon_{approx}(\mathcal{F},\mathcal{H})}\bigg),
    \end{aligned}
\end{equation*}
where $h$ is a strictly increasing continuous function with the property that $h(0)=0$ as specified in Lemma \ref{prop:Bolte}, $\epsilon_{approx}(\mathcal{F},\mathcal{H})$ is defined in \eqref{eq:eps_FH}, and $\epsilon_{opt}$ is defined in \eqref{eq:opt}.
\end{theorem}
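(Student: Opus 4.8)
The plan is to prove Theorem~\ref{thm:dualDICEAppendix} by assembling the three component bounds already established in this section via the triangle inequality. First I would write the source-of-origin decomposition
\begin{align*}
\mathbb{E}_{\nu_t^*}[D_{KL}(\pi_t^*|\pi)] - \mathbb{E}_{\hat{\nu}_t}[D_{KL}(\hat{\pi}_t|\pi)] = (A) + (B) + (C),
\end{align*}
with $(A) = \mathbb{E}_{\nu_t^*}[D_{KL}(\pi_t^*|\pi)] - \mathbb{E}_{\Tilde{\nu}_t}[D_{KL}(\pi_t^*|\pi)]$, $(B) = \mathbb{E}_{\Tilde{\nu}_t}[D_{KL}(\pi_t^*|\pi)] - \mathbb{E}_{\hat{\nu}_t}[D_{KL}(\pi_t^*|\pi)]$, and $(C) = \mathbb{E}_{\hat{\nu}_t}[D_{KL}(\pi_t^*|\pi)] - \mathbb{E}_{\hat{\nu}_t}[D_{KL}(\hat{\pi}_t|\pi)]$, exactly as in \eqref{eq:error_decompose}. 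By the triangle inequality it suffices to bound $|(A)|$, $|(B)|$, $|(C)|$ separately, which is precisely what Lemmas~\ref{lem:BoundonA}, \ref{lem:BoundonB}, and \ref{lem:BoundonC} deliver.

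Concretely, Lemmas~\ref{lem:BoundonA} and \ref{lem:BoundonC} each give $|(A)|, |(C)| = \mathcal{O}(h(1/\sqrt{M}) + 1/\sqrt{M})$, while Lemma~\ref{lem:BoundonB} gives $|(B)| = \mathcal{O}\big(\sqrt{1/M + \epsilon_{opt} + \epsilon_{approx}(\mathcal{F},\mathcal{H})}\big)$. Using $\sqrt{a+b+c}\le\sqrt{a}+\sqrt{b}+\sqrt{c}$ for nonnegative $a,b,c$, the latter splits as $\mathcal{O}(1/\sqrt{M} + \sqrt{\epsilon_{opt}} + \sqrt{\epsilon_{approx}(\mathcal{F},\mathcal{H})})$. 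Adding the three contributions and absorbing the several $1/\sqrt{M}$ terms into one yields the claimed bound $\mathcal{O}(h(1/\sqrt{M}) + 1/\sqrt{M} + \sqrt{\epsilon_{opt}} + \sqrt{\epsilon_{approx}(\mathcal{F},\mathcal{H})})$ (denoted $\epsilon_t$ in the main text, cf. Theorem~\ref{thm:dualDICE}). This last step is pure bookkeeping.

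The substantive work sits inside the component lemmas, and the main obstacle is Lemma~\ref{lem:bound-definable-final}, which underpins both $(A)$ and $(C)$: it converts the within-task suboptimality gap of order $1/\sqrt{M}$ into a bound on the \emph{parameter} distance $\|\theta_t^* - \hat{\theta}_t\|$. The difficulty is that $\hat{\pi}_t$ need only \emph{approximately} satisfy the constraints, so $\hat{\theta}_t$ may be infeasible for $\mathcal{F}_{t,d}$ and one cannot run a subgradient flow of $J_{t,0}(\pi_\theta)+\mathbb{I}_{\mathcal{F}_{t,d}}(\pi_\theta)$ to reach $\theta_t^*$. I would handle this exactly as in the excerpt: relax the thresholds to $\tilde d = d + \delta$ with $\delta = \mathcal{O}(1/\sqrt{M})$ (matching CRPO's constraint-violation bound) so that $\hat{\theta}_t\in\mathcal{F}_{t,\tilde d}$ with high probability; bound $\|\Tilde{\theta}_t^* - \hat{\theta}_t\|\le\mathcal{O}(h(1/\sqrt{M}))$ by integrating $h\circ f$ along a subgradient flow curve started at $\hat{\theta}_t$ and invoking the nonsmooth Kurdyka--\L{}ojasiewicz inequality (Proposition~\ref{prop:Bolte}, which needs definability, Assumption~\ref{asmptn:Definable}); bound $\|\theta_t^* - \Tilde{\theta}_t^*\| = \mathcal{O}(\|d-\tilde d\|) = \mathcal{O}(1/\sqrt{M})$ via the definability and resulting piecewise-$C^p$ Lipschitzness of the solution map $d\mapsto\theta_t(d)$ (Lemmas~\ref{lem:DefinableTheta} and \ref{lem:theta-tilde-bd}); and close with the triangle inequality. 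Feeding $\|\theta_t^* - \hat{\theta}_t\|$ through the boundedness and $L_g$-Lipschitzness of the softmax KL divergence (Assumption~\ref{asmptn:newAsmptn1}), together with the Lipschitz dependence of the discounted state visitation distribution on policy parameters (\cite[Lemma~3]{xu2020improving}), then gives $(A)$ and $(C)$; the statistical error in $(B)$, of order $(\log M)/M^2 = o(1/M)$ with high probability, and the approximation/optimization errors come directly from the DualDICE analysis cited in Lemma~\ref{lem:BoundonB}.
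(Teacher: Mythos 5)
Your proposal is correct and follows essentially the same route as the paper: the same decomposition into $(A)$, $(B)$, $(C)$, the same component lemmas (including the subgradient-flow/Kurdyka--\L{}ojasiewicz argument and the definability of the solution map underlying Lemma~\ref{lem:bound-definable-final}), and the same final assembly via the triangle inequality and $\sqrt{a+b+c}\leq\sqrt{a}+\sqrt{b}+\sqrt{c}$. No gaps to report.
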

\begin{proof}
The result follows by combining the upper bounds for the error terms $(A)$, $(B)$ and $(C)$, as specified by Lemmas \ref{lem:BoundonA}, \ref{lem:BoundonB}, and \ref{lem:BoundonC}. We also apply the elementary inequality $\sqrt{a+b+c}\leq\sqrt{a}+\sqrt{b}+\sqrt{c}$ to further simplify the bound. 
\end{proof}

\begin{remark}
The bound above depends on the number of iterations $M$ per task in different ways. By increasing $M$, we can expect to reduce the suboptimality gap, which can help reduce the distance between $\hat{\pi}_t$ to the optimal set of policies. Also, increasing $M$ results in a larger dataset used to estimate its stationary distribution offline by DualDICE, which reduces the estimation error. The bound indicates that the only terms that do not vanish as we increase the number of iterations per task are those due to the inherent optimization error $\epsilon_{opt}$ and function approximation error $\epsilon_{approx}(\mathcal{F},\mathcal{H})$. In the case those terms are negligible (which are possible in view of the recent breakthrough in overparametrized deep learning \citep{zhang2021understanding,neyshabur2019towards,li2018learning,zou2018stochastic,arora2019fine}, see also \citep{fan2021selective} for a survey), then the KL divergence estimation can be driven to arbitrary accuracy.
\end{remark}

\section{Proofs for the section \ref{subsec:adapt_learning_rates}}

\subsection{TAOG and TACV bounds for CRPO with adaptive learning rates} \label{subsec:appAdaptiveRate}
This section presents the task-averaged regret upper bounds for the CRPO when the adaptive learning rates $\alpha_t$ are used for each task, and the Q-estimation error is accounted for. We also recall that $d_{t,i}$ is the constraint upper bound for $i=1,...,p$ and $\eta_t$ is the tolerance for constraint violation (i.e., increasing the upper bound to $d_{t,i}+\eta_t$). For a single run of CRPO in task $t$, we denote $\mathcal{N}_{t,0}$ as the set of time steps the reward is maximized and $\mathcal{N}_{t,i}$ as the set of time step constraint $i$ is minimized. The Q-function in the CRPO algorithm is learned through TD learning with the total number of iterations denoted by $K_{in}$. The Q-function of objective $i$ for policy $\pi_{t,m}$ at time step $m$ is denoted by $Q_{t,m}^i$, and the estimated Q-function is denoted by $\Bar{Q}_{t,m}^i$. The maximum value for both rewards and constraints is assumed to be $c_{max}$.

With all notations for CRPO in place,
we present the following result \citep{xu2021crpo}

\begin{lemma}\label{lem:appCRPOLem8} For the CRPO algorithm in the tabular settings with learning rates $\alpha_{t}$, the following bound holds:
\label{lem:Lem8_CRPO}
\begin{equation*}
    \begin{aligned}
     & \alpha_{t}\sum_{m \in \mathcal{N}_{t,0}}\Big(J_{t,0}(\pi_t^*) - J_{t,0}(\pi_{t,m}) \Big) + \eta_t \alpha_{t}\sum_{i=1}^p |\mathcal{N}_{t,i}| \\  & \leq \mathbb{E}_{s\sim \nu_t^*}[D_{KL}(\pi_t^*|\pi_{t,0})]  + \frac{2c_{max}^2|\mathcal{S}||\mathcal{A}|}{(1-\gamma)^3}\alpha_{t}^2\sum_{i=0}^p|\mathcal{N}_{t,i}| \\  & + \sum_{i=0}^p\sum_{m \in \mathcal{N}_{t,i}}\frac{\alpha_{t}(3+(1-\gamma)^2+3\alpha_{t}c_{max})}{(1-\gamma)^2}\|Q^i_{t,m} - \bar{Q}^{i}_{t,m}\|_2 
    \end{aligned}
\end{equation*}
\end{lemma}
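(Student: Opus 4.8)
The plan is to reproduce the single-task CRPO analysis of \citep{xu2021crpo}, keeping the learning rate $\alpha_t$ as a free task-indexed parameter rather than a fixed value, and propagating the critic errors $\|Q^i_{t,m}-\bar Q^i_{t,m}\|_2$ through every inequality instead of assuming exact critics. The whole argument is a potential/mirror-descent telescoping argument with potential $\Phi_m \coloneqq \mathbb{E}_{s\sim\nu_t^*}[D_{KL}(\pi_t^*(\cdot|s)\,|\,\pi_{t,m}(\cdot|s))]$.

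\textbf{Per-step inequality.} At step $m$, CRPO performs one natural-policy-gradient step, which under the softmax parametrization is the exponential-weights update $\pi_{t,m+1}(a|s)\propto \pi_{t,m}(a|s)\exp\!\big(\pm\frac{\alpha_t}{1-\gamma}\bar Q^{i_m}_{t,m}(s,a)\big)$, with $i_m=0$ and a ``$+$'' sign if $m\in\mathcal{N}_{t,0}$, and $i_m=i$ with a ``$-$'' sign if $m\in\mathcal{N}_{t,i}$. Expanding $\log Z_{t,m}(s)$ to second order and using $\|\bar Q^i_{t,m}\|_\infty\le c_{max}/(1-\gamma)$, one gets for every $s$ a bound of the form
\[
\pm\tfrac{\alpha_t}{1-\gamma}\Big(\mathbb{E}_{\pi_t^*(\cdot|s)}[\bar Q^{i_m}_{t,m}(s,\cdot)]-\mathbb{E}_{\pi_{t,m}(\cdot|s)}[\bar Q^{i_m}_{t,m}(s,\cdot)]\Big)\le D_{KL}(\pi_t^*|\pi_{t,m})(s)-D_{KL}(\pi_t^*|\pi_{t,m+1})(s)+(\text{2nd-order term}).
\]
Averaging over $s\sim\nu_t^*$ and collecting second-order contributions yields the term $\tfrac{2c_{max}^2|\mathcal{S}||\mathcal{A}|}{(1-\gamma)^3}\alpha_t^2$ per step (a cross term in the expansion also produces the $3\alpha_t c_{max}$ piece multiplying a critic error).

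\textbf{Converting to value gaps and telescoping.} I then invoke the performance-difference lemma: $\mathbb{E}_{s\sim\nu_t^*}\big[\mathbb{E}_{\pi_t^*}[Q^{i_m}_{t,m}]-\mathbb{E}_{\pi_{t,m}}[Q^{i_m}_{t,m}]\big]=(1-\gamma)\big(J_{t,i_m}(\pi_t^*)-J_{t,i_m}(\pi_{t,m})\big)$, the $\tfrac{1}{1-\gamma}$ from the NPG scaling cancelling the $(1-\gamma)$ here, and swap $Q^{i_m}_{t,m}$ for $\bar Q^{i_m}_{t,m}$ at the cost of $\|Q^{i_m}_{t,m}-\bar Q^{i_m}_{t,m}\|$ terms. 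On a reward step this produces $J_{t,0}(\pi_t^*)-J_{t,0}(\pi_{t,m})$ directly; on a constraint step $m\in\mathcal{N}_{t,i}$, CRPO selected $i$ precisely because $\bar J_{t,i}(\pi_{t,m})>d_{t,i}+\eta_t$, and since $\pi_t^*$ is feasible ($J_{t,i}(\pi_t^*)\le d_{t,i}$), one gets $J_{t,i}(\pi_{t,m})-J_{t,i}(\pi_t^*)\ge\eta_t-(\text{critic error on }i)$, so that step contributes at least $\alpha_t\eta_t$ minus a critic error. Summing over $m=0,\dots,M-1$: the KL terms telescope to $\Phi_0-\Phi_M\le\mathbb{E}_{s\sim\nu_t^*}[D_{KL}(\pi_t^*|\pi_{t,0})]$; the second-order terms sum to $\tfrac{2c_{max}^2|\mathcal{S}||\mathcal{A}|}{(1-\gamma)^3}\alpha_t^2\sum_{i=0}^p|\mathcal{N}_{t,i}|$; and all critic-error contributions combine into $\sum_{i=0}^p\sum_{m\in\mathcal{N}_{t,i}}\tfrac{\alpha_t(3+(1-\gamma)^2+3\alpha_t c_{max})}{(1-\gamma)^2}\|Q^i_{t,m}-\bar Q^i_{t,m}\|_2$, which is the claim.

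\textbf{Main obstacle.} The analytical content is standard; the delicate part is the last bookkeeping step. I need to make every occurrence of a $Q$-versus-$\bar Q$ discrepancy line up with the single stated coefficient: the ``$3$'' must absorb exactly the three error sources (the $\log Z_{t,m}$ expansion, the PDL $Q\!\to\!\bar Q$ swap on the $\pi_t^*$ and $\pi_{t,m}$ sides, and on constraint steps the $\bar J\!\to\! J$ swap), the $3\alpha_t c_{max}$ must match the cross term in the second-order expansion, and the $\|\cdot\|_\infty$-to-$\|\cdot\|_2$ conversions must be done so that no spurious $|\mathcal{S}|$ or $|\mathcal{A}|$ factor leaks into the error term. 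Keeping the learning rate $\alpha_t$ symbolic throughout (rather than substituting its eventual value) is routine but must be done consistently so that the $\alpha_t$ and $\alpha_t^2$ dependencies appear exactly as stated.
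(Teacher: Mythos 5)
Your proposal is correct and follows essentially the same route as the paper: the paper simply imports the per-step inequality (your mirror-descent/exponential-weights expansion plus performance-difference step) as Lemma 7 of \citep{xu2021crpo}, then sums, telescopes the KL potential, and converts the constraint-step gaps via $J_{t,i}(\pi_{t,m})-J_{t,i}(\pi_t^*)\ge\eta_t-\|Q^i_{t,m}-\bar Q^i_{t,m}\|$, exactly as you describe; the extra $(1-\gamma)^2$ in the numerator of the critic-error coefficient indeed arises from that last swap. The only difference is that you re-derive the cited per-step bound from scratch, which is harmless.
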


\begin{proof}

If $m \in \mathcal{N}_{t,0}$, by \citep[Lemma 7]{xu2021crpo}, we have that:
\begin{align}
    \alpha_{t}\big(J_{t,0}(\pi_t^*) - J_{t,0}(\pi_{t,m}) \big)  & \leq \mathbb{E}_{\nu_t^*}[D_{KL}(\pi_t^*|\pi_{t,m}) - D_{KL}(\pi_t^*|\pi_{t,m+1})]+ \frac{2c^2_{max}|\mathcal{S}||\mathcal{A}|}{(1-\gamma)^3} \alpha_{t}^2  \nonumber\\  &\qquad + \frac{3\alpha_{t}(1+\alpha_{t}c_{max})}{(1-\gamma)^2}\|Q_{t,m}^0 - \bar{Q}_{t,m}^0\|_2.\label{eq:appEq12CRPO}
    \end{align}
Similarly, if $m \in \mathcal{N}_{t,i}$, we can write
\begin{align}
    \alpha_{t}\big(J_{t,i}(\pi_{t,m}) - J_{t,i}(\pi_t^*) \big)  & \leq \mathbb{E}_{\nu_t^*}[D_{KL}(\pi_t^*|\pi_{t,m}) - D_{KL}(\pi_t^*|\pi_{t,m+1})]+ \frac{2c^2_{max}|\mathcal{S}||\mathcal{A}|}{(1-\gamma)^3} \alpha_{t}^2 \nonumber \\  & \qquad+ \frac{3\alpha_{t}(1+\alpha_{t}c_{max})}{(1-\gamma)^2}\|Q_{t,m}^i - \bar{Q}_{t,m}^i\|_2.\label{eq:appEq13CRPO}
    \end{align}
Taking the summation of \eqref{eq:appEq12CRPO} and \eqref{eq:appEq13CRPO} from $m = 0$ to $M-1$, we get
    \begin{align*}
        &\alpha_{t} \sum_{m \in \mathcal{N}_{t,0}}\big(J_{t,0}(\pi_t^*) - J_{t,0}(\pi_{t,m}) \big) + \sum_{i=1}^p\sum_{m \in \mathcal{N}_{t,i}}\alpha_{t}\big(J_{t,i}(\pi_{t,m}) - J_{t,i}(\pi_t^*) \big) \\
        &\leq \mathbb{E}_{\nu_t^*}[D_{KL}(\pi_t^*|\pi_{t,0})] + \frac{2c^2_{max}|\mathcal{S}||\mathcal{A}|}{(1-\gamma)^3} \sum_{i=0}^p\alpha_{t}^2|\mathcal{N}_{t,i}| \\
        &\qquad\qquad\qquad+ \sum_{i=0}^p\sum_{m \in \mathcal{N}_{t,i}}\frac{3\alpha_{t}(1+\alpha_{t}c_{max})}{(1-\gamma)^2}\|Q_{t,m}^i - \bar{Q}_{t,m}^i\|_2
    \end{align*}
Since $J_{t,i}(\pi_{t,m}) - J_{t,i}(\pi_t^*) \geq \eta_t - \|Q_{t,m}^i - \bar{Q}_{t,m}^i\|$ \cite[Eq. 15]{xu2021crpo}, by rearranging the terms above, we obtain the result.
\end{proof}

Next, we study the condition on the maximum constraint violation threshold $\eta_t$ and how it affects $\mathcal{N}_{t,0}$ and the upper bounds for TAOG and TACV. We make the following assumption to proceed. 
\begin{assumption}\label{asm:crpo_cj}
    Assume that $\sum_{m\in\mathcal{N}_{t,0}}J_{t,0}(\pi_t^*)-J_{t,0}(\pi_{t,m})\geq c_J$ for some $c_J\in(-\frac{1}{2}\alpha_{t}\eta_tM,0]$.
\end{assumption}
The assumption above indicates that the policies in $\mathcal{N}_{t,0}$ do not have rewards higher than the optimal policy by more than $\frac{1}{2}\alpha_{t}\eta_t$ on average. Note that it is indeed possible to have rewards higher than the optimal policy if the corresponding policy does not satisfy some safety constraints (i.e., infeasible policy). However, it is not a strong assumption since we are comparing it with the optimal policy.  The above assumption is not present in \citep{xu2021crpo}, which invalidates one of its derivation steps (in particular, \cite[Thm. 3]{xu2021crpo}), and is thus introduced to rectify the proof. 

\begin{lemma}\label{lem:appCRPOlem9}
Suppose that the following condition holds:
\begin{align}
    \frac{1}{2}\eta_t M\alpha_{t}  & \geq \mathbb{E}_{ \nu_t^*}[D_{KL}(\pi_t^*|\pi_{t,0})] + \frac{2c^2_{max}|\mathcal{S}||\mathcal{A}|M}{(1-\gamma)^3} \sum_{i=0}^p \alpha_{t}^2 \nonumber\\  
    & + \sum_{i=0}^p \sum_{m \in \mathcal{N}_{t,i}} \frac{\alpha_{t}(3+(1-\gamma)^2+3\alpha_{t}c_{max})}{(1-\gamma)^{2}}\|Q_{t,m}^i - \bar{Q}_{t,m}^i\|_2.\label{eq:crpo_eta}
    \end{align}
Then, we have that $\mathcal{N}_{t,0}\neq \emptyset$, i.e., $\hat{\pi}_t$ is well-defined; also, one the following two statements must hold,
\begin{enumerate}
    \item $|\mathcal{N}_{t,0}| \geq M/2$, 
    \item $\sum_{m \in \mathcal{N}_{t,0}}\Big(J_{t,0}(\pi_t^*) - J_{t,0}(\pi_{t,m}) \Big) \leq 0$. 
\end{enumerate}
Under assumption \ref{asm:crpo_cj}, we also have the following holds:
$$|\mathcal{N}_{t,0}| \geq \bigg(\frac{1}{2} - \kappa \bigg)M$$
for some $\kappa\in(0,\frac{1}{2})$.
\end{lemma}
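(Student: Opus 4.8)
\textbf{Proof plan for Lemma \ref{lem:appCRPOlem9}.}

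The plan is to mimic the structure of the proof of Lemma \ref{lemma: three events hold}, but now tracking the Q-estimation error terms and the weaker form of the constraint-violation budget. The starting point is the bound from Lemma \ref{lem:appCRPOLem8}, which, after dropping the nonnegative term $\eta_t\alpha_t\sum_{i=1}^p|\mathcal{N}_{t,i}|$, gives
\begin{equation*}
\alpha_t\sum_{m\in\mathcal{N}_{t,0}}\big(J_{t,0}(\pi_t^*)-J_{t,0}(\pi_{t,m})\big)\leq \mathbb{E}_{s\sim\nu_t^*}[D_{KL}(\pi_t^*|\pi_{t,0})]+\frac{2c_{max}^2|\mathcal{S}||\mathcal{A}|}{(1-\gamma)^3}\alpha_t^2\sum_{i=0}^p|\mathcal{N}_{t,i}|+\sum_{i=0}^p\sum_{m\in\mathcal{N}_{t,i}}\frac{\alpha_t(3+(1-\gamma)^2+3\alpha_t c_{max})}{(1-\gamma)^2}\|Q_{t,m}^i-\bar Q_{t,m}^i\|_2,
\end{equation*}
and by $\sum_{i=0}^p|\mathcal{N}_{t,i}|=M$ the right-hand side is at most the right-hand side of \eqref{eq:crpo_eta}, hence bounded by $\tfrac12\eta_t M\alpha_t$. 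First I would verify $\mathcal{N}_{t,0}\neq\emptyset$: if instead $\mathcal{N}_{t,0}=\emptyset$, then the left-hand side is an empty sum equal to $0$, which is not itself a contradiction, so I would instead go back to the full Lemma \ref{lem:appCRPOLem8} bound: with $\mathcal{N}_{t,0}=\emptyset$ the term $\eta_t\alpha_t\sum_{i=1}^p|\mathcal{N}_{t,i}|=\eta_t\alpha_t M$ appears on the left, giving $\eta_t\alpha_t M\leq$ (RHS of \eqref{eq:crpo_eta}$)\leq\tfrac12\eta_t M\alpha_t$, i.e. $\eta_t\alpha_t M\le 0$, contradicting $\eta_t,\alpha_t,M>0$. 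So $\hat\pi_t$ is well-defined.

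Next, for the dichotomy, suppose statement 2 fails, i.e. $\sum_{m\in\mathcal{N}_{t,0}}(J_{t,0}(\pi_t^*)-J_{t,0}(\pi_{t,m}))>0$. Since $|J_{t,0}(\pi_t^*)-J_{t,0}(\pi_{t,m})|\leq \tfrac{1}{1-\gamma}$ (or more simply $c_{max}/(1-\gamma)$) for each $m$, the left-hand side of the displayed inequality is at most $\alpha_t|\mathcal{N}_{t,0}|\cdot\tfrac{c_{max}}{1-\gamma}$; but this upper bound on the positive quantity is too crude in the wrong direction. The right approach is the contrapositive argument used for item 1 of Lemma \ref{lemma: three events hold}: I want to show that if $|\mathcal{N}_{t,0}|<M/2$ then statement 2 must hold. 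Indeed, if $|\mathcal{N}_{t,0}|<M/2$, one can redo the estimate keeping the $\eta_t\alpha_t\sum_{i=1}^p|\mathcal{N}_{t,i}|$ term: we have $\sum_{i=1}^p|\mathcal{N}_{t,i}|=M-|\mathcal{N}_{t,0}|>M/2$, so Lemma \ref{lem:appCRPOLem8} gives
\begin{equation*}
\alpha_t\sum_{m\in\mathcal{N}_{t,0}}\big(J_{t,0}(\pi_t^*)-J_{t,0}(\pi_{t,m})\big)+\tfrac12\eta_t\alpha_t M< \text{(RHS of \eqref{eq:crpo_eta})}\leq \tfrac12\eta_t M\alpha_t,
\end{equation*}
whence $\alpha_t\sum_{m\in\mathcal{N}_{t,0}}(J_{t,0}(\pi_t^*)-J_{t,0}(\pi_{t,m}))<0$, i.e. statement 2 holds. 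Contrapositively, if statement 2 fails then $|\mathcal{N}_{t,0}|\geq M/2$, which is statement 1; this establishes the dichotomy.

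Finally, for the quantitative bound under Assumption \ref{asm:crpo_cj}, I combine the two cases. If statement 1 holds, $|\mathcal{N}_{t,0}|\geq M/2\geq(\tfrac12-\kappa)M$ trivially. If statement 2 holds, then $\sum_{m\in\mathcal{N}_{t,0}}(J_{t,0}(\pi_t^*)-J_{t,0}(\pi_{t,m}))\leq 0$; combined with Assumption \ref{asm:crpo_cj}, which says this sum is $\geq c_J$ with $c_J\in(-\tfrac12\alpha_t\eta_t M,0]$, and using that each summand is at least $-c_{max}/(1-\gamma)$ in absolute value (so the number of ``badly negative'' terms is controlled), I get a lower bound on $|\mathcal{N}_{t,0}|$: writing the negative sum as at most $c_{max}/(1-\gamma)$ times the number of terms but at least $c_J>-\tfrac12\alpha_t\eta_t M$, rearrangement yields $|\mathcal{N}_{t,0}|\geq(\tfrac12-\kappa)M$ where $\kappa$ absorbs the ratio $\tfrac{\alpha_t\eta_t}{2}\cdot\tfrac{1-\gamma}{c_{max}}$ appropriately (choosing parameters so that $\kappa\in(0,\tfrac12)$). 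The main obstacle I anticipate is precisely this last bookkeeping step: getting a clean $(\tfrac12-\kappa)M$ bound out of Assumption \ref{asm:crpo_cj} requires carefully pinning down how $\kappa$ depends on $\alpha_t,\eta_t,c_{max},\gamma$ and verifying the choices are mutually consistent with the condition \eqref{eq:crpo_eta} on $\eta_t$; the rest is a direct adaptation of the CRPO single-task argument with the estimation-error terms carried along.
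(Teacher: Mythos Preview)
Your arguments for $\mathcal{N}_{t,0}\neq\emptyset$ and for the dichotomy are correct and essentially match the paper's (you phrase the dichotomy as the contrapositive, which is equivalent).

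The gap is in the final part. Your plan for case~2 cannot work: knowing that $\sum_{m\in\mathcal{N}_{t,0}}(J_{t,0}(\pi_t^*)-J_{t,0}(\pi_{t,m}))\in[c_J,0]$ together with a per-term bound $|J_{t,0}(\pi_t^*)-J_{t,0}(\pi_{t,m})|\le c_{max}/(1-\gamma)$ gives \emph{no} lower bound on $|\mathcal{N}_{t,0}|$ whatsoever. A single term in the interval $[c_J,0]$ is perfectly consistent with those constraints, so no ``rearrangement'' can squeeze out $(\tfrac12-\kappa)M$. The quantity you end up calling $\kappa$ bears no relation to $c_J$.

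The missing idea is that you should not split into the two cases of the dichotomy at all. Instead, feed Assumption~\ref{asm:crpo_cj} directly back into Lemma~\ref{lem:appCRPOLem8}: the assumption says the first sum on the left is $\ge c_J$, so
\[
\alpha_t c_J+\eta_t\alpha_t\sum_{i=1}^p|\mathcal{N}_{t,i}|\ \le\ \text{(RHS of Lemma~\ref{lem:appCRPOLem8})}\ \le\ \tfrac12\eta_t M\alpha_t.
\]
Now \emph{define} $\kappa\coloneqq -c_J/(\alpha_t\eta_t M)$, which lies in $(0,\tfrac12)$ by the range of $c_J$. If $|\mathcal{N}_{t,0}|<(\tfrac12-\kappa)M$, then $\sum_{i=1}^p|\mathcal{N}_{t,i}|>(\tfrac12+\kappa)M$, and the display above yields
\[
-\kappa\alpha_t\eta_t M+(\tfrac12+\kappa)\alpha_t\eta_t M<\tfrac12\eta_t M\alpha_t,
\]
i.e.\ $\tfrac12\alpha_t\eta_t M<\tfrac12\alpha_t\eta_t M$, a contradiction. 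This is precisely the same contradiction mechanism you used for the dichotomy, just shifted by $\kappa$; $\kappa$ is determined by $c_J$, not by $c_{max}$ or $\gamma$.
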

\begin{proof}
The proof for $\mathcal{N}_{t,0}\neq \emptyset$ follows directly from \cite[Lem. 9]{xu2021crpo}. For the second statement, we consider the case that $\sum_{m \in \mathcal{N}_{t,0}}\Big(J_{t,0}(\pi_t^*) - J_{t,0}(\pi_{t,m}) \Big) > 0$. From Lemma \ref{lem:appCRPOLem8}, it implies that
\begin{equation*}
    \begin{aligned}
        \eta_t\sum_{i=1}^p \alpha_{t}|\mathcal{N}_{t,i}|  & \leq \mathbb{E}_{s\sim \nu^*}[D_{KL}(\pi_t^*|\pi_{t,0})]  + \frac{2c_{max}^2|\mathcal{S}||\mathcal{A}|}{(1-\gamma)^3}\sum_{i=0}^p\alpha_{t}^2|\mathcal{N}_{t,i}| \\  & + \sum_{i=0}^p\sum_{m \in \mathcal{N}_{t,i}}\frac{\alpha_{t}(3+(1-\gamma)^2+3\alpha_{t}c_{max})}{(1-\gamma)^2}\|Q^i_{t,m} - \bar{Q}^{i}_{t,m}\|_2  .
    \end{aligned}
\end{equation*}
Suppose that $|\mathcal{N}_{t,0}| < M/2$, then $\sum_{i=1}^p|\mathcal{N}_{t,i}| >M/2$, we have that
\begin{equation*}
    \begin{aligned}
        \frac{1}{2} \alpha_{t}\eta_tM  & < \mathbb{E}_{s\sim \nu^*}[D_{KL}(\pi_t^*|\pi_{t,0})]  + \frac{2c_{max}^2|\mathcal{S}||\mathcal{A}|}{(1-\gamma)^3}\sum_{i=0}^p\alpha_{t}^2|\mathcal{N}_{t,i}| \\ 
        & \qquad\qquad + \sum_{i=0}^p\sum_{m \in \mathcal{N}_{t,i}}\frac{\alpha_{t}(3+(1-\gamma)^2+3\alpha_{t}c_{max})}{(1-\gamma)^2}\|Q^i_{t,m} - \bar{Q}^{i}_{t,m}\|_2,
    \end{aligned}
\end{equation*}
which contradicts \eqref{eq:crpo_eta}. Hence, we must have $|\mathcal{N}_{t,0}| \geq M/2$.

Next, we show that $|\mathcal{N}_{t,0}| \geq \big(\frac{1}{2} - \kappa \big)M$ for some $\kappa\in(0,\frac{1}{2})$. Under assumption \ref{asm:crpo_cj} and by Lemma~\ref{lem:appCRPOLem8}, we have that 
\begin{equation*}
    \begin{aligned}
        \eta_t\sum_{i=1}^p \alpha_{t}|\mathcal{N}_{t,i}|  & \leq \mathbb{E}_{s\sim \nu^*}[D_{KL}(\pi_t^*|\pi_{t,0})]  + \frac{2c_{max}^2|\mathcal{S}||\mathcal{A}|}{(1-\gamma)^3}\sum_{i=0}^p\alpha_{t}^2|\mathcal{N}_{t,i}| \\  & + \sum_{i=0}^p\sum_{m \in \mathcal{N}_{t,i}}\frac{\alpha_{t}(3+(1-\gamma)^2+3\alpha_{t}c_{max})}{(1-\gamma)^2}\|Q^i_{t,m} - \bar{Q}^{i}_{t,m}\|_2-\alpha_{t}c_J  .
    \end{aligned}
\end{equation*}

Choose $\kappa\coloneqq \frac{-c_J}{\alpha_{t}\eta_tM}$.  Since $-c_J<\frac{1}{2}\alpha_{t}\eta_tM$ by assumption, we have that $\kappa\in(0,\frac{1}{2})$. Consider the case that $|\mathcal{N}_{t,0}| < \big(\frac{1}{2} - \kappa \big)M$, which implies that $\sum_{i=1}^p|\mathcal{N}_{t,i}|>\big(\frac{1}{2} + \kappa \big)M$. This implies that
\begin{equation*}
    \begin{aligned}
        \big(\frac{1}{2} + \kappa \big)\alpha_{t}\eta_tM  & \leq \mathbb{E}_{s\sim \nu^*}[D_{KL}(\pi_t^*|\pi_{t,0})]  + \frac{2c_{max}^2|\mathcal{S}||\mathcal{A}|}{(1-\gamma)^3}\sum_{i=0}^p\alpha_{t}^2|\mathcal{N}_{t,i}| \\  
        & \qquad\qquad + \sum_{i=0}^p\sum_{m \in \mathcal{N}_{t,i}}\frac{\alpha_{t}(3+(1-\gamma)^2+3\alpha_{t}c_{max})}{(1-\gamma)^2}\|Q^i_{t,m} - \bar{Q}^{i}_{t,m}\|_2,
    \end{aligned}
\end{equation*}
which again contradicts \eqref{eq:crpo_eta}. Hence, we must have $|\mathcal{N}_{t,0}| \geq\big(\frac{1}{2} - \kappa \big)M$.
\end{proof}

Now, we prove the upper bound of suboptimality and constraint violation per task.

\begin{lemma}\label{lem:2_threeEventsHold} Let the violation tolerance be chosen as:
\begin{equation*}
    \eta_t = \frac{2\mathbb{E}_{s\sim \nu_t^*}[D_{KL}(\pi_t^*|\pi_{t,0})]}{M\alpha_{t}} + \frac{\alpha_{t}4c^2_{max}|\mathcal{S}||\mathcal{A}|}{(1-\gamma)^3} + \sum_{i=0}^p \frac{2\alpha_{t}(3+(1-\gamma)^2+3\alpha_{t}c_{max})}{\sqrt{M}\alpha_{t}(1-\gamma)^{2}},
\end{equation*}
Then, the following holds
\begin{align}
    U_{t,0}(\pi_{t,0}, \alpha_{t})& = \frac{c_1^t}{\alpha_{t}M}\mathbb{E}_{s\sim \nu_t^*}[D_{KL}(\pi_t^*|\pi_{t,0})] +  c_2^t \alpha_t    + \sum_{i=0}^p \frac{c_3^t\alpha_{t}+c_4^t\alpha_{t}^2}{\alpha_{t}\sqrt{M} }\label{eq:U0-adapt}\\
U_{t,i}(\pi_{t,0}, \alpha_{t})& =
        U_{t,0}(\pi_{t,0}, \alpha_{t})+\frac{c_5^t}{\sqrt{M}},\label{eq:Ui-adapt}
\end{align}
where $c_1^t=2$, $c_2^t=\frac{4c_{max}^2|\mathcal{S}||\mathcal{A}|}{(1-\gamma)^3}$, $c_3^t=\frac{3+(1-\gamma)^2}{(1-\gamma)^2}$, $c_4^t=\frac{3 c_{max}}{(1-\gamma)^2}$, and $c_5^t=\frac{2\sqrt{(1-\gamma)|\mathcal{S}||\mathcal{A}|}}{1-2\kappa}$.
\end{lemma}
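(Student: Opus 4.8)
\textbf{Proof plan for Lemma \ref{lem:2_threeEventsHold}.}

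The plan is to derive \eqref{eq:U0-adapt} and \eqref{eq:Ui-adapt} as direct consequences of Lemma \ref{lem:appCRPOLem8} and Lemma \ref{lem:appCRPOlem9}, following the single-task argument of Lemma \ref{lemma: three events hold} but now tracking the critic error terms and the adaptive learning rate $\alpha_t$. First I would verify that the chosen $\eta_t$ satisfies the hypothesis \eqref{eq:crpo_eta} of Lemma \ref{lem:appCRPOlem9}: plugging the stated $\eta_t$ into the left-hand side $\tfrac12 \eta_t M \alpha_t$, one sees term-by-term that it dominates $\mathbb{E}_{\nu_t^*}[D_{KL}(\pi_t^*|\pi_{t,0})]$, the smoothness term $\tfrac{2c_{max}^2|\mathcal{S}||\mathcal{A}|M}{(1-\gamma)^3}\sum_i \alpha_t^2$, and the cumulative critic-error term, using $\sum_{i=0}^p|\mathcal{N}_{t,i}| = M$ and the uniform bound $\|Q_{t,m}^i - \bar Q_{t,m}^i\|_2 \le \mathcal{O}(1/\sqrt{M})$ coming from the TD-learning guarantee (cf. \cite[Thm. 1]{xu2021crpo}). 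This is why the $1/\sqrt{M}$ factor appears in the third term of $\eta_t$. Hence Lemma \ref{lem:appCRPOlem9} applies: $\mathcal{N}_{t,0}\neq\emptyset$, so $\hat\pi_t$ is well-defined, and under Assumption \ref{asm:crpo_cj} we have $|\mathcal{N}_{t,0}|\ge (\tfrac12-\kappa)M$ with $\kappa = \tfrac{-c_J}{\alpha_t\eta_t M}\in(0,\tfrac12)$.

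Next I would bound the suboptimality gap. Since $\hat\pi_t$ is sampled uniformly from $\{\pi_{t,m}: m\in\mathcal{N}_{t,0}\}$ (or the averaged/random choice used by CRPO), $J_{t,0}(\pi_t^*) - \mathbb{E}[J_{t,0}(\hat\pi_t)] = \tfrac{1}{|\mathcal{N}_{t,0}|}\sum_{m\in\mathcal{N}_{t,0}}(J_{t,0}(\pi_t^*)-J_{t,0}(\pi_{t,m}))$. From Lemma \ref{lem:appCRPOLem8}, dropping the nonnegative term $\eta_t\alpha_t\sum_{i=1}^p|\mathcal{N}_{t,i}|$ and dividing by $\alpha_t|\mathcal{N}_{t,0}|$ gives an upper bound with denominator $|\mathcal{N}_{t,0}|\ge(\tfrac12-\kappa)M$; substituting this lower bound and $\sum_i|\mathcal{N}_{t,i}|=M$, $\|Q-\bar Q\|_2\le\mathcal{O}(1/\sqrt M)$ yields exactly the right-hand side of \eqref{eq:U0-adapt} after collecting constants into $c_1^t,\dots,c_4^t$ — the factor $\tfrac{1}{\tfrac12-\kappa}$ gets absorbed, and the critic-error sum contributes the $\tfrac{c_3^t\alpha_t+c_4^t\alpha_t^2}{\alpha_t\sqrt M}$ piece (note $\sum_{i=0}^p\sum_{m\in\mathcal{N}_{t,i}}\|Q-\bar Q\|_2 \le M\cdot\mathcal{O}(1/\sqrt M) = \mathcal{O}(\sqrt M)$, which divided by $\alpha_t M$ gives the $1/(\alpha_t\sqrt M)$ scaling). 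For the constraint violation \eqref{eq:Ui-adapt}, I would use the fact that each $m\in\mathcal{N}_{t,0}$ satisfies the relaxed feasibility check $\bar J_{t,i}(\pi_{t,m})\le d_{t,i}+\eta_t$, and then convert the estimated constraint value back to the true one by paying an extra $\|Q_{t,m}^i-\bar Q_{t,m}^i\|_2 = \mathcal{O}(1/\sqrt M)$; averaging over $\mathcal{N}_{t,0}$ and combining with the already-derived bound on $U_{t,0}$ produces the extra additive $c_5^t/\sqrt M$ term, with $c_5^t = \tfrac{2\sqrt{(1-\gamma)|\mathcal{S}||\mathcal{A}|}}{1-2\kappa}$ tracking both the $1/(\tfrac12-\kappa)$ averaging factor and the critic-bound constant.

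The main obstacle I anticipate is the careful bookkeeping around Assumption \ref{asm:crpo_cj} and the constant $\kappa$: one must be sure that $\kappa$ is strictly below $\tfrac12$ so that $(\tfrac12-\kappa)M > 0$ and the division is legitimate, and that $\kappa$ does not itself depend on quantities that blow up (it is controlled precisely because $c_J \in (-\tfrac12\alpha_t\eta_t M, 0]$ by assumption, making $\kappa = -c_J/(\alpha_t\eta_t M)\in(0,\tfrac12)$). A secondary subtlety is making the critic-error bound $\|Q_{t,m}^i-\bar Q_{t,m}^i\|_2 = \mathcal{O}(1/\sqrt M)$ uniform over all $m$ and $i$ with the right dependence on $K_{in}$ and the sample size; this is where the $\mathcal{O}(M^{1+1/\sigma})$ / $\mathcal{O}(M^2)$ data-size choice enters, and I would simply invoke the TD-learning finite-time guarantee rather than re-derive it. Everything else is routine substitution of the prescribed $\eta_t$ and regrouping of constants.
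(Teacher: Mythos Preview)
Your overall structure is right and matches the paper: invoke Lemma~\ref{lem:appCRPOLem8}, verify condition \eqref{eq:crpo_eta} via the chosen $\eta_t$, then divide through and plug in the critic-error bound. The constraint-violation argument you sketch is exactly what the paper does.

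There is one concrete slip in the suboptimality step. You propose to use the lower bound $|\mathcal{N}_{t,0}|\ge(\tfrac12-\kappa)M$ from Lemma~\ref{lem:appCRPOlem9} and then say ``the factor $\tfrac{1}{\tfrac12-\kappa}$ gets absorbed'' into $c_1^t,\dots,c_4^t$. It does not: dividing by $(\tfrac12-\kappa)M$ would produce $c_1^t=\tfrac{2}{1-2\kappa}$, $c_2^t=\tfrac{4c_{max}^2|\mathcal{S}||\mathcal{A}|}{(1-2\kappa)(1-\gamma)^3}$, etc., all $\kappa$-dependent, which contradicts the stated constants (only $c_5^t$ carries $\kappa$). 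The paper avoids this by using the \emph{dichotomy} in Lemma~\ref{lem:appCRPOlem9} rather than the $(\tfrac12-\kappa)M$ bound: either $\sum_{m\in\mathcal{N}_{t,0}}(J_{t,0}(\pi_t^*)-J_{t,0}(\pi_{t,m}))\le 0$, in which case $J_{t,0}(\pi_t^*)-\mathbb{E}[J_{t,0}(\hat\pi_t)]\le 0$ trivially, or else $|\mathcal{N}_{t,0}|\ge M/2$, which gives the clean factor $2$ and hence $c_1^t=2$. The $(\tfrac12-\kappa)M$ bound, which requires Assumption~\ref{asm:crpo_cj}, is reserved only for the constraint-violation bound where the dichotomy is unavailable (you cannot assume the constraint-gap sum is nonnegative there), and that is exactly why $\kappa$ appears only in $c_5^t$. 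Adjust your suboptimality paragraph to split on the sign of the reward-gap sum and use $|\mathcal{N}_{t,0}|\ge M/2$ in the positive case; the rest of your plan goes through unchanged.
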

\begin{proof}

From Lemma \ref{lem:appCRPOLem8}, we have that
\begin{equation*}
    \begin{aligned}
     & \alpha_{t}\sum_{m \in \mathcal{N}_{t,0}}\Big(J_{t,0}(\pi_t^*) - J_{t,0}(\pi_{t,m}) \Big) + \eta_t\sum_{i=1}^p \alpha_{t}|\mathcal{N}_{t,i}| \\  
     & \leq \mathbb{E}_{s\sim \nu_t^*}[D_{KL}(\pi_t^*|\pi_{t,0})]  + \frac{2c_{max}^2|\mathcal{S}||\mathcal{A}|}{(1-\gamma)^3}\sum_{i=0}^p\alpha_{t}^2|\mathcal{N}_{t,i}| \\  & + \sum_{i=0}^p\sum_{m \in \mathcal{N}_{t,i}}\frac{\alpha_{t}(3+(1-\gamma)^2+3\alpha_{t}c_{max})}{(1-\gamma)^2}\|Q^i_{t,\pi_m} - \bar{Q}^{i}_{t,\omega_m}\|_2 
    \end{aligned}
\end{equation*}

If $\sum_{m \in \mathcal{N}_{t,0}}\Big(J_{t,0}(\pi_t^*) - J_{t,0}(\pi_{t,m}) \Big) \leq 0$, then $J_{t,0}(\pi_t^*) - \mathbb{E}[J_{t,0}(\hat{\pi}_t)]\leq 0$. If $\sum_{m \in \mathcal{N}_{t,0}}\Big(J_{t,0}(\pi_t^*) - J_{t,0}(\pi_{t,m}) \Big) > 0$, then, by Lemma \ref{lem:appCRPOlem9},  we have $|\mathcal{N}_{t,0}| \geq M/2$. Hence, 
\begin{equation*}
    \begin{aligned}
        &J_{t,0}(\pi_t^*) - \mathbb{E}[J_{t,0}(\hat{\pi}_t)]    = \frac{1}{|\mathcal{N}_{t,0}|} \sum_{m \in \mathcal{N}_{t,0}}[J_{t,0}(\pi_t^*) - J_{t,0}({\pi}_{t,m})] \\  
        & \leq \frac{2}{ \alpha_{t}M} \mathbb{E}_{\nu_t^*}[D_{KL}(\pi_t^*|\pi_{t,0})] + c_2^t \alpha_t  + \sum_{i=0}^p \sum_{m \in \mathcal{N}_{t,i}} \frac{(c_3^t\alpha_{t}+c_4^t\alpha_{t}^2)}{M\alpha_{t}} \|Q^i_{t,\pi_m} - \bar{Q}^{i}_{t,\omega_m}\|_2, \\
        &\leq\frac{2}{ \alpha_{t}M} \mathbb{E}_{\nu_t^*}[D_{KL}(\pi_t^*|\pi_{t,0})] + c_2^t \alpha_t  + \sum_{i=0}^p \sum_{m \in \mathcal{N}_{t,i}} \frac{(c_3^t\alpha_{t}+c_4^t\alpha_{t}^2)}{\sqrt{M}\alpha_{t}}
    \end{aligned}
\end{equation*}
where the last inequality is due to the choice of $K_{in}=\Theta(M^{1/\sigma}\log^{2/\sigma}(|\mathcal{S}|^2|\mathcal{A}|^2M^{1+2/\sigma}/\delta))$ as specified by \cite[Lem. 8]{xu2021crpo} for critic evaluations. Here, the constants are chosen as $c_1^t=2$, $c_2^t=\frac{4c_{max}^2|\mathcal{S}||\mathcal{A}|}{(1-\gamma)^3}$, $c_3^t=\frac{3+(1-\gamma)^2}{(1-\gamma)^2}$, $c_4^t=\frac{3 c_{max}}{(1-\gamma)^2}$.
For constraint violation, consider any $i=1,...,p$, we have
\begin{align*}
    \mathbb{E}[J_{t,i}(\hat{\pi}_{t})] - d_{t,i}  & = \frac{1}{|\mathcal{N}_{t,0}|}\sum_{m \in \mathcal{N}_{t,0}}J_{t,i}(\pi_{t,m}) -d_{t,i} \\ 
    &\leq \frac{1}{|\mathcal{N}_{t,0}|}\sum_{m \in \mathcal{N}_{t,0}}\big(\bar{J}_{t,i}(\pi_{t,m}) -d_{t,i}\big)+\frac{1}{|\mathcal{N}_{t,0}|}\sum_{m \in \mathcal{N}_{t,0}}|\bar{J}_{t,i}(\pi_{t,m}) -{J}_{t,i}(\pi_{t,m})|\\
    & \leq \eta_t + \frac{1}{|\mathcal{N}_{t,0}|}\sum_{i=0}^p\sum_{m \in \mathcal{N}_{t,i}}\|Q_{t,\pi_m}^i - \Bar{Q}_{t,\pi_m}^i\|_2\\
    &\leq  \eta_t + \frac{2}{(1-2\kappa)M}\sum_{i=0}^p\sum_{m \in \mathcal{N}_{t,i}}\|Q_{t,\pi_m}^i - \Bar{Q}_{t,\pi_m}^i\|_2
    \end{align*}
where the first inequality is due to triangle inequality, the second inequality is by the design of the CRPO algorithm, and the third inequality is due to Lemma \ref{lem:appCRPOlem9}, where $\kappa\in(0,\frac{1}{2})$. By the choice of $K_{in}$, we have that $\sum_{i=0}^p\sum_{m \in \mathcal{N}_{t,i}}\|Q_{t,\pi_m}^i - \Bar{Q}_{t,\pi_m}^i\|_2\leq\sqrt{(1-\gamma)|\mathcal{S}||\mathcal{A}|M}$. Plugging the value of $\eta_t$ yields the desired result.
\end{proof}

Finally, we are able to provide the following bounds on TAOG and TACV  in the case of adaptive learning rates.
\begin{theorem}[Bounds on TAOG and TACV] 
\label{thm:taog-tacv-adapt}
Suppose we run the CRPO algorithm for  $M$ steps per task $t$ with learning rates $\alpha_{t}$. Then, after $T$ tasks, the TAOG $\bar{R}_0$ is given by
\begin{align}
 \bar{R}_0  = \frac{1}{T}\sum_{t=1}^T\bigg[\frac{c_1^t}{\alpha_{t}M}\mathbb{E}_{s\sim \nu_t^*}[D_{KL}(\pi_t^*|\pi_{t,0})] +  c_2^t\alpha_t    + \sum_{i=0}^p \frac{c_3^t\alpha_{t}+c_4^t\alpha_{t}^2}{\alpha_{t}\sqrt{M} } \bigg],
       \end{align}
and the TACV $\bar{R}_i$ is given by
\begin{align}
    \bar{R}_{i} =  \frac{1}{T} \sum_{t=1}^T\bigg[ \frac{c_1^t}{\alpha_{t}M}\mathbb{E}_{s\sim \nu_t^*}[D_{KL}(\pi_t^*|\pi_{t,0})] +  c_2^t\alpha_t    + \sum_{i=0}^p \frac{c_3^t\alpha_{t}+c_4^t\alpha_{t}^2}{\alpha_{t}\sqrt{M} }+\frac{c_5^t}{\sqrt{M}} \bigg],
       \end{align}
for $i=1,...,p$, where $c_1^t=2$, $c_2^t=\frac{4c_{max}^2|\mathcal{S}||\mathcal{A}|}{(1-\gamma)^3}$, $c_3^t=\frac{3+(1-\gamma)^2}{(1-\gamma)^2}$, $c_4^t=\frac{3 c_{max}}{(1-\gamma)^2}$, and $c_5^t=\frac{2\sqrt{(1-\gamma)|\mathcal{S}||\mathcal{A}|}}{1-2\kappa}$.
\end{theorem}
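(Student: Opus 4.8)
The statement is a task-by-task-to-task-averaged aggregation of the single-task guarantee for CRPO with adaptive step sizes, so the plan is to invoke Lemma~\ref{lem:2_threeEventsHold} on each task and then average. Fix a task $t\in[T]$ and take the violation tolerance $\eta_t$ to be the value prescribed in Lemma~\ref{lem:2_threeEventsHold}. First I would check that this $\eta_t$ satisfies the precondition~\eqref{eq:crpo_eta} of Lemma~\ref{lem:appCRPOlem9}: this amounts to verifying that the chosen $\eta_t$ dominates $\tfrac{2}{M\alpha_t}\mathbb{E}_{s\sim\nu_t^*}[D_{KL}(\pi_t^*|\pi_{t,0})]+\tfrac{4c_{max}^2|\mathcal{S}||\mathcal{A}|}{(1-\gamma)^3}\alpha_t$ together with the averaged critic-error term, which is controlled via the TD-learning accuracy implied by the choice $K_{in}=\Theta(M^{1/\sigma}\log^{2/\sigma}(\cdot))$ so that $\sum_{i=0}^p\sum_{m\in\mathcal{N}_{t,i}}\|Q^i_{t,m}-\bar Q^i_{t,m}\|_2\le\sqrt{(1-\gamma)|\mathcal{S}||\mathcal{A}|M}$ on the relevant high-probability event. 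With the precondition in force, Lemma~\ref{lem:appCRPOlem9} gives $\mathcal{N}_{t,0}\neq\emptyset$ (so $\hat\pi_t$ is well defined) and, under Assumption~\ref{asm:crpo_cj}, $|\mathcal{N}_{t,0}|\ge(\tfrac12-\kappa)M$.

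Next I would read off the two per-task inequalities, which is exactly what Lemma~\ref{lem:2_threeEventsHold} packages. For the optimality gap: either $\sum_{m\in\mathcal{N}_{t,0}}(J_{t,0}(\pi_t^*)-J_{t,0}(\pi_{t,m}))\le0$, in which case $J_{t,0}(\pi_t^*)-\mathbb{E}[J_{t,0}(\hat\pi_t)]\le0\le U_{t,0}(\pi_{t,0},\alpha_t)$ trivially, or else $|\mathcal{N}_{t,0}|\ge M/2$ and dividing the aggregate bound of Lemma~\ref{lem:appCRPOLem8} by $|\mathcal{N}_{t,0}|$ (using that $\hat\pi_t$ is uniform over $\mathcal{N}_{t,0}$) yields the bound $U_{t,0}(\pi_{t,0},\alpha_t)$ of~\eqref{eq:U0-adapt}. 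For the constraint violation I would pass through the surrogate $\bar J_{t,i}$, combining the CRPO design rule $\bar J_{t,i}(\pi_{t,m})\le d_{t,i}+\eta_t$ for $m\in\mathcal{N}_{t,0}$, the critic bound $|\bar J_{t,i}-J_{t,i}|\le\|Q^i_{t,m}-\bar Q^i_{t,m}\|_2$, and $|\mathcal{N}_{t,0}|\ge(\tfrac12-\kappa)M$, to get $\mathbb{E}[J_{t,i}(\hat\pi_t)]-d_{t,i}\le U_{t,i}(\pi_{t,0},\alpha_t)=U_{t,0}(\pi_{t,0},\alpha_t)+\tfrac{c_5^t}{\sqrt M}$ of~\eqref{eq:Ui-adapt}.

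Finally I would sum these inequalities over $t=1,\dots,T$, divide by $T$, and identify the left-hand sides with $\bar R_0$ and $\bar R_i$ from Definition~\ref{def:taog}; this produces the stated formulas, understood as upper bounds on $\bar R_0$ and $\bar R_i$. The step I expect to be the main obstacle is verifying the precondition~\eqref{eq:crpo_eta} for the prescribed $\eta_t$: this is where the adaptive, task-dependent $\alpha_t$, the a priori unbounded $D_{KL}(\pi_t^*|\pi_{t,0})$ term, and the cumulative critic-evaluation error must be simultaneously controlled, and it is essentially the only place the TD iteration count $K_{in}$ and Assumption~\ref{asm:crpo_cj} enter; once it holds, everything else is constant bookkeeping and a plain average over tasks.
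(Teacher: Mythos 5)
Your proposal matches the paper's proof: the paper simply sums the per-task bounds \eqref{eq:U0-adapt} and \eqref{eq:Ui-adapt} from Lemma~\ref{lem:2_threeEventsHold} over $t=1,\dots,T$ and divides by $T$, and your unpacking of that lemma via Lemmas~\ref{lem:appCRPOLem8} and~\ref{lem:appCRPOlem9}, the choice of $\eta_t$, and Assumption~\ref{asm:crpo_cj} is exactly the chain the paper relies on. You are also right that the displayed ``$=$'' should be read as an upper bound.
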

\begin{proof}
The proof follows directly by summing the results \eqref{eq:U0-adapt} and \eqref{eq:Ui-adapt} over $t=1,...,T$.
\end{proof}

\subsection{Adapting to the dynamic regret using adaptive learning rates} \label{subsec:appProofsAdaptive}
We restate the theorem below for convenience.

\begin{theorem}\label{thm:appUtSim1}
Let each within-task CMDP $t$ run $M$ steps of CRPO, initialized by policy $\pi_{t,0}\coloneqq \mathrm{INIT}(t)$ and learning rates $\{\alpha_{t}\}_{i=0}^p \coloneqq \mathrm{SIM}(t)$. Let $\kappa^* \coloneqq \argmin L(\kappa)$, where
\begin{equation} 
\label{eq:appLkappa1}
    L(\kappa) =U_T^{sim}(\kappa)+ \frac{U_T^{init}(\{\psi_t\}_{t=1}^T)}{\kappa} + \frac{\mathcal{E}_T}{\kappa} + \sum_{t=1}^T\bigg[ \frac{\hat{f}_t^{init}(\phi_t)}{\kappa} + f_t^{rate}(\kappa)\bigg],
\end{equation}
and $\{\psi_t^*\}_{t=1}^T$ is any comparator sequence. Then, 
the following bounds on TAOG and TACV hold:
\begin{equation}\label{eq:upper-bound-adapt1}
\bar{R}_{i} \leq \frac{L(\kappa^*)}{T},\qquad\qquad \forall\; i=0,...,p.
\end{equation}
\end{theorem}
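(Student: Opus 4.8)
The plan is to combine the single-task CRPO guarantee with an ARUBA-style decomposition into the two online learners $\mathrm{INIT}$ and $\mathrm{SIM}$, keeping the comparator learning rate $\kappa$ free until the end. The key structural point is that it suffices to establish $T\bar R_i \le L(\kappa)$ for \emph{every} admissible $\kappa$ (in particular every $\kappa\in\Lambda$); taking the infimum over $\kappa$ on the right then gives $T\bar R_i \le L(\kappa^*)$, and dividing by $T$ yields the stated bound. Throughout, the only \emph{explicit} inexactness incurred will be a single invocation of the KL-estimation bound of Theorem~\ref{thm:dualDICE}, on top of whatever is already built into the regret terms $U_T^{sim}$ and $U_T^{init}$.

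First I would invoke the single-task analysis of CRPO with adaptive learning rates (Theorem~\ref{thm:taog-tacv-adapt}, which rests on Lemmas~\ref{lem:2_threeEventsHold} and \ref{lem:appCRPOlem9}, Assumption~\ref{asm:crpo_cj}, and the tolerance choice $\eta_t$ given there): for every $t$ and every $i=0,\dots,p$, the within-task suboptimality, resp.\ constraint violation, is at most $U_t(\pi_{t,0},\alpha_t)=f_t^{sim}(\alpha_t)$, where $\pi_{t,0}=\mathrm{INIT}(t)\in\Delta\mathcal A_\varrho^{|\mathcal S|}$ and $\alpha_t=\mathrm{SIM}(t)\in\Lambda$; summing gives $T\bar R_i\le\sum_t f_t^{sim}(\alpha_t)$, and crucially the right side only involves the \emph{exact} quantities $\mathbb E_{\nu_t^*}[D_{KL}(\pi_t^*|\pi_{t,0})]$. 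Next, fixing any $\kappa\in\Lambda$ and using that $\mathrm{SIM}$ runs on $\{\hat f_t^{sim}\}$ but its static regret is measured against the exact losses $\{f_t^{sim}\}$ (the footnote convention; this is the inexact-OGD static-regret guarantee of Lemma~\ref{cor:staticRegretOGD} applied in the $\kappa$-coordinate, which is why an $\mathcal E_T$-type term is already admissible inside $U_T^{sim}$), I get $\sum_t f_t^{sim}(\alpha_t)\le U_T^{sim}(\kappa)+\sum_t f_t^{sim}(\kappa)$, and I expand $f_t^{sim}(\kappa)=\tfrac{c_1^t}{\kappa}\,f_t^{init}(\pi_{t,0})+f_t^{rate}(\kappa)$. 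For the term $\sum_t f_t^{init}(\pi_{t,0})$ I apply $\mathrm{INIT}$'s dynamic-regret bound against $\{\psi_t^*\}$ (Lemma~\ref{cor:dynamicRegretOGD}, again on exact losses), $\sum_t f_t^{init}(\pi_{t,0})\le U_T^{init}(\{\psi_t^*\})+\sum_t f_t^{init}(\psi_t^*)$, and then convert the comparator to the inexact loss via Theorem~\ref{thm:dualDICE} with $\pi=\psi_t^*$, which gives $|f_t^{init}(\psi_t^*)-\hat f_t^{init}(\psi_t^*)|\le\epsilon_t$ and hence $\sum_t f_t^{init}(\psi_t^*)\le\mathcal E_T+\sum_t \hat f_t^{init}(\psi_t^*)$. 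Chaining these inequalities (and absorbing the constant $c_1^t\equiv 2$) gives exactly $T\bar R_i\le U_T^{sim}(\kappa)+\tfrac1\kappa\big(U_T^{init}(\{\psi_t^*\})+\mathcal E_T\big)+\sum_t\big(\tfrac{\hat f_t^{init}(\psi_t^*)}{\kappa}+f_t^{rate}(\kappa)\big)=L(\kappa)$; minimising over $\kappa$ and dividing by $T$ finishes the proof, and the constraint-violation case $i\ge1$ is identical since the two per-task bounds coincide under exact critic estimation.

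The hard part is handling the \emph{coupling} of the two learners: $\mathrm{SIM}$'s surrogate loss $\hat f_t^{sim}$ carries the coefficient $\mathbb E_{\hat\nu_t}[D_{KL}(\hat\pi_t|\pi_{t,0})]$, which is itself the output of $\mathrm{INIT}$, so the peeling must be performed in the right order — $\mathrm{SIM}$ first with $\kappa$ frozen, then $\mathrm{INIT}$ \emph{inside} the resulting $1/\kappa$ factor — and one must be disciplined about where the estimation error is spent so it is not double-counted: the per-task bound of the first step is genuinely in terms of the exact KL, $U_T^{sim}$ and $U_T^{init}$ already absorb their own $\mathcal E_T$-flavoured terms under the exact-loss convention, and the lone standalone $\mathcal E_T/\kappa$ in $L(\kappa)$ comes solely from the one comparator conversion in the $\mathrm{INIT}$ step. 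A secondary but necessary check is that all played and comparator initial policies stay in the full-support set $\Delta\mathcal A_\varrho^{|\mathcal S|}$ and that $\eta_t$ meets the condition of Lemma~\ref{lem:2_threeEventsHold} (via Assumption~\ref{asm:crpo_cj}), so that the KL quantities are finite and the first step is legitimate.
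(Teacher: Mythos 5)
Your proposal is correct and follows essentially the same route as the paper's proof: bound the per-task regret by $f_t^{sim}(\alpha_t)$, peel off $\mathrm{SIM}$'s static regret with $\kappa$ frozen, then apply $\mathrm{INIT}$'s dynamic regret inside the $1/\kappa$ factor, and finally convert the comparator losses $f_t^{init}(\psi_t^*)$ to their inexact versions via Theorem~\ref{thm:dualDICE}, which produces the lone $\mathcal{E}_T/\kappa$ term. Your write-up is in fact slightly cleaner than the paper's, whose final display contains a typo ($\hat{f}_t^{init}(\phi_t)$ where the comparator $\hat{f}_t^{init}(\psi_t^*)$ is meant); your version matches the intended statement.
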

\begin{proof}
The idea of the proof is to freeze the learning rates first to obtain a dynamic regret bound based on policy initialization and then optimize over the learning rates to obtain a tighter characterization. Also, since TAOG and TACV only differ by a bias term that does not depend on either the learning rates or the initial policy, we can treat them indistinguishably. In particular,
    \begin{align}
    &\sum_{t=1}^T U_t(\pi_{t,0}, \alpha_{t})  = \sum_{t=1}^T f_t^{sim}(\alpha_{t})\\ & \leq \underset{\kappa}{\min}\hspace{0.2cm} U_T^{sim}(\kappa) + \sum_{t=1}^Tf_t^{sim}(\kappa)\\  
    & \leq \underset{\kappa}{\min} \hspace{0.2cm} U_T^{sim}(\kappa) + \frac{U_T^{init}(\Psi)}{\kappa} + \sum_{t=1}^T\bigg[ \frac{f_t^{init}(\psi_t)}{\kappa} + f_t^{rate}(\kappa)\bigg] \\  
    & \leq \underset{\kappa}{\min} \hspace{0.2cm} U_T^{sim}(\kappa) + \frac{U_T^{init}(\Psi)}{\kappa} + \frac{\mathcal{E}_T}{\kappa} +\sum_{t=1}^T\bigg[ \frac{\hat{f}_t^{init}(\phi_t)}{\kappa} + f_t^{rate}(\kappa)\bigg] .\label{eq:up-plug}
    \end{align}
where $\Psi\coloneqq\{\phi_t\}_{t=1}^T$. Let 
\begin{equation}
    L(\kappa) = U_T^{sim}(\kappa) +\frac{U_T^{init}(\Psi)}{\kappa} + \frac{\mathcal{E}_T}{\kappa} + \sum_{t=1}^T\bigg[ \frac{\hat{f}_t^{init}(\phi_t)}{\kappa} + f_t^{rate}(\kappa)\bigg]
\end{equation}
and define
\begin{equation}
\label{eq:opt-rate-learning}
    \kappa^* = \argmin L(\kappa).
\end{equation}
Thus, plugging $\kappa = \kappa^*$ in \eqref{eq:up-plug} results in \eqref{eq:upper-bound-adapt1}.
\end{proof}

Now, we provide the regret upper bound for $U_T^{sim}(\kappa)$, when $\mathrm{SIM}(t)$ is OGD over the sequence $\hat{f}_t^{sim}(\kappa)$.

\begin{corollary}
\label{cor:app_fsimstaticRegretOGD}
For any fixed comparator $\alpha^* = \underset{\kappa}{\argmin} \sum_{t=1}^T \hat{f}_t^{sim}$, if $\mathrm{SIM}(t)$ is OGD which is run on a sequence of loss functions $\{\hat{f}_t^{sim}\}_{t \in [T]}$ with the step-size $ \frac{\alpha^*}{K_\alpha \sqrt{2T}}$, then the following bound holds for static regret:
\begin{equation*}
    \sum_{t=1}^T \hat{f}_t^{sim}(\kappa) - \sum_{t=1}^T\hat{f}_t^{sim}(\alpha^*)  \leq \sqrt{2T}K_\alpha|\alpha^* | + \left(1 + \frac{4\sqrt{2} K_\alpha L_\alpha |\alpha^*|}{(2C_1 - C_1^2 L_\alpha) \sqrt{T}} \right) \mathcal{E}_T,
\end{equation*}
for any $C_1 \in \{c \in \left(0, \frac{2}{L_\alpha}\right): \alpha^* + c(u - \hat{\nabla}_t )  \in \Lambda\}$ where $u$ is an $\epsilon$-subgradient of $f_t^{sim}(\kappa)$ with respect to $\kappa$,  $\mathcal{E}_T \coloneqq \sum_{t=1}^T \epsilon_t$ is the cumulative inexactness, $\epsilon_t$ is the upper bound from Theorem \ref{thm:dualDICE}.
\end{corollary}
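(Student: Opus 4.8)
The plan is to derive Corollary \ref{cor:app_fsimstaticRegretOGD} as a direct instantiation of the static regret bound for inexact OGD, Theorem \ref{prop:InexactUpperBound-app}, applied to the learning-rate loss sequence $\{\hat{f}_t^{sim}\}_{t\in[T]}$ over the feasible set $\Lambda=\{\kappa\ge\zeta\}$, in exactly the same manner that Lemma \ref{cor:appstaticRegretOGD} is obtained for the initialization sequence $\{\hat{\ell}_t\}_{t\in[T]}$. All that must be checked is that $\hat{f}_t^{sim}$, viewed as a one-dimensional function of $\kappa$, satisfies the hypotheses of Theorem \ref{prop:InexactUpperBound-app} with the advertised constants $K_\alpha$ (gradient bound) and $L_\alpha$ (smoothness), and that the KL-estimation error from Theorem \ref{thm:dualDICE} can be read off as a per-round $\epsilon_t$-subgradient inexactness for the exact loss $f_t^{sim}$.

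First I would verify convexity and identify the constants. Writing $\hat{f}_t^{sim}(\kappa)=c_1^t\,\mathbb{E}_{\hat{\nu}_t}[D_{KL}(\hat{\pi}_t|\pi_{t,0})]\,\kappa^{-1}+\kappa\,(c_2^t M+c_4^t\sqrt{M})+c_3^t\sqrt{M}$, the map $\kappa\mapsto\kappa^{-1}$ is convex on $\Lambda$ and the other terms are affine, so $\hat{f}_t^{sim}$ is convex; its second derivative on $\Lambda$ equals $2c_1^t\,\mathbb{E}_{\hat{\nu}_t}[D_{KL}(\hat{\pi}_t|\pi_{t,0})]\,\kappa^{-3}\le 2c_1^t C_\pi\zeta^{-3}$ by the uniform bound $\mathbb{E}_{\hat{\nu}_t}[D_{KL}(\hat{\pi}_t|\pi_{t,0})]\le C_\pi$ from Assumption \ref{asmptn:newAsmptn1}, so $\hat{f}_t^{sim}$ (and identically $f_t^{sim}$) is $L_\alpha$-smooth with $L_\alpha:=2\zeta^{-3}\max_t c_1^t C_\pi$. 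Similarly $\big|\tfrac{d}{d\kappa}\hat{f}_t^{sim}(\kappa)\big|\le\max_t\big(c_1^t C_\pi\zeta^{-2}+c_2^t M+c_4^t\sqrt{M}\big)=:K_\alpha$, giving the bounded-gradient condition. For the inexactness, $f_t^{sim}$ and $\hat{f}_t^{sim}$ differ only in their first term, whose two KL-quantities differ by at most $\epsilon_t$ by Theorem \ref{thm:dualDICE}; hence $\|f_t^{sim}-\hat{f}_t^{sim}\|_\infty\le c_1^t\epsilon_t\zeta^{-1}$, and Lemma \ref{lem:app2epsilon} shows that the exact gradient of $\hat{f}_t^{sim}$ used by $\mathrm{SIM}$ is an $\mathcal{O}(\epsilon_t)$-subgradient of $f_t^{sim}$. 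Running OGD on $\{\hat{f}_t^{sim}\}$ is therefore an instance of inexact OGD on $\{f_t^{sim}\}$ with per-round inexactness proportional to the $\epsilon_t$ of Theorem \ref{thm:dualDICE}.

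With these in hand, I would invoke Theorem \ref{prop:InexactUpperBound-app} with comparator $x=\alpha^*$, gradient bound $L_1=K_\alpha$, smoothness $L_2=L_\alpha$, step-size $\tfrac{|\alpha^*|}{K_\alpha\sqrt{2T}}$, and the constant $c=\tfrac{4}{2C_1-C_1^2 L_\alpha}$ of Lemma \ref{prop:Appsubdifferential2}, where $C_1$ ranges over $\{c\in(0,2/L_\alpha):\alpha^*+c(u-\hat{\nabla}_t)\in\Lambda\}$ with $u$ an $\epsilon_t$-subgradient of $f_t^{sim}$. Substituting gives exactly $\sqrt{2T}K_\alpha|\alpha^*|+\big(1+\tfrac{4\sqrt{2}K_\alpha L_\alpha|\alpha^*|}{(2C_1-C_1^2 L_\alpha)\sqrt{T}}\big)\mathcal{E}_T$ once the $\mathcal{O}(1)$ prefactors linking $\epsilon_t$ to the KL error are folded into the reported constants, exactly as in the proof of Lemma \ref{cor:appstaticRegretOGD}. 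The only genuinely load-bearing point --- not so much an obstacle as a place where nothing can be skipped --- is the finiteness of $L_\alpha$ and $K_\alpha$: it rests on the domain truncation $\Lambda=\{\kappa\ge\zeta\}$ with $\zeta>0$, which keeps $\kappa^{-1}$ and $\kappa^{-2}$ bounded, together with Assumption \ref{asmptn:newAsmptn1}, which keeps the KL term bounded by $C_\pi$; without either, $\hat{f}_t^{sim}$ is neither smooth nor Lipschitz on its natural domain and Theorem \ref{prop:InexactUpperBound-app} does not apply. Everything else is a routine substitution into the already-established inexact-OGD machinery.
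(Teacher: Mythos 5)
Your proposal is correct and follows the same route as the paper, which proves this corollary by directly substituting $c = \frac{4}{2C_1 - C_1^2 L_\alpha}$ and the appropriate constants into Theorem \ref{prop:InexactUpperBound-app}. Your additional verification that $\hat{f}_t^{sim}$ is convex, $L_\alpha$-smooth, and $K_\alpha$-Lipschitz on $\Lambda=\{\kappa\ge\zeta\}$ (and that the gradient of $\hat{f}_t^{sim}$ is an $\mathcal{O}(\epsilon_t)$-subgradient of $f_t^{sim}$ via Lemma \ref{lem:app2epsilon}) is a more careful spelling-out of hypotheses the paper leaves implicit, not a different argument.
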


\begin{proof}
The proof follows directly after substituting $c = \frac{4}{2C_1 - C_1^2 L_\alpha}$ and other appropriate constants in Theorem \ref{prop:InexactUpperBound-app}.
\end{proof}

Next, we provide the proof of Corollary \ref{cor:CorollaryAdpativeRate} TAOG and TACV regret bounds when $\mathrm{INIT}$ and $\mathrm{SIM}$ are either FTL or inexact OGD.

\begin{corollary} \label{cor:appCorollaryAdpativeRate}
For any fixed comparator $\alpha^* = \underset{\kappa}{\argmin} \sum_{t=1}^T \hat{f}_t^{sim}$ and $c_3^t = \frac{3+(1-\gamma)^2}{(1-\gamma)^2}$. If $\mathrm{INIT}(t)$ and $\mathrm{SIM}(t)$ are FTL or inexact OGD over the sequences $\hat{f}_t^{init}$ and $\hat{f}_t^{sim}$, then, the following bounds on TAOG and TACV hold:
\begin{equation}
\begin{aligned}
\bar{R}_{i} \leq & \frac{1}{\sqrt{M}} \left( \frac{\sqrt{2}K_\alpha|\alpha^* |}{\sqrt{MT}}  + \left(c_3^t + \frac{4\sqrt{2} K_\alpha L_\alpha |\alpha^*|}{(2C_1 - C_1^2 L_\alpha) \sqrt{T}} \right) \frac{\mathcal{E}_T}{\sqrt{M}T} + \frac{1}{\sqrt{c_2^t}} \frac{\sqrt{U_T^{init}(\{\psi_t^*\}_{t \in [T]}) + \mathcal{E}_T + T\hat{V}_\psi^2} }{M^{1/4}T} \right),
\end{aligned}
\end{equation}
for all $i = 0,\ldots,p$, where $\{\psi_t^*\}_{t \in [T]}$ is any comparator sequence, $c_3^t = \frac{3+(1-\gamma)^2}{(1-\gamma)^2}$, and $c_2^t=\frac{4c_{max}^2|\mathcal{S}||\mathcal{A}|}{(1-\gamma)^3}$.
\end{corollary}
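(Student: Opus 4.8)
The plan is to obtain Corollary~\ref{cor:appCorollaryAdpativeRate} (equivalently Corollary~\ref{cor:CorollaryAdpativeRate}) by specializing the abstract guarantee of Theorem~\ref{thm:appUtSim1} to the case where $\mathrm{INIT}$ and $\mathrm{SIM}$ are inexact OGD, and then carrying out a one–dimensional minimization over the learning-rate comparator $\kappa$. Theorem~\ref{thm:appUtSim1} already collapses $\bar R_i$ (after the within-task normalization that turns the surrogate $U_t$ into the optimality gap / constraint violation) to $L(\kappa^\ast)/T$, where
\[
L(\kappa)=U_T^{sim}(\kappa)+\frac{1}{\kappa}\Big(U_T^{init}(\{\psi_t^\ast\}_{t=1}^T)+\mathcal{E}_T+\sum_{t=1}^T\hat f_t^{init}(\psi_t^\ast)\Big)+\sum_{t=1}^T f_t^{rate}(\kappa),
\]
so what remains is to (i) insert concrete upper bounds for $U_T^{sim}$ and $U_T^{init}$, (ii) identify $\sum_t\hat f_t^{init}(\psi_t^\ast)$ with an empirical task-relatedness term, and (iii) minimize the resulting scalar function.

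For (i): $\hat f_t^{init}(\phi)=\mathbb{E}_{\hat\nu_t}[D_{KL}(\hat\pi_t|\phi)]$ is $\mu_\pi$-strongly convex, $L_g$-Lipschitz, and $L_\pi$-smooth in $\phi$ by Assumption~\ref{asmptn:newAsmptn1} (these properties pass through the convex combination over states), so running inexact OGD on $\{\hat f_t^{init}\}$ with the step-size of Lemma~\ref{cor:appdynamicRegretOGD} gives $U_T^{init}(\{\psi_t^\ast\})\le\mathcal{O}\big(\min(\mathcal{S}_T+\mathcal{E}_T,\mathcal{P}_T+\tilde{\mathcal{E}}_T)\big)$. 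Likewise $\hat f_t^{sim}(\kappa)=\tfrac{c_1^t\,\mathbb{E}_{\hat\nu_t}[D_{KL}(\hat\pi_t|\pi_{t,0})]}{\kappa}+\kappa(c_2^tM+c_4^t\sqrt M)+c_3^t\sqrt M$ is convex and smooth on $\Lambda=\{\kappa\ge\zeta\}$ with bounded gradient there (a positive multiple of $1/\kappa$ plus a linear term plus a constant, kept away from the singularity at $0$ by $\zeta>0$), so Theorem~\ref{prop:InexactUpperBound-app} / Corollary~\ref{cor:app_fsimstaticRegretOGD} yields $U_T^{sim}(\kappa)\le\sqrt{2T}K_\alpha|\alpha^\ast|+\big(1+\tfrac{4\sqrt 2\,K_\alpha L_\alpha|\alpha^\ast|}{(2C_1-C_1^2L_\alpha)\sqrt T}\big)\mathcal{E}_T$. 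For (ii), $\sum_{t=1}^T\hat f_t^{init}(\psi_t^\ast)=\sum_{t=1}^T\mathbb{E}_{\hat\nu_t}[D_{KL}(\hat\pi_t|\psi_t^\ast)]=T\hat V_\psi^2$ directly from the definition of empirical task-relatedness.

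For (iii), after these substitutions $L(\kappa)$ takes the form $\mathcal{O}(\sqrt T+\mathcal{E}_T)+\tfrac{A}{\kappa}+\kappa\,T(c_2^tM+c_4^t\sqrt M)+c_3^tT\sqrt M$ with $A\coloneqq U_T^{init}(\{\psi_t^\ast\})+\mathcal{E}_T+T\hat V_\psi^2$; AM–GM gives the minimizer $\kappa^\ast=\sqrt{A/\big(T(c_2^tM+c_4^t\sqrt M)\big)}$ with value $2\sqrt{A\,T(c_2^tM+c_4^t\sqrt M)}$, after which one checks $\kappa^\ast\ge\zeta$ in the regime of interest (if $\kappa^\ast<\zeta$, the constrained optimum $\kappa=\zeta$ only tightens the bound). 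Dividing $L(\kappa^\ast)$ by $T$ and by the within-task factor $M$, and using $\sqrt{A\,Tc_2^tM}\lesssim\sqrt{c_2^tM}\cdot\sqrt T\cdot\sqrt{\tfrac{U_T^{init}+\mathcal{E}_T}{T}+\hat V_\psi^2}$ together with $\sqrt{a+b+c}\le\sqrt a+\sqrt b+\sqrt c$, produces exactly the three advertised terms; since TAOG ($i=0$) and TACV ($i=1,\dots,p$) differ only by the $\kappa$-independent, lower-order bias $c_5^t/\sqrt M$ from Lemma~\ref{lem:2_threeEventsHold}, the same bound holds for all $i$.

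The hard part will be the bookkeeping of the $M$ and $T$ exponents through the $\kappa$-minimization: $f_t^{rate}$ is linear in $\kappa$ with a coefficient of order $TM$, while the initialization contribution enters as $1/\kappa$, so $\kappa^\ast$ sits at scale $M^{-1/2}(\cdot)^{1/2}$ and the resulting $\sqrt{AB}$-type term has to be re-expressed in the normalized $\bar R_i$ units without losing the $M^{3/4}$ improvement over Theorem~\ref{thm:InexactTAOG} — it is easy to drop or gain a stray $\sqrt M$ here. A secondary point is making sure the hypotheses of the inexact-OGD regret lemmas are genuinely met for both $\hat f_t^{init}$ and $\hat f_t^{sim}$, and that the per-round inexactness $\epsilon_t$ feeding $\mathcal{E}_T$ and $\tilde{\mathcal{E}}_T$ is precisely the bound from Theorem~\ref{thm:dualDICE}.
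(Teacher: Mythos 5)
Your proposal follows essentially the same route as the paper's proof: instantiate $L(\kappa)$ from Theorem \ref{thm:appUtSim1} with the dynamic-regret bound of Lemma \ref{cor:appdynamicRegretOGD} for $U_T^{init}$ and the static-regret bound of Corollary \ref{cor:app_fsimstaticRegretOGD} for $U_T^{sim}$, identify $\sum_t \hat f_t^{init}(\psi_t^*)=T\hat V_\psi^2$, and minimize the resulting scalar function of $\kappa$ (your AM--GM step is equivalent to the paper's first-order condition). The one substantive difference is that your $\kappa^* = \sqrt{A/\bigl(T(c_2^tM+c_4^t\sqrt M)\bigr)}$ retains the factor of $T$ coming from $\sum_{t=1}^T f_t^{rate}(\kappa)$, which the paper's displayed derivative drops; your version is the correct minimizer of $L(\kappa)$ as written, and your flagged concern about tracking the $M$ and $T$ exponents through this step is well placed.
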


\begin{proof}
Firstly, the following inexact upper bounds hold for $U_T^{init}$ and $U_T^{sim}$:
\begin{equation}
\begin{aligned}
    \label{eq:UTinit}
    U_T^{init}(\{\psi_t^*\}_{t \in T}) = \min \bigg(C_1\|\phi_1 - \psi_1^*\|^2 + C_2 & \mathcal{E}_T + C_3 \mathcal{S}_T + \frac{1}{2 \beta}\sum_{t=1}^T \|\nabla \ell_t(\psi_t^*)\|^2, \\ & C_4\|\phi_1 - \psi_1^*\| + C_5 \tilde{\mathcal{E}}_T + C_4 \mathcal{P}_T  \bigg) ,
    \end{aligned}
\end{equation}
\begin{equation}
    \label{eq:UTsim}
    U_T^{sim}(\kappa) = \sqrt{2T}K_\alpha|\alpha^* | + \left(1 + \frac{4\sqrt{2} K_\alpha L_\alpha |\alpha^*|}{(2C_1 - C_1^2 L_\alpha) \sqrt{T}} \right) \mathcal{E}_T ,
\end{equation}

where the constants in the \ref{eq:UTinit} and \eqref{eq:UTsim} are from the Lemma \ref{cor:appdynamicRegretOGD} and Corollary \ref{cor:app_fsimstaticRegretOGD} respectively.

Plugging these in the equation \eqref{eq:appLkappa1}, we can obtain the following:
\begin{equation*}
\begin{aligned}
    L(\kappa) \leq & \sqrt{2T}K_\alpha|\alpha^* | + \left(1 + \frac{4\sqrt{2} K_\alpha L_\alpha |\alpha^*|}{(2C_1 - C_1^2 L_\alpha) \sqrt{T}} \right) \mathcal{E}_T  + \frac{1}{\kappa} \min \bigg(C_1\|\phi_1 - \psi_1^*\|^2 \\& + C_2 \mathcal{E}_T + C_3 \mathcal{S}_T + \frac{1}{2 \beta}\sum_{t=1}^T \|\nabla \ell_t(\psi_t^*)\|^2,  C_4\|\phi_1 - \psi_1^*\| + C_5 \tilde{\mathcal{E}}_T + C_4 \mathcal{P}_T  \bigg) \\ & + \frac{\mathcal{E}_T}{\kappa} + \frac{T\hat{V}_\psi^2}{\kappa} + f_t^{rate}(\kappa),
    \end{aligned}
\end{equation*}

where the relation $\hat{V}_\psi^2 = \frac{1}{T}\sum_{t=1}^T \mathbb{E}_{\hat{\nu}_t}[D_{KL}(\hat{\pi}_t|\psi_t^*)]$ is used to get the second last term $\frac{T \hat{V}_\psi^2}{\kappa}$. To get $\kappa^*$ such that $L(\kappa)$ is minimized, we proceed using the KKT optimality conditions to get $\frac{dL}{d\kappa}$ as
\begin{equation*}
    \frac{dL}{d\kappa} = \frac{-U_T^{init}(\{\psi\}_{t \in [T]})}{\kappa^2} - \frac{\mathcal{E}_T}{\kappa^2} - \frac{T \hat{V}_\psi}{\kappa^2} + c_2^tM+ c_4^t\sqrt{M}.
\end{equation*}

Therefore, we obtain $\kappa^*  = \sqrt{\frac{U_T^{init}(\psi) + \mathcal{E}_T + T\hat{V}_\psi^2}{c_2^tM+c_4^t\sqrt{M}}}$. 

Substituting this value of $\kappa^*$ in \eqref{eq:appLkappa}, and using the approximation $\sqrt{\frac{1}{M+\sqrt{M}}} \approxeq \frac{1}{M^{1/4}}$, we can obtain the final result.
\end{proof}

\section{Additional experiments and details} \label{sec:ExpDetails}

In this section, we describe our experimental setup and other additional details for the OpenAI gym experiments under constrained settings. We present the performance of Meta-SRL on the Acrobot and the Frozen lake under high task-similarity conditions. We first present some details on the baselines used to avoid any confusion.

\textbf{Details of baselines used:} Denote the policy initialization for task $t$ as $\phi_t$. First baseline is the FAL, which initializes the policy $ \phi_{t+1}$ for the test task $t+1$ as $\phi_{t+1} = \frac{1}{t} \sum_{i=1}^t \phi_i$. This is done online as the tasks are encountered sequentially.

For the simple averaging baseline, we run CRPO on $10$ tasks with the random initialization and evaluate the performance on a test task by initializing with the average of all the suboptimal policies from the batch of suboptimal policies, i.e., $\phi_T  = \frac{1}{T-1} \sum_{i=1}^{T-1} \phi_i$, where $T$ is the test task. This can be seen as an offline method, where suboptimal policies from $T-1$ tasks are stored and averaged to get the initialization for the test task. 

The pre-trained baseline uses suboptimal policy from an already trained task to initialize the policy on the test task, i.e., $\phi_t = \hat{\pi}_{t'}$, where $\phi_t$ is the policy initialization for the task $t$, and $\hat{\pi}_{t'}$ is the suboptimal policy returned some pre-trained task $t'$. This baseline could be thought of as the Strawman initialization strategy, where the policy for the next task is initialized using the suboptimal policy from the previous task, i.e., $\phi_{t+1} = \hat{\pi}_t$.

For the Meta-SRL in the discrete state action space (i.e., Frozen lake), we take the weighted average of the previous suboptimal policies weighted by the stationary distributions induced by each suboptimal policy over all the states. We also adapt the learning rates $\alpha_t$ for the CRPO algorithm for each task $t$. The difference between FAL and Meta-SRL is that Meta-SRL weights the suboptimal policies higher in the states which were encountered more frequently.

Random initialization baseline is using random iniitalization for the within-task algorithm CRPO.


\textbf{Experimental setup:} We run all the algorithms online where tasks are encountered sequentially and present the results for the test task after the policy initialization suggested by respective baselines. We do $10$ runs for each baseline on the test task and present the variance plots. On the test task, we train for $8$ steps on the Frozen lake and for $5$ steps on the Acrobot. In Frozen lake, each step corresponds to $5$ episodes, and the average rewards/costs are reported for each step in the performance and constraint violation plots.

\subsection{Frozen lake}

\textbf{Frozen lake:} For the Frozen lake, we randomly generate $T=10$ different orientations as tasks over the probability of a state being frozen or a hole and evaluate the performance for the scenarios with high task-similarity (low variance for the latent CMDP distribution) or low task-similarity (high variance for the latent CMDP distribution). The agent gets rewarded $+2$ when it reaches the goal state and incurs a cost $-1$ when it falls into a hole. We choose the constraint threshold $d_{t,i} = 0.3$.

\textbf{High task-similarity:} To generate tasks with high task-similarity, we start with a random frozen lake grid of $4 \times 4$, where the probability of each grid being frozen is $0.7$. Then, we generate $9$ different grids which differ from the first one by only one of the grids. This means the agent always encounters the new grid, which is very similar to the previous task. From Figure \ref{fig:FrozenLakeHigh}, we can observe that baselines pre-trained and FAL are competitive with the Meta-SRL in terms of reward maximization and almost zero constraint violations. The good performance of the pre-trained baseline can be explained using the fact that the the test task and the training task are very similar; the policy initialization will be close to the optimal policy of the test task.

\begin{figure}[htbp]
\centering
  \includegraphics[width=\columnwidth]{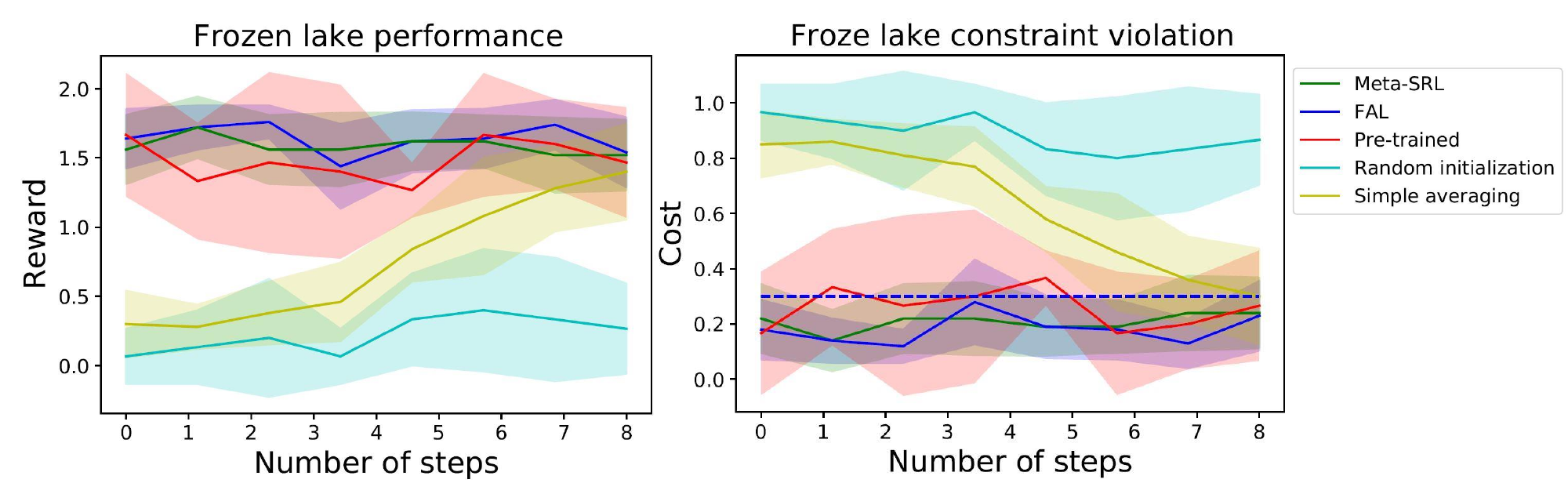}
\caption{Frozen lake results for reward maximization and constraint violations when the task-relatedness is high. The blue dashed line represents the averaged thresholds for the constraint violations.}
\label{fig:FrozenLakeHigh}
\end{figure}

\textbf{Low task-similarity:} In this case, random tasks are generated where the probability of a tile being frozen is kept between $0.3$ and $0.7$. The tasks are less similar  due to the high uncertainty associated with the changing orientations.



\subsection{Acrobot}

\textbf{Acrobot:} Acrobot is a $2$ link robot OpenAI gym  environment with continuous state space. The agent is rewarded when it achieves a certain height of the end link. Two constraints are introduced for two links, where a $-1$ cost is incurred if any link swings in the prohibited direction. We randomly generate $T=50$ different tasks with different mass links and centers of gravity.

\textbf{High task-similarity:} To generate tasks with high similarity for the acrobot, we considered changing the mass of the links, the center of gravity (COG), and constraint threshold for each link. The changes in these quantities were done by adding noise to the default quantities. We considered a Gaussian noise with a low variance of $0.1$ to change the tasks only slightly. From Figure \ref{fig:AcrobotHigh}, we can observe that only pre-trained and FAL baselines are competitive with the Meta-SRL in terms of reward maximization and almost zero constraint violations.

\begin{figure}[htbp]
\centering
  \includegraphics[width=\columnwidth]{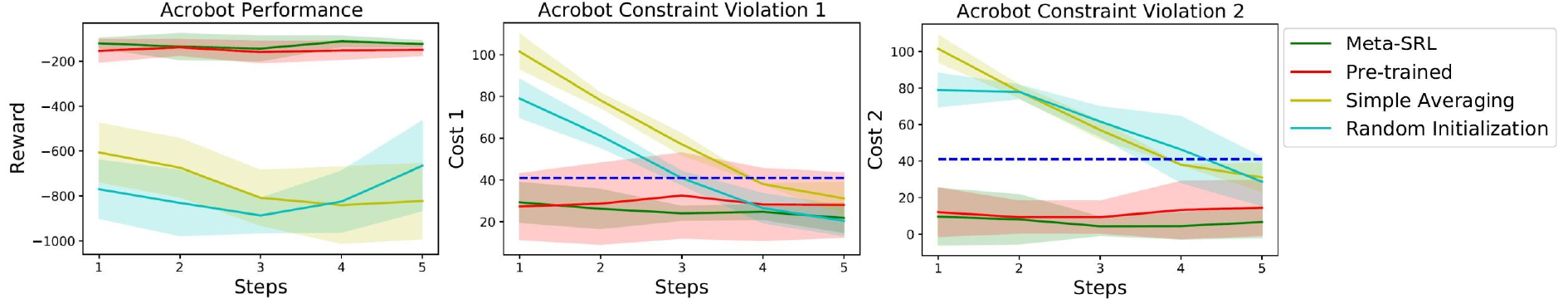}
\caption{Acrobot results for reward maximization and constraint violations when the task-relatedness is high. The blue dashed line represents the averaged thresholds for the constraint violations.}
\label{fig:AcrobotHigh}
\end{figure}

\textbf{Low task-similarity:} To generate low similar tasks, we increased the variance of the Gaussian noise to $0.3$. We can observe from Figure \ref{fig:Acrobot} that the performance of the baselines was poor for constraint satisfaction, while Meta-SRL could converge quickly for reward maximization and constraint satisfaction.

Note that, in real-world settings, tasks are likely to have low similarity in terms of how close their optimal policies are. The good performance of Meta-SRL under these settings highlights its potential to be extended to real-world settings where safety constraints are present.

\subsection{Half cheetah}

\hl{Half-cheetah is a simulation environment for a 2-dimensional robot. It consists of 9 links and eight joints, where the goal for the cheetah is to run at a certain velocity. It has a 17-dimensional state space and a 6-dimensional action space. The reward is calculated as the negative of the absolute difference between the current cheetah velocity and the desired velocity. The original HalfCheetah environment does not have any constraints. We introduce a constraint that penalizes the deviation of the cheetah’s head from some desired height:}

$$h_{cheetah} - h_{target} \leq \epsilon,$$

\VK{where we specify the cumulative absolute difference between the cheetah head height and the desired height to be less than a tolerance $\epsilon$. The cheetah is trained on $T = 100$ tasks for both high and low task-similarity settings.}

\VK{\textbf{High task-similarity:} To generate tasks with high similarity, the goal velocity for each training task is uniformly sampled from a range of $[0.35,0.65]m/s$. From Figure \ref{fig:CheetahHigh}, we can observe that under high task similarity settings, Meta-SRL is able to achieve high rewards and also able to keep the constraint violation of the cheetah's head height below the threshold. Under high task-similarity settings, pre-trained and Meta-SRL perform well, as expected. However, it can be observed that both simple averaging and random initialization perform poorly in this setting.}

\begin{figure}[htbp]
\centering
\begin{subfigure}{.47\textwidth}
  \centering
  \includegraphics[width=\columnwidth]{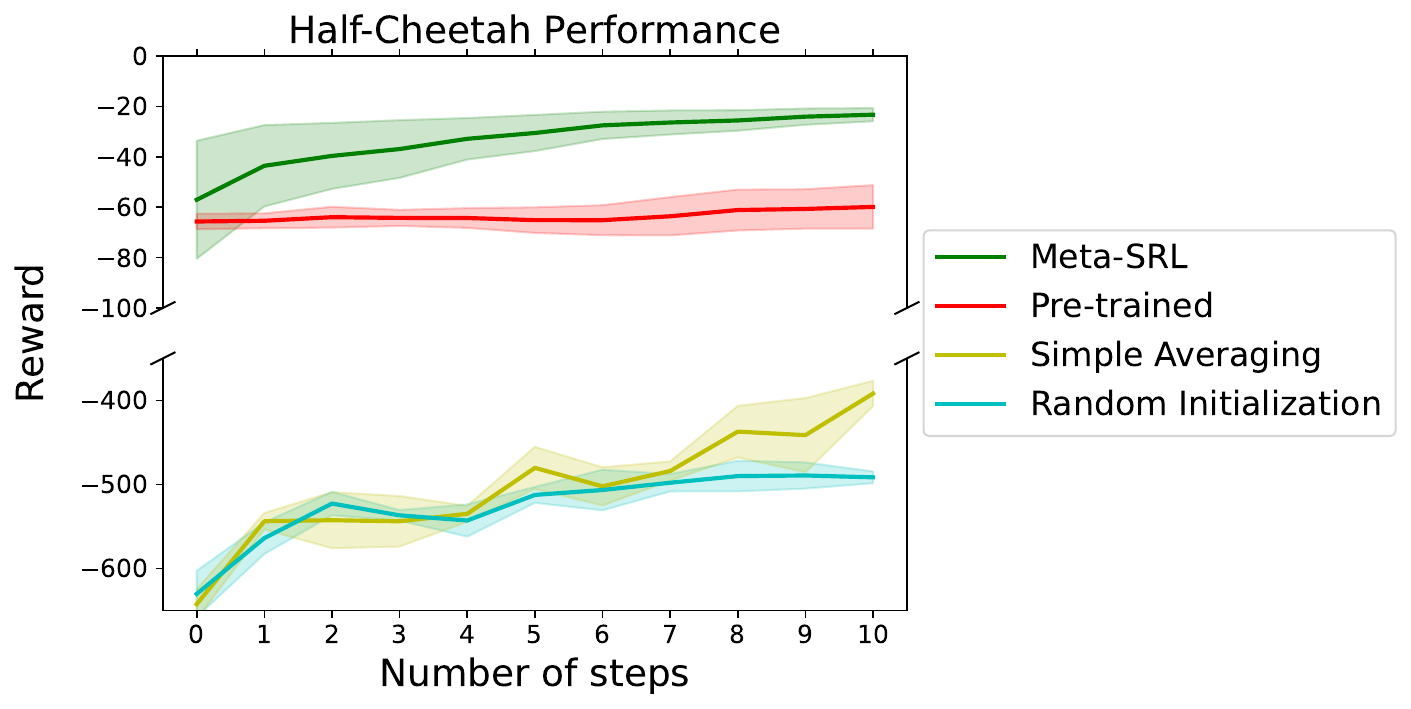}
\caption{}
\end{subfigure}%
\begin{subfigure}{.47\textwidth}
  \centering
  \includegraphics[width=\columnwidth]{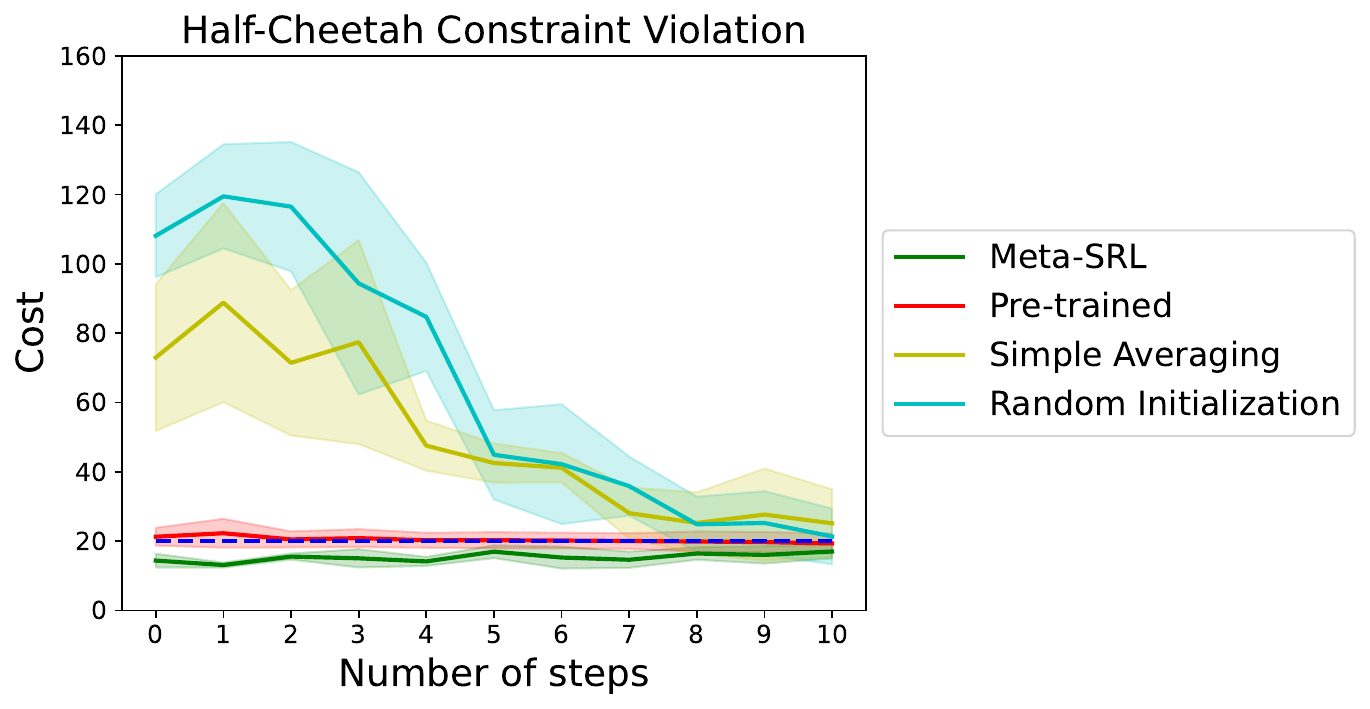}
\caption{}
\end{subfigure}
\caption{Half-cheetah results for reward maximization and constraint violations when the task-relatedness is high. The blue dashed line represents the averaged thresholds for the constraint violations. Here, the simple averaging and random initialization perform the worst on the test task.}
\label{fig:CheetahHigh}
\end{figure}

\VK{\textbf{Low task-similarity:} To generate tasks with low similarity for the half-cheetah, the goal velocity for each training task is uniformly sampled from a range of $[0.0,1.0]m/s$. Tasks are less similar due to the high variance of goal velocities that the cheetah is trained on. It can be observed from Figure \ref{fig:Halfcheetah} that Meta-SRL is able to achieve higher rewards and zero constraint violations quickly compared to other baseline initializations under low task-relatedness settings. The pre-trained baseline can achieve higher rewards, but similar to other baselines, it cannot achieve constraint satisfaction within 10 steps. It can also be observed that both simple averaging and random initialization perform poorly in reward maximization in this setting. Close inspection indicates that there is a high variance among the policy parameters learned from each task, which may result in interference among different tasks in the relatively high dimensional state space.}

\begin{figure}[htbp]
\centering
\begin{subfigure}{.47\textwidth}\label{fig:FrozenLakeReward1}
  \centering
  \includegraphics[width=\columnwidth]{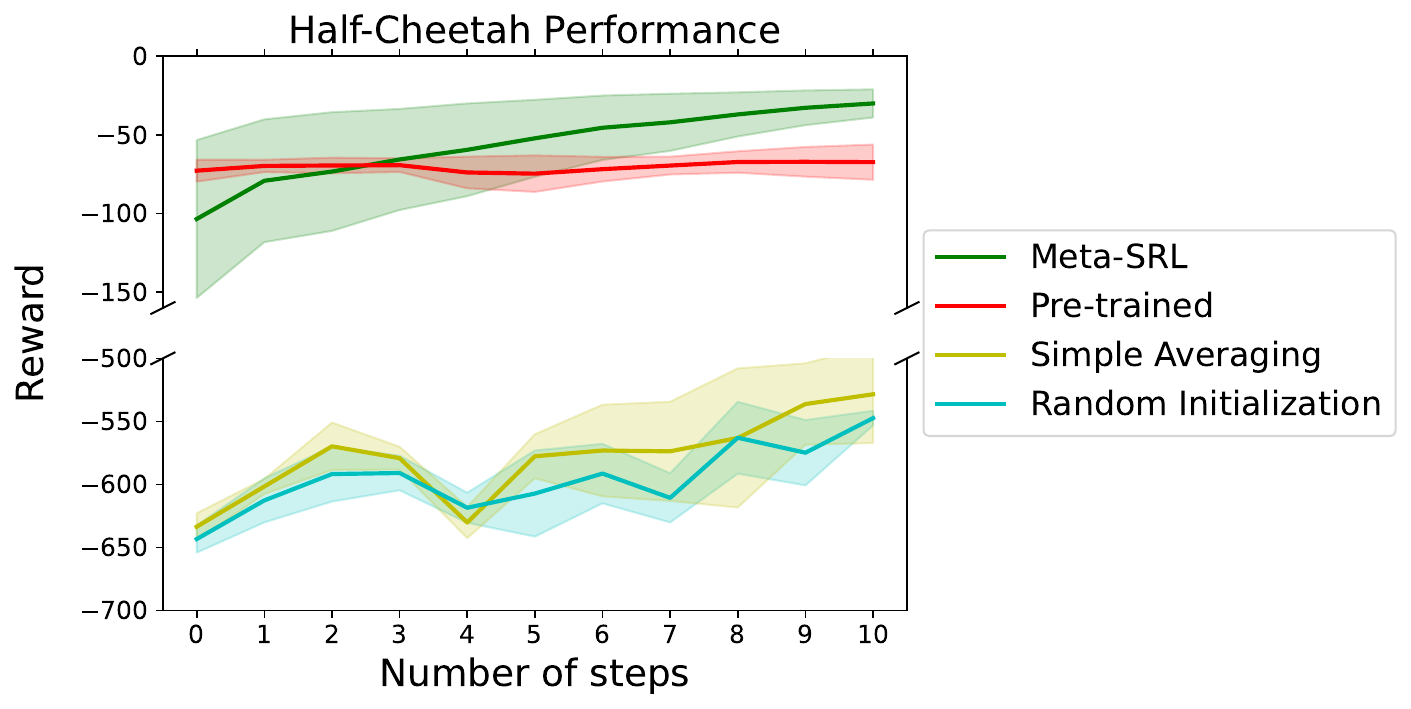}
\caption{}
\end{subfigure}%
\begin{subfigure}{.47\textwidth}\label{fig:FrozenLakeReward2}
  \centering
  \includegraphics[width=\columnwidth]{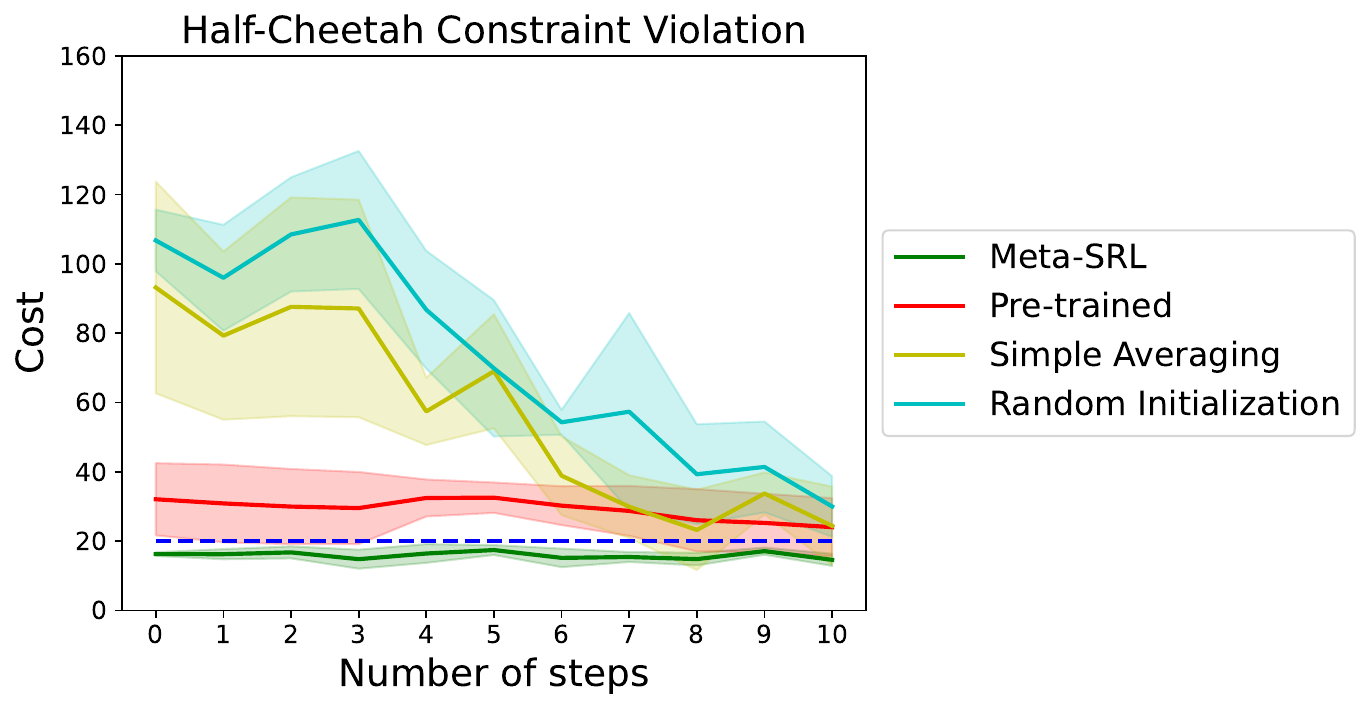}
\caption{}
\end{subfigure}
\caption{Half-cheetah results for reward maximization and constraint violations when the task-relatedness is low. The blue dashed line represents the averaged thresholds for the constraint violations.}
\label{fig:Halfcheetah}
\end{figure}

\subsection{Humanoid}

\hl{Humanoid is a simulation environment of a 3D bipedal robot, which consists of a torso (abdomen) with two arms and legs. Each leg and arm has two links (representing the knees and elbows, respectively). It has a 376-dimensional observation space and a 17-dimensional action space. The goal of the humanoid is to walk forward as fast as possible without falling over.}

\VK{The original humanoid environment does not have any constraints. We introduce a constraint that penalizes the deviation of the angles between the torso and the upper arm and the angle between the upper arm and the lower arm, such that humanoid motions are smooth and graceful. The cumulative constraint is given as:}
$$\left|\theta_{tr} - \frac{\pi}{4}\right|+\left|\theta_{tl} - \frac{\pi}{4}\right|+\left|\theta_{r} - \frac{\pi}{4}\right|+\left|\theta_{l} - \frac{\pi}{4}\right| \leq \epsilon,$$
\VK{where $\theta_{tr}$, $\theta_{tl}$, $\theta_r$, and $\theta_l$ are the angles between the torso and right arm, the angle between the torso and the left arm, the angle between both the upper and lower right arms, and  the angle between both the upper and lower left arms, respectively. We specify the cumulative absolute difference between all these angles and the desired angle to be less than a tolerance $\epsilon$. The reward is calculated on the basis of the dot product between the direction and the velocity vector of the Center of Gravity of the humanoid, multiplied by the default scaling value in the humanoid environment $W_f$. the humanoid as follows:}

$$r = W_f(v_y \sin \theta + v_x \cos \theta),$$

\VK{where $r$ is the instantaneous reward that accounts for the amount of forward movement by the humanoid, $v_x$, and $v_y$ are the horizontal and lateral components of the velocity, and $\theta$ is the walking direction of the humanoid. The default value of $W_f$ is 1.25. The humanoid is trained on $T = 250$ tasks for both high and low task-similarity settings.}

\VK{\textbf{High task-similarity:} We generate different tasks by changing the direction of motion of the humanoid. Possible direction angles in the humanoid environment range from $-\pi/2$ to $\pi/2$ (which varies from left to right). To generate tasks with high similarity, the goal direction of the humanoid for each training task is uniformly sampled from a range of $[-\pi/4,\pi/4]$. From Figure \ref{fig:HumanoidHigh}, we can observe that under high task similarity settings, Meta-SRL is able to achieve high rewards and maintain the constraint violation of the humanoid's hand and torso angles  below the threshold of $\epsilon=4$. Under high task-similarity settings, both pre-trained and Meta-SRL perform well, as expected. However, it can be observed that both simple averaging and random initialization perform poorly in this setting. Moreover, the pre-trained baseline also fails to learn reward-maximizing behaviors.}

\begin{figure}[htbp]
\centering
\begin{subfigure}{.47\textwidth}
  \centering
  \includegraphics[width=\columnwidth]{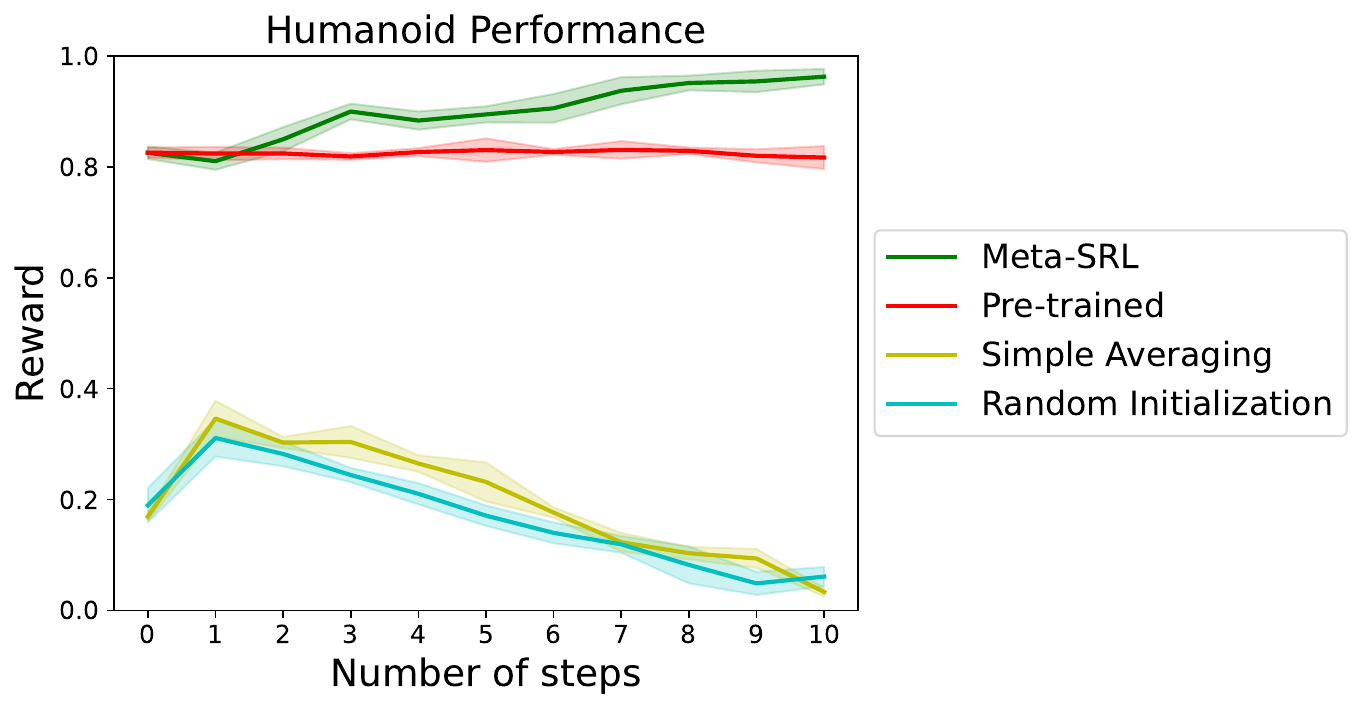}
\caption{}
\end{subfigure}%
\begin{subfigure}{.47\textwidth}
  \centering
  \includegraphics[width=\columnwidth]{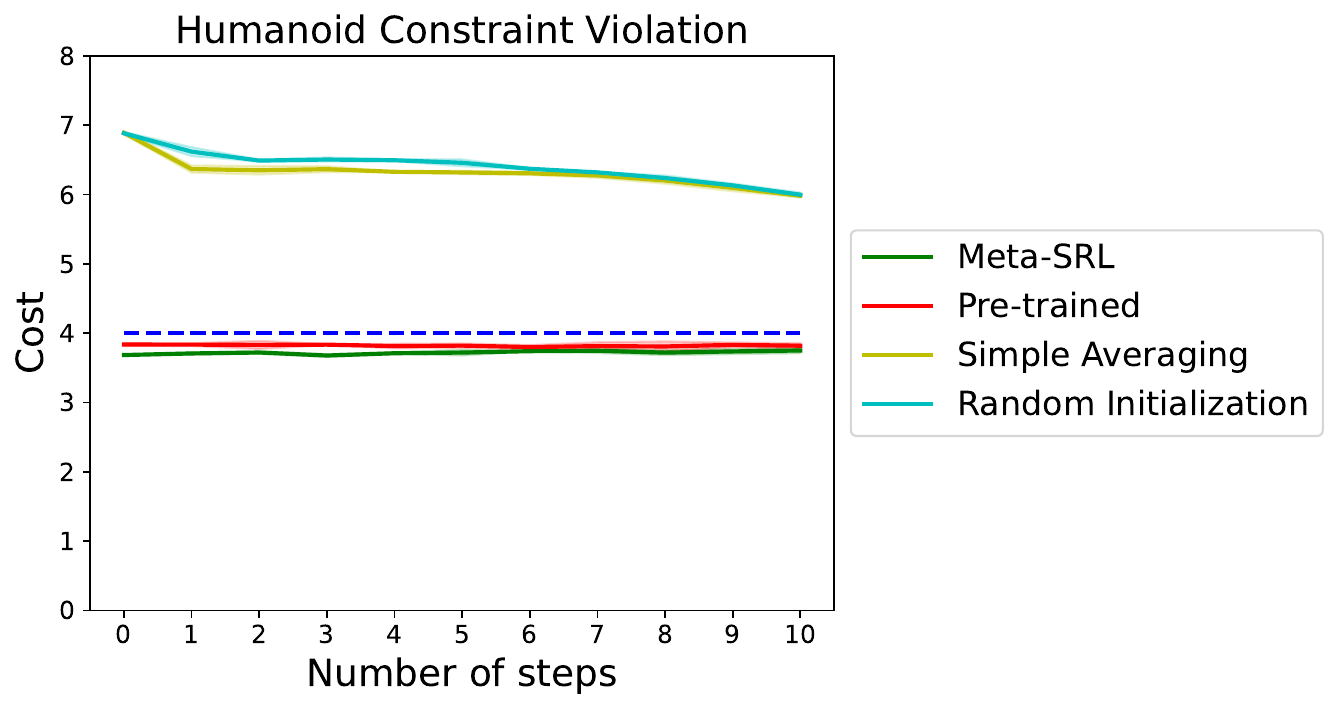}
\caption{}
\end{subfigure}
\caption{Humanoid results for reward maximization and constraint violations when the task-relatedness is high. The blue dashed line represents the averaged thresholds for the constraint violations.}
\label{fig:HumanoidHigh}
\end{figure}

\VK{\textbf{Low task-similarity:} To generate tasks with low similarity for the humanoid, the goal direction for each training task is uniformly sampled from a range of $[-\pi/4,\pi/4]$. Tasks are less similar due to the high variance of goal direction that the humanoid is trained on. Figure \ref{fig:HumanoidLow} shows that Meta-SRL is able to quickly achieve higher rewards and zero constraint violations compared to other baseline initializations under low task-relatedness settings. The pre-trained baseline also achieves constraint satisfaction in this case but fails to learn behaviors to maximize the rewards within 10 steps. It can also be observed that both simple averaging and random initialization perform poorly in reward maximization in this setting. This can be attributed to a high variance among the policy parameters learned from each task, which may result in interference among different tasks in the relatively high-dimensional state space.}

\begin{figure}[htbp]
\centering
\begin{subfigure}{.47\textwidth}
  \centering
  \includegraphics[width=\columnwidth]{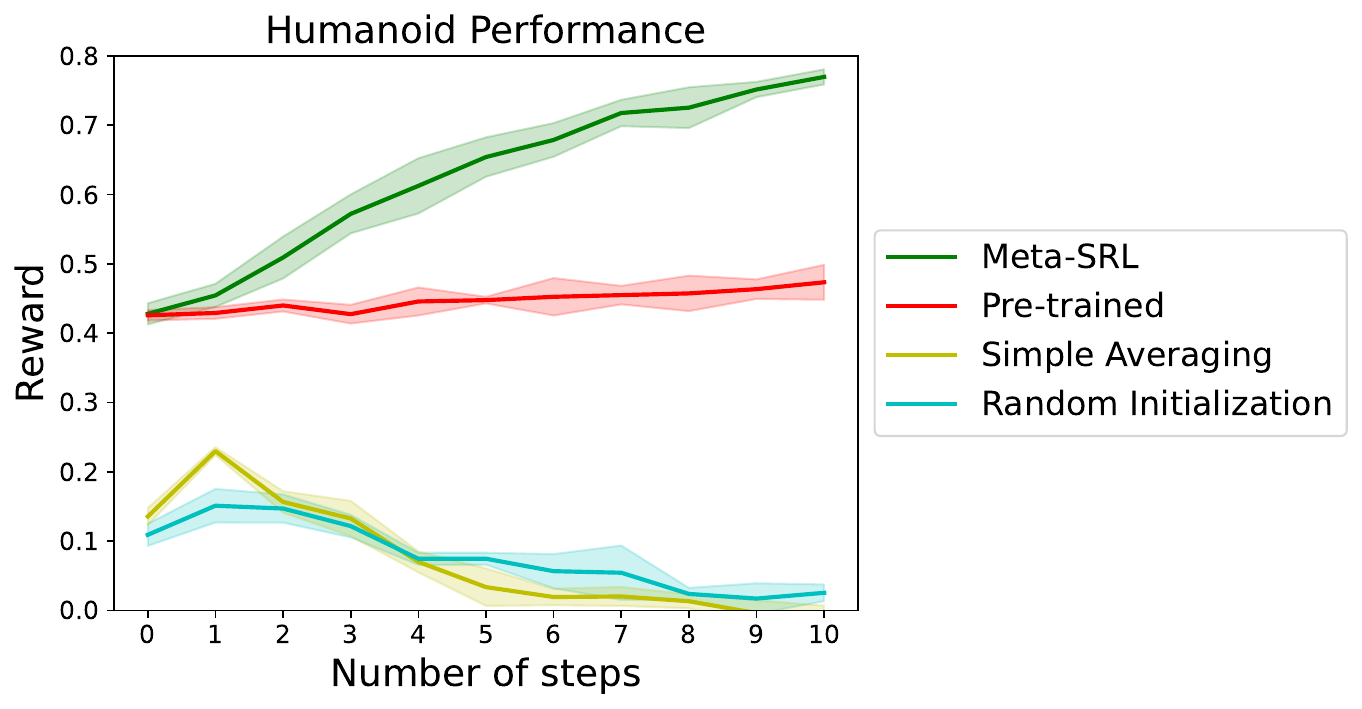}
\caption{}
\end{subfigure}%
\begin{subfigure}{.47\textwidth}
  \centering
  \includegraphics[width=\columnwidth]{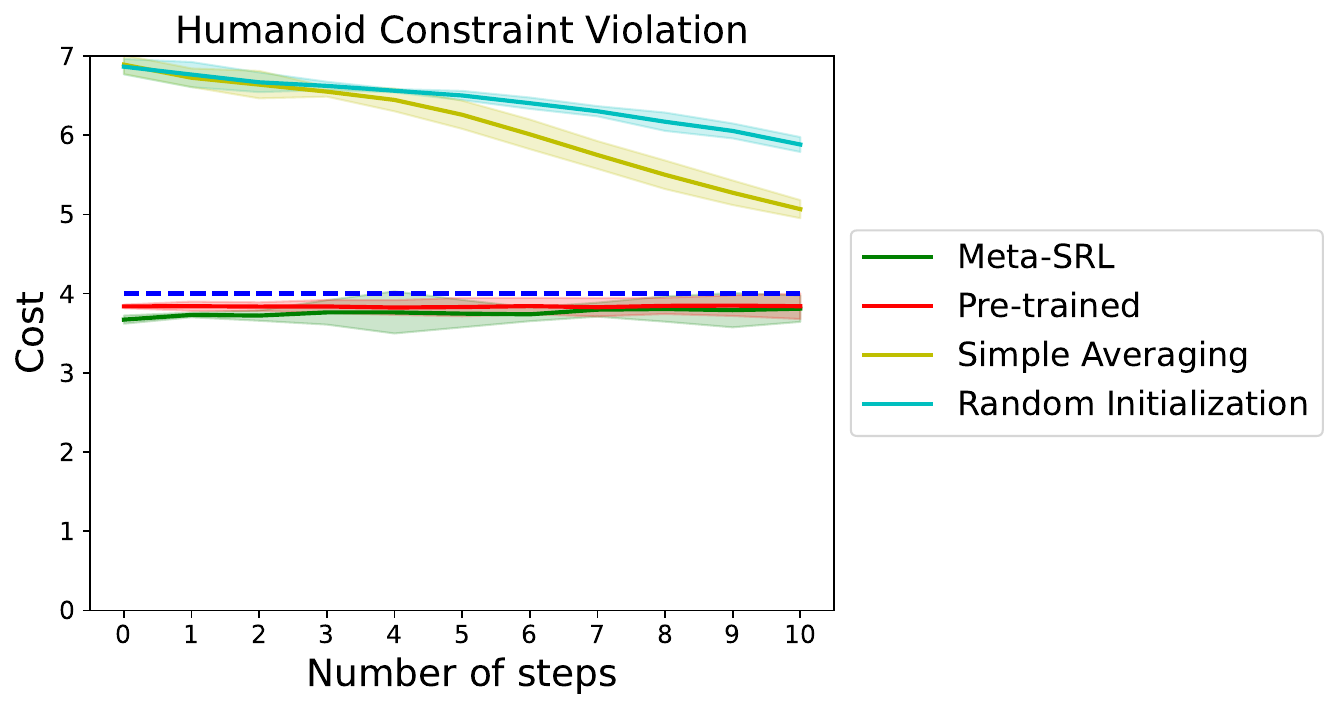}
\caption{}
\end{subfigure}
\caption{Humanoid results for reward maximization and constraint violations when the task-relatedness is low. The blue dashed line represents the averaged thresholds for the constraint violations.}
\label{fig:HumanoidLow}
\end{figure}

\section{Relation of Meta-SRL to hardness results in \texorpdfstring{\citep{kwon2021rl}}{[1]}}\label{sec:KwonRelation}
\VK{
There is a key difference in the problem setting of meta-learning in our study and the latent MDP setting in \citep{kwon2021rl}. The latent MDP setting is more challenging in the sense that there is no clear boundary between tasks. In the latent MDP setting, each episode may come from an unknown MDP drawn from a distribution (as a special case of POMDP); in the meta-learning setting, the agent knows when a new task has arrived and is allowed to interact with the MDP over a set of episodes (the number is linear with respect to $M$ in our paper). Due to the above difference, the worst-case lower bound of requiring an exponential number of episodes to learn an $\epsilon$-optimal policy in \citep{kwon2021rl} does not hold in our case.
Indeed, if the identity (referred to as ``context” in latent MDP) is revealed or can be inferred, \citep{kwon2021rl} is able to achieve a regret that is polynomial in the number of episodes (Thms. 3.3 and 3.4 from \citep{kwon2021rl}).}

\VK{
Furthermore, a close examination of the bounds provided by \citep{kwon2021rl} also reveals some differences from our result. In particular, let $K$ be the number of contexts in a latent MDP and $N$ be the total number of episodes ($N$ is on the order of $TM$ in our case as we encounter $T$ tasks, each with $M$ episodes). Then \citep{kwon2021rl} is able to bound the regret (without dividing by the number of episodes $N$) as $\mathcal{O}(\sqrt{KN})$. To compare their bound with ours, we consider each task in the meta-learning setting as a context, so $T = \mathcal{O}(K)$. Therefore, their upper bound (after dividing by the number of episodes $N=TM$) becomes $\mathcal{O}(1/\sqrt{M})$, which does not diminish with the number of tasks $T$. Note that our bound (see the comment after Corollary \ref{cor:CorollaryAdpativeRate}) is $\mathcal{O}\left( \frac{\hat{V}_\psi}{M^{3/4}\sqrt{T}} \right)$ (after dividing by the number of episodes $N = TM$), where for simplicity we have assumed $\mathcal{E}_T = 0$, i.e., exact access to the loss function. Note that $\hat{V}_\psi$ is a measure of task-relatedness (a smaller value indicates more relatedness among tasks). It can be seen that while we have a worse order dependence on $M$, our bound scales with task relatedness $\hat{V}_\psi$ and diminishes with respect to the increasing number of tasks $T$. This is expected as we leverage the relatedness among contexts (in fact, the result  of \citep{kwon2021rl} would hold when tasks are sufficiently different from each other to infer the contexts with spectral methods).}
\VK{
In summary, we refer to \citep{kwon2021rl} as an example that achieving regret diminishing in the number of tasks $T$ is hard, even with the assumption of observing the task identities (contexts).}

\section{Notations and constants}
\label{sec:notations}

\begin{table}[h]
\centering
\begin{tabular}{ll}
\hline \textbf{Notation}                           & \textbf{Definition}        \\ \hline
$t$ & index of task\\
$k$ & index of OGD steps\\
$\phi_t$,$\pi_{t,0}$  & policy initialization for task $t$\\
$\alpha_t$ & learning rate of within-task algorithm\\                                                            $t \in [T]$ & set of all tasks, where $[T] = \{1,\ldots,T\}$
\\
$\mathcal{M}_t$ &CMDP for task $t$\\
$\mathcal{S}$ & state space of CMDP $\mathcal{M}_t$
\\
$\mathcal{A} \in \mathbb{R}^{n_a}$ & action space of CMDP $\mathcal{M}_t$\\             $\rho_t$ & initial state distribution of task $t$
\\
$P_t(\cdot|s,a)$ & transition kernel for task $t$                       \\
$c_{t,0}: \mathcal{S} \times \mathcal{A} \rightarrow [0,1]$  & reward function 
\\
$c_{t,i}: \mathcal{S} \times \mathcal{A} \rightarrow [0,1]$  & cost function $i$ for task $t$ \\   
$p$ & total number of constraints  
\\
$m \in [M]$ &  set of all timesteps for within-task algorithm \\
$\Delta (\mathcal{A})^{|\mathcal{S}|}$  & simplex over all state-action pairs
\\
$\pi_t: \mathcal{S}\rightarrow \Delta(\mathcal{A})$ & stochastic policy for task $t$            \\
$\nu_t^\pi$ &state visitation distribution of policy $\pi$ at task $t$\\
$\theta$ & softmax policy parameters
\\
$V_{t,\pi}^i(s)$ & state-value function for reward ($i=0$) or cost $i$ in task $t$ with policy $\pi$\\ $Q_{t,\pi}^i(s,a)$ & action-value function for reward ($i=0$) or cost $i$ in task $t$ with policy $\pi$
\\
$J_{t,i}(\pi)$ & expected total reward ($i=0$) or cost $i$ for task $t$ and policy $\pi$\\            $d_{t,i}$ & bound on the expected total cost $i$ for task $t$
\\
$\Pi_t^*$ & set of optimal solutions for task $t$\\   
$\pi_t^*$ & \VK{optimal policy for task $t$}\\   
$c_{max}$ & maximum value of reward/cost functions
\\
$D_{KL}(\cdot|\cdot)$ & KL divergence\\
$\bar{R}_0, \bar{R}_i$ & TAOG and TACV\\
$\hat{\pi}_t$ & suboptimal policy returned by within-task algorithm in task $t$
\\
$D^*$, $\hat{D}^*$ & true and empirical task-similarity\\           
$V_\psi$ ,$\hat{V}_\psi$ & true and empirical task-relatedness
\\
$\Delta \mathcal{A}_\varrho$ & shrinkage simplex set inside $\Delta \mathcal{A}$\\          $L_g$, $L_\pi$  &Lipschitzness and smoothness parameter for KL divergence of policy $\pi$ w.r.t. initial policy
\\
$C_\pi$ & maximum value of KL divergence of policy $\pi$ w.r.t. initial policy \\     
$\omega_{\pi/ \mathcal{D}_t}(s,a)$ &stationary distribution correction for task $t$ at state $s$ action $a$
\\
$\mathcal{D}_t$ & off policy dataset for task $t$\\           
$ \mu_\pi$ & strong convexity parameter for KL divergence of policy $\pi$ w.r.t. initial policy
\\
$\{\psi_t^*\}_{t=1}^T$ & dynamically varying comparator sequence\\    
$\epsilon_t$ &inexactness in the KL divergence estimation using DualDICE 
\\
$\hat{\nu}_t$ & state distribution induced by $\hat{\pi}_t$\\            $\epsilon_{opt}$ &optimization error in DualDICE
\\
$\epsilon_{approx}$ & approximation error in DualDICE\\            
$\mathcal{F}, \mathcal{H}$ &hypothesis class used in DualDICE
\\
$\mathcal{E}_T$ & cumulative inexactness for KL divergence estimation given by $\sum_{t=1}^T \epsilon_t$ \\           $\tilde{\mathcal{E}}_T$ &cumulative square root of inexactness for KL divergence estimation given by $\sum_{t=1}^T \sqrt{\epsilon_t}$
\\
$h:[0,\rho] \rightarrow (0,\infty)$& strictly increasing continuous definable function used in Theorem \ref{thm:dualDICE}
\\
$\nabla_t$, $\hat{\nabla}_t$& Exact and inexact gradient
\\
$\beta$ &learning rate for inexact OGD\\  
$P_X(\cdot)$ & Projection operator
\\
$\mathcal{P}_T$ &path-length of the comparator $\psi_t^*$
\\
$K_{in}$ & number of iterations in the critic estimation of CRPO  
\\
$\eta_t$ & tolerance for the constraint violation $d_{t,i}$ for task $t$\\            
$\mathcal{P}_T$, $\mathcal{S}_T$ &path-length and squared path-length of the comparator $\psi_t^*$
\\
$\partial_\epsilon f(\cdot)$ & $\epsilon$-subgradient of function $f$
\\
$Dom(f)$ & Domain of the function $f$
\\
$\hat{f}_t(\cdot)$ & loss functions for suboptimal policy $\mathbb{E}_{\hat{\nu}_t}[D_{KL}(\hat{\pi}_t|\pi_{t,0})]$ \\            
\hline
\end{tabular} \caption{Table of notations}
\end{table}

\begin{table}[h]

\centering
\caption{Table of constants.}
\begin{tabular}{llll}
\hline \textbf{Notation} & \multicolumn{1}{l||}{\textbf{Definition}} & \textbf{Notation} & \textbf{Definition} \\
\hline
$c_1^t$ & \multicolumn{1}{l||}{$2$}                                                     & $c_2^t$ & $\frac{4 c_{max}^2|\mathcal{S}| |\mathcal{A}|}{(1-\gamma)^3}$
\\
$c_3^t$ & \multicolumn{1}{l||}{$\frac{3+(1-\gamma)^2}{(1-\gamma)^2}$}                                                               & $c_4^t$ & $\frac{3c_{max}}{(1-\gamma)^2}$
\\
$c_5^t$ & \multicolumn{1}{l||}{$\frac{2\sqrt{(1-\gamma)}|\mathcal{S}||\mathcal{A}|}{1-2\kappa}$}                   & $c$ & $c \in \left(0, \frac{2}{L_\pi}\right)$
\\
$C_1$ & \multicolumn{1}{l||}{$2(L_\pi + \beta)$}                   & $C_2$ & $(L_\pi +\beta)\frac{3c\alpha+6\alpha L_\pi}{2 \mu_\pi \alpha L_\pi}$\\
$C_3$ & \multicolumn{1}{l||}{$3(L_\pi + \beta)$}                   & $C_4$ & $\frac{2 L_g}{2-\sqrt{2}}$\\
$C_5$ & \multicolumn{1}{l||}{$\frac{2L_g}{2-\sqrt{2}}\sqrt{\frac{c\alpha + 2L_\pi \alpha}{2\alpha \mu_\pi L_\pi}}$}       &  \\
\hline
\end{tabular}
\vspace{-0.5em}
\label{tab:constant}
\end{table}

\end{document}